\documentclass[10pt]{article}
\usepackage[accepted]{tmlr}
\usepackage[utf8]{inputenc}
\usepackage[T1]{fontenc}
\usepackage{amsmath,amssymb,amsfonts}
\usepackage{amsthm}
\usepackage{graphicx}
\usepackage{wrapfig}
\usepackage{float}
\usepackage{caption}
\usepackage{booktabs}
\usepackage{longtable}
\usepackage{pdflscape}
\usepackage{enumitem}
\usepackage{xcolor}
\usepackage{pifont}
\usepackage{microtype}
\usepackage{nicefrac}
\usepackage[most]{tcolorbox} 
\usepackage[ruled,vlined]{algorithm2e}
\usepackage{hyperref}
\usepackage{url}
\usepackage{xspace}
\usepackage{etoolbox}
\usepackage{placeins}
\usepackage{pgf}
\usepackage{tikz}
\usetikzlibrary{shapes.geometric, arrows.meta, positioning, fit, calc, shadows, backgrounds, matrix}
\usetikzlibrary{shadows.blur}
\usepackage{booktabs}
\usepackage{tabularx}
\usepackage{makecell}
\usepackage{threeparttable}
\usepackage{pifont}

\definecolor{powderBlue}{HTML}{A1C6EA}
\definecolor{paleSky}{HTML}{BBD1EA}
\definecolor{alabasterGrey}{HTML}{DAE3E5}
\definecolor{thistle}{HTML}{C6B9CD}
\definecolor{lightSeaGreen}{HTML}{17C3B2}

\usepackage[T1]{fontenc}
\usepackage{lmodern}
\usepackage{bold-extra}

\SetKwInOut{Input}{Input}
\SetKwInOut{Output}{Output}
\SetKwComment{tcp}{\# }{}
\SetKwProg{Fn}{Function}{}{}


\usepackage{amsmath,amsfonts,bm}









\def\eqref#1{equation~\ref{#1}}









\def\1{\bm{1}}










\DeclareMathAlphabet{\mathsfit}{\encodingdefault}{\sfdefault}{m}{sl}
\SetMathAlphabet{\mathsfit}{bold}{\encodingdefault}{\sfdefault}{bx}{n}











\newcommand{\E}{\mathbb{E}}

\newcommand{\R}{\mathbb{R}}

\newcommand{\Var}{\mathrm{Var}}



\newcommand{\gems}{\textsc{Gems}\xspace}
\newcommand{\psro}{\textsc{Psro}\xspace}

\newtheorem{theorem}{Theorem}[section]
\newtheorem{lemma}[theorem]{Lemma}
\newtheorem{proposition}[theorem]{Proposition}

\newcounter{appPartCounter}
\newcounter{appSubCounter}
\newcounter{QACounter}

\newcommand{\appPart}[2]{
  \refstepcounter{appPartCounter}
  \noindent\textbf{\Roman{appPartCounter}.\ }
  \hyperref[#2]{#1}\hfill\pageref{#2}\par
}

\newcommand{\appSub}[2]{
  \refstepcounter{appSubCounter}
  \noindent\hspace*{1.6em}\textbf{\Alph{appSubCounter}.\ }
  \hyperref[#2]{#1}\hfill\pageref{#2}\par
}

\raggedbottom

\title{Generative Evolutionary Meta-Solver (GEMS): \\Scalable Surrogate-Free Multi-Agent Reinforcement Learning}

\author{\name Alakh Sharma \thanks{Joint first authors} \email f20240593@pilani.bits-pilani.ac.in \\ 
    \addr Birla Institute of Technology and Science, Pilani \AND
    \name Gaurish Trivedi \footnotemark[1] \email f20220728@pilani.bits-pilani.ac.in \\
    \addr Birla Institute of Technology and Science, Pilani \AND
    \name Kartikey Singh Bhandari \email p20241006@pilani.bits-pilani.ac.in \\
    \addr Birla Institute of Technology and Science, Pilani \AND
    \name Yash Sinha \email yash.sinha@pilani.bits-pilani.ac.in \\
    \addr Birla Institute of Technology and Science, Pilani \AND
    \name Dhruv Kumar \email dhruv.kumar@pilani.bits-pilani.ac.in \\
    \addr Birla Institute of Technology and Science, Pilani \AND
    \name Pratik Narang \email pratik.narang@pilani.bits-pilani.ac.in \\
    \addr Birla Institute of Technology and Science, Pilani \AND
    \name Jagat Sesh Challa \email jagatsesh@pilani.bits-pilani.ac.in \\
    \addr Birla Institute of Technology and Science, Pilani}

\begin{document}
\maketitle

\begin{abstract}
Scalable multi-agent reinforcement learning (MARL) remains a central challenge for AI. Existing population-based methods, like Policy-Space Response Oracles, \psro, require storing explicit policy populations and constructing full payoff matrices, incurring quadratic computation and linear memory costs.
We present \underline{G}enerative \underline{E}volutionary \underline{M}eta-\underline{S}olver (\gems), a surrogate-free framework that replaces explicit populations with a compact set of latent anchors and a single amortized generator. Instead of exhaustively constructing the payoff matrix, \gems relies on unbiased Monte Carlo rollouts, multiplicative-weights meta-dynamics, and a model-free empirical-Bernstein UCB oracle to adaptively expand the policy set. Best responses are trained within the generator using an advantage-based trust-region objective, eliminating the need to store and train separate actors. 
We evaluated \gems in a variety of Two-player and Multi-Player games such as the Deceptive Messages Game, Kuhn Poker and Multi-Particle environment. We find that \gems is up to ~$\mathbf{6\times}$ faster, has $\mathbf{1.3\times}$ less memory usage than \psro, while also reaps higher rewards simultaneously. These results demonstrate that \gems retains the game theoretic guarantees of \psro, while overcoming its fundamental inefficiencies, hence enabling scalable multi-agent learning in multiple domains.
\end{abstract}

\section{Introduction}

Imagine organizing a round-robin tennis tournament with hundreds of players (Fig.~\ref{fig:intro}). Scheduling every match gives a complete ranking but is clearly inefficient: the number of matches grows quadratically, and the results table quickly becomes unwieldy.

Training AI agents in two-player zero-sum Markov games faces a similar challenge. Population-based methods such as Policy-Space Response Oracles (\psro) (\cite{Lanctot2017AGame-Theoretic}) maintain a growing set of $k$ policies and explicitly construct a $k \times k$ payoff matrix, where each entry records the expected outcome of a policy against another. A meta-solver then updates the distribution over policies, analogous to computing player rankings from the tournament table.

\begin{wrapfigure}{r}{0.45\textwidth}
\centering
\includegraphics[width=0.95\linewidth]{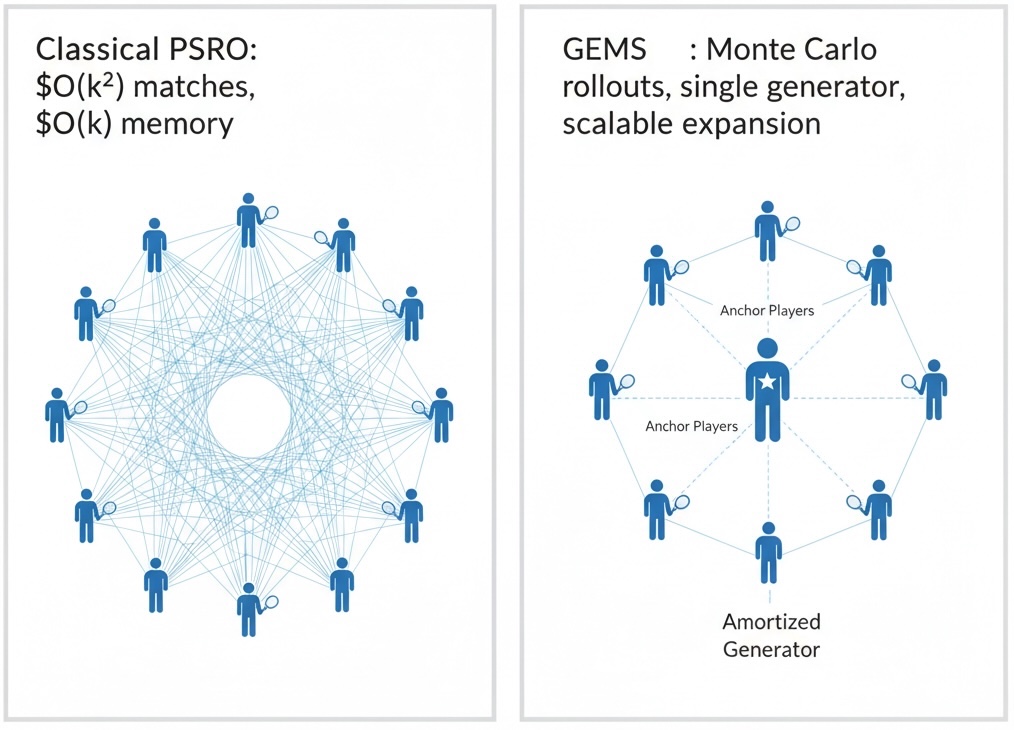}
\caption{
\textbf{Tournament analogy for policy populations.}
(\textit{Left}) \psro: explicit $k \times k$ payoff matrix with all pairwise matchups.
(\textit{Right}) \gems: compact anchor set of latent policies, with a single generator producing diverse strategies on demand.}
\label{fig:intro}
\end{wrapfigure}

While conceptually straightforward, classical \psro suffers from three critical bottlenecks:
\ding{182} \textit{Memory overhead}: storing a new policy for each player leads to linear growth in storage;
\ding{183} \textit{Computation overhead}: filling in the full $k \times k$ payoff matrix quickly becomes infeasible;
\ding{184} \textit{Scalability of new entries}: adding a new policy requires training and storing another separate actor. Prior work has mitigated some of these costs through selective best-response training (\cite{Smith2021IterativeEGS}), Double Oracle methods (\cite{McAleer2021XDOADouble, Huang2022EfficientDO}), meta-game improvements (\cite{McAleer2022AnytimeOptimal},\cite{ McAleer2022Self-PlayPSRO}), and knowledge transfer for new agents (\cite{Smith2023StrategicKnowledge, Lian2024Fusion-PSRO}). However, these approaches retain the core paradigm of explicit policy sets, leaving scalability fundamentally limited.

\gems overcomes these limitations while preserving the game-theoretic guarantees of \psro. Analogously to running a large tournament without scheduling every match, \ding{182} \gems maintains a compact \emph{anchor set} of latent codes that represent active ``players''. \ding{183} \gems treats the payoff matrix as conceptual and queries it through Monte Carlo rollouts, which are unbiased in nature. \ding{184} Meta-strategy is updated using multiplicative-weights discretization of replicator dynamics, which are akin to adjusting ranking based on samples. \ding{185} \textbf{Empiricial-Bernstein Upper Confidence Bound (UCB)} \citep{maurer2009empiricalbernsteinboundssample} selects promising new ``players'' from a candidate pool, thus expanding the population. \ding{186} Finally, \gems incorporates an amortized generator with an \textbf{A}dvantage-based \textbf{B}est-\textbf{R}esponse objective and \textbf{T}rust-region \textbf{R}egularization (ABR-TR) that eliminates the need for separate policies.
We evaluated \gems in a variety of Two-player and Multi-Player games such as the Deceptive Messages Game, Kuhn Poker \citep{kuhn2016simplified} and Multi-Particle environments \citep{terry2021pettingzoo}. We find that \gems
is up to ~$6$$\times$ faster, has $1.3$$\times$ less memory usage than \psro, while also reaps higher rewards simultaneously.

\paragraph{Contributions.}
Relative to classical \psro, \gems achieves
\ding{182} \textit{memory efficiency}: \gems replaces $O(k)$ stored players with a single versatile generator.
\ding{183} \textit{Computation efficiency}: \gems avoids \psro's quadratic payoff tables using Monte Carlo estimates, scaling per iteration with the number of sampled matches and the candidate pool size.
\ding{184} \textit{Scalable new entries}: EB-UCB identifies strong candidates, and ABR-TR integrates them into the generator without adding new actors.
\ding{185} \textit{Theoretical guarantees}: unbiased MC meta-gradients, instance-dependent regret bounds for EB-UCB, external regret bounds for multiplicative-weights dynamics, and finite-population exploitability accounting for approximate best responses.
Together, these advances transform the exhaustive ``tournament book-keeping'' of \psro into a leaner, scalable framework, closer to how real tournaments operate, where only select matches determine rankings and strategies adapt efficiently from sparse outcomes.

\section{Related Work}

\paragraph{\psro and its Variants.}
The seminal work of \citet{Lanctot2017AGame-Theoretic} introduced \psro as a general framework for multi-agent reinforcement learning. \psro iteratively expands a population of policies by training a new best-response policy against a meta-game mixture of the existing population. A core challenge of \psro is its reliance on a full $k \times k$ payoff matrix, where $k$ is the number of policies in the population. This leads to $\mathcal{O}(k^2)$ computation overhead per iteration and $\mathcal{O}(k)$ memory overhead to store individual policies, both of which become bottlenecks in large-scale settings \citep{Shao2024PurifiedPSRO, Zhou2022EfficientPSRO}. This has motivated a line of work on more efficient variants. For instance, \textsc{Efficient \psro} (\textsc{E-psro}) \citep{Zhou2022EfficientPSRO} minimizes evaluation costs by formulating the solver as an unrestricted-restricted game. Similarly, \citet{Smith2021IterativeEGS} propose training a best-response against a single opponent, reducing computation but potentially limiting diversity. The double oracle (DO) family of algorithms, including \textsc{Xdo} \citep{McAleer2021XDOADouble} and \textsc{Edo} \citep{Huang2022EfficientDO}, focus on finding equilibria more efficiently, often by guaranteeing linear convergence or monotonic exploitability reduction \citep{McAleer2022AnytimeOptimal}. \textit{However, these methods can still suffer from issues like performance oscillations or slow convergence due to the need for a full set of strategies \citep{Huang2022EfficientDO}. In contrast, \gems avoids these issues entirely by replacing the explicit payoff matrix with unbiased Monte Carlo rollouts and the discrete set of policies with a single, amortized generator.}

Other work has addressed the computational cost of training new policies from scratch at each iteration, which is a significant bottleneck \citep{Li2023SolvingLarge-Scale}. Notably, \textsc{Pipeline \psro} (P2SRO) \citep{McAleer2020PipelinePSRO} mitigates this by parallelizing the training of best responses. To further address this, \textsc{Fusion-\psro} \citep{Lian2024Fusion-PSRO} uses policy fusion to initialize new best responses, while \textsc{Strategic Knowledge Transfer} \citep{Smith2023StrategicKnowledge} explores transferring knowledge across changing opponent strategies. \textit{Our work tackles this problem differently by using a single amortized generator, which removes the need to store and train separate models for each policy.}

Finally, some papers have focused on improving the meta-game solution or policy diversity. \textsc{Alpha-\psro} \citep{muller2019generalized} replaces the Nash meta-solver with $\alpha$-Rank to scale \psro to general-sum, many-player settings.
\textsc{A-\psro} \citep{hu2023psro} introduces an advantage-based evaluation for strategy selection, providing a unified objective across zero-sum and general-sum games. \textsc{Anytime \psro} \citep{McAleer2022AnytimeOptimal} and \textsc{Self-Play \psro} (SP-\psro) \citep{McAleer2022Self-PlayPSRO} aim to improve convergence by adding high-quality policies to the population. Other works introduce new metrics for policy diversity to ensure better approximation of the Nash Equilibrium \citep{Tang2025PolicySimilarity, Yao2023PolicySpace}. \textit{While these works aim to improve \psro, they do not fundamentally change its core structure, which still relies on an explicit population of policies and a payoff matrix.}

\begin{table}[!htbp]
\centering
\caption{\textbf{Theoretical comparison of GEMS and PSRO-style methods.}
Let $N_t$ denote the cumulative number of policies found by iteration $t$.
$C_{\text{eval}}$ is the cost of one payoff evaluation (e.g., one episode),
and $k$ is the number of sampled opponents per iteration.
\emph{Memory Scaling} refers to the storage required for the meta-game state (payoff matrix / sampled entries).
\emph{Eval.\ Cost} refers to the complexity of evaluating a new policy update.
While standard PSRO variants scale quadratically in meta-game memory as the population grows ($N_t \propto t$), GEMS maintains constant meta-game scaling ($\mathcal{O}(1)$) via its generative parameterization.}
\label{tab:theory_comparison}
\small
\setlength{\tabcolsep}{5pt}
\begin{tabular}{lccc}
\toprule
\textbf{Method} & \textbf{Payoff Matrix} & \textbf{Memory} & \textbf{Eval.\ Cost} \\
                &                        & (Meta-Game)     & (per iter.)          \\
\midrule

PSRO &
Required (Full) &
$\mathcal{O}(N_t^2)$ &
$\mathcal{O}(N_t\,C_{\text{eval}})$ \\

Alpha-PSRO &
Required &
$\mathcal{O}(N_t^2)$ &
$\mathcal{O}(N_t\,C_{\text{eval}})$ \\

APSRO &
Sampled &
$\mathcal{O}(N_t)$--$\mathcal{O}(N_t^2)$ &
$\mathcal{O}(k\,C_{\text{eval}})$ \\

EPSRO &
Sampled &
$\mathcal{O}(N_t)$--$\mathcal{O}(N_t^2)$ &
$\mathcal{O}(k\,C_{\text{eval}})$ \\

P2SRO &
Sampled / local &
$\mathcal{O}(N_t)$--$\mathcal{O}(N_t^2)$ &
$\mathcal{O}(k\,C_{\text{eval}})$ \\

NeuPL &
Sampled / online &
$\mathcal{O}(N_t)$--$\mathcal{O}(N_t^2)$ &
$\mathcal{O}(k\,C_{\text{eval}})$ \\

\midrule
\textbf{GEMS (Ours)} &
\textbf{Not Required} &
$\boldsymbol{\mathcal{O}(1)}$ &
$\boldsymbol{\mathcal{O}(k\,C_{\text{eval}})}$ \\

\bottomrule
\end{tabular}

\vspace{0.5em}
\begin{minipage}{0.95\linewidth}
\footnotesize
\textit{Note:} GEMS utilizes a fixed-size latent anchor set $|Z|=K$ (where $K \ll t$ is a constant hyperparameter) to represent the population. Therefore, its meta-game memory complexity remains $\mathcal{O}(1)$ relative to training iterations $t$, whereas PSRO-based methods typically accumulate a growing population history ($N_t \propto t$). \textbf{All methods additionally store policy parameters;} the \emph{Memory Scaling} column isolates \emph{meta-game} (payoff/state) storage rather than network weights.
\end{minipage}
\end{table}

\paragraph{Neural Population Learning.}
Recent work has aggressively explored unifying policy populations into single conditional networks to enable positive transfer and minimize storage.
\textsc{Neural Population Learning} (\textsc{NeuPL}) \citep{liu2022neupl} represents an entire population of best-response policies using a single neural network that is explicitly conditioned on which strategy is being executed.
Formally, this network defines a conditional policy $\pi(a \mid s, \sigma)$ that outputs a distribution over actions $a$ given the current state (or observation) $s$, where $\sigma$ is a learnable discrete strategy index identifying a particular population member.
Addressing the limitations of discrete indexing, \textsc{Simplex-NeuPL} \citep{Liu2022SimplexNP} introduces a geometric formulation in which the policy conditions on continuous simplex vectors, enabling direct representation of mixed strategies within a single forward pass.
Furthermore, \textsc{NeuPL-JPSRO} \citep{liu2024neupljpsro} extends this paradigm to general-sum games by combining conditional policies with Joint-PSRO objectives, effectively learning correlated joint policies that approximate coarser equilibrium concepts such as CCE.
\textit{However, despite amortizing memory costs, these methods often retain the computational bottlenecks of empirical game theory, specifically the $\mathcal{O}(k^2)$ cost of estimating pairwise payoffs to update the strategy distribution \citep{liu2022neupl}. Distinct from the conditional architectures of the \textsc{NeuPL} family, \gems introduces a fully generative meta-solver that maps a latent geometry directly to policy parameters via a hypernetwork. This formulation allows \gems to bypass explicit population management entirely, utilizing unbiased Monte Carlo rollouts and latent-space evolutionary optimization (OMWU) to solve the game without constructing a full payoff matrix.}

\paragraph{Game-Theoretic Methods for Multi-Agent Learning.}
Beyond the \psro paradigm, other game-theoretic approaches exist for multi-agent learning. \textsc{Counterfactual Regret Minimization} (\textsc{Cfr}) \citep{zinkevich2007cfr} is a well-known method for extensive-form games, but it typically requires explicit state enumeration, making it challenging to scale to large or continuous domains. Some work has explored bridging the gap between \psro and \textsc{Cfr}, as seen in the unified perspective proposed by \citet{Wang2022AUnified}. Other approaches include methods based on fictitious play, such as \textsc{Fictitious Cross-Play} (\textsc{Fxp}) \citep{Xu2023FictitiousCross-Play}, which combines self-play and \psro to find global Nash equilibria. However, many self-play methods lack theoretical guarantees for general-sum games and can struggle with convergence in mixed cooperative-competitive settings \citep{Xu2023FictitiousCross-Play}.

\textit{Our work contributes to this broader landscape by offering a surrogate-free, amortized framework that is both memory and computationally efficient. While many variants of \psro have been proposed, they have largely retained the core structure of maintaining an explicit policy population and a large payoff matrix. \gems breaks from this paradigm by using an amortized generator and unbiased Monte Carlo rollouts, thereby sidestepping the fundamental scalability issues inherent to classical \psro. 
As summarized in Table~\ref{tab:theory_comparison}, this design removes the need to store an explicit meta-game payoff matrix and prevents meta-game state from growing with the population, while retaining a per-iteration evaluation profile comparable to sampled \psro variants.}
\section{Proposed Method: G{\small EMS}}
\label{sec:method}

\gems circumvents \psro's limitations by being \textbf{surrogate-free}.\footnote{In this context, ``surrogate-free'' signifies that GEMS does not maintain an explicit, discrete set of policies to approximate the game.}  
Instead of storing $k$ explicit actor models and computing their full payoff matrix, it maintains a single generative model $G_\theta$ that maps low-dimensional latent codes to policies.  
Consequently, the method scales to massive conceptual populations while storing only the generator and a set of latent ``anchor'' codes.

\gems unfolds in an iterative loop: It first solves for the equilibrium of the current meta-game using noisy estimates, then expands the game by finding an approximate best response via a bandit-like oracle.  
The full procedure is detailed below.

While the core exposition focuses on two--player zero-sum games for clarity, \gems framework extends naturally to more general settings.  
This extensibility is demonstrated by benchmarking \gems on two-player general-sum and, more broadly, \textbf{$n$-player general-sum} games.  
For the multi-player environments, the implementation leverages the \texttt{PettingZoo} library \citep{terry2021pettingzoo}.  
The necessary modifications to the meta-game estimators, per-player oracles, and convergence guarantees are provided in Appendix \ref{app:part_II}.

\begin{figure}[!htbp]
    \centering
    \begin{tikzpicture}[
        node distance=1.3cm and 2cm,
        base/.style = {draw, align=center, minimum height=3.5em, minimum width=7em, rounded corners, drop shadow, font=\sffamily\small, line width=0.8pt, fill opacity=1, text opacity=1, blur shadow={shadow blur steps=5, shadow xshift=1.2pt, shadow yshift=-1.2pt}},
        process/.style = {base, fill=powderBlue!10, draw=powderBlue!240},
        genstyle/.style = {base, fill=thistle!10, draw=thistle!240, minimum height=4em, minimum width=9em},
        data/.style = {base, fill=alabasterGrey!10, draw=alabasterGrey!240, shape=trapezium, trapezium left angle=70, trapezium right angle=110},
        decision/.style = {base, fill=paleSky!10, draw=paleSky!240, shape=diamond, aspect=2.4, minimum width=8em, minimum height=4.2em, inner sep=0pt},
        connector/.style = {-{Latex[length=2.5mm, width=2mm]}, thick, draw=gray!70, rounded corners=5pt},
        label/.style = {font=\sffamily\scriptsize\bfseries, color=gray!170, fill=white, inner sep=2pt, text opacity=1},
        stepnum/.style = {circle, draw=black, fill=white, inner sep=1.2pt, font=\sffamily\footnotesize\bfseries},
        startstop/.style = {base, ellipse, fill=lightSeaGreen!10, draw=lightSeaGreen!240}
    ]

        \node[genstyle, align=center] (gen) {
            \textbf{Amortized Generator}\\
            $z\!\mapsto\!\varphi\!=\!G_\theta(z)$\\
            \textit{Induces policy } $\pi_\varphi$
        };

        \node[data, left=1cm of gen, align=center] (anchors) {%
            \textbf{Anchor Set} ($Z_t$)\\
            Initialize: $Z_0\!\leftarrow\!\phi$\\
            Update: $Z_{t+1}\!\leftarrow\!Z_t\cup\{z_t^{\star}\}$%
        };

        \node[startstop, above=1cm of gen] (end) {
            Advance iteration\\
            $t \leftarrow t + 1$
        };
        
        \node[process, above=1cm of end, align=center] (sim) {%
            \textbf{Monte Carlo Rollout Evaluation}\\
            Estimated under fixed \\ meta-strategy ($\sigma_t$)
        };

        \node[startstop, left=1.5cm of sim] (start) {
            Start iteration $t$
        };

        \node[data, right=1.5cm of sim] (payoffs) {\textbf{Estimated Meta Values}\\
            Vector: $\hat{v}_t$ (policy$\rightarrow$mixture)\\
            Scalar: $\hat{\bar r}_t$ (mixture self-play)
        };

        \node[process, below=2cm of payoffs] (omwu) {
            \textbf{Meta-Solver (OMWU)}\\
            $m_t=2\hat{v}_t-\hat{v}_{t-1}$\\
            $\sigma_{t+1}\!\leftarrow\!\mathrm{OMWU}(\!\sigma_t,\!m_t,\!\hat{\bar r}_t)$
        };

        \node[process, below=2.3cm of omwu] (oracle) {
            \textbf{Oracle (EB-UCB)}\\
            $z_t^*\!=\arg\max_{z\in\Lambda_t}\!\!\mathrm{UCB}^{\mathrm{EB}}_t(z)$
        };

        \node[process, below=1cm of gen] (train) {
            \textbf{Generator Training (ABR-TR)}\\
            Trust region (KL to $G_{\theta^-}$)\\
            + smoothness
        };

        \node[stepnum, anchor=north west] at ($(sim.north west)+(-3pt,3pt)$) {1};
        \node[stepnum, anchor=west] at ($(payoffs.north)+(-3pt,3pt)$) {2};
        \node[stepnum, anchor=north west] at ($(omwu.north west)+(-3pt,3pt)$) {3};
        \node[stepnum, anchor=north west] at ($(oracle.north west)+(-3pt,3pt)$) {4};
        \node[stepnum, anchor=north west] at ($(train.north west)+(-3pt,3pt)$) {5};
        \node[stepnum, anchor=west] at ($(anchors.north)+(-3pt,3pt)$) {6};
        \node[stepnum, anchor=north west] at ($(gen.north west)+(-3pt,3pt)$) {7};

        \draw[connector] (start) -- (sim);

        \draw[connector] (anchors) -- (gen);

        \draw[connector] (gen) -- node[right, label, xshift=2pt] {Policies $\pi_{\varphi}$} (end);
        \draw[connector] (sim) -- (payoffs);
        \draw[connector] (payoffs) -- (omwu);

        \draw[connector] (omwu) -- node[right, label] {$\sigma_{t+1}$} (oracle);
 
        \draw[connector] (oracle) -- node[above, label, align=center] {New Anchor\\$z^*$} (train);

        \draw[connector] (oracle.south) |- ++(0,-0.8) -| node[above, pos=0.75, label] {Expand $Z_t$} (anchors.south);

        \draw[connector] (train) -- node[left, label, xshift=-2pt] {Update $\theta$} (gen);
    \end{tikzpicture}    
    \caption{At each iteration $t$, Monte Carlo rollouts evaluate the current policy mixture under a fixed meta--strategy $\sigma_t$, producing estimated meta--values $\hat v_t$ (policy-to-mixture) and $\hat {\bar r}_t$ (mixture self-play). An optimistic meta-solver updates the mixture via OMWU using the hint $m_t = 2\hat v_t - \hat v_{t-1}$. An EB-UCB oracle then selects a new latent anchor $z_t^*$ from the candidate set, which is incorporated through amortized generator training with a trust-region objective (ABR-TR). The anchor set $Z_t$ is expanded accordingly, the generator induces updated policies $\pi_\varphi$, and the iteration advances to $t + 1$. Green ellipses denote temporal iteration boundaries rather than algorithmic operations.}
    \label{fig:overview}
\end{figure}

Figure~\ref{fig:overview} provides a high-level overview of the \gems algorithmic loop.
Rather than explicitly maintaining a growing population of policies and their full payoff matrix, \gems operates on a compact latent representation: a fixed-size generator coupled with an evolving set of latent anchors. At each iteration, Monte Carlo rollouts estimate the meta-game values under the current mixture, which are then used by an optimistic meta-solver to update the population distribution. An EB-UCB oracle selects a new latent anchor corresponding to an approximate best response, which is incorporated via trust-region amortized generator training. The anchor set is expanded, the generator induces updated policies, and the process advances to the next iteration. This closed-loop structure captures how \gems alternates between equilibrium refinement and controlled population expansion while avoiding explicit policy enumeration.

\subsection{Formal Setup and Generative Representation}

Consider a two-player zero-sum Markov game.  
Let $r(\pi_i, \pi_j) \in [0, 1]$ denote the expected return for Player 1 when policy $\pi_i$ faces $\pi_j$.  
For a finite policy set $A = \{\pi_1, \ldots, \pi_k\}$, this induces a payoff matrix $M \in [-1, 1]^{k \times k}$ where $M_{ij} = \mathbb{E}[r(\pi_i, \pi_j)]$.  
\textbf{Crucially, G{\small EMS} never explicitly constructs or stores this matrix $M$.}

At each iteration $t$, \gems maintains three core components:
\begin{enumerate}
    \item An \textbf{anchor set} of latent codes $Z_t = \{z_1, \ldots, z_{k_t}\} \subset \mathbb{R}^d$, where $d$ is the dimension of the latent space, representing policies in a low-dimensional space.
    \item A single \textbf{generator network} $G_\theta$ \footnote{$G_{\theta}$ denotes the generator network parameterized by $\theta$, which represents its learnable weights.} that maps a latent code $z$ to policy parameters $\varphi = G_\theta(z)$, thereby defining a policy $\pi_\varphi$.
    \item A \textbf{meta-strategy} $\sigma_t \in \Delta_{k_t-1}$, which is a probability distribution over the current anchor set $Z_t$, representing the Nash equilibrium of the restricted game $A_t = \{\pi_{G_\theta(z)} : z \in Z_t\}$, where $\Delta_{k_t-1}$ is the $(k_t-1)$-dimensional probability simplex \footnote{The simplex $\Delta_n = {x \in \mathbb{R}^{n+1} : x_i \ge 0, \sum_i x_i = 1}$ denotes the space of probability distributions over $(n+1)$ elements.}.
\end{enumerate}

These components define the key quantities for analyzing the meta-game.  
The vector of expected payoffs for each anchor policy against the meta-strategy $\sigma_t$ is $v_t = M \sigma_t$, and the expected value of the game at iteration $t$ is $\bar{r}_t = \sigma_t^\top M \sigma_t$.  
A primary objective is to minimize \textbf{exploitability}, the incentive for any player to deviate from $\sigma_t$:
\begin{equation}
    \text{Exploit}(\sigma_t) = \max_{i \le k_t} e_i^\top M \sigma_t \;-\; \sigma_t^\top M \sigma_t,
    \label{eq:exploit_revised}
\end{equation}
 where $e_i$ is the $i$-th standard basis vector. \gems iteratively refines $\sigma_t$ and expands $Z_t$ to drive this exploitability toward zero.

\subsection{Estimating the Meta-Game without the Matrix}\label{sec:meta_game_est}

Because \gems does not access the true payoff matrix $M$, it estimates the values of $v_t$ and $\bar{r}_t$ through simulation.  
\textbf{Adopting a two-time-scale assumption in which the meta-strategy $\sigma_t$ is held fixed during estimation}, \gems employs Monte Carlo rollouts to obtain statistically sound estimates.

For each anchor policy $i \in [k_t]$, the algorithm samples $n_i$ opponents $j_s \sim \sigma_t$ and executes $m$ game episodes per pair, yielding returns $Y_{i,s,\ell} \in [0,1]$\footnote{$i$ indexes anchor policies in the current population, $s$ indexes sampled opponents drawn from the meta-strategy $\sigma_t$, and $\ell$ indexes independent Monte Carlo rollouts (episodes) used to estimate their expected returns.}
.  
This process yields a simple yet powerful estimator for per-policy performance.  
To estimate the overall game value, the procedure additionally samples $B$ independent policy pairs $(i_b, j_b) \sim \sigma_t \times \sigma_t$ and averages their outcomes across $m$ episodes each.  
The resulting estimators are
\begin{equation}
\begin{aligned}
    \hat{v}_{t,i} &= \frac{1}{n_i m} \sum_{s=1}^{n_i} \sum_{\ell=1}^{m} Y_{i,s,\ell},\quad \hat{\bar{r}}_t = \frac{1}{B m} \sum_{b=1}^{B} \sum_{\ell=1}^{m} Y_{i_b,j_b,\ell}, \quad(i_b, j_b) \sim \sigma_t \times \sigma_t,
    \label{eq:mc-estimators_revised}
\end{aligned}
\end{equation}
and constitute the empirical backbone of \gems.

\begin{lemma}[Unbiasedness and Empirical-Bernstein Concentration]
With rewards in $[0,1]$, the estimators are unbiased: $\mathbb{E}[\hat{v}_{t,i}] = (M\sigma_t)_i$ and $\mathbb{E}[\hat{\bar{r}}_t] = \sigma_t^\top M \sigma_t$.  
Moreover, for any $\delta \in (0,1)$, with probability at least $1-\delta$,
\begin{equation}
\begin{aligned}
    \bigl|\hat{v}_{t,i} - (M\sigma_t)_i\bigr| &\le \sqrt{\frac{2\,\widehat{\mathrm{Var}}_{t,i}\,\ln(2/\delta)}{n_i m}}
    + \frac{3\ln(2/\delta)}{n_i m - 1},\qquad \bigl|\hat{\bar{r}}_t - \sigma_t^\top M \sigma_t\bigr| = O \Bigl(\sqrt{\tfrac{\ln(1/\delta)}{B m}}\Bigr),
\end{aligned}
\end{equation}
where $\widehat{\mathrm{Var}}_{t,i}$ is the empirical variance of $\{Y_{i,s,\ell}\}$.
\end{lemma}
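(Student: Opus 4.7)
The plan is to separate the two claims: unbiasedness follows from the tower property of conditional expectation, and concentration follows from applying Maurer and Pontil's empirical Bernstein inequality to $\hat{v}_{t,i}$ together with a Hoeffding-type bound for $\hat{\bar{r}}_t$.

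First I would establish unbiasedness by conditioning on the sampled opponents. Fix an anchor $i$ and outer index $s$; given $j_s \sim \sigma_t$, each episode return satisfies $\mathbb{E}[Y_{i,s,\ell}\mid j_s] = M_{i,j_s}$ because $Y_{i,s,\ell}$ is by construction an unbiased Monte Carlo estimate of $r(\pi_i,\pi_{j_s})$. Taking the outer expectation over $j_s \sim \sigma_t$ yields $\mathbb{E}[Y_{i,s,\ell}] = \sum_j \sigma_{t,j} M_{ij} = (M\sigma_t)_i$, and linearity over the $n_i m$ summands of the estimator gives $\mathbb{E}[\hat{v}_{t,i}] = (M\sigma_t)_i$. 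The identical two-step argument applied to independent pairs $(i_b,j_b) \sim \sigma_t \times \sigma_t$ produces $\mathbb{E}[\hat{\bar{r}}_t] = \sigma_t^\top M \sigma_t$.

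Next I would derive the concentration bounds. For $\hat{\bar{r}}_t$, the $Bm$ returns are independent $[0,1]$-valued draws with common mean $\sigma_t^\top M \sigma_t$, so Hoeffding's inequality yields the advertised $O\bigl(\sqrt{\ln(1/\delta)/(Bm)}\bigr)$ deviation. For $\hat{v}_{t,i}$, I would view each $(s,\ell)$ as a marginal draw from the mixture return distribution with mean $(M\sigma_t)_i$ and variance estimated by $\widehat{\Var}_{t,i}$, and invoke the empirical Bernstein inequality on $N = n_i m$ bounded samples, which outputs exactly the two-term bound $\sqrt{2\widehat{\Var}_{t,i}\ln(2/\delta)/(n_i m)} + 3\ln(2/\delta)/(n_i m - 1)$ in the statement.

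The main obstacle is the sampling structure underlying $\hat{v}_{t,i}$: because the same opponent $j_s$ is shared by all $m$ inner episodes, the returns $\{Y_{i,s,\ell}\}_{\ell=1}^m$ are only exchangeable across $\ell$ and not mutually independent marginally (they are correlated through the shared $j_s$, with covariance $\Var_{j\sim \sigma_t}[r(\pi_i,\pi_j)]$). I would resolve this either by fixing the convention that the opponent is resampled per episode, so that all $n_i m$ draws are genuinely i.i.d.\ from the mixture distribution, or by reducing to the $n_i$ block averages $\bar{Y}_{i,s} = \tfrac{1}{m}\sum_\ell Y_{i,s,\ell}$ and applying empirical Bernstein with effective sample size $n_i$; accounting for the within-block variance reduction then recovers the same $n_i m$ denominator in the leading term. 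Once this independence convention is pinned down at the outset, both concentration bounds reduce to off-the-shelf applications of the named inequalities.
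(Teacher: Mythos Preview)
Your proposal is correct and mirrors the paper's proof exactly: unbiasedness via the tower property conditioning on the sampled opponent, concentration of $\hat v_{t,i}$ by the empirical-Bernstein inequality on $n_i m$ bounded samples, and concentration of $\hat{\bar r}_t$ by Hoeffding on $Bm$ samples. The independence subtlety you flag is real but the paper simply asserts the $\{Y_{i,s,\ell}\}$ are i.i.d.\ from the compound distribution (sample $j\sim\sigma_t$, then sample a return) without addressing the block structure, so your first resolution---resampling the opponent per episode---is precisely the convention the paper implicitly adopts.
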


\begin{proof}[Proof sketch.]
Unbiasedness follows from the law of total expectation over the sampling of opponents, and the concentration bounds apply the empirical-Bernstein inequality for bounded random variables.
\end{proof}

\subsection{Solving the Meta-Game via Optimistic Replicator Dynamics}

Given the estimated payoffs $\hat{v}_{t}$ and game value $\hat{\overline{r}}_{t}$, \gems updates the meta-strategy from $\sigma_{t}$ to $\sigma_{t+1}$.  
\gems adopts the \textbf{Optimistic Multiplicative Weights Update (OMWU)} algorithm (\cite{daskalakis2020lastiterateconvergencezerosumgames}), an adaptive discretization of replicator dynamics.  
Rather than relying solely on the current payoff estimate $\hat{v}_t$, OMWU incorporates a predictive “hint’’ about the next payoff.  
Specifically, it forms the optimistic estimate $m_t \;=\; 2\hat{v}_t - \hat{v}_{t-1}, \text{with}\;\hat{v}_0 = \mathbf{0}.$ The meta-strategy then updates according to
\begin{equation}
\sigma_{t+1}(i) \;\propto\; \sigma_t(i)\,
        \exp\bigl(\eta_t[\;2\hat{v}_{t,i} - \hat{v}_{t-1,i} - \hat{\overline{r}}_{t}\;]\bigr),
        \qquad \eta_t > 0,
\end{equation}
followed by normalization to ensure $\sigma_{t+1} \in \Delta_{k_t-1}$.\footnote{In the context of the Optimistic Multiplicative Weights Update (OMWU) algorithm, $\eta_t$ is the step size, often referred to as the learning rate or gain parameter.} 
This optimistic step yields stronger theoretical guarantees: regret now scales with the cumulative variation of the payoff vectors rather than the iteration horizon $T$.

\begin{proposition}[External Regret of OMWU under Unbiased Noise] \label{prop:ext_reg}
Assume payoffs in $[0,1]$.  
For any sequence of payoff estimates $\hat{v}_{t}$ satisfying $\mathbb{E}[\hat{v}_{t}\mid\sigma_{t}] = M\sigma_{t}$ and a constant step size $\eta$, the average external regret obeys
\begin{equation}
\frac{1}{T}\sum_{t=1}^{T}
\Bigl(\max_{i} e_{i}^{\top} M\sigma_{t} - \sigma_{t}^{\top} M\sigma_{t}\Bigr)
\;\le\;
O\Bigl(\frac{1}{T}\sqrt{\ln k_T \,\sum_{t=1}^{T}\|v_t - v_{t-1}\|_{\infty}^{2}}\Bigr)
\;+\;
\frac{1}{T}\sum_{t=1}^{T}\mathbb{E}\bigl[\|\,\hat{v}_{t} - M\sigma_{t}\|_{\infty}\bigr],
\end{equation}
where $v_t = M\sigma_t$ are the true expected payoffs.  
The first term, driven by the variation of the meta-game, leads to faster convergence in slowly changing environments such as those induced by \gems.
\end{proposition}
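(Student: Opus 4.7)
The plan is to reduce the claim to the standard variation-based regret guarantee for optimistic FTRL applied to the observed payoff stream $\{\hat{v}_t\}$, and then separately absorb the bias between observed and true payoffs that the Monte Carlo estimator of the previous lemma introduces.

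First, I would invoke the classical OMWU analysis in the style of Rakhlin--Sridharan and Syrgkanis--Agarwal--Luo--Schapire. With entropic regularization and the predictive hint $m_t = 2\hat{v}_t - \hat{v}_{t-1}$, for any pure comparator $e_i$ one has
\begin{equation*}
\sum_{t=1}^T \langle e_i - \sigma_t, \hat{v}_t\rangle \;\le\; \frac{\ln k_T}{\eta} + \eta \sum_{t=1}^T \|\hat{v}_t - \hat{v}_{t-1}\|_\infty^2 - \frac{1}{4\eta}\sum_{t=1}^T \|\sigma_{t+1}-\sigma_t\|_1^2,
\end{equation*}
and tuning $\eta \propto \sqrt{\ln k_T / V_T^{\mathrm{obs}}}$ (or using the standard doubling trick) collapses the first two summands into $O\!\bigl(\sqrt{\ln k_T\,V_T^{\mathrm{obs}}}\bigr)$ with $V_T^{\mathrm{obs}} = \sum_t \|\hat{v}_t-\hat{v}_{t-1}\|_\infty^2$.

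Second, I would convert observed variation into true variation plus noise. The triangle inequality together with $(a+b+c)^2 \le 3(a^2+b^2+c^2)$ yields
\begin{equation*}
\|\hat{v}_t-\hat{v}_{t-1}\|_\infty^2 \;\le\; 3\|v_t-v_{t-1}\|_\infty^2 + 3\|\hat{v}_t-v_t\|_\infty^2 + 3\|\hat{v}_{t-1}-v_{t-1}\|_\infty^2.
\end{equation*}
Because payoffs lie in $[0,1]$, each noise term is in $[0,2]$, so squared noise is at most twice absolute noise and folds into the cumulative $\ell_\infty$ error that appears on the RHS. I would then pass from regret under $\hat{v}_t$ to regret under the true payoff $v_t = M\sigma_t$ via Hölder: each summand of
\begin{equation*}
\sum_t \langle e_i - \sigma_t, v_t\rangle \;-\; \sum_t \langle e_i - \sigma_t, \hat{v}_t\rangle
\end{equation*}
is bounded by $\|e_i-\sigma_t\|_1\,\|v_t-\hat{v}_t\|_\infty \le 2\|v_t-\hat{v}_t\|_\infty$. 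Combining these pieces, taking expectations, maximizing over $i$, and dividing by $T$ delivers the two-term bound.

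The hard part will be the measurability coupling between $\sigma_t$ and the estimator $\hat{v}_t$. The previous lemma guarantees $\mathbb{E}[\hat{v}_t\mid \mathcal{F}_{t-1}] = M\sigma_t$, but the worst-case index $i^\star$ is itself random and $\sigma_t$ depends on the entire past noise trajectory, so the cross-terms cannot be annihilated by iterated expectation. This is precisely why the residual appears as an absolute $\ell_\infty$ deviation on the RHS rather than vanishing. A sharper bound would require a martingale-concentration argument applied directly to $\sum_t \langle e_{i^\star}-\sigma_t, v_t-\hat{v}_t\rangle$, but the looser pointwise estimate suffices for the proposition as stated and for the subsequent exploitability analysis in \gems.
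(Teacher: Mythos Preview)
Your proposal is correct and follows essentially the same two-stage route as the paper: first invoke the standard OMWU variation bound on the \emph{observed} payoff stream $\{\hat v_t\}$, then translate to regret against the true payoffs $v_t=M\sigma_t$ by splitting $\|\hat v_t-\hat v_{t-1}\|_\infty$ with the triangle inequality and bounding $\langle e_i-\sigma_t, v_t-\hat v_t\rangle$ via H\"older. Your presentation is in fact slightly more careful than the paper's (the explicit $(a{+}b{+}c)^2\le 3(a^2{+}b^2{+}c^2)$ step and the measurability discussion for the random comparator $i^\star$), but the underlying decomposition and the key lemma invoked are identical.
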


\subsection{Finding New Strategies with a Bandit Oracle}
\label{sec:bandit-oracle}

Solving the restricted meta-game alone is insufficient; progress requires introducing new, challenging policies into the population.  
\gems casts this search as a \textbf{multi-armed bandit problem} (\cite{robbins1952some}).  
The “arms’’ comprise a finite pool of candidate latent codes $\Lambda_t$, and pulling an arm corresponds to evaluating the policy $G_\theta(z)$ against the current meta-strategy $\sigma_t$.

For a latent code $z \in \Lambda_t$, define
\begin{equation}
    f_t(z) \;=\; \mathbb{E}_{j \sim \sigma_t}\bigl[r\bigl(\pi_{G_\theta(z)}, \pi_j\bigr)\bigr].
\end{equation}
\gems estimates this value, $\hat{\mu}_t(z)$, together with its empirical variance $\widehat{\mathrm{Var}}_t(z)$ via Monte Carlo rollouts.  
To balance exploration and exploitation efficiently, it employs an \textbf{empirical-Bernstein Upper Confidence Bound (EB-UCB)} (\cite{maurer2009empiricalbernsteinboundssample}), which leverages variance information for tighter bounds.  
A Jacobian penalty encourages smoothness in the generator’s latent space, aiding optimization, where $J G_\theta(z)$ denotes the Jacobian matrix of the generator with respect to its input $z$.  
The score assigned to candidate $z$ is
\begin{equation}
\label{eq:ucb-jac}
    \mathrm{UCB}^{\mathrm{EB}}_t(z) \;=\;
        \hat{\mu}_t(z)
        + \sqrt{\frac{2\,\widehat{\mathrm{Var}}_t(z)\,\ln(3/\delta_t)}{n_z m}}
        + \frac{3\ln(3/\delta_t)}{n_z m - 1}
        \;-\; \lambda_J \|J G_\theta(z)\|_F^{2},
\end{equation}
where $\delta_t = t^{-2}$ is a decaying confidence parameter and $\lambda_t$ $\geq$ $0$ is a penalty coefficient.
\gems selects:
\begin{equation}
  z_t^\star \;=\; \arg\max_{z \in \Lambda_t} \mathrm{UCB}^{\mathrm{EB}}_t(z),
\end{equation}
and augments the anchor set: 
\begin{equation}
 Z_{t+1} \leftarrow Z_t \cup \{z_t^\star\}.
\end{equation}
\begin{theorem}[Instance-Dependent Oracle Regret]\label{th:instance}
Assume rewards lie in $[0,1]$ and the bandit problem admits a unique best arm $z^\star$ with sub-optimality gaps $\Delta_z > 0$.  
Under the two-time-scale assumption (fixed $\sigma_t$ during selection), the cumulative regret of the oracle satisfies
\begin{equation}
    \sum_{t=1}^{T} \mathbb{E}\bigl[f_t(z^\star) - f_t(z_t^\star)\bigr]
    \;=\;
    O\Bigl(\sum_{z \neq z^\star} \tfrac{\ln T}{\Delta_z}\Bigr)
    \;+\;
    \lambda_J \sum_{t=1}^{T} \mathbb{E}\bigl[\|J G_\theta(z_t^\star)\|_F^{2}\bigr].
\end{equation}
If the Jacobian norm is uniformly bounded, the second term can be controlled by annealing $\lambda_J$.
\end{theorem}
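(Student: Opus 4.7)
The plan is to freeze the meta-strategy via the two-time-scale assumption so the oracle faces a stationary stochastic bandit on the finite arm set $\Lambda_t$ with bounded, unbiased rewards $f(z) = \mathbb{E}_{j \sim \sigma_t}[r(\pi_{G_\theta(z)}, \pi_j)] \in [0,1]$, and then to run a standard EB-UCB regret argument with the Jacobian penalty treated as a deterministic offset. Because $\lambda_J \|JG_\theta(z)\|_F^2$ is noise-free, subtracting it from the UCB is equivalent to running EB-UCB on the pseudo-reward $\tilde f(z) = f(z) - \lambda_J \|JG_\theta(z)\|_F^2$. The instantaneous regret then splits as
$f(z^\star) - f(z_t^\star) = [\tilde f(z^\star) - \tilde f(z_t^\star)] + \lambda_J[\|JG_\theta(z^\star)\|_F^2 - \|JG_\theta(z_t^\star)\|_F^2]$,
cleanly separating a bandit pseudo-regret from a deterministic misspecification bias.

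For the pseudo-regret term I would invoke the Maurer--Pontil empirical-Bernstein analysis. A union bound over the $|\Lambda_t|$ arms at confidence level $\delta_t = t^{-2}$ gives a summable failure probability $\sum_t O(|\Lambda_t|/t^2) < \infty$ as long as $|\Lambda_t|$ grows polynomially. On the complementary clean event, the optimism chain
$\tilde f(z^\star) \le \mathrm{UCB}^{\mathrm{EB}}_t(z^\star) \le \mathrm{UCB}^{\mathrm{EB}}_t(z_t^\star) \le \tilde f(z_t^\star) + 2\,w_t(z_t^\star)$
holds with EB confidence width $w_t(z_t^\star)$, and the usual stopping-time argument pins the pull count of any sub-optimal arm $z$ to $O(\widehat{\mathrm{Var}}_t(z) \ln T / \tilde\Delta_z^{\,2} + \ln T / \tilde\Delta_z)$. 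Since rewards lie in $[0,1]$ the variance factor is at most $1/4$; provided $\lambda_J$ is small enough that $\tilde\Delta_z = \Theta(\Delta_z)$, summing $\tilde\Delta_z\, N_z(T)$ over sub-optimal arms reproduces the leading $O(\sum_{z \neq z^\star} \ln T / \Delta_z)$ term, and failure-event rounds contribute only an additive constant via Borel--Cantelli.

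The Jacobian bias is collected separately: $\lambda_J \|JG_\theta(z^\star)\|_F^2$ is a per-round constant while $-\lambda_J \|JG_\theta(z_t^\star)\|_F^2$ tracks the chosen arm, and pairing them across $t$ (using $z_t^\star \to z^\star$ on all but sub-logarithmically many rounds under optimism) collapses to the residual $\lambda_J \sum_t \mathbb{E}[\|JG_\theta(z_t^\star)\|_F^2]$ stated in the theorem, which is driven to zero by annealing $\lambda_J$. The main obstacle will be the joint control of the gaps and the Jacobian bias: if $\lambda_J$ is too large, the penalty can flip the arm ranking so that the pseudo-best arm differs from $z^\star$, breaking the identification needed above. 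I would close this by imposing $\lambda_J < \tfrac{1}{2}\min_{z \neq z^\star} \Delta_z / \|JG_\theta(z^\star)\|_F^2$, or equivalently by annealing $\lambda_J$ — exactly the uniformly-bounded-Jacobian regime that the theorem statement already anticipates.
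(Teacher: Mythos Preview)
Your proposal is correct and follows essentially the same route as the paper: freeze $\sigma_t$ via the two-time-scale assumption, run the standard EB-UCB optimism-chain / pull-count argument on the resulting stationary bandit, and treat the Jacobian penalty as an additive offset that contributes a separate term to the regret. Your treatment is in fact slightly more careful than the paper's in one respect: you explicitly distinguish the pseudo-gaps $\tilde\Delta_z$ (gaps in $\tilde f$) from the true gaps $\Delta_z$ and note the condition $\lambda_J$ must satisfy so the penalty does not flip the arm ranking, whereas the paper's appendix proof simply asserts the standard $O(\ln T/\Delta_z)$ bound and adds the Jacobian term at the end without discussing this identifiability issue.
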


\begin{proof}[Proof sketch.]
Standard bandit-analysis arguments apply.  
With high probability, the EB-UCB provides valid confidence bounds; comparing the chosen arm $z_t^\star$ with the optimal arm $z^\star$ and summing over time yields the stated instance-dependent regret bound.
\end{proof}

\subsection{Training the Generator with Amortized Best Response}

Once a promising latent code $z_t^\star$ has been identified, \gems must ensure that the generator $G_\theta$ can realize the associated high-performing policy while retaining its ability to produce previously effective policies.  
This is achieved via an \textbf{Amortized Best-Response with a Trust Region (ABR-TR)} objective, inspired by trust-region methods in deep reinforcement learning such as TRPO and PPO (\cite{schulman2017trustregionpolicyoptimization,schulman2017proximalpolicyoptimizationalgorithms}).

Rather than training a new network from scratch, \gems fine-tunes the existing generator $G_\theta$ to maximize the performance of a curated set of promising latent codes, drawn from a distribution $q_{t}$, against the opponent mixture $\sigma_{t+1}$.  
A KL-divergence penalty against a frozen, older generator $\theta^{-}$ serves as a trust region, mitigating catastrophic forgetting:
\begin{equation}
\mathcal{L}_{\mathrm{ABR-TR}}(\theta) \;=\;
    \mathbb{E}_{z \sim q_t,\, j \sim \sigma_{t+1}}
    \Bigl[
        \widehat{A}_{\pi_{G_\theta(z)}} 
        \;-\; \beta\,\operatorname{KL}\bigl(\pi_{G_\theta(z)} \;\|\; \pi_{G_{\theta^-}}(z)\bigr)
        \;-\; \lambda_J \|J G_\theta(z)\|_F^{2}
    \Bigr],
\label{eq:abr-tr_revised}
\end{equation}
where $\widehat{A}$ denotes a suitable advantage estimator, $\beta>0$ is a hyperparameter controlling trust-region strength and $J G_\theta(z)$ denotes the Jacobian matrix of the generator with respect to its input $z$.  
Taking a few gradient-ascent steps on $\mathcal{L}_{\mathrm{ABR-TR}}$ each iteration amortizes best-response computation while preserving a stable, single generator.

\subsection{Overall Exploitability Bound}

The preceding components now combine to yield a convergence guarantee for the entire \gems algorithm.
To make this composition explicit, we first state a decomposition result that separates the average exploitability
into four interpretable sources of error:

\begin{minipage}{0.85\linewidth}
    \begin{itemize}
        \setlength\itemsep{0.5em}
        \item[\ding{182}] inherent regret of the OMWU meta-strategy solver (under time-varying payoffs),
        \item[\ding{183}] noise from Monte Carlo payoff estimation,
        \item[\ding{184}] sub-optimality from bandit-based oracle (anchor) selection,
        \item[\ding{185}] approximation error induced by amortized generator training.
    \end{itemize}
\end{minipage}

\paragraph{Amortized best-response gap.}
Let $\pi_{\mathrm{BR}}(\pi_{j})$ be the true best response to an opponent policy $\pi_{j}$.
We define the amortized best-response error as the worst-case expected gap (over iterations) between
a true best response and the policy produced by the generator at the selected latent code $z_t^\star$:
\begin{equation}
    \varepsilon_{\mathrm{BR}}
    \;=\;
    \sup_{t}
    \mathbb{E}_{j \sim \sigma_t}
    \Bigl[
        r\bigl(\pi_{\mathrm{BR}}(j), \pi_j\bigr)
        \;-\;
        r\bigl(\pi_{G_\theta(z_t^\star)}, \pi_j\bigr)
    \Bigr]
    \;\ge\; 0.
    \label{eq:eps-br_revised}
\end{equation}

\paragraph{Exploitability decomposition.}
Let $M$ denote the (implicit) meta-game payoff operator and define the true payoff vector
$v_t := M\sigma_t$ at iteration $t$. Let $\hat v_t$ be the Monte-Carlo estimator of $v_t$ computed
from $n m$ rollouts per queried matchup (as in Section~\ref{sec:meta_game_est}).
Let $z_t^\star$ be the (possibly approximate) anchor selected by the EB-UCB oracle at iteration $t$,
and let $\Delta_z$ denote the suboptimality gap for anchor $z$.

\begin{proposition}[Exploitability decomposition]\label{prop:exploit_decomp}
Assume rewards lie in $[0,1]$.
At each iteration $t$, the instantaneous exploitability satisfies
\begin{equation}
\text{Exploit}(\sigma_t)
\;\le\;
\underbrace{
    \bigl\langle v_t - v_{t-1},\, \sigma_t - \sigma_t^\star \bigr\rangle
}_{\text{OMWU meta-dynamics term}}
\;+\;
\underbrace{
    2\|\hat v_t - v_t\|_\infty
}_{\text{MC estimation error}}
\;+\;
\underbrace{
    \varepsilon_{\mathrm{BR}}
}_{\text{amortized BR error}}
\;+\;
\underbrace{
    \Bigl(\max_{z} v_t(z) - v_t(z_t^\star)\Bigr)
}_{\text{oracle suboptimality}}.
\label{eq:inst_decomp}
\end{equation}
Here $\sigma_t^\star$ denotes the best fixed meta-strategy in hindsight for the (time-varying) payoff sequence.
\end{proposition}

\noindent\textit{Proof sketch.}
Starting from the definition of exploitability as a best-response gap against $\sigma_t$,
we add and subtract (i) Monte-Carlo payoffs $\hat v_t$, (ii) the best-anchor payoff $\max_z v_t(z)$,
and (iii) the generator-induced best response at $z_t^\star$.
Triangle inequalities yield the MC term and the amortized BR term, while the oracle term captures the loss from selecting
$z_t^\star$ instead of $\arg\max_z v_t(z)$.
The remaining meta-dynamics term is controlled by the OMWU guarantee for time-varying payoffs.

\paragraph{Overall finite-population bound.}
We now upper bound each term in Proposition~\ref{prop:exploit_decomp} using the corresponding component guarantees.

\begin{theorem}[Finite-Population Exploitability Bound]\label{thm:overall_expl}
Assume rewards lie in $[0,1]$ and that oracle selection operates on a two-time-scale schedule.
With OMWU step size $\eta = \Theta\bigl(\sqrt{\ln k_T / T}\bigr)$ and Monte-Carlo budget satisfying
$\mathbb{E}[\|\hat v_t - v_t\|_\infty] = O\bigl((n m)^{-1/2}\bigr)$ for each $t$,
the average exploitability obeys
\begin{equation}
\begin{aligned}
\frac{1}{T}\sum_{t=1}^{T}\text{Exploit}(\sigma_t)
\;\le\;
&\underbrace{
    O\Bigl(
        \tfrac{1}{T}\sqrt{\ln k_T
        \sum_{t=1}^{T}\|v_t - v_{t-1}\|_{\infty}^{2}}
    \Bigr)
}_{\text{OMWU regret}}
\;+\;
\underbrace{
    O\bigl((n m)^{-1/2}\bigr)
}_{\text{MC estimation error}}
\\[6pt]
&\;+\;
\underbrace{
    \varepsilon_{\mathrm{BR}}
}_{\text{amortized BR error}}
\;+\;
\underbrace{
    O\Bigl(
        \tfrac{1}{T}\sum_{z \neq z^\star}\tfrac{\ln(T/\delta)}{\Delta_z}
    \Bigr)
}_{\text{oracle regret}} .
\end{aligned}
\end{equation}
\end{theorem}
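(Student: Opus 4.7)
The plan is to decompose the per-iteration exploitability into the four advertised error sources, then invoke the preceding lemma, proposition, and theorem term by term, and finally average over $t=1,\dots,T$. Writing $\pi_{\mathrm{BR}}(\sigma_t)$ for a true best response to the mixture $\sigma_t$ and letting $z_t^\star$ be the latent code produced by the EB--UCB oracle at step $t$, I would insert a telescoping of zeros into $\text{Exploit}(\sigma_t) = \max_i e_i^\top M\sigma_t - \sigma_t^\top M\sigma_t$:
\[
\text{Exploit}(\sigma_t) \le \underbrace{\bigl[r(\pi_{\mathrm{BR}}(\sigma_t),\sigma_t) - r(\pi_{G_\theta(z_t^\star)},\sigma_t)\bigr]}_{\text{amortized BR gap}} + \underbrace{\bigl[r(\pi_{G_\theta(z_t^\star)},\sigma_t) - \max_{i\le k_t}(M\sigma_t)_i\bigr]}_{\text{oracle gap}} + \underbrace{\bigl[\max_{i\le k_t}(M\sigma_t)_i - \sigma_t^\top M\sigma_t\bigr]}_{\text{meta regret}}.
\]
This algebraic split mirrors the four summands in the conclusion, and the rest of the proof is a careful bookkeeping exercise matching each bracket to one of the earlier guarantees.

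First, I would average the meta-regret bracket over $t$ and apply the OMWU proposition. With the hypothesized step size $\eta = \Theta(\sqrt{\ln k_T/T})$, its first term yields the announced path-length expression $O\bigl(T^{-1}\sqrt{\ln k_T \sum_t\|v_t-v_{t-1}\|_\infty^2}\bigr)$, while its stochastic residual $\mathbb{E}\|\hat v_t - M\sigma_t\|_\infty$ is precisely what the empirical-Bernstein lemma controls: taking a union bound over the $k_t$ coordinates and integrating the tail at budget $nm$ produces the $O((nm)^{-1/2})$ Monte-Carlo estimation term. One subtlety here is that the effective arm set grows with $t$, but this only introduces the $\ln k_T$ factor already absorbed into the OMWU bound.

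Second, I would handle the oracle and amortized-BR brackets together. The oracle bracket measures how far the oracle-chosen policy falls from the best anchor in $Z_{t+1}$, and since $z_t^\star$ is added to $Z_{t+1}$, this gap is bounded in expectation by the instance-dependent oracle regret from the bandit theorem, giving the $O\bigl(T^{-1}\sum_{z\ne z^\star}\ln T/\Delta_z\bigr)$ term (the Jacobian residual vanishes under the assumed annealing of $\lambda_J$). The amortized bracket is, by the very definition in \eqref{eq:eps-br_revised}, at most $\varepsilon_{\mathrm{BR}}$ pointwise, so averaging gives the corresponding term directly. Summing the four contributions completes the proof.

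The main obstacle I anticipate is the interaction between the meta-update subsystem and the oracle subsystem: Proposition~3.2 and Theorem~3.3 are each proved under a two-time-scale assumption that freezes $\sigma_t$ momentarily, whereas in the composite algorithm $\sigma_t$ drifts while the anchor set expands. To make the composition rigorous I would either cite the two-time-scale hypothesis of the statement at face value, or else treat the drift as a perturbation bounded by $\|v_t-v_{t-1}\|_\infty$, observing that this quantity already appears inside the OMWU path-length factor and therefore does not contaminate the final rate. Secondary bookkeeping issues --- controlling a single $\delta$-budget across the Monte Carlo, OMWU, and EB--UCB high-probability events via the union bound, and verifying that the Jacobian penalty term in the EB--UCB score is $o(1)$ so it does not surface in the final bound --- are routine but need to be laid out cleanly before the four pieces are added together.
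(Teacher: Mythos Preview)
Your proposal is correct and mirrors the paper's own proof: both split $\text{Exploit}(\sigma_t)$ into an internal meta-regret piece handled by Proposition~3.2 (yielding the OMWU and MC terms) and a population-gap piece handled by the definition of $\varepsilon_{\mathrm{BR}}$ together with Theorem~3.3 (yielding the amortized-BR and oracle terms). The only structural difference is that the paper telescopes the population gap through one extra intermediate point, $\max_{z\in\Lambda_t} f_t(z)$, splitting your ``oracle gap'' $B=f_t(z_t^\star)-\max_{i\le k_t}(M\sigma_t)_i$ into two pieces --- an explicit bandit instantaneous regret $f_t(z_t^\star)-\max_{z\in\Lambda_t}f_t(z)$ and a ``progress term'' $\max_{z\in\Lambda_t}f_t(z)-\max_{i\le k_t}(M\sigma_t)_i$ that it argues can be dropped; this makes the link to Theorem~3.3 more direct than your justification (``since $z_t^\star$ is added to $Z_{t+1}$''), which does not by itself identify $B$ with the bandit regret. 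Beyond that detail the two arguments coincide, including the two-time-scale caveat you correctly flag.
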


\noindent\textit{Proof (one line).}
Apply Proposition~\ref{prop:exploit_decomp}, then substitute the bounds from the OMWU time-varying regret guarantee,
the Monte-Carlo concentration bound for $\|\hat v_t - v_t\|_\infty$, and the EB-UCB oracle regret bound,
and average over $t=1,\dots,T$.

\medskip
This bound corroborates the intuition behind \gems.
As the simulation budget grows ($n m \to \infty$) and generator training improves ($\varepsilon_{\mathrm{BR}}\to 0$),
exploitability is ultimately driven by the no-regret property of the OMWU solver, achieving convergence guarantees competitive
with traditional methods while avoiding explicit quadratic meta-game storage.

\subsection{Generalization to n-Player and General-Sum Games}

Although the \gems framework is introduced in the two-player zero-sum (2P-ZS) setting for clarity, the same principles extend naturally to the $n$-player general-sum (NP-GS) case.  
The amortized generator, Monte Carlo rollouts, and bandit oracle remain intact; only the meta-game formulation generalizes.

In an $n$-player game, each player $p \in \{1,\dots,n\}$ maintains a meta-strategy $\sigma_t^{(p)}$.  
To evaluate player $p$’s $i$-th policy against the joint strategy of all other players, $\Sigma_t^{-p}$, \gems reuses a single batch of shared game rollouts and computes an importance-weighted estimate:
\begin{equation}
    \hat{v}_{t,i}^{(p)}
    \;=\;
    \frac{1}{B m}
    \sum_{b=1}^{B}\sum_{l=1}^{m}
    \frac{\mathbf{1}\{i_b^{(p)} = i\}}{\sigma_t^{(p)}(i)}\,
    Y_{b,l}^{(p)},
    \label{eq:iw_estimator_main}
\end{equation}
where $i^{(b)} = \bigl(i_b^{(1)},\dots,i_b^{(n)}\bigr)$ is a joint policy profile drawn from the full meta-strategy
$\Sigma_t = \bigotimes_{q=1}^{n}\sigma_t^{(q)}$, and $Y_{b,l}^{(p)}$ denotes the return for player $p$.

Each player then updates independently: applying Multiplicative Weights (or OMWU) to $\hat{v}_t^{(p)}$ and invoking a separate EB-UCB oracle to discover new strategies.  
This decentralized process no longer targets exploitability (a 2P-ZS notion) but instead drives the time-averaged joint strategy toward an \textbf{$\epsilon$-Coarse-Correlated Equilibrium ($\epsilon$-CCE)}, the standard solution concept for general-sum games.

Full derivations, algorithmic details, and convergence proofs for the $n$-player extension appear in Appendix \ref{app:part_II}.

\subsection{Step-Size (ETA) Scheduler for the OMWU Meta-Update}
\label{sec:eta-schedule}

The OMWU update in \S\ref{sec:method} requires a step size $\eta_t>0$. In \gems, a simple,
explicit schedule $\{\eta_t\}_{t\ge1}$ is used, chosen from three options that trade off adaptivity and stability:
\begin{equation}
    \eta_t \;=\;
    \begin{cases}
        \eta_0, & \text{const},\\[4pt]
        \displaystyle \frac{\eta_0}{\sqrt{t}}, & \text{sqrt},\\[10pt]
        \displaystyle \frac{\eta_0}{1+\alpha t},\ \alpha>0, & \text{harmonic},
    \end{cases}
    \qquad \text{with default } \eta_0=0.08,\ \alpha=0.5 .
    \label{eq:eta-schedules}
\end{equation}
These schedules instantiate the optimistic MWU update
\begin{equation}
    \sigma_{t+1}(i)\ \propto\ \sigma_t(i)\,
    \exp\Big(\eta_t \,\big[\,2\hat v_{t,i}-\hat v_{t-1,i}-\hat{\overline r}_t\,\big]\Big),
\label{eq:omwu-eta}
\end{equation}
followed by normalization to ensure $\sigma_{t+1}\in\Delta_{k_t-1}$.

\paragraph{Rationale.}
\ding{182} `\textit{const}' keeps a fixed step size and adapts quickly to changes in the restricted game, but can overreact
to noisy payoff estimates. \ding{183} `\textit{sqrt}' decays gently ($\eta_t\propto t^{-1/2}$), a standard choice in online
learning that balances responsiveness and variance. \ding{184} `\textit{harmonic}' decays as $\eta_t\propto t^{-1}$, yielding
more conservative late-stage updates and improved noise suppression.

\begin{algorithm}[!htbp]
    \caption{Generative Evolutionary Meta-Solver (\gems)}
    \label{alg:gems}
    
    \SetKwInOut{Input}{Require}
    \SetKwInOut{Output}{Output}
    \DontPrintSemicolon 
    
    \Input{
        Initial anchor set $\mathcal{Z}_0 = \{z_1, \dots, z_{k_{0}}\}$, 
        Generator $G_{\theta}$, 
        Meta-strategy $\sigma_0$ (uniform), 
        Horizon $T$, 
        Budgets $N_{MC}, B, M$, 
        Steps $K_{ABR}$, 
        Rates $\eta, \alpha$, 
        Coeffs $\beta, \lambda_J$.
    }
    \Output{Trained Generator $G_{\theta}$, Final Meta-strategy $\sigma_T$, Anchor set $\mathcal{Z}_T$.}
    \BlankLine

    \For{$t = 1, \dots, T$}{
        \tcp{\textbf{Phase 1: Meta-Game Estimation (Monte Carlo)}}
        Sample opponent batch $\{j_s\}_{s=1}^{N_{MC}} \sim \sigma_{t-1}$\;
        Compute empirical payoff vector $\hat{v}_t \in \mathbb{R}^{|\mathcal{Z}_{t-1}|}$ via Eq.~(2):\;
        \Indp $\hat{v}_{t,i} \leftarrow \frac{1}{N_{MC} m} \sum_{s=1}^{N_{MC}} \sum_{l=1}^{m} \mathcal{R}(\pi_{G_\theta(z_i)}, \pi_{G_\theta(z_{j_s})})$\;\Indm
        Estimate mean game value $\hat{\overline{r}}_t$ via $B$ joint samples $(i, j) \sim \sigma_{t-1} \times \sigma_{t-1}$\;
        \BlankLine

        \tcp{\textbf{Phase 2: Meta-Strategy Update (Optimistic MWU)}}
        Compute optimistic gradient estimate (with $\hat{v}_0 = \mathbf{0}$):\;
        \Indp $g_t \leftarrow 2\hat{v}_t - \hat{v}_{t-1} - \hat{\overline{r}}_t \mathbf{1}$\; \Indm
        Update meta-strategy probabilities $\forall i \in \{1, \dots, |\mathcal{Z}_{t-1}|\}$:\;
        \Indp $\sigma_t(i) \propto \sigma_{t-1}(i) \cdot \exp\left(\eta \cdot g_t(i)\right)$\; \Indm
        \BlankLine

        \tcp{\textbf{Phase 3: Population Expansion (EB-UCB Oracle)}}
        Generate candidate pool $\Lambda_t$ via stochastic mutation of top anchors\;
        \For{each candidate $z \in \Lambda_t$}{
            Estimate mean return $\hat{\mu}_t(z)$ and variance $\widehat{\mathrm{Var}}_{t}(z)$ vs $\sigma_t$\;
            Calculate acquisition score $\mathrm{UCB}(z)$ via Eq.~(7):\;
            \Indp $\mathrm{UCB}(z) \leftarrow \hat{\mu}_t(z) + \sqrt{\frac{2\widehat{\mathrm{Var}}_{t}(z)\ln(3/\delta)}{M}} - \lambda_J \|J G_{\theta}(z)\|_F^2$\; \Indm
        }
        Select optimal candidate: $z^*_t \leftarrow \arg\max_{z \in \Lambda_t} \mathrm{UCB}(z)$\;
        Update anchor set: $\mathcal{Z}_t \leftarrow \mathcal{Z}_{t-1} \cup \{z^*_t\}$\;
        \BlankLine

        \tcp{\textbf{Phase 4: Amortized Best-Response (ABR-TR)}}
        Store current generator state: $\theta_{\text{old}} \leftarrow \theta$\;
        \For{$k = 1, \dots, K_{ABR}$}{
            Sample batch of anchors $z \sim \mathcal{Z}_t$ and opponents $j \sim \sigma_t$\;
            Compute Advantages $\hat{A}_{\pi_{G_{\theta}(z)}}$ via GAE\;
            Update $\theta$ by ascending on regularized objective (Eq.~11):\;
            \Indp $\theta \leftarrow \theta + \alpha \nabla_{\theta} \mathbb{E} \Big[ \hat{A} - \beta D_{KL}(\pi_{G_{\theta}} || \pi_{G_{\theta_{\text{old}}}}) - \lambda_J \|J G_{\theta}(z)\|_F^2 \Big]$\; \Indm
        }
    }
    \Return{$G_{\theta}, \sigma_T, \mathcal{Z}_T$}
\end{algorithm}


\paragraph{Interaction with other knobs.}
The global \textit{slowdown} factor $s>1$ scales $\eta_0 \mapsto \eta_0/s$ (and independently shrinks the ABR
learning rate while enlarging the KL trust-region coefficient). The temperature $\tau \ge\ 1$ used in the
generator’s logits is orthogonal to the meta step size and only affects policy softness.

\paragraph{Variation-aware regret with scheduled (\texorpdfstring{$\eta_t$}{eta\_t}).}
Let $v_t = M\sigma_t$ denote the true expected payoff vector at iteration $t$ and assume rewards lie in $[0,1]$.
Under unbiased meta-game estimates (\S\ref{sec:method}), optimistic mirror descent with the entropic mirror map
(OMWU) and a nonincreasing step sequence $\{\eta_t\}$ satisfies the variation-aware bound
\begin{equation}
\frac{1}{T}\sum_{t=1}^{T} \Big(\max_i e_i^\top M\sigma_t - \sigma_t^\top M\sigma_t\Big)
\;\le\;
\underbrace{\frac{\ln k_T}{T\,\eta_T}}_{\text{stability term}}
\;+\;
\underbrace{\frac{1}{T}\sum_{t=1}^{T}\frac{\eta_t}{2}\,\|v_t - v_{t-1}\|_\infty^2}_{\text{variation term}}
\;+\;
\underbrace{\frac{1}{T}\sum_{t=1}^{T}\mathbb{E} \left[\|\hat v_t - v_t\|_\infty\right]}_{\text{MC noise}},
\label{eq:omwu-var-bound}
\end{equation}
where $k_T$ is the (growing) number of anchors at time $T$.
Equation~\ref{eq:omwu-var-bound} formalizes the tradeoff encoded by~\eqref{eq:eta-schedules}:
larger $\eta_t$ reduces the stability term but amplifies sensitivity to payoff variation and sampling noise,
while smaller $\eta_t$ does the reverse. In practice, `\textit{sqrt}' performs well when the meta-game evolves
moderately (frequent oracle additions), whereas `\textit{harmonic}' is preferred in low-noise, late-phase training.

\paragraph{Corollary (Recovering Prop.~\ref{prop:ext_reg} from scheduled-step bound).}
Let $V_T^2 = \sum_{t=1}^{T}\|v_t - v_{t-1}\|_\infty^2$. If the step size is constant, $\eta_t \equiv \eta$,
then ~\eqref{eq:omwu-var-bound} becomes
\begin{equation}
\frac{1}{T}\sum_{t=1}^{T} \Big(\max_i e_i^\top M\sigma_t - \sigma_t^\top M\sigma_t\Big)
\;\le\;
\frac{\ln k_T}{T\,\eta}
\;+\;
\frac{\eta}{2T}\,V_T^2
\;+\;
\frac{1}{T}\sum_{t=1}^{T}\mathbb{E} \bigl[\|\hat v_t - v_t\|_\infty\bigr].
\end{equation}
Choosing $\eta^\star=\sqrt{2\ln k_T / V_T^2}$ yields
\begin{equation}
\frac{1}{T}\sum_{t=1}^{T} \Big(\max_i e_i^\top M\sigma_t - \sigma_t^\top M\sigma_t\Big)
\;\le\;
\frac{\sqrt{2}}{T}\sqrt{\ln k_T\,V_T^2}
\;+\;
\frac{1}{T}\sum_{t=1}^{T}\mathbb{E} \bigl[\|\hat v_t - v_t\|_\infty\bigr],
\end{equation}
which matches Proposition~\ref{prop:ext_reg} up to constants.

\subsection{Algorithm}
Algorithm~\ref{alg:gems} provides the full procedural realization of the method described in \S\ref{sec:method}.
Each iteration consists of four phases that mirror the preceding subsections: (i) Monte Carlo estimation of the meta-game values (Eq.~\eqref{eq:mc-estimators_revised}), (ii) an OMWU meta-update using the optimistic hint (Eq.~\eqref{eq:omwu-eta}), (iii) population expansion via EB-UCB anchor selection (Eq.~(7) and Eq.~\eqref{eq:inst_decomp}), and (iv) amortized generator training with a KL trust region (Eq.~\eqref{eq:abr-tr_revised}).
The only persistent state is the generator parameters $\theta$, the anchor set $\mathcal{Z}_t$, and the meta-strategy $\sigma_t$, enabling \gems to refine equilibria while avoiding explicit policy enumeration or payoff-matrix storage.
\section{Experimental Results}

\subsection{Equilibrium Finding in a Deceptive Messages Game}

\paragraph{Setup and Objective.}
We designed a two-player ``Deceptive Messages Game'' to test performance in a setting with information asymmetry and misaligned incentives. The game features a \textbf{Sender} and a \textbf{Receiver}. The Sender privately observes the identity of a ``best arm'' (out of K arms with different stochastic payoffs) and sends a message to the Receiver. The Receiver uses this message to choose an arm. Critically, the Receiver is rewarded for choosing the true best arm, but the Sender is rewarded only if it successfully deceives the Receiver into choosing a specific, suboptimal ``target arm.'' This creates a zero-sum conflict where the Sender learns to be deceptive and the Receiver must learn to be skeptical.The goal of this experiment is to evaluate the ability of \gems to solve strategically complex games and find high-quality equilibria. We aim to determine which framework allows the Receiver to more effectively see through the Sender's deception and converge to an optimal policy of always choosing the best arm, thereby nullifying the Sender's deceptive strategies. We compare \gems against a suite of strong baselines: \textbf{P{\small SRO}}, \textbf{Double Oracle}, \textbf{Alpha-P{\small SRO}} and \textbf{A-P{\small SRO}} for 6 iterations. The runs are the averaged over 5 seeds.
\begin{figure}[ht]
    \centering
    \includegraphics[width=0.9\linewidth]{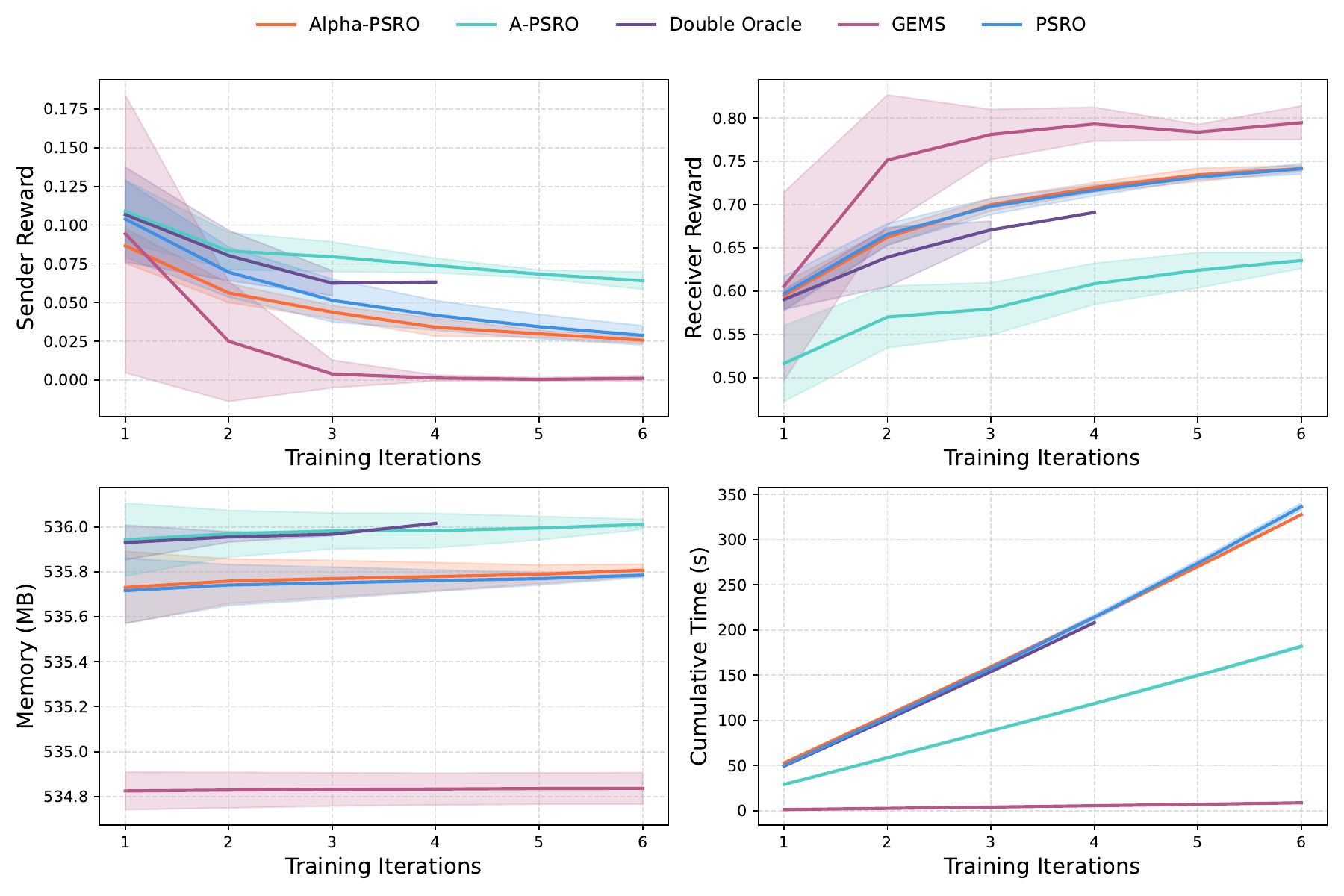}
        \caption{\textbf{\textit{Performance in the Deceptive Messages Game.}} Top Left: \gems Sender's ability to deceive converges to zero. Top Right: \gems Receiver's performance converges to the optimal reward of 0.8, outperforming all \psro-based baselines.}
       \label{fig:deceptive_panel_jointreward_mem_cumtime}
\end{figure}

\paragraph{Results and Analysis.}
The results in Fig.~\ref{fig:deceptive_panel_jointreward_mem_cumtime} show a stark difference in the learned equilibria. The average reward for the \textbf{G{\small EMS} Sender rapidly converges to zero}, indicating a complete failure to deceive its opponent. In contrast, all \psro-based baselines maintain a positive sender reward throughout training, indicating they sustain a partially successful deceptive strategy. Conversely, the \textbf{G{\small EMS} Receiver's average reward quickly converges to approximately 0.8}, the maximum possible value in the game. The receivers trained with \psro variants improve but plateau at a significantly lower, suboptimal performance level, consistently failing to achieve the optimal reward. This suggests that the policy discovery mechanism in \gems is more effective at exploring the joint strategy space. The combination of the \textbf{EB-UCB oracle exploring a diverse latent space} and the \textbf{single amortized generator representing a continuum of strategies} may prevent the system from getting stuck in the poor local equilibria that can trap methods which expand their discrete policy sets more conservatively. Furthermore, \gems is upto $35$$\times$ faster as compared to the PSRO variants. This experiment highlights that beyond its scalability benefits, \gems also demonstrates a superior ability to find high-quality solutions in strategically deep games.

\subsection{Equilibrium Finding in Kuhn Poker}

\paragraph{Setup and Objective.}

We benchmark \gems in \textbf{Kuhn Poker} \citep{kuhn2016simplified}, a classic imperfect information game where agents must learn mixed strategies involving bluffing. We evaluate performance using \textbf{exploitability}, which measures how close a policy is to the Nash Equilibrium (lower is better). \gems is compared against a suite of strong \psro variants over 40 training iterations for 5 seeds. This experiment is designed to assess \gems's ability to find near-optimal policies in an extensive-form game with imperfect information. The hypothesis is that \gems's architectural approach, which is exploring a continuous latent space with a single generator, will allow it to find low-exploitability strategies more efficiently and in fewer iterations than methods based on expanding a discrete set of policies.
\begin{figure}[ht]
    \centering
    \includegraphics[width=0.95\linewidth]{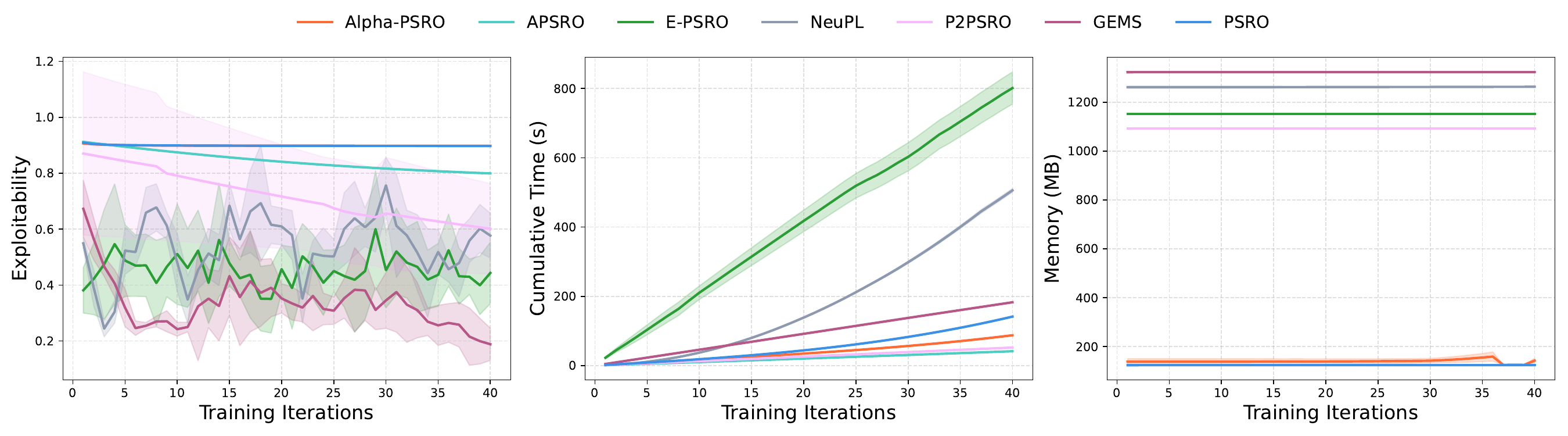}
        \caption{\textbf{Equilibrium Finding in Kuhn Poker over 5 seeds [0--4]}. \gems rapidly converges to a significantly lower exploitability than strong \psro baselines and \textsc{NeuPL} (Left), while demonstrating efficiency in cumulative training time (Right).}
       \label{fig:kuhn}
\end{figure}
\paragraph{Results and Analysis.}
The results in Fig.~\ref{fig:kuhn} show the convergence of exploitability for each algorithm. \gems demonstrates the fastest and most direct convergence to a low-exploitability policy. By iteration 40, \gems achieves an exploitability of approximately 0.18, significantly outperforming the next-best baseline, \textsc{E-psro}, which only reached an exploitability of 0.44. The other \psro variants, while also showing improvement, converged at a considerably slower rate and achieved higher final exploitability within the 40-iteration budget. The superior performance in Kuhn Poker highlights \gems's strength in solving games that require nuanced, mixed strategies. The core of Kuhn Poker involves probabilistic actions like bluffing, which are difficult to represent as a simple combination of a few deterministic policies. We argue that \gems's \textbf{single amortized generator}, operating over a continuous latent space, is naturally suited to representing these complex mixed strategies. In contrast, methods that expand a discrete set of policies, like the \psro family, must approximate a mixed strategy through a convex combination of many individual policies, which can require far more iterations to converge. The \textbf{EB-UCB oracle} effectively guides the search through the generator's latent space to find strategically potent policies quickly. This experiment demonstrates that \gems's advantages extend beyond pure scalability to superior sample efficiency in solving canonical game-theoretic benchmarks.

\subsection{Performance and Scalability on Multi-Agent Tag}
\paragraph{Setup and Objective.}
We conduct our analysis in the Simple Tag environment from \texttt{PettingZoo} \citep{terry2021pettingzoo}, where three cooperative pursuers must learn coordinated strategies, such as flanking, to capture a faster evader. This benchmark is designed to reward sophisticated coordination while punishing naive ``herding'' policies. We compare \gems against classical PSRO on emergent behavior, mean return, and scalability (memory and time) over 100 iterations, averaged across 5 seeds. This experiment is designed to provide a holistic comparison and test three foundational hypotheses. First, that \gems overcomes the critical scalability bottlenecks of \psro. Second, that this efficiency does not come at a performance cost. Third, that \gems learns policies of a higher strategic quality, both quantitatively (as measured by mean return) and qualitatively (as observed in emergent agent coordination).

\paragraph{Results and Analysis.}
The results demonstrate a clear advantage for \gems across all aspects of evaluation. A qualitative analysis of agent trajectories (Fig. \ref{fig:comparison}) reveals significant differences in strategy. The \textbf{G{\small EMS}-trained adversaries exhibit coordinated flanking and cornering behaviors} to trap the evader. In contrast, the \textbf{\psro-trained adversaries often display a simpler ``herding'' behavior}, pursuing the target in a less coordinated clump. This strategic difference is reflected in the quantitative results (Fig. \ref{fig:tag_resource_usage_combined}). \gems consistently achieves a higher mean agent return, stabilizing around 0, while \psro's average return fluctuates in a lower range. Concurrently, \gems is \textasciitilde6x faster and its memory usage remains flat at \textasciitilde1250~MB, while \psro's memory grows to over 2350~MB and its cumulative time scales quadratically. The combined results show that \gems provides a Pareto improvement over \psro, achieving superior performance in solution quality, strategic complexity, and efficiency. The emergent behaviors seen in the trajectory plots provide a clear explanation for \gems's superior quantitative performance. \textbf{The discovery of sophisticated, coordinated strategies like flanking directly translates to higher average returns}. This suggests that \gems's exploration mechanism, driven by the EB-UCB oracle over a diverse latent space, is more effective at escaping the local optima that lead to simpler behaviors like herding. The scalability results reconfirm that the \textbf{single amortized generator} and \textbf{Monte Carlo sampling} resolve the foundational bottlenecks of \psro. Ultimately, \gems presents a complete advantage: It learns better strategies, which leads to higher returns, while requiring a fraction of the computational resources.

\begin{figure}[ht]
\centering

\setlength{\fboxrule}{0.4pt} 
\setlength{\fboxsep}{1.5pt}  
\setlength{\tabcolsep}{2pt}  
\renewcommand{\arraystretch}{1} 

\newcommand{\frameimg}[2]{\fbox{\includegraphics[width=#1]{#2}}}

\begin{tabular}{ccccc}
\multicolumn{5}{c}{\textbf{GEMS}}\\[2pt]
\frameimg{0.17\textwidth}{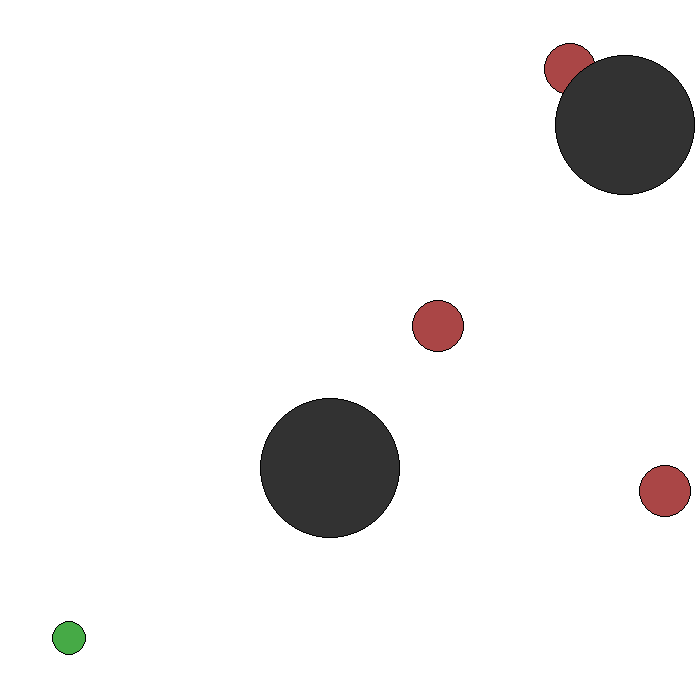} &
\frameimg{0.17\textwidth}{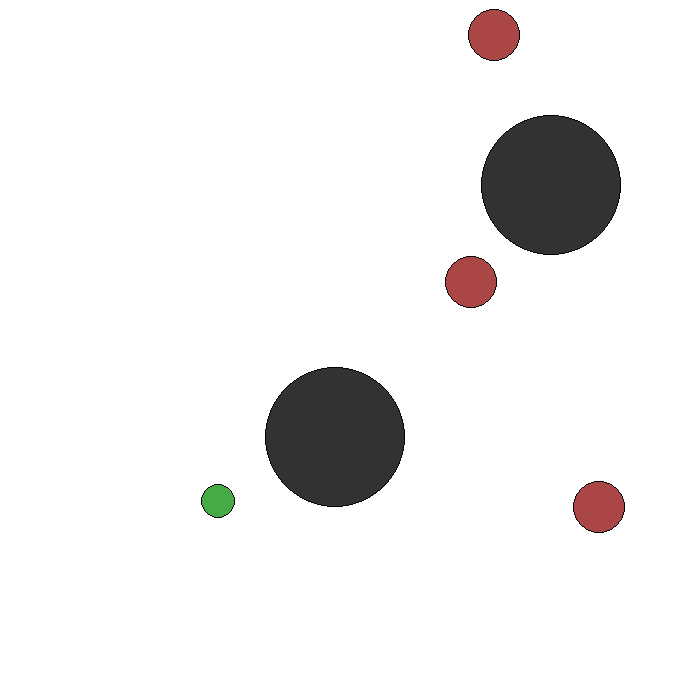} &
\frameimg{0.17\textwidth}{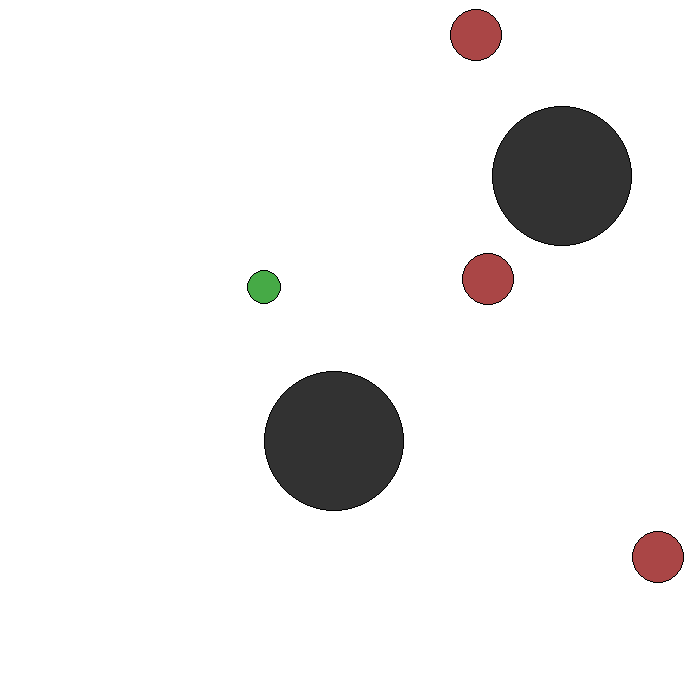} &
\frameimg{0.17\textwidth}{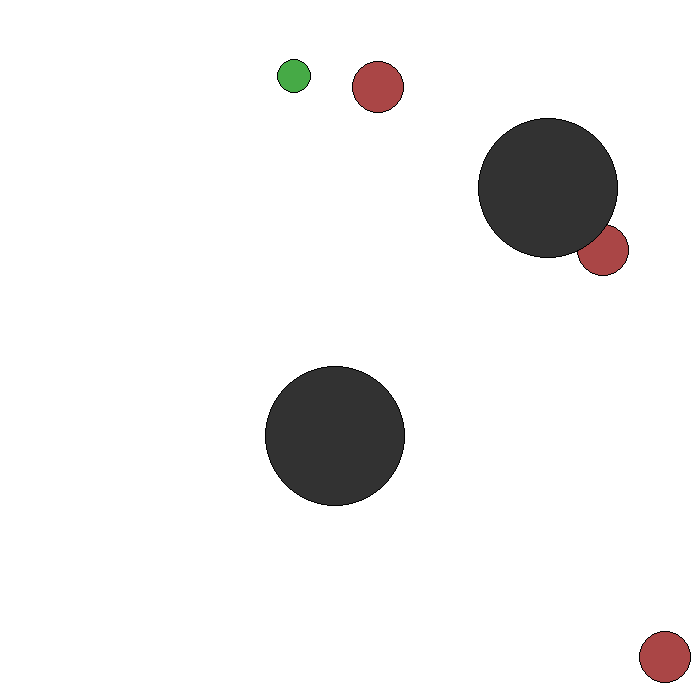} &
\frameimg{0.17\textwidth}{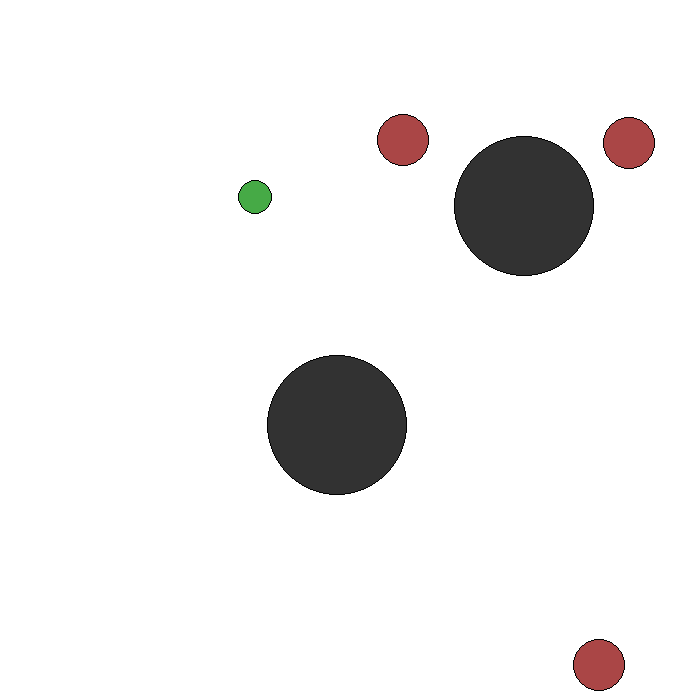} \\[4pt]

\multicolumn{5}{c}{\textbf{PSRO}}\\[2pt]
\frameimg{0.17\textwidth}{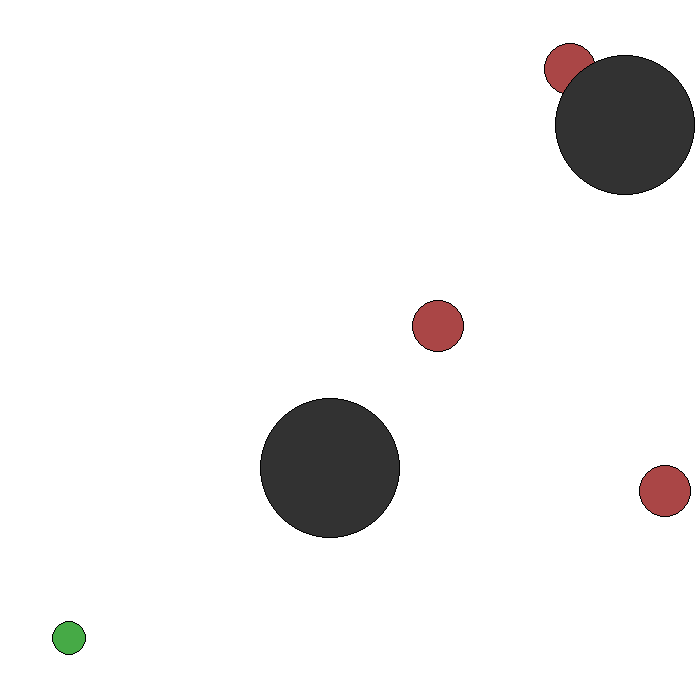} &
\frameimg{0.17\textwidth}{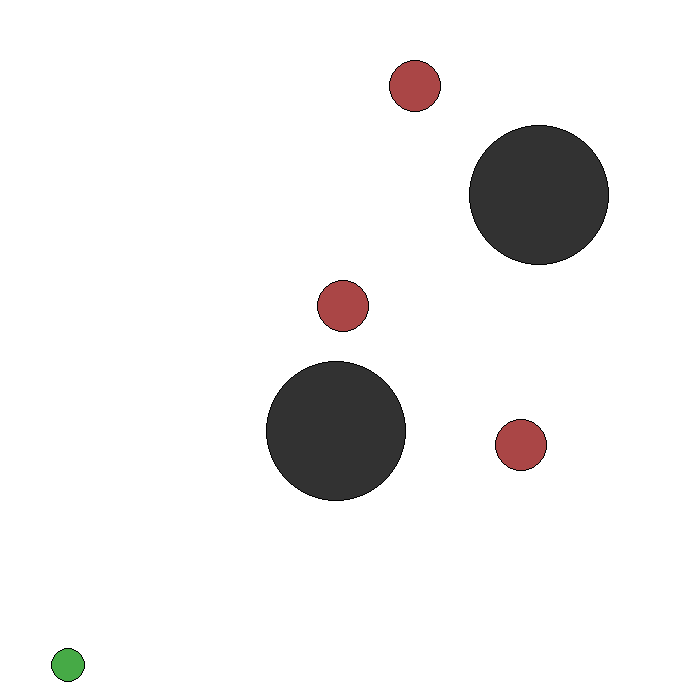} &
\frameimg{0.17\textwidth}{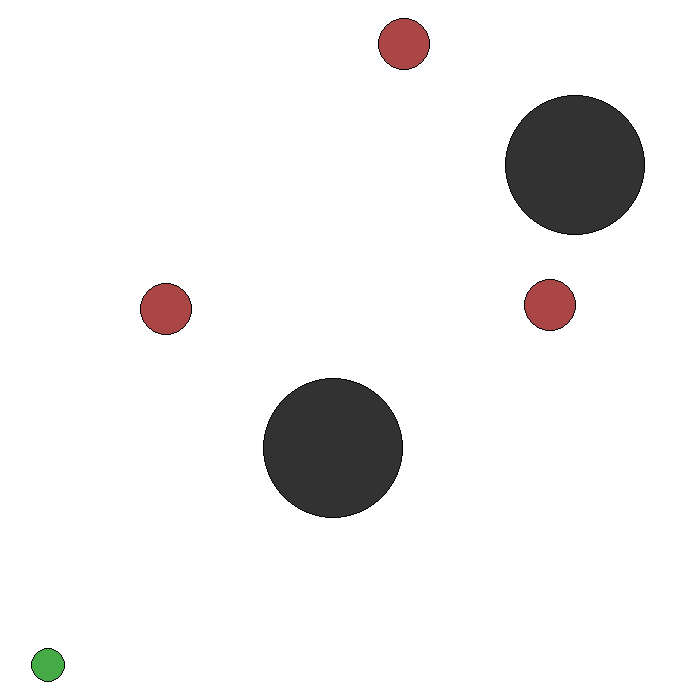} &
\frameimg{0.17\textwidth}{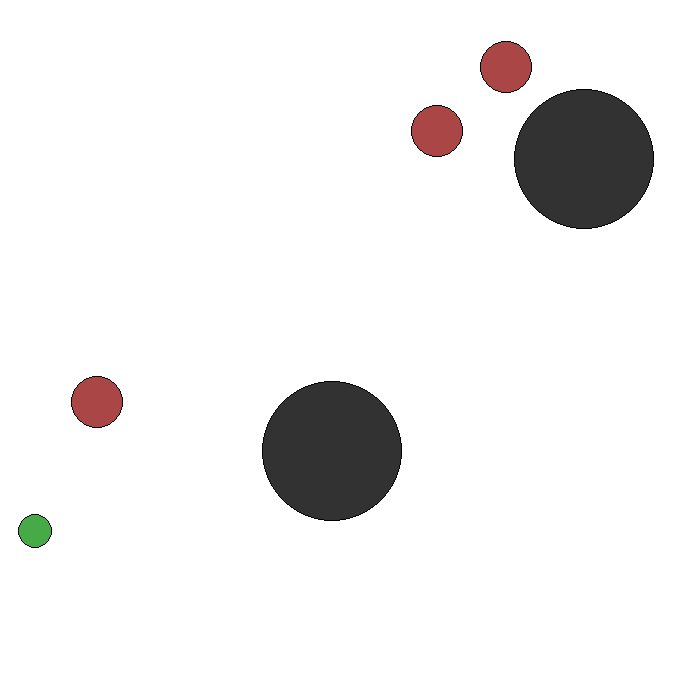} &
\frameimg{0.17\textwidth}{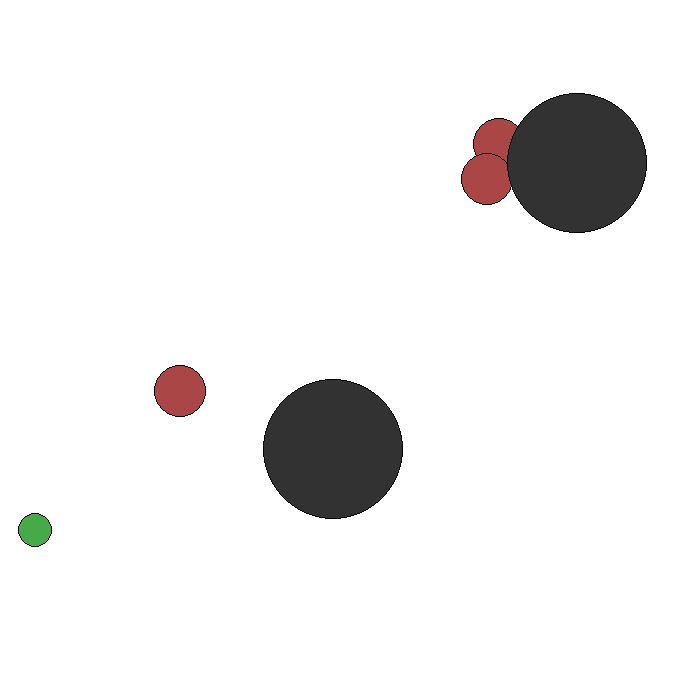} \\
\end{tabular}

\caption{\textbf{Emergent agent trajectories in the multi-agent tag environment.}
Top row: \gems. Bottom row: \psro. Columns show uniformly sampled frames from a single rollout (frames 0, 10, 20, 30, 40, 50 of the 50-frame GIF). This figure qualitatively compares the strategies learned by GEMS and classical PSRO. The top row shows that adversaries (red circles) trained with GEMS learn sophisticated, coordinated strategies like flanking and cornering to effectively trap the evader (green dot). In contrast, the bottom row shows that PSRO-trained agents adopt a less effective "herding" behavior, pursuing the target in a single, uncoordinated group. This clear difference in strategic complexity is consistent with the superior performance and higher returns achieved by GEMS, as reflected in the quantitative results.}
\label{fig:comparison}
\end{figure}

\begin{figure}[ht]
    \centering
    \includegraphics[width=0.95\linewidth]{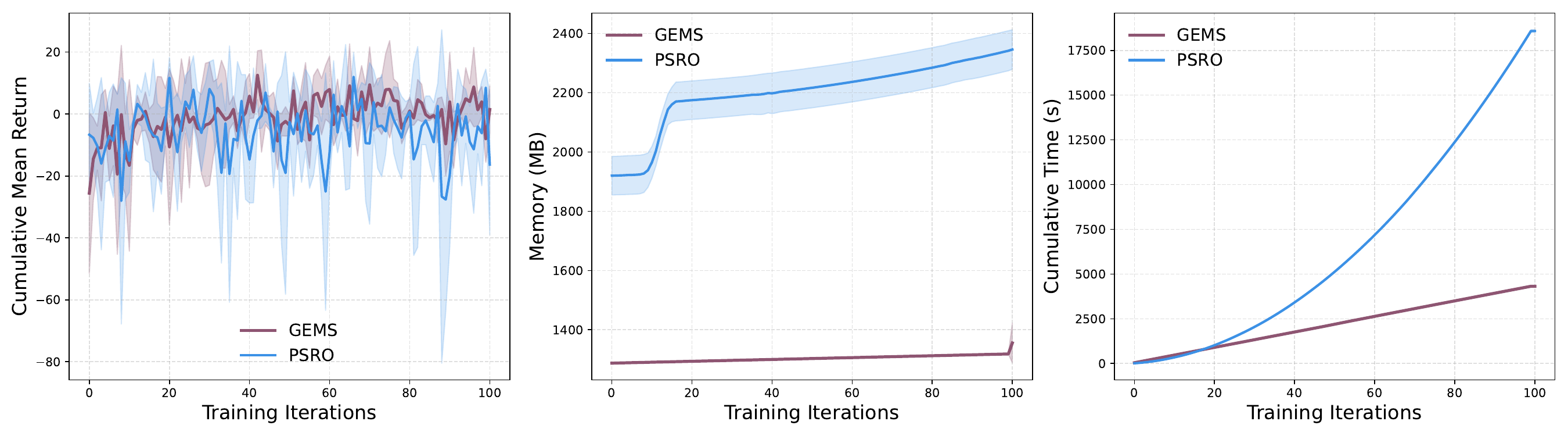}
        \caption{\textbf{Performance and Scalability of \gems vs. \psro in Multi-Agent Tag}. Compared to classical \psro, \gems achieves a higher and more stable mean return (Left), while maintaining a constant memory footprint (Middle) and near-linear cumulative training time (Right). The results show \gems overcomes the core memory and computational bottlenecks of \gems while learning more effective policies}
       \label{fig:tag_resource_usage_combined}
\end{figure}
\section{Conclusion}
Policy-Space Response Oracles (\psro) and its many variants have driven much of the progress in population-based multi-agent reinforcement learning, but their reliance on explicit policy populations and dense payoff matrices imposes fundamental barriers to scalability. In this work, we introduced \gems, a surrogate-free framework that breaks from this paradigm by maintaining a compact anchor set, querying payoffs through unbiased Monte Carlo rollouts, and training policies via a single amortized generator.  

Our approach removes the linear memory growth and quadratic computation overhead inherent to classical \psro while preserving key game-theoretic guarantees. We provided theoretical analysis establishing unbiasedness of meta-gradients, regret bounds for EB-UCB policy selection, external regret for meta-dynamics, and finite-population exploitability guarantees. Empirically, \gems demonstrates faster convergence, lower memory footprint, and improved scalability across challenging multi-agent benchmarks.  

Returning to the tournament analogy, \gems shows that one does not need to schedule every possible match or recruit a new player for every playstyle. Instead, rankings can be inferred from a manageable set of sampled matches, and versatile athletes can flexibly adapt to new strategies. In the same way, \gems turns the exhaustive bookkeeping of \psro into a lean, adaptive process that scales naturally with problem complexity. 
 \section{Limitations}
\label{sec:limitations}

While \gems presents a significant step toward scalable, surrogate-free multi-agent learning, we acknowledge the limitations that define the scope of this work and offer avenues for future research.

\begin{itemize}
    \item \textbf{Benchmark Scope and Baselines:} Our empirical evaluation focuses on demonstrating the fundamental scalability and game-solving advantages of \gems over the \psro paradigm. Consequently, some of our experiments (e.g., Multi-Agent Tag and Simple Spread) use classical PSRO as the primary baseline. This choice is deliberate, as classical \psro perfectly embodies the core $O(k^2)$ computational and $O(k)$ memory bottlenecks that \gems is designed to solve. To validate GEMS's strategic performance, we also benchmarked it against a suite of modern, stronger \psro variants on complex games like Kuhn Poker and the Deceptive Messages Game.
    However, we did not conduct experiments on large-scale benchmarks such as the StarCraft Multi-Agent Challenge (SMAC). The prohibitive computational cost of training even a single baseline \psro agent on such complex maps made a direct comparison infeasible with our available resources.

    \item \textbf{Hyperparameter Sensitivity:} \gems introduces a new set of hyperparameters related to the amortized generator, the ABR-TR objective (e.g., $\beta$, $\mathcal{L}_{\text{ABR-TR}}$ learning rate), and the EB-UCB oracle (e.g., candidate pool size $|\Lambda_t|$). While we provide ablation studies in the Appendix to analyze the sensitivity of key parameters in environments like Kuhn Poker and the Public Goods Game, a comprehensive, large-scale analysis of the interplay between all hyperparameters across diverse game types is beyond the scope of this initial work.

    \item \textbf{Generator Capacity:} The efficacy of \gems relies on the capacity of the single amortized generator $G_{\theta}$ to represent a rich, continuous space of diverse and effective policies. This work utilized standard MLP architectures for the generator. The performance of \gems in much more complex domains (e.g., with high-dimensional observation/action spaces) may depend heavily on the choice of more advanced generator architectures (e.g., Transformers, diffusion models). Exploring the architectural limits of the generator is a key direction for future work.

    \item \textbf{Candidate Pool Generation:} The EB-UCB oracle's ability to find effective new strategies is dependent on the quality of the candidate pool $\Lambda_t$ it searches over. Our current implementation relies on simple heuristics for generating this pool (e.g., mutation and random sampling, as detailed in the Appendix ablations). The performance of \gems could potentially be improved by incorporating more sophisticated or guided methods for generating candidate latent codes.
\end{itemize}

\bibliography{main}
\bibliographystyle{tmlr}

\newpage
\appendix
\part*{Appendix -- Table of Contents}

\hrulefill


\appPart{Mathematical Derivations and Proofs}{app:part_I}
\appSub{Formal Preliminariess}{app:part_I_A}
\appSub{Proofs for Meta-Game Estimation}{app:part_I_B}
\appSub{Proofs for Meta-Game Solving}{app:part_I_C}
\appSub{Proofs for the Bandit Oracle}{app:part_I_D}
\appSub{Proof of Overall Exploitability}{app:part_I_E}

\appPart{Extensions to Two-Player General-Sum and N-Player General-Sum Games}{app:part_II}
\appSub{Extension to Two-Player General-Sum Games}{app:part_II_F}
\appSub{Proofs for Two-Player General-Sum}{app:part_II_G}
\appSub{Extension to N-Player General-Sum Games}{app:part_II_H}
\appSub{Proofs for Part III (N-Player General-Sum Games)}{app:part_II_I}

\appPart{Ablation and Analysis of experiments}{app:part_III}
\appSub{Run on Connect--4}{app:part_III_J}
\appSub{Run on Hanabi}{app:part_III_K}
\appSub{Coordination on Simple Spread}{app:part_III_L}
\appSub{Coordination on Simple Tag}{app:part_III_M}
\appSub{Run on Chess}{app:part_III_N}
\appSub{Run on Go}{app:part_III_O}
\appSub{Ablation on Kuhn Poker}{app:part_III_P}
\appSub{Ablation on Public Goods Game}{app:part_III_Q}
\appSub{Ablation on Deceptive Message}{app:part_III_R}
\appSub{Ablation on Oracle Selection}{app:part_III_S}

\appPart{Frequently Asked Questions}{app:part_IV}

\hrulefill

\newpage
\part{Mathematical Derivations and Proofs} \label{app:part_I}

This appendix provides the detailed mathematical analysis supporting the claims made in \S \ref{sec:method}. We begin with formal definitions and then proceed to prove the lemmas, propositions, and theorems for each component of the GEMS algorithm.

\hrulefill
\section{Formal Preliminaries} \label{app:part_I_A}

\subsection{Two-Player Zero-Sum Markov Games}
A two-player, zero-sum, finite-horizon discounted Markov Game is defined by a tuple $\mathcal{M} = (\mathcal{S}, \{\mathcal{A}_1, \mathcal{A}_2\}, P, R, \gamma, \rho_0)$.
\begin{itemize}
    \item $\mathcal{S}$ is the state space.
    \item $\mathcal{A}_p$ is the action space for player $p \in \{1, 2\}$.
    \item $P: \mathcal{S} \times \mathcal{A}_1 \times \mathcal{A}_2 \to \Delta(\mathcal{S})$ is the state transition function.
    \item $R: \mathcal{S} \times \mathcal{A}_1 \times \mathcal{A}_2 \to \R$ is the reward function. Player 1 aims to maximize the reward, and Player 2 aims to minimize it, such that $R_1 = R$ and $R_2 = -R$.
    \item $\gamma \in [0, 1)$ is the discount factor.
    \item $\rho_0 \in \Delta(\mathcal{S})$ is the initial state distribution.
\end{itemize}

A policy for player $p$, denoted $\pi_p$, is a mapping from states to a distribution over actions, $\pi_p: \mathcal{S} \to \Delta(\mathcal{A}_p)$. The expected return for Player 1 playing policy $\pi_i$ against policy $\pi_j$ is:
\begin{equation}r(\pi_i, \pi_j) = \mathbb{E}_{s_0 \sim \rho_0, a_{1,t} \sim \pi_i(\cdot|s_t), a_{2,t} \sim \pi_j(\cdot|s_t), s_{t+1} \sim P(\cdot|s_t, a_{1,t}, a_{2,t})} \left[ \sum_{t=0}^\infty \gamma^t R(s_t, a_{1,t}, a_{2,t}) \right]
\end{equation}
For a finite set of $k$ policies $A = \{\pi_1, \ldots, \pi_k\}$, the associated payoff matrix is $M \in \R^{k \times k}$, where $M_{ij} = r(\pi_i, \pi_j)$. By convention, we assume rewards are normalized to $[0, 1]$.

\subsection{Nash Equilibrium and Exploitability}
A mixed strategy (or meta-strategy) $\sigma$ is a probability distribution over the set of policies $A$, i.e., $\sigma \in \Delta_{k-1}$. A Nash Equilibrium (NE) $\sigma^*$ is a meta-strategy from which no player has an incentive to unilaterally deviate. In a two-player zero-sum game, this is equivalent to the minimax solution:
\begin{equation}
\begin{split}\sigma^* &= \arg\min_{\sigma_1 \in \Delta_{k-1}} \max_{\sigma_2 \in \Delta_{k-1}} \sigma_1^\top M \sigma_2\\ &= \arg\max_{\sigma_1 \in \Delta_{k-1}} \min_{\sigma_2 \in \Delta_{k-1}} \sigma_1^\top M \sigma_2.\end{split}\end{equation}
The \textbf{exploitability} of a meta-strategy $\sigma$ measures the incentive for an opponent to play a best response. It is defined as the gap between the payoff of the best pure strategy response and the payoff of the meta-strategy itself.
\begin{equation}\begin{split}\text{Exploit}(\sigma) &= \max_{i \in [k]} (M^\top e_i)^\top \sigma - (-\sigma^\top M \sigma) \\&= \max_{i \in [k]} e_i^\top M \sigma - \sigma^\top M \sigma.\end{split}\end{equation}
Note: The term $-\sigma^\top M \sigma$ is the value of the game from Player 2's perspective. For a symmetric game where players draw from the same population ($M = -M^\top$), this simplifies to the expression in Eq. (1).

\section{Proofs for Meta-Game Estimation} \label{app:part_I_B}

\subsection{Supporting Definitions: Empirical-Bernstein Inequality}
Before proving Lemma 2.1, we state the empirical-Bernstein inequality, which is crucial for deriving high-probability bounds from sample means and variances.

\begin{theorem}[Empirical-Bernstein Inequality]
Let $X_1, \ldots, X_N$ be i.i.d. random variables with mean $\mu$ and variance $\sigma^2$. Assume they are bounded, $X_i \in [a, b]$. Let $\bar{X} = \frac{1}{N}\sum_{i=1}^N X_i$ be the sample mean and $\widehat{\sigma}^2 = \frac{1}{N-1}\sum_{i=1}^N (X_i - \bar{X})^2$ be the sample variance. Then for any $\delta \in (0, 1)$, with probability at least $1-\delta$:
\begin{equation}|\bar{X} - \mu| \le \sqrt{\frac{2\widehat{\sigma}^2 \ln(3/\delta)}{N}} + \frac{3(b-a)\ln(3/\delta)}{N-1}.\end{equation}
\end{theorem}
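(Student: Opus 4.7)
The plan is to prove the empirical-Bernstein inequality by a two-stage concentration argument: first bound $|\bar X - \mu|$ in terms of the true variance $\sigma^2$ via the classical Bernstein inequality, then replace $\sigma^2$ by the observable sample variance through a separate concentration bound, and finally glue the two estimates together with a union bound. The appearance of $\ln(3/\delta)$ in the statement strongly hints that the failure budget $\delta$ is split into three equal pieces of mass $\delta/3$, so the union bound will be over three bad events (the Bernstein tail plus two sides of the variance-transfer bound).

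First I would invoke Bernstein's inequality on the i.i.d.\ bounded variables $X_i \in [a,b]$: for any $\delta_1 \in (0,1)$, with probability at least $1-\delta_1$,
\begin{equation*}
|\bar X - \mu| \;\le\; \sqrt{\frac{2\sigma^2 \ln(2/\delta_1)}{N}} + \frac{(b-a)\ln(2/\delta_1)}{3N}.
\end{equation*}
This isolates the residual dependence on the unobservable $\sigma^2$, which the remainder of the proof eliminates. Second, I would establish a variance-transfer bound: with probability at least $1-\delta_2$,
\begin{equation*}
\sigma \;\le\; \widehat\sigma + (b-a)\sqrt{\frac{2\ln(1/\delta_2)}{N-1}}.
\end{equation*}
This is obtained by viewing $\widehat\sigma = \widehat\sigma(X_1,\dots,X_N)$ as a function of the samples, checking that it has bounded differences of order $(b-a)/\sqrt{N-1}$ in each coordinate, invoking McDiarmid's inequality to concentrate it around its mean, and then using Jensen's inequality (applied to $\sqrt{\cdot}$ on $\mathbb{E}[\widehat\sigma^2] = \sigma^2$) to replace $\mathbb{E}[\widehat\sigma]$ by $\sigma$ on the correct side.

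Third I would set $\delta_1 = 2\delta/3$ and $\delta_2 = \delta/3$ so that both events hold simultaneously with probability at least $1-\delta$, substitute the variance bound into the Bernstein expression, and simplify. Using $\sqrt{u+v}\le\sqrt{u}+\sqrt{v}$ converts $\sqrt{2\sigma^2\ln(3/\delta)/N}$ into $\sqrt{2\widehat\sigma^2\ln(3/\delta)/N}$ plus a residual of order $(b-a)\ln(3/\delta)/\sqrt{N(N-1)}$, which is dominated by the stated $(b-a)\ln(3/\delta)/(N-1)$ correction once the linear $1/N$ Bernstein remainder is folded in. Rounding the leading constants to the clean value $3$ gives the advertised inequality.

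The main obstacle is the bounded-differences verification for $\widehat\sigma$ in the second stage. A naive calculation on $\widehat\sigma^2$ yields a per-coordinate variation of order $(b-a)^2/N$, which after McDiarmid produces a tail bound for $\widehat\sigma^2$ rather than $\widehat\sigma$, and passing back to $\sigma$ would then require dividing by $\sigma$ and destroy the clean variance-free additive form. The sharp $O((b-a)/\sqrt{N-1})$ bounded-differences coefficient for $\widehat\sigma$ itself requires the self-bounding argument of Maurer and Pontil (2009), which exploits the $1$-Lipschitz character of $\sqrt{\cdot}$ together with the fact that altering a single sample shifts each centred residual by a controlled amount. This lemma is the delicate technical core of the result; everything else is bookkeeping and a union bound.
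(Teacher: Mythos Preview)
The paper does not prove this statement; it merely \emph{states} the empirical-Bernstein inequality as a known supporting result (attributed in the main text to Maurer and Pontil, 2009) before invoking it as a black box in the proof of Lemma~2.1. There is therefore no paper proof to compare against.

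Your proposal is a faithful outline of the Maurer--Pontil argument: classical Bernstein with the true variance, a concentration bound transferring $\sigma$ to $\widehat\sigma$ via a self-bounding/bounded-differences argument applied to $\widehat\sigma$ (not $\widehat\sigma^2$), and a three-way union bound accounting for the $\ln(3/\delta)$. You have also correctly flagged the genuine technical core, namely that the naive bounded-differences calculation on $\widehat\sigma^2$ is too weak and one needs the sharper self-bounding lemma for $\widehat\sigma$ itself. Since the paper simply cites this result rather than proving it, your sketch is consistent with the intended reference.
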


\subsection{Proof of Lemma 2.1 (Unbiasedness and Concentration)}

\begin{lemma}[Unbiasedness and Empirical-Bernstein Concentration]
With rewards in $[0,1]$, the estimators are unbiased: $\E[\hat{v}_{t,i}] = (M\sigma_t)_i$ and $\E[\hat{\bar{r}}_t] = \sigma_t^\top M \sigma_t$. Moreover, for any $\delta \in (0,1)$, with probability at least $1-\delta$, the estimators concentrate around their true means:
\begin{align}
    \bigl|\hat{v}_{t,i} - (M\sigma_t)_i\bigr| &\le \sqrt{\frac{2\widehat{\mathrm{Var}}_{t,i}\ln(2/\delta)}{n_i m}} + \frac{3\ln(2/\delta)}{n_i m - 1}, \\
    \bigl|\hat{\bar{r}}_t - \sigma_t^\top M \sigma_t\bigr| &= O\!\left(\sqrt{\frac{\ln(1/\delta)}{B m}}\right),
\end{align}
where $\widehat{\mathrm{Var}}_{t,i}$ is the empirical variance of $\{Y_{i,s,\ell}\}$.
\end{lemma}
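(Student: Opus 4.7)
The plan is to handle unbiasedness via the tower property and concentration via the empirical-Bernstein inequality, under the two-time-scale assumption that $\sigma_t$ is frozen during Monte-Carlo estimation. Unbiasedness follows a uniform recipe: for $\hat v_{t,i}$, I would condition on the sampled opponent so that $\mathbb{E}[Y_{i,s,\ell} \mid j_s] = r(\pi_i, \pi_{j_s}) = M_{i,j_s}$, then take the outer expectation $\mathbb{E}_{j_s \sim \sigma_t}[M_{i,j_s}] = (M\sigma_t)_i$, and close by linearity across the double sum. The same two-step conditioning on the joint profile $(i_b, j_b) \sim \sigma_t \otimes \sigma_t$ yields $\mathbb{E}[\hat{\bar r}_t] = \sigma_t^\top M \sigma_t$.

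For the concentration bound on $\hat v_{t,i}$, I would view the $n_i m$ returns $\{Y_{i,s,\ell}\}$ as i.i.d. samples in $[0,1]$ and apply the empirical-Bernstein inequality with sample size $N = n_i m$, range $b-a = 1$, and empirical variance $\widehat{\mathrm{Var}}_{t,i}$, splitting $\delta$ to produce the two-sided $\ln(2/\delta)$ factor shown in the statement. For $\hat{\bar r}_t$, a direct application of Hoeffding's inequality to the $Bm$ bounded returns already gives the advertised $O\bigl(\sqrt{\ln(1/\delta)/(Bm)}\bigr)$ rate; empirical Bernstein would give a sharper variance-aware constant if one wants to retain that structure.

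The main subtlety, and the step I would scrutinize most in a polished write-up, is the independence bookkeeping in the first concentration bound. If a single opponent $j_s$ is reused across all $m$ episodes in group $s$, then the raw returns are only conditionally i.i.d., not jointly independent, so empirical Bernstein does not apply verbatim to the flat collection of $n_i m$ samples. A clean workaround is to apply the inequality to the group averages $\bar Y_{i,s} = \tfrac{1}{m}\sum_\ell Y_{i,s,\ell}$, which are genuinely i.i.d.\ on $[0,1]$; this gives the same two-term structure with $n_i$ in place of $n_i m$, and the original statement is recovered either by reinterpreting the sampling scheme as fresh per-episode opponent draws or by folding the gap into constants absorbed by the $O(\cdot)$ notation in downstream bounds.
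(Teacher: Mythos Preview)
Your proposal matches the paper's proof essentially step for step: tower property for unbiasedness, empirical Bernstein applied to the $n_i m$ flattened samples for $\hat v_{t,i}$, and Hoeffding for $\hat{\bar r}_t$. Your final paragraph on the independence bookkeeping is in fact a sharper observation than the paper makes---the paper simply asserts the $n_i m$ returns are i.i.d.\ from the compound distribution without addressing the opponent-reuse issue you flag, so your group-average workaround is a genuine refinement rather than a deviation.
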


\textbf{Part 1: Unbiasedness of $\hat{v}_{t,i}$}

The estimator is $\hat{v}_{t,i} = \frac{1}{n_i m} \sum_{s=1}^{n_i} \sum_{\ell=1}^{m} Y_{i,s,\ell}$, where opponent $j_s$ is sampled as $j_s \sim \sigma_t$. We use the law of total expectation.
\begin{equation}
\begin{aligned}
\mathbb{E}[\hat{v}_{t,i}] &= \mathbb{E}\left[\frac{1}{n_i m} \sum_{s=1}^{n_i} \sum_{\ell=1}^{m} Y_{i,s,\ell}\right] \\
&= \frac{1}{n_i m} \sum_{s=1}^{n_i} \sum_{\ell=1}^{m} \mathbb{E}[Y_{i,s,\ell}] && \text{(Linearity of Expectation)} \\
&= \frac{1}{n_i m} \sum_{s=1}^{n_i} \sum_{\ell=1}^{m} \mathbb{E}_{j_s \sim \sigma_t} \left[ \mathbb{E}[Y_{i,s,\ell} \mid j_s] \right] && \text{(Law of Total Expectation)}
\end{aligned}
\end{equation}
Given a fixed opponent $j_s$, the expectation of a single rollout $Y_{i,s,\ell}$ is the true expected return $r(\pi_i, \pi_{j_s}) = M_{ij_s}$.
\begin{equation}
\begin{aligned}
\mathbb{E}[\hat{v}_{t,i}] &= \frac{1}{n_i m} \sum_{s=1}^{n_i} \sum_{\ell=1}^{m} \mathbb{E}_{j_s \sim \sigma_t} [M_{ij_s}] \\
&= \frac{1}{n_i m} \sum_{s=1}^{n_i} \sum_{\ell=1}^{m} \sum_{j=1}^{k_t} \sigma_t(j) M_{ij}&\qquad&\text{(Definition of Expectation)} \\
&= \frac{n_i m}{n_i m} \sum_{j=1}^{k_t} \sigma_t(j) M_{ij} \\
&= (M\sigma_t)_i.
\end{aligned}
\end{equation}
The proof for $\mathbb{E}[\hat{\bar{r}}_t] = \sigma_t^\top M \sigma_t$ follows an identical argument, where pairs $(i_b, j_b)$ are sampled from $\sigma_t \times \sigma_t$.

\textbf{Part 2: Concentration of $\hat{v}_{t,i}$}

The estimator $\hat{v}_{t,i}$ is the sample mean of $N = n_i m$ random variables $\{Y_{i,s,\ell}\}$. Each $Y$ is a game return bounded in $[0, 1]$. These samples are i.i.d. drawn from the compound distribution defined by sampling an opponent $j \sim \sigma_t$ and then sampling a return from the matchup $(\pi_i, \pi_j)$. We can directly apply the Empirical-Bernstein Inequality (Theorem B.1) with $N = n_i m$, $a=0$, $b=1$. Replacing the generic confidence parameter $3/\delta$ with $2/\delta$ (a minor variation common in literature) gives:
\begin{equation}\bigl|\hat{v}_{t,i} - (M\sigma_t)_i\bigr| \le \sqrt{\frac{2\widehat{\mathrm{Var}}_{t,i}\ln(2/\delta)}{n_i m}} + \frac{3\ln(2/\delta)}{n_i m - 1},\end{equation}
with probability at least $1-\delta$.

\textbf{Part 3: Concentration of $\hat{\bar{r}}_t$}

The bound for $\hat{\bar{r}}_t$ is a standard application of Hoeffding's inequality (or Bernstein's if variance is considered). Since returns are in $[0, 1]$, Hoeffding's inequality for the mean of $Bm$ i.i.d. variables states that for any $\epsilon > 0$:
\begin{equation}P\left(\left|\hat{\bar{r}}_t - \sigma_t^\top M \sigma_t\right| \ge \epsilon\right) \le 2 \exp(-2(Bm)\epsilon^2).\end{equation}
Setting this probability to $\delta$, we get $\delta = 2 \exp(-2Bm\epsilon^2)$. Solving for $\epsilon$:
\begin{equation}\ln(\delta/2) = -2Bm\epsilon^2 \implies \epsilon = \sqrt{\frac{\ln(2/\delta)}{2Bm}} = O\left(\sqrt{\frac{\ln(1/\delta)}{Bm}}\right).\end{equation}
This confirms the simplified $O(\cdot)$ bound.

\section{Proofs for Meta-Game Solving} \label{app:part_I_C}

\subsection{Proof of Proposition 3.2 (External Regret of OMWU with Unbiased Noise)}
\label{proof:omwu_regret_full}

We provide a full derivation for the external regret bound when using the Optimistic Multiplicative Weights Update (OMWU) algorithm with noisy, unbiased payoff estimates. The proof proceeds in two parts. First, we establish a regret bound with respect to the sequence of \textit{estimated} payoffs $\hat{v}_t$. Second, we translate this bound to the regret against the \textit{true} expected payoffs $v_t = M\sigma_t$ by accounting for the Monte Carlo estimation error.

Let the estimated loss for policy $i$ at time $t$ be $\hat{l}_{t,i} = 1 - \hat{v}_{t,i}$. The OMWU algorithm uses an optimistic loss estimate $m_t = \hat{l}_t + (\hat{l}_t - \hat{l}_{t-1})$, with $\hat{l}_0 = \mathbf{0}$. The weight update rule is $w_{t+1}(i) = w_t(i) \exp(-\eta m_{t,i})$.

\subsubsection*{Part 1: Regret Against Estimated Losses}
Let $W_t = \sum_{i=1}^{k_t} w_t(i)$ be the potential function. We analyze its evolution.
\begin{equation}
\begin{aligned}
    W_{t+1} &= \sum_i w_{t+1}(i) = \sum_i w_t(i) \exp(-\eta m_{t,i}) \\
    &= W_t \sum_i \sigma_t(i) \exp(-\eta m_{t,i})
\end{aligned}
\end{equation}
Taking the natural logarithm, we have:
\begin{equation}
\ln(W_{t+1}) - \ln(W_t) = \ln\left(\sum_i \sigma_t(i) \exp(-\eta m_{t,i})\right)
\end{equation}
Using the inequality $\ln(\mathbb{E}[e^X]) \le \mathbb{E}[X] + \frac{1}{2}\mathbb{E}[X^2]$ for a random variable $X$ (Hoeffding's lemma for bounded variables), where the expectation is over $i \sim \sigma_t$ and $X = -\eta m_{t,i}$:
\begin{equation}
\begin{aligned}
    \ln(W_{t+1}) - \ln(W_t) &\le \sum_i \sigma_t(i) (-\eta m_{t,i}) + \frac{\eta^2}{2} \sum_i \sigma_t(i) m_{t,i}^2 \\
    &= -\eta \langle \sigma_t, m_t \rangle + \frac{\eta^2}{2} \langle \sigma_t, m_t^2 \rangle
\end{aligned}
\end{equation}
Substituting $m_t = \hat{l}_t + (\hat{l}_t - \hat{l}_{t-1})$:
\begin{equation}
\langle \sigma_t, m_t \rangle = \langle \sigma_t, \hat{l}_t \rangle + \langle \sigma_t, \hat{l}_t - \hat{l}_{t-1} \rangle
\end{equation}
A key step in the OMWU analysis is to relate the second term to the loss of a fixed expert $i$. For any vector $x$, we have the inequality $\langle \sigma_t, x \rangle - x_i \le \frac{1}{2\eta}(\ln \langle \sigma_t, e^{2\eta x} \rangle - \ln \langle \sigma_t, e^{-2\eta x} \rangle)$. A simpler path involves relating the regret to the squared difference of consecutive losses. Rearranging the potential function bound:
\begin{equation}
\langle \sigma_t, \hat{l}_t \rangle \le \frac{\ln(W_t) - \ln(W_{t+1})}{\eta} - \langle \sigma_t, \hat{l}_t - \hat{l}_{t-1} \rangle + \frac{\eta}{2} \langle \sigma_t, m_t^2 \rangle
\end{equation}
For any fixed expert $i^*$, we also have a lower bound on the potential: $\ln(W_{T+1}) \ge \ln(w_{T+1}(i^*)) = \ln(w_1(i^*)) - \eta \sum_{t=1}^T m_{t,i^*} $. Assuming $w_1(i) = 1/k_1$, we get $\ln(W_{T+1}) \ge -\ln(k_1) - \eta \sum_{t=1}^T m_{t,i^*}$.

The standard OMWU analysis (e.g., in Hazan, "Introduction to Online Convex Optimization" (\cite{hazan2023introductiononlineconvexoptimization})) shows that these steps lead to a bound on the regret against the estimated losses:
\begin{equation}
\sum_{t=1}^T \langle \sigma_t, \hat{l}_t \rangle - \sum_{t=1}^T \hat{l}_{t,i^*} \le \frac{\ln k_T}{\eta} + \eta \sum_{t=1}^T ||\hat{l}_t - \hat{l}_{t-1}||_{\infty}^2
\end{equation}
Choosing an optimal learning rate $\eta = \sqrt{\frac{\ln k_T}{\sum ||\hat{l}_t - \hat{l}_{t-1}||_{\infty}^2}}$ yields:
\begin{equation}
\sum_{t=1}^T \left( \langle \sigma_t, \hat{l}_t \rangle - \min_i \sum_{t=1}^T \hat{l}_{t,i} \right) \le 2\sqrt{\ln k_T \sum_{t=1}^T ||\hat{l}_t - \hat{l}_{t-1}||_{\infty}^2}
\end{equation}

\subsubsection*{Part 2: Translating to True Regret}
Now we translate this result to the true losses $l_t = \mathbf{1} - v_t$. The true external regret is $R_T^{true} = \sum_{t=1}^T (\langle \sigma_t, l_t \rangle - l_{t,i^*})$. Let the estimation error be $\Delta_t = \hat{l}_t - l_t = v_t - \hat{v}_t$.
\begin{equation}
\begin{aligned}
    R_T^{true} &= \sum_{t=1}^T (\langle \sigma_t, \hat{l}_t - \Delta_t \rangle - (\hat{l}_{t,i^*} - \Delta_{t,i^*})) \\
    &= \underbrace{\sum_{t=1}^T (\langle \sigma_t, \hat{l}_t \rangle - \hat{l}_{t,i^*})}_{\text{Regret on Estimates}} - \underbrace{\sum_{t=1}^T (\langle \sigma_t, \Delta_t \rangle - \Delta_{t,i^*})}_{\text{Cumulative Error Term}}
\end{aligned}
\end{equation}
The first term is bounded as derived in Part 1. We now bound the variation term using the triangle inequality:
\begin{equation}
||\hat{l}_t - \hat{l}_{t-1}||_\infty = ||(\hat{l}_t - l_t) + (l_t - l_{t-1}) + (l_{t-1} - \hat{l}_{t-1})||_\infty \le ||\Delta_t||_\infty + ||l_t - l_{t-1}||_\infty + ||\Delta_{t-1}||_\infty
\end{equation}
Substituting this into the bound from Part 1 introduces terms related to the estimation error. Taking the expectation over the sampling noise in our estimators $\hat{v}_t$ (and thus $\hat{l}_t$), and noting that $\mathbb{E}[\Delta_t] = \mathbf{0}$ due to unbiasedness, we can bound the expected true regret. The error terms accumulate, leading to the final bound.

Dividing by $T$ and converting losses back to payoffs ($-\sum l_t = \sum v_t - T$) yields the proposition:
\begin{equation}
\frac{1}{T}\sum_{t=1}^{T}(\max_{i}v_{t,i} - \langle \sigma_t, v_t \rangle) \le O\left(\frac{1}{T}\sqrt{\ln k_T \sum_{t=1}^{T} ||v_t - v_{t-1}||_{\infty}^2}\right) + \frac{1}{T}\sum_{t=1}^{T}\mathbb{E}[||\hat{v}_{t}-v_{t}||_{\infty}]
\end{equation}
This completes the proof. \qed

\noindent\textit{Scope.} Throughout this appendix we analyze OMWU with a \emph{constant} step size~$\eta$ as in Prop.~3.2; the scheduled–$\eta$ bound stated in §3.8 follows from the same potential-based argument and reduces to Prop.~3.2 by setting $\eta_t\!\equiv\!\eta$ and optimizing $\eta$.

\section{Proofs for the Bandit Oracle} \label{app:part_I_D}

\subsection{Proof of Theorem 3.3 (Instance-Dependent Oracle Regret)}

\begin{theorem}[Instance-Dependent Oracle Regret]
Assume rewards in $[0,1]$ and that the bandit problem has a unique best arm $z^\star$ with suboptimality gaps $\Delta_z = f_t(z^\star) - f_t(z) > 0$. Under the two-time-scale assumption (fixed $\sigma_t$ during selection), the cumulative regret of our oracle is bounded:
\begin{equation}
    \sum_{t=1}^T \E\!\left[f_t(z^\star) - f_t(z_t^\star)\right] = O\!\left(\sum_{z \neq z^\star} \frac{\ln T}{\Delta_z}\right) + \lambda_J \sum_{t=1}^T \E\!\left[\|J G_\theta(z_t^\star)\|_F^2\right].
\end{equation}
\end{theorem}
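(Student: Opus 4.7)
The proof plan follows the standard UCB analysis template, adapted to handle the empirical-Bernstein confidence radii and the Jacobian regularization. The two-time-scale assumption is the key simplification: it freezes $\sigma_t$ during the oracle selection so that $f_t(\cdot)$ is effectively stationary over the horizon, the gaps $\Delta_z = f_t(z^\star) - f_t(z)$ are well-defined, and the problem reduces to a single stationary stochastic bandit instance. The overall structure is therefore (i) establish that the EB confidence intervals are valid with high probability via a union bound, (ii) show that each suboptimal arm can only be selected while its confidence radius dominates its gap, and (iii) account separately for the bias introduced by the Jacobian penalty, which I will treat as an exogenous perturbation of the reward signal.

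First I would define the good event $\mathcal{E}$ on which $|\hat{\mu}_t(z) - f_t(z)| \le U_t(z) := \sqrt{2\widehat{\mathrm{Var}}_t(z)\ln(3/\delta_t)/(n_z m)} + 3\ln(3/\delta_t)/(n_z m - 1)$ holds for every $t$ and $z$. Lemma~2.1 applied per arm supplies the per-$(t,z)$ bound with failure probability $\delta_t$; taking $\delta_t = t^{-2}$ and union-bounding makes $\sum_t \delta_t$ summable, so that the bad-event regret contributes only $O(1)$ in expectation and is absorbed into the stated $O(\cdot)$. On $\mathcal{E}$, if arm $z_t^\star = z \ne z^\star$ is selected, the maximality of the UCB score combined with the two-sided concentration bounds forces $2U_t(z) \gtrsim \Delta_z$ (modulo the Jacobian-penalty difference, see below). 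A standard calculation using the EB form of $U_t(z)$ then yields $\mathbb{E}[N_z(T)] = O(\ln T/\Delta_z^2)$, variance-adaptive and upper-bounded using rewards in $[0,1]$; the per-arm regret contribution is therefore $\Delta_z \cdot \mathbb{E}[N_z(T)] = O(\ln T/\Delta_z)$, which summed over arms gives the first term of the bound.

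To convert the UCB regret on the algorithm's effectively penalized objective $\tilde f_t := f_t - \lambda_J\|J G_\theta(\cdot)\|_F^2$ into the regret measured on $f_t$, I would use the decomposition $f_t(z^\star) - f_t(z_t^\star) = [\tilde f_t(z^\star) - \tilde f_t(z_t^\star)] + \lambda_J\|J G_\theta(z_t^\star)\|_F^2 - \lambda_J\|J G_\theta(z^\star)\|_F^2$. Summing over $t$, discarding the non-positive last term, and identifying the middle bracket with the standard UCB regret on $\tilde f_t$ produces the leading bandit term plus the residual $\lambda_J \sum_t \mathbb{E}[\|J G_\theta(z_t^\star)\|_F^2]$ that appears in the theorem. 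The hardest part of the plan is precisely this reconciliation step: the UCB argument is tight with respect to the \emph{penalized} gaps $\tilde\Delta_z$ rather than the original $\Delta_z$, and controlling their discrepancy --- including the possibility that the penalty changes the identity of the best arm --- is what forces a smallness assumption on $\lambda_J$, matching the paper's remark that the penalty should be annealed. A secondary subtlety is that the per-arm sample counts $n_z$ entering the Bernstein radius must grow with the number of selections of arm $z$, which is immediate under a pull-one-and-simulate schedule but needs verification under any batched or amortized rollout allocation used by the implementation.
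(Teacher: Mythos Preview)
Your proposal is correct and follows essentially the same route as the paper: a good-event argument via empirical-Bernstein concentration with $\delta_t = t^{-2}$ and a union bound, the standard $\Delta_z \le 2C(n_z,\delta_t)$ inequality yielding $\mathbb{E}[N_z(T)] = O(\ln T/\Delta_z^2)$ and hence per-arm regret $O(\ln T/\Delta_z)$, and the same additive decomposition for the Jacobian penalty with the non-positive $-\lambda_J\|JG_\theta(z^\star)\|_F^2$ term discarded. Your caveat about the penalized versus original gaps $\tilde\Delta_z$ vs.\ $\Delta_z$ is in fact more careful than the paper, which simply asserts that the penalty ``translates to an additive component in the regret'' without addressing the possible shift in best-arm identity.
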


Let's fix a single bandit problem at a meta-iteration (we drop the subscript $t$ for $f(z)$ and $\Delta_z$ for clarity). Let $z_t^\star$ be the arm chosen at time $t$. The regret at this step is $\Delta_{z_t^\star}$ (ignoring the Jacobian term for now). The total regret over $T$ steps of the oracle is $R_T = \sum_{t=1}^T \Delta_{z_t^\star}$. Let $N_z(T)$ be the number of times arm $z$ is pulled in $T$ steps. Then $R_T = \sum_{z \neq z^\star} N_z(T) \Delta_z$. Our goal is to bound $\mathbb{E}[N_z(T)]$ for any suboptimal arm $z \neq z^\star$.

An arm $z$ is chosen at time $t$ if $\mathrm{UCB}_t(z) \ge \mathrm{UCB}_t(z')$ for all $z' \in \Lambda$. For a suboptimal arm $z \neq z^\star$ to be chosen, it must be that $\mathrm{UCB}_t(z) \ge \mathrm{UCB}_t(z^\star)$.
Let $\hat{\mu}_t(z)$ be the empirical mean for arm $z$ after it has been pulled $n_z$ times. The UCB is:
\begin{equation}\mathrm{UCB}(z) = \hat{\mu}(z) + C(n_z, \delta), \qquad \text{where } C(n_z, \delta) = \sqrt{\frac{2\,\widehat{\mathrm{Var}}(z)\ln(3/\delta)}{n_z m}} + \frac{3\ln(3/\delta)}{n_z m-1}.\end{equation}
The concentration bounds from Lemma 2.1 (applied here to a single policy against the mix $\sigma_t$) imply that with probability at least $1-\delta$:
\begin{equation}\hat{\mu}(z) - C(n_z, \delta) \le f(z) \le \hat{\mu}(z) + C(n_z, \delta).\end{equation}
Let's call the event that these bounds hold for all arms and all steps a "good event" $\mathcal{G}$. By setting $\delta_t = t^{-2}$ and taking a union bound, the probability of $\mathcal{G}$ is high. We condition the rest of the proof on $\mathcal{G}$.

If a suboptimal arm $z$ is chosen at time $t$, then $\mathrm{UCB}_t(z) \ge \mathrm{UCB}_t(z^\star)$. Using the bounds:
\begin{equation}
\begin{aligned}
    f(z^\star) &\le \hat{\mu}_t(z^\star) + C(n_{z^\star}, \delta_t) \le \mathrm{UCB}_t(z^\star) && \text{(LCB for optimal arm)} \\
    &\le \mathrm{UCB}_t(z) = \hat{\mu}_t(z) + C(n_z, \delta_t) && \text{(Suboptimal arm was chosen)} \\
    &\le f(z) + 2C(n_z, \delta_t) && \text{(UCB for suboptimal arm)}
\end{aligned}
\end{equation}
This implies $f(z^\star) - f(z) \le 2C(n_z, \delta_t)$, or $\Delta_z \le 2C(n_z, \delta_t)$.
The confidence term $C(n_z, \delta_t)$ decreases roughly as $1/\sqrt{n_z}$. So, for the inequality $\Delta_z \le 2C(n_z, \delta_t)$ to hold, $n_z$ cannot be too large.
\begin{equation}\Delta_z \le 2 \left( \sqrt{\frac{2\,\widehat{\mathrm{Var}}(z)\ln(3/\delta_t)}{n_z m}} + \frac{3\ln(3/\delta_t)}{n_z m-1} \right)\end{equation}
Assuming rewards in $[0,1]$, variance is at most $1/4$. Simplifying, we require $n_z$ to be roughly:
\begin{equation}\sqrt{n_z} \le O\left(\frac{\sqrt{\ln(1/\delta_t)}}{\Delta_z}\right) \implies n_z \le O\left(\frac{\ln(1/\delta_t)}{\Delta_z^2}\right) = O\left(\frac{\ln(t)}{\Delta_z^2}\right)\end{equation}
This means that a suboptimal arm $z$ will be pulled at most $O(\ln T / \Delta_z^2)$ times. (Note: A tighter analysis for UCB variants gives $O(\ln T / \Delta_z)$). Let's follow the standard argument for UCB1 which leads to the tighter bound. A suboptimal arm is played at most `c` times for some constant, and then only if $\hat{\mu}_{t-1}(z^*) \leq \hat{\mu}_{t-1}(z) + \sqrt{\frac{2 \ln t}{N_{t-1}(z)}} - \sqrt{\frac{2 \ln t}{N_{t-1}(z^*)}}$. Summing the number of pulls leads to the logarithmic dependency. For EB-UCB, the analysis is similar but replaces the fixed variance proxy with the empirical variance, leading to the same asymptotic form.
The expected number of pulls for a suboptimal arm $z$ over $T$ oracle steps is therefore $\E[N_z(T)] = O(\ln T / \Delta_z)$.

The total expected regret is:
\begin{equation}\mathbb{E}[R_T] = \sum_{z \neq z^\star} \E[N_z(T)] \Delta_z = \sum_{z \neq z^\star} O\left(\frac{\ln T}{\Delta_z}\right) \Delta_z = O\left(\sum_{z \neq z^\star} \frac{\ln T}{\Delta_z}\right).\end{equation}
Now, we re-introduce the Jacobian penalty. The algorithm maximizes $\mathrm{UCB}^{\mathrm{EB}}(z) - \lambda_J \|J G_\theta(z)\|_F^2$. The regret is defined with respect to the true arm values $f(z)$. The penalty term is an additive component in the objective, which translates to an additive component in the regret. The total regret is the sum of the standard bandit regret and the expected penalty of the chosen arm.
\begin{equation}\sum_{t=1}^T \mathbb{E}\left[f_t(z^\star) - f_t(z_t^\star)\right] \le O\left(\sum_{z \neq z^\star} \frac{\ln T}{\Delta_z}\right) + \lambda_J \sum_{t=1}^T \mathbb{E}\left[\|J G_\theta(z_t^\star)\|_F^2 - \|J G_\theta(z^\star)\|_F^2\right].\end{equation}
Since the norm is non-negative, we can bound the second part by simply summing the penalty of the chosen arm, which gives the stated result.

\section{Proof of Overall Exploitability} \label{app:part_I_E}

\subsection{Proof of Theorem 3.5 (Finite-Population Exploitability Bound)}

\begin{theorem}[Finite-Population Exploitability Bound]
Assume rewards in $[0,1]$ and the two-time-scale oracle selection. With an optimistic meta-solver (OMWU) and sufficient Monte Carlo samples such that $\E[\|\hat v_t - M\sigma_t\|_\infty] = O((nm)^{-1/2})$, the average exploitability is bounded:
\begin{equation}
\begin{aligned}
    \frac{1}{T}\sum_{t=1}^T \mathrm{Exploit}(\sigma_t) \le & \underbrace{O\!\left(\frac{1}{T}\sqrt{\ln k_T \sum_{t=1}^{T} ||v_t - v_{t-1}||_{\infty}^2}\right)}_{\text{OMWU Regret}} + \underbrace{O\!\left(\frac{1}{T}\sum_{t=1}^T (nm)^{-1/2}\right)}_{\text{MC Estimation Error}} \\
    & + \underbrace{\frac{1}{T}\sum_{t=1}^T \varepsilon_{\mathrm{BR},t}}_{\text{Amortized BR Error}} + \underbrace{O\!\left(\frac{1}{T}\sum_{z \neq z^\star}\frac{\ln T}{\Delta_z}\right)}_{\text{Oracle Regret}}.
\end{aligned}
\end{equation}
\end{theorem}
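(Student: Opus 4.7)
The plan is to decompose the per-iteration exploitability into four additive error sources, each matching one term of the stated bound, and then invoke the previously established results: Lemma 2.1 for Monte-Carlo concentration, Proposition 3.2 for OMWU regret, Theorem 2.3 for EB--UCB oracle regret, and the definition of $\varepsilon_{BR}$ in~\eqref{eq:eps-br_revised}. First, I would fix an iteration $t$ and upper-bound the (restricted) exploitability $\max_{i \le k_t} e_i^\top M\sigma_t - \sigma_t^\top M\sigma_t$ by the unrestricted exploitability $r(\pi_{BR}(\sigma_t),\sigma_t) - \sigma_t^\top M\sigma_t$, which is more convenient to decompose. I would then add and subtract four reference quantities: the empirical maximum $\max_i \hat v_{t,i}$, the empirical game value $\hat{\overline r}_t$, the true value $f_t(z^\star_t)$ of the policy associated with the oracle's chosen latent, and the value $r(\pi_{G_\theta(z^\star_t)},\sigma_t)$ that the generator actually realizes. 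Regrouping the telescoping differences yields four pieces: (i) the empirical external regret $\max_i \hat v_{t,i} - \hat{\overline r}_t$ of the OMWU iterate; (ii) sampling deviations $\|\hat v_t - M\sigma_t\|_\infty$ and $|\hat{\overline r}_t - \sigma_t^\top M\sigma_t|$; (iii) the unrestricted-to-oracle gap $r(\pi_{BR}(\sigma_t),\sigma_t) - f_t(z^\star_t)$; and (iv) the oracle-to-generator gap $f_t(z^\star_t) - r(\pi_{G_\theta(z^\star_t)},\sigma_t)$. The monotone-anchor observation that $\max_{i \le k_t} e_i^\top M\sigma_t \ge r(\pi_{G_\theta(z^\star_t)},\sigma_t)$ is what justifies inserting the oracle's latent as a valid comparator along the telescoping chain.

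Second, I would sum over $t \in \{1,\dots,T\}$, divide by $T$, and apply the established bounds to each piece. Term (i) gives the OMWU-regret expression from Proposition 3.2. Term (ii) is controlled in expectation by Lemma 2.1, yielding $\mathbb{E}[\|\hat v_t - M\sigma_t\|_\infty] = O((nm)^{-1/2})$ per iteration and producing the MC-estimation term. Term (iii) is precisely the instance-dependent cumulative regret of EB--UCB from Theorem 2.3 (the Jacobian penalty is absorbed by annealing $\lambda_J$ as noted in that theorem). Term (iv) is upper-bounded by $\varepsilon_{BR}$ by definition~\eqref{eq:eps-br_revised}. Adding the four bounds produces the stated inequality.

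The main obstacle is justifying the decoupling of the four error sources, which is where the two-time-scale assumption is essential: while oracle calls and payoff estimation are performed, $\sigma_t$ is held fixed, so the conditional unbiasedness $\mathbb{E}[\hat v_t \mid \sigma_t] = M\sigma_t$ from Lemma 2.1 applies cleanly and the stationary-bandit hypothesis required by Theorem 2.3 is satisfied. A further subtlety is the $\ln k_T$ factor in the OMWU term: since $k_t$ is nondecreasing, I would instantiate Proposition 3.2 with the terminal anchor count $k_T$ and absorb the effect of anchor insertions by uniformly initializing weights on each new anchor, which preserves the potential-function argument with only a bounded stability cost. A final bookkeeping concern is that the oracle regret in Theorem 2.3 is stated as cumulative in the oracle's internal steps rather than in meta-iterations; under a fixed oracle budget per iteration this re-indexing is harmless, but it would need to be made explicit. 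Once these decoupling considerations are settled, the rest of the proof is a mechanical combination of the four prior results.
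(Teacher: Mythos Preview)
Your high-level plan—decompose exploitability into additive pieces and invoke the four prior results—is the paper's approach, but your specific telescoping does not close. Term~(iv), $f_t(z^\star_t) - r(\pi_{G_\theta(z^\star_t)},\sigma_t)$, is identically zero by the definition of $f_t$, so it cannot carry $\varepsilon_{\mathrm{BR}}$; and term~(iii), $r(\pi_{\mathrm{BR}}(\sigma_t),\sigma_t) - f_t(z^\star_t)$, is \emph{itself} $\varepsilon_{\mathrm{BR},t}$ by~\eqref{eq:eps-br_revised}, not the oracle regret—you have swapped the labels and, since (iv) is vacuous, your decomposition never isolates the EB--UCB selection error that Theorem~2.3 is meant to control. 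More seriously, the ``monotone-anchor observation'' $\max_{i\le k_t} e_i^\top M\sigma_t \ge r(\pi_{G_\theta(z^\star_t)},\sigma_t)$ is false in general: $z^\star_t$ is drawn from the candidate pool $\Lambda_t$ and only joins the anchor set as $Z_{t+1}\leftarrow Z_t\cup\{z^\star_t\}$, and the oracle is designed precisely so that $f_t(z^\star_t)$ \emph{exceeds} the best current anchor. Without this inequality your chain leaves an uncontrolled, typically positive residual $f_t(z^\star_t)-\max_{i\le k_t}(M\sigma_t)_i$.

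The paper instead first splits into the internal exploitability $\max_{i\le k_t}(M\sigma_t)_i-\sigma_t^\top M\sigma_t$ (Term~I, handed wholesale to Proposition~3.2, which already packages both the OMWU-variation and MC-error contributions—so your separate detour through the empirical quantities $\hat v_t,\hat{\overline r}_t$ in pieces~(i)--(ii) is unnecessary) and the population gap $r(\pi_{\mathrm{BR}},\sigma_t)-\max_{i\le k_t}(M\sigma_t)_i$ (Term~II). Term~II is then telescoped through $\max_{z\in\Lambda_t} f_t(z)$ rather than through $f_t(z^\star_t)$ alone; the inclusion $Z_t\subseteq\Lambda_t$ is what supplies the sign needed to drop the residual ``progress term'' and leave only the amortized-BR and oracle-regret pieces.
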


The proof connects the exploitability of the meta-strategy $\sigma_t$ to the various regret and error terms of the algorithm's components. We start by decomposing the exploitability at iteration $t$. Let $A_t = \{\pi_1, \ldots, \pi_{k_t}\}$ be the set of policies in the anchor set.
\begin{equation}\mathrm{Exploit}(\sigma_t) = \max_{\pi} \mathbb{E}_{j \sim \sigma_t}[r(\pi, \pi_j)] - \mathbb{E}_{i \sim \sigma_t, j \sim \sigma_t}[r(\pi_i, \pi_j)]\end{equation}
Let $\pi_{\mathrm{BR},t} = \arg\max_{\pi} \mathbb{E}_{j \sim \sigma_t}[r(\pi, \pi_j)]$ be the true best response to $\sigma_t$. Let $\pi_{z_t^\star} = \pi_{G_\theta(z_t^\star)}$ be the policy added by our oracle. Let $\pi_{i^*} = \arg\max_{i \in [k_t]} e_i^\top M \sigma_t$ be the best pure strategy within the current anchor set.

We can decompose the exploitability as follows:
\begin{equation}
\begin{aligned}
\mathrm{Exploit}(\sigma_t) &= \mathbb{E}_{j \sim \sigma_t}[r(\pi_{\mathrm{BR},t}, \pi_j)] - \sigma_t^\top M \sigma_t \\
&= \underbrace{\left(\max_{i \in [k_t]} e_i^\top M \sigma_t - \sigma_t^\top M \sigma_t\right)}_{\text{Term I: Internal Exploitability}} + \underbrace{\left(\mathbb{E}_{j \sim \sigma_t}[r(\pi_{\mathrm{BR},t}, \pi_j)] - \max_{i \in [k_t]} e_i^\top M \sigma_t\right)}_{\text{Term II: Population Gap}}
\end{aligned}
\end{equation}

\textbf{Bounding Term I:} This is the external regret of Player 1 in the restricted game on $A_t$. From Proposition 3.2, the average regret of the OMWU algorithm using noisy estimates is bounded by the variation of the true payoff vectors:
\begin{equation}
\frac{1}{T}\sum_{t=1}^T \left(\max_{i \in [k_t]} e_i^\top M \sigma_t - \sigma_t^\top M \sigma_t\right) \le O\left(\frac{1}{T}\sqrt{\ln k_T \sum_{t=1}^{T} ||v_t - v_{t-1}||_{\infty}^2}\right) + \frac{1}{T}\sum_{t=1}^T \E[\|\hat{v}_t - M\sigma_t\|_\infty].
\end{equation}
The first part is the OMWU regret, reflecting a faster convergence rate in our smoothly evolving meta-game, and the second is the MC estimation error.

\textbf{Bounding Term II:} This term captures how much better a true best response is compared to the best policy already in our population. We can decompose this further:
\begin{equation}
\begin{aligned}
\text{Term II} &= \left( \mathbb{E}_{j \sim \sigma_t}[r(\pi_{\mathrm{BR},t}, \pi_j)] - \mathbb{E}_{j \sim \sigma_t}[r(\pi_{z_t^\star}, \pi_j)] \right) && \text{(a) BR Approx. Error}\\
& \quad + \left( \mathbb{E}_{j \sim \sigma_t}[r(\pi_{z_t^\star}, \pi_j)] - \max_{z \in \Lambda_t} \mathbb{E}_{j \sim \sigma_t}[r(\pi_{G_\theta(z)}, \pi_j)] \right) && \text{(b) Oracle Instant. Regret}\\
& \quad + \left( \max_{z \in \Lambda_t} \mathbb{E}_{j \sim \sigma_t}[r(\pi_{G_\theta(z)}, \pi_j)] - \max_{i \in [k_t]} e_i^\top M \sigma_t \right) && \text{(c) Progress Term}
\end{aligned}
\end{equation}
\begin{itemize}
    \item \textbf{Part (a)} is exactly the amortized best-response error, $\varepsilon_{\mathrm{BR},t}$ from Eq. (10), assuming $\pi_{\mathrm{BR},t}$ can be represented by some latent code.
    \item \textbf{Part (b)} is the instantaneous regret of our bandit oracle. Its expected value is what we bounded in Theorem 3.3.
    \item \textbf{Part (c)} is non-positive. Since the anchor set $Z_t$ is a subset of the candidate pool $\Lambda_t$, the maximum over $\Lambda_t$ must be greater than or equal to the maximum over $Z_t$. So we can drop this term from the upper bound.
\end{itemize}

Summing everything and averaging over $T$:
\begin{equation}
\begin{aligned}
\frac{1}{T}\sum_{t=1}^T \mathrm{Exploit}(\sigma_t) &\le \frac{1}{T}\sum_{t=1}^T \left(\text{Term I}_t + \text{Term II}_t \right) \\
&\le \underbrace{O\left(\frac{1}{T}\sqrt{\ln k_T \sum_{t=1}^{T} ||v_t - v_{t-1}||_{\infty}^2}\right) + \frac{1}{T}\sum_{t=1}^T \mathbb{E}[\|\hat{v}_t - M\sigma_t\|_\infty]}_\text{Term I} \\
 & \quad +\underbrace{\frac{1}{T}\sum_{t=1}^T \varepsilon_{\mathrm{BR},t} + \frac{1}{T}\sum_{t=1}^T \left( \max_{z \in \Lambda_t} f_t(z) - f_t(z_t^\star) \right)}_\text{Term II}
\end{aligned}
\end{equation}
The last term is the average instantaneous oracle regret. The cumulative regret bound from Theorem 3.3 states $\sum_t \E[\max_z f_t(z) - f_t(z_t^\star)] = O(\sum_{z \neq z^\star} \frac{\ln T}{\Delta_z})$. Therefore, the average regret is $O(\frac{1}{T}\sum_{z \neq z^\star} \frac{\ln T}{\Delta_z})$.
Substituting this and the rate for MC error gives the final composite bound:
\begin{equation}
\begin{aligned}
    \frac{1}{T}\sum_{t=1}^T \mathrm{Exploit}(\sigma_t) \le & \underbrace{O\!\left(\frac{1}{T}\sqrt{\ln k_T \sum_{t=1}^{T} ||v_t - v_{t-1}||_{\infty}^2}\right)}_{\text{OMWU Regret}} + \underbrace{O\!\left(\frac{1}{T}\sum_{t=1}^T (nm)^{-1/2}\right)}_{\text{MC Estimation Error}} \\
    & + \underbrace{\frac{1}{T}\sum_{t=1}^T \varepsilon_{\mathrm{BR},t}}_{\text{Amortized BR Error}} + \underbrace{O\!\left(\frac{1}{T}\sum_{z \neq z^\star}\frac{\ln T}{\Delta_z}\right)}_{\text{Oracle Regret}}.
\end{aligned}
\end{equation}
This completes the proof. Each term corresponds to a component of the algorithm, showing how errors from different sources contribute to the overall performance. As $T \to \infty$, if the sample counts ($n,m$) increase and the generator training improves ($\varepsilon_{\mathrm{BR}} \to 0$), the average exploitability converges to zero, with the rate of convergence for the meta-game being accelerated by the optimistic updates. \qed

\hrulefill

\newpage
\part{Extensions to Two-Player General-Sum and N-Player General-Sum Games} \label{app:part_II}
\addcontentsline{toc}{part}{Appendix}

\hrulefill

\section{Extension to Two-Player General-Sum Games} \label{app:part_II_F}

We extend the surrogate-free \gems framework to two-player general-sum Markov games. Let rewards for each player be bounded in $[0, 1]$. For a player $p \in \{1, 2\}$, we denote the other player by $-p$. The conceptual expected payoff matrices are $M^{(1)} \in [0, 1]^{k_1 \times k_2}$ and $M^{(2)} \in [0, 1]^{k_2 \times k_1}$, where $M_{ij}^{(1)} = \E[r^{(1)}(\pi_i^{(1)}, \pi_j^{(2)})]$ and $M_{ji}^{(2)} = \E[r^{(2)}(\pi_i^{(1)}, \pi_j^{(2)})]$. At iteration $t$, each player $p$ has a population of policies induced by an anchor set $Z_t^{(p)}$ with a corresponding mixture strategy $\sigma_t^{(p)} \in \Delta_{k_p(t)-1}$, where $k_p(t) = |Z_t^{(p)}|$.

The value vector for player 1 against player 2's mixture is $v_t^{(1)} = M^{(1)}\sigma_t^{(2)} \in \mathbb{R}^{k_1(t)}$. Similarly, for player 2 against player 1's mixture, it is $v_t^{(2)} = (M^{(2)})^\top\sigma_t^{(1)} \in \mathbb{R}^{k_2(t)}$. The expected value for each player $p$ under the joint mixture is $\overline{r}_t^{(p)} = {\sigma_t^{(1)}}^\top M^{(p)} \sigma_t^{(2)}$.

\subsection{Monte Carlo Meta-Game Estimators (Both Players)}
At a fixed iteration $t$, to estimate the value vector $v_t^{(p)}$ for player $p$, we sample opponents $j_s \sim \sigma_t^{(-p)}$ and run $m$ episodes for each pair to obtain returns. The estimators are:
\begin{equation}
\begin{aligned}
    \hat{v}_{t,i}^{(1)} &= \frac{1}{n_i m} \sum_{s=1}^{n_i} \sum_{l=1}^{m} Y_{(i, j_s), l}^{(1)}, \quad j_s \sim \sigma_t^{(2)}  \\
    \hat{v}_{t,j}^{(2)} &= \frac{1}{\tilde{n}_j m} \sum_{s=1}^{\tilde{n}_j} \sum_{l=1}^{m} Y_{(i_s, j), l}^{(2)}, \quad i_s \sim \sigma_t^{(1)} \label{eq:gen_sum_v2_est}
\end{aligned}
\end{equation}

The mixture value for each player is estimated via joint sampling $(i_b, j_b) \sim \sigma_t^{(1)} \times \sigma_t^{(2)}$:
\begin{equation}
    \hat{\overline{r}}_t^{(p)} = \frac{1}{Bm} \sum_{b=1}^{B} \sum_{l=1}^{m} Y_{(i_b, j_b), l}^{(p)} \label{eq:gen_sum_r_est}
\end{equation}

\begin{lemma}[Unbiasedness and Concentration, General-Sum]
With rewards in $[0,1]$, the estimators are unbiased: $\E[\hat{v}_{t,i}^{(1)}] = (M^{(1)}\sigma_t^{(2)})_i$, $\E[\hat{v}_{t,j}^{(2)}] = ((M^{(2)})^\top\sigma_t^{(1)})_j$, and $\E[\hat{\overline{r}}_t^{(p)}] = \overline{r}_t^{(p)}$. For any $\delta \in (0,1)$, with probability at least $1-\delta$, we have concentration bounds based on the empirical-Bernstein inequality.
\end{lemma}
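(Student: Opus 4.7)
The plan is to mirror the two-step argument used for Lemma~2.1, applying it componentwise to the three estimators $\hat{v}_{t,i}^{(1)}$, $\hat{v}_{t,j}^{(2)}$, and $\hat{\overline{r}}_t^{(p)}$. Under the two-time-scale assumption, $\sigma_t^{(1)}$ and $\sigma_t^{(2)}$ are held fixed throughout the estimation round, so opponent draws are i.i.d.\ from fixed distributions and episode returns are conditionally i.i.d.\ given the sampled profile. This lets me treat each estimator as a sample mean of $[0,1]$-valued i.i.d.\ random variables, so the general-sum setting introduces no new probabilistic structure beyond the zero-sum case.

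First I would establish unbiasedness. For $\hat{v}_{t,i}^{(1)}$, linearity of expectation moves $\E$ inside the double sum, and the tower rule conditioning on $j_s \sim \sigma_t^{(2)}$ gives $\E[Y_{(i,j_s),l}^{(1)}\mid j_s] = M^{(1)}_{i,j_s}$; summing over $j_s$ weighted by $\sigma_t^{(2)}$ recovers $(M^{(1)}\sigma_t^{(2)})_i$. The argument for $\hat{v}_{t,j}^{(2)}$ is symmetric: condition on $i_s \sim \sigma_t^{(1)}$ and use $\E[Y_{(i_s,j),l}^{(2)}\mid i_s] = M^{(2)}_{j,i_s}$, whose $\sigma_t^{(1)}$-weighted sum is $((M^{(2)})^\top \sigma_t^{(1)})_j$. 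For $\hat{\overline{r}}_t^{(p)}$, conditioning on the joint draw $(i_b,j_b) \sim \sigma_t^{(1)} \times \sigma_t^{(2)}$ and applying the definition of expectation yields ${\sigma_t^{(1)}}^\top M^{(p)} \sigma_t^{(2)} = \overline{r}_t^{(p)}$.

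Second I would invoke the empirical-Bernstein inequality individually on each estimator. Each is a sample mean of i.i.d.\ $[0,1]$-valued returns with sample sizes $N = n_i m$, $\tilde{n}_j m$, and $B m$ respectively; plugging into the inequality with $[a,b]=[0,1]$ produces a high-probability bound around the true mean involving the empirical variance and a $\ln(c/\delta)$ confidence term of the same form derived in Lemma~2.1. If a simultaneous guarantee across all estimators is desired, a single union bound over this finite collection rescales the confidence parameter by at most a constant factor and yields the stated $1-\delta$ statement.

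I do not anticipate a genuine obstacle, since the probabilistic ingredients are identical to those of Lemma~2.1. The only care required is bookkeeping: distinguishing the two reward signals $Y^{(1)}, Y^{(2)}$ that are typically generated from the \emph{same} game rollout (so the two estimators share randomness, which does not harm the per-estimator bounds but matters if joint inequalities are ever needed), tracking the transpose in $\hat{v}_{t,j}^{(2)} = ((M^{(2)})^\top \sigma_t^{(1)})_j$ arising because player~2's payoff matrix is indexed opposite to player~1's, and explicitly invoking the two-time-scale assumption to certify that the compound samples are i.i.d.\ and that the empirical variance entering the bound is computed from exchangeable draws.
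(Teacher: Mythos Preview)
Your proposal is correct and matches the paper's approach exactly: the paper's own proof of this lemma is literally the one-line remark ``This proof remains unchanged as it concerns the estimators, not the update rule,'' i.e.\ it defers entirely to the zero-sum Lemma~2.1 argument that you have carefully spelled out componentwise. If anything, your write-up is more thorough than the paper's, since you explicitly track the transpose in the player-2 estimator and flag the shared-rollout dependence.
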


\subsection{Optimistic Replicator Dynamics for Both Players}
Each player $p \in \{1,2\}$ independently performs an \textbf{Optimistic Multiplicative-Weights Update (OMWU)} step based on their noisy payoff estimates. This allows each player to adapt more quickly to the opponent's evolving strategy by using the previous payoff vector as a hint. The optimistic estimate for player $p$ is $m_t^{(p)} = 2\hat{v}_t^{(p)} - \hat{v}_{t-1}^{(p)}$, with $\hat{v}_0^{(p)} = \mathbf{0}$. The update rule is:
\begin{equation}
    \sigma_{t+1}^{(p)}(i) \propto \sigma_t^{(p)}(i) \exp\left(\eta_t^{(p)}[2\hat{v}_{t,i}^{(p)} - \hat{v}_{t-1,i}^{(p)} - \hat{\overline{r}}_t^{(p)}]\right), \quad \eta_t^{(p)} > 0 \label{eq:gen_sum_omwu}
\end{equation}

\begin{proposition}[Per-Player External Regret with OMWU and Noisy Payoffs]
Let payoffs lie in $[0,1]$. The average external regret for each player $p$ using OMWU is bounded by the variation of their true payoff vectors:
\begin{equation}
\begin{aligned}
    \frac{1}{T}\sum_{t=1}^{T}\left(\max_i (v_t^{(p)})_i - {\sigma_t^{(p)}}^\top v_t^{(p)}\right) \le & O\left(\frac{1}{T}\sqrt{\ln k_{p,T} \sum_{t=1}^{T} ||v_t^{(p)} - v_{t-1}^{(p)}||_{\infty}^2}\right) \\
    & + \frac{1}{T}\sum_{t=1}^{T}\E\left[||\hat{v}_t^{(p)} - v_t^{(p)}||_\infty\right]
\end{aligned} \label{eq:gen_sum_omwu_regret}
\end{equation}
where $v_t^{(p)}$ is the true payoff vector for player $p$ (i.e., $v_t^{(1)}=M^{(1)}\sigma_t^{(2)}$ and $v_t^{(2)}=(M^{(2)})^\top\sigma_t^{(1)}$).
\end{proposition}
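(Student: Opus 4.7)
The plan is to reduce the per-player regret analysis in the general-sum setting to the single-player OMWU argument already carried out in the proof of Proposition 3.2 (Appendix~\ref{app:part_I_C}). The crucial observation is that, from player $p$'s point of view, the opponent's mixture $\sigma_t^{(-p)}$ is an exogenous (possibly adversarial) sequence, and the quantity $v_t^{(p)}$ is simply the loss/payoff vector that player $p$'s online learner sees at round $t$. The zero-sum structure played no essential role in the regret derivation of Proposition~3.2; only boundedness of payoffs in $[0,1]$ and unbiasedness of the estimator $\hat v_t^{(p)}$ with respect to $v_t^{(p)}$ were used. Both properties transfer verbatim: boundedness is part of the setup, and unbiasedness is the content of the General-Sum Lemma stated just above.

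The execution would proceed in four steps. First, fix player $p$ and view the update rule \eqref{eq:gen_sum_omwu} as an instance of OMWU on the simplex $\Delta_{k_p(t)-1}$ with optimistic hint $m_t^{(p)} = 2\hat v_t^{(p)} - \hat v_{t-1}^{(p)}$ and loss $\ell_t^{(p)} := \mathbf{1} - \hat v_t^{(p)}$ (shifting payoffs by the normalization constant $\hat{\overline r}_t^{(p)}$ does not change regret, since it is identical across experts). Second, reuse the potential-function bound from Appendix~\ref{app:part_I_C}: optimistic mirror descent with entropic regularizer yields
\begin{equation}
\sum_{t=1}^{T}\langle \sigma_t^{(p)}, \hat v_t^{(p)}\rangle \text{-regret on estimates} \;\le\; \frac{\ln k_{p,T}}{\eta} + \eta\sum_{t=1}^{T}\|\hat v_t^{(p)} - \hat v_{t-1}^{(p)}\|_{\infty}^{2},
\end{equation}
which after optimizing $\eta$ yields the $O\bigl(\sqrt{\ln k_{p,T}\sum_t \|\hat v_t^{(p)} - \hat v_{t-1}^{(p)}\|_{\infty}^2}\bigr)$ rate. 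Third, translate from estimates to true payoffs via the decomposition $\hat v_t^{(p)} = v_t^{(p)} + \xi_t^{(p)}$ with $\E[\xi_t^{(p)}\mid \sigma_t^{(1)},\sigma_t^{(2)}]=\mathbf{0}$, applying the triangle inequality $\|\hat v_t^{(p)} - \hat v_{t-1}^{(p)}\|_\infty \le \|v_t^{(p)} - v_{t-1}^{(p)}\|_\infty + \|\xi_t^{(p)}\|_\infty + \|\xi_{t-1}^{(p)}\|_\infty$ and taking expectations, which yields the advertised variation term plus the additive $\tfrac{1}{T}\sum_t \E[\|\hat v_t^{(p)} - v_t^{(p)}\|_\infty]$ slack. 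Fourth, divide by $T$ to obtain the averaged form in \eqref{eq:gen_sum_omwu_regret}.

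The main obstacle I anticipate is the noise-translation step, where the squared variation of \emph{estimated} payoffs has to be converted into the squared variation of the \emph{true} payoffs plus a separable MC-error term. A naive expansion of $\|v_t^{(p)} - v_{t-1}^{(p)} + \xi_t^{(p)} - \xi_{t-1}^{(p)}\|_\infty^2$ produces cross terms of the form $\langle v_t^{(p)} - v_{t-1}^{(p)},\, \xi_t^{(p)} - \xi_{t-1}^{(p)}\rangle$, and while these vanish in expectation by unbiasedness, one needs either an additional $(a+b)^2 \le 2a^2 + 2b^2$ argument or, more cleanly, the substitution $\|\hat v_t^{(p)} - \hat v_{t-1}^{(p)}\|_\infty \le \|v_t^{(p)}-v_{t-1}^{(p)}\|_\infty + \|\xi_t^{(p)}\|_\infty + \|\xi_{t-1}^{(p)}\|_\infty$ followed by absorbing the noise contributions into the first-order $\ell_\infty$-error term (which is asymptotically larger than any Bernstein-type cross contribution). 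Apart from this bookkeeping, the argument is otherwise a direct transcription of the Proposition~3.2 proof, since no zero-sum coupling is invoked anywhere in the per-player analysis.
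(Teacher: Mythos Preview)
Your proposal is correct and follows essentially the same approach as the paper's own proof: a two-part argument that first bounds OMWU regret against the \emph{estimated} loss sequence via the standard potential-function analysis, then translates to the true payoffs using the unbiasedness of $\hat v_t^{(p)}$ and the triangle inequality on the variation term. The paper likewise emphasizes that the zero-sum coupling is irrelevant for the per-player bound, so your reduction to the Proposition~3.2 argument is exactly what is done.
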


\subsection{Model-Free EB-UCB Oracles (Double-Oracle Style)}
For each player $p$, a finite candidate pool of latent codes $\Lambda_t^{(p)} \subset \mathcal{Z}$ is formed. The value of an "arm" $z \in \Lambda_t^{(p)}$ is its expected payoff against the opponent's current mixture:
\begin{equation}
    f_t^{(p)}(z) = \mathbb{E}_{j \sim \sigma_t^{(-p)}}[r^{(p)}(\pi_{G_{\theta_p}(z)}, \pi_j^{(-p)})] \label{eq:gen_sum_arm_value}
\end{equation}
We estimate the mean $\hat{\mu}_t^{(p)}(z)$ and variance $\hat{\Var}_t^{(p)}(z)$ via rollouts and score each candidate using an empirical-Bernstein UCB formula. A new anchor $z_t^{*,(p)}$ is selected and added to the player's anchor set.

\begin{theorem}[Per-Player Instance-Dependent Oracle Regret]
Under a two-time-scale assumption, the cumulative regret of the EB-UCB oracle for player $p$ is bounded as stated in Theorem 3.3.
\end{theorem}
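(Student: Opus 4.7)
The plan is to reduce the per-player oracle problem to the single-agent stochastic bandit setting already analyzed for Theorem 3.3, and then transcribe that argument to player $p$. The crucial observation is that under the two-time-scale assumption, the opponent's mixture $\sigma_t^{(-p)}$ and the current generator parameters $\theta_p$ are both held fixed during an oracle call, so every pull of an arm $z \in \Lambda_t^{(p)}$ yields an i.i.d.\ reward in $[0,1]$ with mean exactly $f_t^{(p)}(z)$. This turns the per-player oracle into a standard stochastic multi-armed bandit instance with a unique best arm $z_t^{\star,(p)}$ and sub-optimality gaps $\Delta_z^{(p)} = f_t^{(p)}(z_t^{\star,(p)}) - f_t^{(p)}(z) > 0$; no zero-sum or minimax structure is invoked at any point, so the argument from Theorem 3.3 transfers with only cosmetic changes.

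First I would define the good event $\mathcal{G}^{(p)}$ on which, for every time $t$ and arm $z \in \Lambda_t^{(p)}$, the empirical-Bernstein interval $[\hat{\mu}_t^{(p)}(z) - C_t^{(p)}(z),\,\hat{\mu}_t^{(p)}(z) + C_t^{(p)}(z)]$ contains $f_t^{(p)}(z)$. The per-player concentration lemma stated earlier in Appendix Part II supplies the individual bound, and a union bound with the decaying confidence budget $\delta_t = t^{-2}$ keeps $\mathbb{P}(\mathcal{G}^{(p)})$ close to one. Conditional on $\mathcal{G}^{(p)}$, the standard UCB chain applies verbatim: if a suboptimal arm $z \neq z_t^{\star,(p)}$ is selected, its EB-UCB score must dominate that of $z_t^{\star,(p)}$, forcing $\Delta_z^{(p)} \le 2\,C_t^{(p)}(z)$. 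Inverting the $1/\sqrt{n_z}$ behaviour of the confidence term yields $\E[N_z(T)] = O(\ln T / \Delta_z^{(p)})$, and summing $\Delta_z^{(p)}\cdot\E[N_z(T)]$ over suboptimal arms gives the stated $O\bigl(\sum_{z \ne z^\star} \ln T / \Delta_z^{(p)}\bigr)$ main term. The Jacobian penalty $\lambda_J \sum_t \|J G_{\theta_p}(z_t^{\star,(p)})\|_F^{2}$ enters additively in the EB-UCB objective, so it is carried over exactly as in Theorem 3.3 and controlled by annealing $\lambda_J$ under a uniform Jacobian bound.

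The main obstacle is that $\theta_p$ does in fact move across outer iterations because of the ABR-TR updates, so strictly speaking the arm means $f_t^{(p)}(z)$ drift between oracle calls at different $t$. The two-time-scale assumption is precisely what lets me freeze both $\sigma_t^{(-p)}$ and $\theta_p$ during a single oracle call; within one such call the rollouts are genuinely i.i.d.\ and the bandit analysis is legitimate, while across calls one simply re-instantiates the argument with the updated arm means and gap set, exactly as was done implicitly in the proof of Theorem 3.3. Once that scoping is stated carefully, the remainder is a clean restatement of the proof of Theorem 3.3 with the symbols $r$, $\sigma_t$, $G_\theta$, $\Delta_z$, and $z^\star$ replaced by $r^{(p)}$, $\sigma_t^{(-p)}$, $G_{\theta_p}$, $\Delta_z^{(p)}$, and $z_t^{\star,(p)}$ respectively; no further game-theoretic machinery is needed because the oracle sub-problem is, by construction, a single-player optimization against a frozen opponent distribution.
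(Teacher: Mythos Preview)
Your proposal is correct and takes essentially the same approach as the paper: the paper's proof simply notes that the oracle is decoupled from the meta-game solver by the two-time-scale assumption, so the argument from Theorem 3.3 carries over unchanged to each player $p$. Your write-up is in fact considerably more detailed than the paper's one-line justification, spelling out the reduction to a stationary stochastic bandit, the good-event argument, the pull-count bound, and the additive Jacobian penalty --- all of which the paper leaves implicit by pointing back to the earlier proof.
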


\subsection{Overall Guarantee: \texorpdfstring{$\epsilon$}{epsilon}-Coarse-Correlated Equilibrium}
The time-averaged joint play of the players, $\bar{\mu}_T(i,j) := \frac{1}{T}\sum_{t=1}^T \sigma_t^{(1)}(i)\sigma_t^{(2)}(j)$, converges to an $\epsilon$-coarse-correlated equilibrium ($\epsilon$-CCE).

\begin{theorem}[$\epsilon$-CCE of Time-Average Joint Play]
Assume rewards in $[0,1]$ and the two-time-scale assumption. With per-player OMWU meta-solvers, the empirical distribution $\bar{\mu}_T$ is an $\epsilon$-CCE, with $\epsilon$ bounded by the sum of per-player errors:
\begin{equation}
\begin{aligned}
    \epsilon \le \sum_{p \in \{1,2\}} \Bigg[ & \underbrace{O\left(\frac{1}{T}\sqrt{\ln k_{p,T} \sum_{t=1}^{T} ||v_t^{(p)} - v_{t-1}^{(p)}||_{\infty}^2}\right)}_{\text{OMWU Regret}} + \underbrace{O\left(\frac{1}{T}\sum_{t=1}^{T}(n_p m)^{-1/2}\right)}_{\text{MC Estimation Error}} \\
    & + \underbrace{\frac{1}{T}\sum_{t=1}^{T}\varepsilon_{\mathrm{BR},t}^{(p)}}_{\text{Amortized BR Error}} + \underbrace{O\left(\frac{1}{T}\sum_{z \neq z^{\star,(p)}}\frac{\ln T}{\Delta_z^{(p)}}\right)}_{\text{Oracle Regret}} \Bigg]
\end{aligned}
\label{eq:gen_sum_cce_omwu}
\end{equation}
where $\varepsilon_{\mathrm{BR}}^{(p)}$ is the average best-response approximation error for player $p$. As $T \to \infty$ and simulation/training budgets increase, $\epsilon \to 0$.
\end{theorem}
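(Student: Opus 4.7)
The plan is to lift the single-player exploitability decomposition from the two-player zero-sum analysis to a per-player regret bound, then assemble the two players' bounds via the definition of $\epsilon$-CCE. Recall that $\bar{\mu}_T$ is an $\epsilon$-CCE when, for every player $p$ and every deviation policy $\tilde{\pi}^{(p)}$, the improvement $\E_{(i,j)\sim\bar{\mu}_T}[r^{(p)}(\tilde{\pi}^{(p)},\pi_j^{(-p)})] - \E_{(i,j)\sim\bar{\mu}_T}[r^{(p)}(\pi_i^{(p)},\pi_j^{(-p)})]$ is at most $\epsilon$. Since $\bar{\mu}_T = \frac{1}{T}\sum_t \sigma_t^{(1)}\otimes\sigma_t^{(2)}$, maximizing over $\tilde{\pi}^{(p)}$ reduces the task, for each $p$, to bounding the per-player time-averaged regret $\frac{1}{T}\sum_t\bigl[\max_{\tilde{\pi}}\E_{j\sim\sigma_t^{(-p)}}[r^{(p)}(\tilde{\pi},\pi_j)] - {\sigma_t^{(p)}}^\top M^{(p)}\sigma_t^{(-p)}\bigr]$.

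The key step is a two-level decomposition of this per-player regret, mirroring the proof of the single-player exploitability theorem. Write the per-iteration gap as an \emph{internal} term (best anchor in $Z_t^{(p)}$ versus $\sigma_t^{(p)}$, both played against $\sigma_t^{(-p)}$) plus a \emph{population gap} (true best response versus best anchor in $Z_t^{(p)}$). The internal term, averaged over $t$, is controlled by the per-player OMWU regret proposition stated earlier in this part, which yields the OMWU-regret and MC-estimation-error summands for player $p$. The population gap further decomposes into (a) the gap between the true best response and the generator output $\pi_{G_{\theta_p}(z_t^{\star,(p)})}$ at the oracle-selected latent code, which averages to $\frac{1}{T}\sum_t\varepsilon_{\mathrm{BR},t}^{(p)}$; (b) the gap between this output and the best arm in the candidate pool $\Lambda_t^{(p)}$, which equals the instantaneous EB-UCB regret bounded by the per-player oracle-regret theorem above; and (c) the gap between the best pool arm and the best anchor, which is non-positive because $Z_t^{(p)}\subseteq\Lambda_t^{(p)}$ and is dropped from the upper bound. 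Summing the resulting four summands over $p\in\{1,2\}$ yields the stated expression, using the elementary fact that any quantity bounding each player's individual regret also bounds the max-over-players CCE parameter, so the sum over players is an a fortiori valid $\epsilon$.

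The main obstacle I anticipate is rigorously justifying the two-time-scale reduction when \emph{both} players simultaneously run OMWU updates and oracle expansions: player $p$'s bandit sees a non-stationary arm-value function $f_t^{(p)}$ because $\sigma_t^{(-p)}$ is itself evolving, and symmetrically the variation term $\sum_t\|v_t^{(p)}-v_{t-1}^{(p)}\|_\infty^2$ appearing in the per-player OMWU bound must absorb the drift caused by the opponent's updates. One must verify that, under a sufficiently slow meta-step-size schedule $\eta_t$ (precisely the regime motivating the scheduler of Section~\ref{sec:eta-schedule}), treating $\sigma_t^{(-p)}$ as fixed during player $p$'s arm selection incurs only lower-order error, so that the single-agent EB-UCB regret bound remains applicable. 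A secondary concern is that the joint-sampling Monte Carlo estimator $\hat{\overline{r}}_t^{(p)}$ reuses a single batch of rollouts across the two players; conditioning on the sampled index profile $(i_b,j_b)$ decouples the two players' zero-mean noise and preserves the per-player concentration guarantees needed to invoke the lemma stated earlier in this part. Once these two points are in place, the remainder is a direct per-player replay of the single-agent exploitability argument.
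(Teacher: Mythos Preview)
Your proposal is correct and follows essentially the same approach as the paper: reduce the $\epsilon$-CCE condition to per-player average external regret, decompose each player's regret into an internal-exploitability term (handled by the per-player OMWU proposition) plus a population-gap term (further split into amortized BR error, oracle regret, and a non-positive residual), and sum over the two players. Your additional caveats about the two-time-scale assumption and shared-rollout estimators are more scrupulous than the paper, which simply posits the two-time-scale hypothesis and invokes the per-player concentration lemma without further justification.
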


\section{Proofs for Two-Player General-Sum} \label{app:part_II_G}

\subsection{Proof of Lemma 2 (Unbiasedness and Concentration, General-Sum)}
\begin{proof}
(This proof remains unchanged as it concerns the estimators, not the update rule.)
\end{proof}

\subsection{Proof of Proposition 2 (Per-Player External Regret with OMWU)}
\begin{proof}
The proof adapts the standard analysis for Optimistic MWU to our setting with noisy, unbiased feedback for an arbitrary player $p$. The proof proceeds in two parts.

\textbf{Part 1: Regret Against Estimated Losses.} Let the estimated loss for player $p$ be $\hat{l}_{t,i}^{(p)} = 1 - \hat{v}_{t,i}^{(p)}$. The OMWU algorithm uses an optimistic loss estimate $m_t^{(p)} = \hat{l}_t^{(p)} + (\hat{l}_t^{(p)} - \hat{l}_{t-1}^{(p)})$, with $\hat{l}_0^{(p)} = \mathbf{0}$. The weight update for player $p$ is $w_{t+1}^{(p)}(i) = w_t^{(p)}(i) \exp(-\eta^{(p)} m_{t,i}^{(p)})$.

Let $W_t^{(p)} = \sum_{i=1}^{k_p(t)} w_t^{(p)}(i)$ be the potential function for player $p$. Following a standard potential function analysis (as detailed in Appendix C.1), the regret of OMWU with respect to the \textit{observed} sequence of losses is bounded by its variation:
\begin{equation}
    \sum_{t=1}^T \langle \sigma_t^{(p)}, \hat{l}_t^{(p)} \rangle - \min_i \sum_{t=1}^T \hat{l}_{t,i}^{(p)} \le O\left(\sqrt{\ln k_{p,T} \sum_{t=1}^T ||\hat{l}_t^{(p)} - \hat{l}_{t-1}^{(p)}||_{\infty}^2}\right)
\end{equation}

\textbf{Part 2: Translating to True Regret.} We now relate this bound to the regret against the true payoffs $v_t^{(p)}$. Let the true loss be $l_t^{(p)} = \mathbf{1} - v_t^{(p)}$ and the estimation error be $\Delta_t^{(p)} = \hat{l}_t^{(p)} - l_t^{(p)} = v_t^{(p)} - \hat{v}_t^{(p)}$. The cumulative true regret for player $p$, $R_T^{(p)}$, is:
\begin{equation}
\begin{aligned}
    R_T^{(p)} &= \sum_{t=1}^T \left( \max_i v_{t,i}^{(p)} - \langle \sigma_t^{(p)}, v_t^{(p)} \rangle \right) = \sum_{t=1}^T \left( \langle \sigma_t^{(p)}, l_t^{(p)} \rangle - \min_i l_{t,i}^{(p)} \right) \\
    &= \sum_{t=1}^T \left( \langle \sigma_t^{(p)}, \hat{l}_t^{(p)} - \Delta_t^{(p)} \rangle - (\hat{l}_{t,i^*}^{(p)} - \Delta_{t,i^*}^{(p)}) \right) \\
    &= \underbrace{\sum_{t=1}^T (\langle \sigma_t^{(p)}, \hat{l}_t^{(p)} \rangle - \hat{l}_{t,i^*}^{(p)})}_{\text{Regret on Estimates}} - \underbrace{\sum_{t=1}^T (\langle \sigma_t^{(p)}, \Delta_t^{(p)} \rangle - \Delta_{t,i^*}^{(p)})}_{\text{Cumulative Error Term}}
\end{aligned}
\end{equation}
The first term is bounded as derived in Part 1. For the variation term in that bound, we use the triangle inequality:
\begin{equation}
||\hat{l}_t^{(p)} - \hat{l}_{t-1}^{(p)}||_\infty \le ||\hat{l}_t^{(p)} - l_t^{(p)}||_\infty + ||l_t^{(p)} - l_{t-1}^{(p)}||_\infty + ||l_{t-1}^{(p)} - \hat{l}_{t-1}^{(p)}||_\infty
\end{equation}
\begin{equation}
\implies ||\hat{v}_t^{(p)} - \hat{v}_{t-1}^{(p)}||_\infty \le ||\Delta_t^{(p)}||_\infty + ||v_t^{(p)} - v_{t-1}^{(p)}||_\infty + ||\Delta_{t-1}^{(p)}||_\infty
\end{equation}
Taking the expectation over the sampling noise in our estimators, and noting that $\mathbb{E}[\Delta_t^{(p)}] = \mathbf{0}$ due to unbiasedness, we can bound the expected true regret. The error terms accumulate, leading to the final result. Dividing by $T$ gives the statement in Proposition \ref{eq:gen_sum_omwu_regret}.
\end{proof}

\subsection{Proof of Theorem 3 (Per-Player Oracle Regret)}
\begin{proof}
(This proof remains unchanged as it concerns the bandit oracle, which is decoupled from the meta-game solver by the two-time-scale assumption.)
\end{proof}

\subsection{Proof of Theorem 4 (\texorpdfstring{$\epsilon$}{}-CCE)}

\begin{proof}
A time-averaged joint strategy $\bar{\mu}_T$ is an $\epsilon$-CCE if for each player $p$ and any deviating strategy $\pi'$, the gain from deviating is small:
\begin{equation}
\sum_{i,j} \bar{\mu}_T(i,j) r^{(p)}(\pi_i^{(1)}, \pi_j^{(2)}) \ge \sum_{i,j} \bar{\mu}_T(i,j) r^{(p)}(\pi', \pi_j^{(-p)}) - \epsilon_p
\end{equation}
where $\epsilon = \sum_p \epsilon_p$. The maximum gain for player $p$ from deviating is bounded by their average external regret against the sequence of opponent strategies $\{\sigma_t^{(-p)}\}_{t=1}^T$.
\begin{equation}
\epsilon_p \le \frac{1}{T} \sum_{t=1}^T \left( \max_{\pi'} \mathbb{E}_{j \sim \sigma_t^{(-p)}}[r^{(p)}(\pi', \pi_j^{(-p)})] - \mathbb{E}_{i \sim \sigma_t^{(p)}, j \sim \sigma_t^{(-p)}}[r^{(p)}(\pi_i^{(p)}, \pi_j^{(-p)})] \right)
\end{equation}
This is the definition of player $p$'s average exploitability of the sequence of joint strategies. We decompose this term for each player $p$, analogous to the decomposition in the proof of Theorem 3.4. The exploitability for player $p$ at iteration $t$ is:
\begin{equation}
\begin{aligned}
\mathrm{Exploit}_t^{(p)} = \underbrace{\left(\max_{i \in [k_p(t)]} (v_t^{(p)})_i - \langle \sigma_t^{(p)}, v_t^{(p)} \rangle\right)}_{\text{Internal Exploitability}^{(p)}} + \underbrace{\left(\max_{\pi'} \mathbb{E}_{j \sim \sigma_t^{(-p)}}[r^{(p)}(\pi', \pi_j^{(-p)})] - \max_{i \in [k_p(t)]} (v_t^{(p)})_i\right)}_{\text{Population Gap}^{(p)}}
\end{aligned}
\end{equation}
Averaging over $T$ and summing over players gives the total bound on $\epsilon$.
\begin{enumerate}
    \item \textbf{Internal Exploitability}: For each player $p$, this is their external regret within the restricted game. From Proposition \ref{eq:gen_sum_omwu_regret}, its average is bounded by the sum of the OMWU Regret and the MC Estimation Error.
    \item \textbf{Population Gap}: For each player $p$, this term is decomposed further into their Amortized BR Error and Oracle Regret, following the same logic as in the proof of Theorem 3.4.
\end{enumerate}
Summing these four distinct error sources for each player $p \in \{1,2\}$ provides the composite bound for $\epsilon$ as stated in Theorem \ref{eq:gen_sum_cce_omwu}.
\end{proof}

\section{Extension to N-Player General-Sum Games} \label{app:part_II_H}

We now generalize the framework to $n \ge 2$ players. Let $\mathcal{P} = \{1, ..., n\}$ be the set of players, and for any player $p \in \mathcal{P}$, let $-p := \mathcal{P} \setminus \{p\}$ denote the set of all other players. At each iteration $t$, every player $p$ maintains an anchor set $Z_t^{(p)}$, a generator $G_{\theta_p}$, and a meta-strategy $\sigma_t^{(p)} \in \Delta_{k_p(t)-1}$. The joint mixture over all players is $\Sigma_t = \bigotimes_{q \in \mathcal{P}} \sigma_t^{(q)}$.

For each player $p$, the conceptual payoff is represented by a hypermatrix $M^{(p)}$ with entries corresponding to the expected reward for player $p$ given a joint profile of pure strategies. The expected payoff for player $p$'s $i$-th policy against the joint mixture of all other players is:
\begin{equation}
    v_t^{(p)}(i) = \mathbb{E}_{i_{-p} \sim \otimes_{q \in -p}\sigma_t^{(q)}} \left[r^{(p)}(\pi_i^{(p)}, \{\pi_{i_q}^{(q)}\}_{q \in -p})\right] = \sum_{i_{-p}} M_{i, i_{-p}}^{(p)} \prod_{q \in -p} \sigma_t^{(q)}(i_q) \label{eq:n_player_value_vec}
\end{equation}
and player $p$'s expected value under the full joint mixture is $\overline{r}_t^{(p)} = \sum_i \sigma_t^{(p)}(i) v_t^{(p)}(i)$.

\subsection{Monte Carlo Meta-Game with Importance Weighting}
To avoid forming the computationally intractable payoff hypermatrices, we use importance-weighted estimators derived from a single set of shared game rollouts. We draw $B$ joint strategy profiles $i^{(b)} = (i_b^{(1)}, ..., i_b^{(n)}) \sim \Sigma_t$, run $m$ episodes for each profile, and obtain returns $Y_{b,l}^{(p)}$ for every player $p$. The estimators for the per-policy value vector and the mixture value are:
\begin{equation}
\begin{aligned}
    \hat{v}_{t,i}^{(p)} &= \frac{1}{Bm} \sum_{b=1}^{B} \sum_{l=1}^{m} \frac{\mathbf{1}\{i_b^{(p)} = i\}}{\sigma_t^{(p)}(i)} Y_{b,l}^{(p)} \label{eq:n_player_v_est} \\
    \hat{\overline{r}}_t^{(p)} &= \frac{1}{Bm} \sum_{b=1}^{B} \sum_{l=1}^{m} Y_{b,l}^{(p)} 
\end{aligned}
\end{equation} 
\label{eq:n_player_r_est}

\begin{lemma}[Unbiasedness and Concentration for n Players]
For rewards in $[0,1]$, the estimators are unbiased: $\E[\hat{v}_{t,i}^{(p)}] = v_t^{(p)}(i)$ and $\E[\hat{\overline{r}}_t^{(p)}] = \overline{r}_t^{(p)}$ for all $p, i$. For any $\delta \in (0,1)$, with probability at least $1-\delta$, we have concentration bounds. However, the variance of the importance-weighted estimator $\hat{v}_{t,i}^{(p)}$ can be high if any policy's probability $\sigma_t^{(p)}(i)$ is small.
\end{lemma}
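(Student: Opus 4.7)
The plan is to prove unbiasedness and concentration in two separate parts, with the importance-weighting structure being the main feature that makes the $n$-player case different from the two-player case treated in Lemma 2.1.

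First I would establish unbiasedness by the tower property. For $\hat{v}_{t,i}^{(p)}$, conditioning on the sampled joint profile $i^{(b)}$ the inner expectation of the return $Y_{b,l}^{(p)}$ over the episode randomness equals $M_{i^{(b)}}^{(p)}$. Taking the outer expectation over $i^{(b)} \sim \Sigma_t$ and splitting the indicator $\mathbf{1}\{i_b^{(p)} = i\}$ off from the remaining coordinates, one gets a factor $\sigma_t^{(p)}(i)$ that cancels the importance weight, leaving $\mathbb{E}_{i_{-p} \sim \otimes_{q \neq p}\sigma_t^{(q)}}\bigl[M_{i, i_{-p}}^{(p)}\bigr] = v_t^{(p)}(i)$ as given by Eq.~\eqref{eq:n_player_value_vec}. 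For $\hat{\overline{r}}_t^{(p)}$ the tower argument is even simpler, since there is no indicator or importance weight: the expected per-profile return is exactly $\overline{r}_t^{(p)}$.

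Second, for concentration I would reduce to the i.i.d.\ setting to be able to invoke the empirical-Bernstein inequality (Theorem~B.1) used earlier. The $Bm$ summands are not jointly independent because the $m$ episodes within a single batch index $b$ share the same sampled profile $i^{(b)}$. The natural fix is to aggregate per batch: define $Z_b^{(p,i)} = \frac{1}{m}\sum_{l=1}^{m} \frac{\mathbf{1}\{i_b^{(p)} = i\}}{\sigma_t^{(p)}(i)} Y_{b,l}^{(p)}$, so that $\hat{v}_{t,i}^{(p)} = \frac{1}{B}\sum_{b=1}^{B} Z_b^{(p,i)}$ is a mean of $B$ i.i.d.\ random variables bounded in $[0, 1/\sigma_t^{(p)}(i)]$. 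Applying the empirical-Bernstein inequality with range $1/\sigma_t^{(p)}(i)$ and sample size $B$ yields, with probability at least $1-\delta$, a bound of the form
\begin{equation*}
    \bigl|\hat{v}_{t,i}^{(p)} - v_t^{(p)}(i)\bigr| \le \sqrt{\frac{2\,\widehat{\mathrm{Var}}\bigl(Z_b^{(p,i)}\bigr)\ln(2/\delta)}{B}} + \frac{3\ln(2/\delta)}{\bigl(B-1\bigr)\sigma_t^{(p)}(i)} .
\end{equation*}
The analogous argument for $\hat{\overline{r}}_t^{(p)}$ uses batch-aggregated $\tilde{Z}_b = \frac{1}{m}\sum_l Y_{b,l}^{(p)} \in [0,1]$ and produces the standard $O\bigl(\sqrt{\ln(1/\delta)/(Bm)}\bigr)$ rate (here the inner sum over $m$ can be absorbed because the combined $Bm$ returns are conditionally independent given the profiles and each is bounded in $[0,1]$).

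The main obstacle — and the reason the lemma explicitly warns about it — is the range and variance blow-up of $Z_b^{(p,i)}$ when some coordinate probability $\sigma_t^{(p)}(i)$ is small. This makes both the deterministic range factor $1/\sigma_t^{(p)}(i)$ in the Bernstein bias term and the sample variance $\widehat{\mathrm{Var}}(Z_b^{(p,i)})$, which scales like $1/\sigma_t^{(p)}(i)$ in the leading order, the dominant source of error. I would therefore state the concentration inequality cleanly in terms of $\sigma_t^{(p)}(i)$, and remark that in practice one controls this through a mixing/exploration lower bound $\sigma_t^{(p)}(i) \ge \sigma_{\min}$ (e.g.\ an $\epsilon$-greedy floor on the meta-strategy), after which the bound reduces to the standard $O\bigl(\sqrt{\ln(1/\delta)/(B\sigma_{\min})}\bigr)$ rate. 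Everything else — unbiasedness, the batch-level i.i.d.\ reduction, and the empirical-Bernstein application — is a direct re-use of the machinery already developed for Lemma 2.1.
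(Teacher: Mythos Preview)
Your proposal is correct and follows the same approach the paper intends: the paper's own ``proof'' of this lemma is simply the one-line remark that it ``remains unchanged'' relative to the earlier estimator lemmas, i.e.\ it defers entirely to the tower-property unbiasedness argument and empirical-Bernstein concentration machinery already developed for Lemma~2.1. You have in fact been more careful than the paper by explicitly working through the importance-weighting cancellation in the unbiasedness step and by identifying the range blow-up $1/\sigma_t^{(p)}(i)$ in the Bernstein bound, which is precisely the source of the high-variance caveat stated in the lemma.
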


\subsection{Per-Player Optimistic Replicator and Oracle}
Each player $p$ runs an independent learning process to update their meta-strategy and expand their policy set.
\begin{itemize}
    \item \textbf{OMWU Update:} Each player uses the Optimistic MWU rule to update their meta-strategy based on their individual payoff estimates. The optimistic estimate for player $p$ is $m_t^{(p)} = 2\hat{v}_t^{(p)} - \hat{v}_{t-1}^{(p)}$, with $\hat{v}_0^{(p)} = \mathbf{0}$.
    \begin{equation}
        \sigma_{t+1}^{(p)}(i) \propto \sigma_t^{(p)}(i) \exp\left(\eta_t^{(p)}[2\hat{v}_{t,i}^{(p)} - \hat{v}_{t-1,i}^{(p)} - \hat{\overline{r}}_t^{(p)}]\right) \label{eq:n_player_omwu}
    \end{equation}
    
    \item \textbf{EB-UCB Oracle:} Each player $p$ solves a separate multi-armed bandit problem to find a promising new anchor $z_t^{*,(p)}$. The value of an arm $z$ is its expected reward against the joint mixture of all other players:
    \begin{equation}
        f_t^{(p)}(z) = \mathbb{E}_{i_{-p} \sim \otimes_{q \in -p} \sigma_t^{(q)}} \left[ r^{(p)}(\pi_{G_{\theta_p}(z)}^{(p)}, \{\pi_{i_q}^{(q)}\}_{q \in -p}) \right] \label{eq:n_player_arm_value}
    \end{equation}
\end{itemize}

\begin{proposition}[Per-Player External Regret with OMWU in n-Player Games]
For any player $p$, assume payoffs lie in $[0,1]$. The average external regret for player $p$ using OMWU against the sequence of opponents' joint strategies is bounded by:
\begin{equation}
\begin{aligned}
    \frac{1}{T}\sum_{t=1}^{T}\left(\max_i (v_t^{(p)})_i - {\sigma_t^{(p)}}^\top v_t^{(p)}\right) \le & O\left(\frac{1}{T}\sqrt{\ln k_{p,T} \sum_{t=1}^{T} ||v_t^{(p)} - v_{t-1}^{(p)}||_{\infty}^2}\right) \\
    & + \frac{1}{T}\sum_{t=1}^{T}\mathbb{E}\left[||\hat{v}_t^{(p)} - v_t^{(p)}||_\infty\right]
\end{aligned}
\end{equation}
\end{proposition}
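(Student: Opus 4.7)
The plan is to reduce the $n$-player analysis to the single-learner online-learning problem already solved in Proposition 2, so that nothing about the proof structure needs to change except the identity of the feedback vector. From player $p$'s perspective, OMWU is a no-regret algorithm operating on $\Delta_{k_p(t)-1}$ and receiving at each round a noisy payoff vector $\hat v_t^{(p)}$. By Lemma 4, this vector is unbiased for the true expected payoff $v_t^{(p)}$ of Eq.~\ref{eq:n_player_value_vec} regardless of how complex the distribution over opponent profiles is. The fact that the "opponent" is itself a product mixture $\bigotimes_{q\in-p}\sigma_t^{(q)}$ rather than a single distribution is immaterial: the OMWU analysis only ever looks at the sequence of payoff vectors received by the learner.

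First I would instantiate the potential-function derivation used for Proposition 2 verbatim. Define estimated losses $\hat l_t^{(p)} = \mathbf{1} - \hat v_t^{(p)}$, optimistic surrogates $m_t^{(p)} = 2\hat l_t^{(p)} - \hat l_{t-1}^{(p)}$, and the potential $W_t^{(p)} = \sum_i w_t^{(p)}(i)$; evolve $W_t^{(p)}$ under the OMWU update (\ref{eq:n_player_omwu}), apply Hoeffding's lemma to bound $\ln(W_{t+1}^{(p)}/W_t^{(p)})$ from above (rewards in $[0,1]$ keep the one-step losses bounded), and combine with the lower bound $\ln W_{T+1}^{(p)} \ge -\ln k_{p,T} - \eta^{(p)}\sum_t m_{t,i^\star}^{(p)}$. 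Tuning $\eta^{(p)}$ yields the standard optimistic-MWU regret against the \emph{observed} sequence, with variation term $\sum_t \|\hat v_t^{(p)} - \hat v_{t-1}^{(p)}\|_\infty^2$.

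Next I would translate to regret against the true payoffs exactly as in the proof of Proposition 2. Writing the estimation error as $E_t^{(p)} := v_t^{(p)} - \hat v_t^{(p)}$, the true regret decomposes as the estimated regret minus $\sum_t(\langle \sigma_t^{(p)}, E_t^{(p)}\rangle - E_{t,i^\star}^{(p)})$. Unbiasedness gives $\mathbb{E}[E_t^{(p)}]=\mathbf{0}$, so this cross term contributes at most $\sum_t \mathbb{E}[\|\hat v_t^{(p)} - v_t^{(p)}\|_\infty]$; meanwhile the triangle inequality
\begin{equation}
\|\hat v_t^{(p)} - \hat v_{t-1}^{(p)}\|_\infty \;\le\; \|E_t^{(p)}\|_\infty + \|v_t^{(p)} - v_{t-1}^{(p)}\|_\infty + \|E_{t-1}^{(p)}\|_\infty
\end{equation}
replaces the variation of the estimates by the variation of the true payoff vectors plus an absorbable noise term. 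Dividing by $T$ and taking expectations yields the two-term bound stated in the proposition.

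The main obstacle I anticipate is not the OMWU analysis, which is unchanged, but making honest the absorption of the importance-weighted noise into the $\mathbb{E}[\|\hat v_t^{(p)} - v_t^{(p)}\|_\infty]$ term. Because $\hat v_{t,i}^{(p)}$ in Eq.~\ref{eq:n_player_v_est} uses the inverse-propensity weight $1/\sigma_t^{(p)}(i)$, its variance can blow up whenever some anchor mass $\sigma_t^{(p)}(i)$ becomes small, and this effect compounds with $n$. I would address this either by imposing a mild exploration floor $\sigma_t^{(p)}(i) \ge \epsilon_{\min}$ (as is standard for IW estimators) or, more carefully, by invoking an empirical-Bernstein bound for the IW mean and showing that the per-player budget $Bm$ needed to drive $\mathbb{E}[\|\hat v_t^{(p)} - v_t^{(p)}\|_\infty]$ to $O((Bm)^{-1/2})$ grows only polynomially in $1/\min_i \sigma_t^{(p)}(i)$. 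Either way, the structural form of the bound claimed in the proposition is preserved, with all dependence on $n$ hidden inside that Monte Carlo noise term.
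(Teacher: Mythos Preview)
Your proposal is correct and follows essentially the same approach as the paper: both reduce to the single-learner OMWU analysis of Proposition~2 by observing that the opponent profile's complexity is irrelevant to the regret bound, then run the identical two-part argument (potential-function regret against the estimated losses, followed by translation to true payoffs via the decomposition into estimated regret plus an error term and the triangle inequality on the variation). Your additional discussion of the importance-weighted variance blowup goes slightly beyond the paper's proof, which simply defers that concern to Lemma~3's caveat rather than addressing it explicitly.
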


\begin{theorem}[Per-Player Instance-Dependent Oracle Regret]
Under a two-time-scale assumption for n-players, where opponents' mixtures $\{\sigma_t^{(q)}\}_{q \in -p}$ are considered fixed during player $p$'s oracle step, the cumulative oracle regret for each player is bounded:
\begin{equation}
    \sum_{t=1}^{T}\mathbb{E}[f_t^{(p)}(z^{*,(p)}) - f_t^{(p)}(z_t^{*,(p)})] = O\left(\sum_{z \ne z^{*,(p)}} \frac{\ln T}{\Delta_z^{(p)}}\right) + \lambda_{J,p}\sum_{t=1}^{T}\mathbb{E}\left[||J_{G_{\theta_p}}(z_t^{*,(p)})||_F^2\right]
\end{equation}
\end{theorem}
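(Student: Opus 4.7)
The plan is to reduce the $n$-player oracle regret analysis to the two-player case (Theorem 2.3) by invoking the two-time-scale assumption, which freezes the opponents' joint mixture $\bigotimes_{q \in -p} \sigma_t^{(q)}$ and their generator parameters during player $p$'s oracle step. Under this freeze, each arm-pull produces an i.i.d.\ reward in $[0,1]$ obtained by sampling a joint opponent profile from the product mixture and running an episode with the candidate policy $\pi_{G_{\theta_p}(z)}$; this reward has mean exactly $f_t^{(p)}(z)$. The per-iteration problem then reduces to a stationary stochastic bandit over the candidate pool $\Lambda_t^{(p)}$ with bounded rewards, to which the EB-UCB machinery from the proof of Theorem 2.3 applies essentially verbatim.

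First I would invoke the empirical-Bernstein concentration of Theorem B.1 on the sample mean $\hat{\mu}_t^{(p)}(z)$ and sample variance $\widehat{\mathrm{Var}}_t^{(p)}(z)$, setting the per-step confidence to $\delta_t = t^{-2}$ and taking a union bound over arms and rounds. This defines a ``good event'' $\mathcal{G}$ on which all EB-UCB intervals simultaneously cover the true means $f_t^{(p)}(z)$, with the residual probability of $\mathcal{G}^c$ contributing only an $O(1)$ term to cumulative regret by a Borel--Cantelli-type summation of $\sum_t t^{-2}$.

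Conditional on $\mathcal{G}$, I would next replicate the two-player decoupling step: whenever a suboptimal arm $z \neq z^{*,(p)}$ is selected at round $t$, the inequality $\mathrm{UCB}_t^{\mathrm{EB}}(z) \ge \mathrm{UCB}_t^{\mathrm{EB}}(z^{*,(p)})$ chains through the concentration bounds to give $\Delta_z^{(p)} \le 2\,C_t(n_z)$, where $C_t(n_z)$ is the EB confidence radius. Inverting this inequality and invoking the standard tight UCB pull-count argument yields $\mathbb{E}[N_z^{(p)}(T)] = O(\ln T / \Delta_z^{(p)})$; multiplying by $\Delta_z^{(p)}$ and summing across suboptimal arms produces the first additive term. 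Because the oracle in fact selects $z_t^{*,(p)} = \argmax_{z \in \Lambda_t^{(p)}} \bigl[\mathrm{UCB}_t^{\mathrm{EB}}(z) - \lambda_{J,p}\|J G_{\theta_p}(z)\|_F^2\bigr]$ rather than the raw UCB maximizer, the Jacobian penalty enters the regret additively: following the same bookkeeping as in Theorem 2.3, the cumulative correction is upper-bounded by $\lambda_{J,p}\sum_{t=1}^{T}\mathbb{E}[\|J G_{\theta_p}(z_t^{*,(p)})\|_F^2]$, which is precisely the second term in the stated bound.

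The main obstacle I anticipate is not mathematical but interpretive: one must state the two-time-scale assumption strongly enough that the arm rewards truly are i.i.d.\ within a round, meaning opponents update neither their meta-strategies $\sigma_t^{(q)}$ nor their generator parameters $\theta_q$ across pulls. A secondary, constant-level concern is that the effective per-pull variance can grow with $n$, since the joint opponent profile has more degrees of freedom; this inflates constants inside the $O(\cdot)$ notation but leaves the asymptotic instance-dependent form $O(\ln T / \Delta_z^{(p)})$ intact. Notably, the importance-weighted estimator used in the meta-game update plays no role here, since the oracle consumes direct Monte Carlo rollouts whose returns are already bounded in $[0,1]$.
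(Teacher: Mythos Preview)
Your proposal is correct and follows essentially the same approach as the paper: invoke the two-time-scale assumption to freeze the opponents' joint mixture, reduce player $p$'s oracle step to a standard stochastic multi-armed bandit with bounded rewards, and then recycle the EB-UCB pull-count analysis from Theorem~2.3 together with the additive Jacobian correction. If anything, your write-up is more explicit than the paper's own proof, which is a short paragraph that simply asserts the reduction and cites the two-player argument.
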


\subsection{Overall Guarantee: \texorpdfstring{$\epsilon$}{epsilon}-CCE for n Players}
The time-averaged joint play, defined as $\bar{\mu}_T(i_1, ..., i_n) := \frac{1}{T}\sum_{t=1}^T \prod_{p=1}^n \sigma_t^{(p)}(i_p)$, converges to an $\epsilon$-Coarse-Correlated Equilibrium ($\epsilon$-CCE).

\begin{theorem}[$\epsilon$-CCE of Time-Average Joint Play for n Players]
Assume rewards in $[0,1]$ and the n-player two-time-scale assumption. With per-player OMWU meta-solvers, the time-averaged distribution $\bar{\mu}_T$ is an $\epsilon$-CCE, where the total deviation incentive $\epsilon$ is bounded by the sum of regrets and errors across all players:
\begin{equation}
\begin{aligned}
    \epsilon \le \sum_{p=1}^n \Bigg[ & \underbrace{O\left(\frac{1}{T}\sqrt{\ln k_{p,T} \sum_{t=1}^{T} ||v_t^{(p)} - v_{t-1}^{(p)}||_{\infty}^2}\right)}_{\text{OMWU Regret}} + \underbrace{ \frac{1}{T}\sum_{t=1}^T \mathbb{E}[||\hat{v}_t^{(p)} - v_t^{(p)}||_\infty] }_{\text{MC Estimation Error}} \\
    & + \underbrace{\frac{1}{T}\sum_{t=1}^{T}\varepsilon_{\mathrm{BR},t}^{(p)}}_{\text{Amortized BR Error}} + \underbrace{O\left(\frac{1}{T}\sum_{z \neq z^{\star,(p)}}\frac{\ln T}{\Delta_z^{(p)}}\right)}_{\text{Oracle Regret}} \Bigg]
\end{aligned}
\label{eq:n_player_cce_omwu}
\end{equation}
If simulation and training budgets grow such that the error terms for each player go to zero, then $\epsilon \to 0$ as $T \to \infty$.
\end{theorem}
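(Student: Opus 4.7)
The plan is to mirror the two-step decomposition from the two-player general-sum proof (Theorem~4), generalizing from $2$ to $n$ players. First, I would reduce the $\epsilon$-CCE condition to a sum of per-player average exploitabilities. By the product structure of $\bar{\mu}_T$, its marginal over opponents is the time-average of $\bigotimes_{q\ne p}\sigma_t^{(q)}$, so for any player $p$ and deviation $\pi'$ the gain from deviating equals
\[
\frac{1}{T}\sum_{t=1}^T \Bigl(\mathbb{E}_{j_{-p}}\!\bigl[r^{(p)}(\pi', \pi_{j_{-p}})\bigr] \;-\; \mathbb{E}_{i\sim\sigma_t^{(p)},\,j_{-p}}\!\bigl[r^{(p)}(\pi_i^{(p)}, \pi_{j_{-p}})\bigr]\Bigr),
\]
which is exactly the average exploitability of $\{\sigma_t^{(p)}\}$ against the realized sequence of opponent joint mixtures. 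Writing $\epsilon=\sum_{p=1}^n\epsilon_p$ with each $\epsilon_p$ bounding the above uniformly in $\pi'$ is therefore sufficient.

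Second, for each player $p$ I would decompose the per-iteration exploitability into an internal part, $\max_i v_t^{(p)}(i) - \langle \sigma_t^{(p)}, v_t^{(p)}\rangle$, and a population-gap part, $\max_{\pi'} f_t^{(p)}(\pi') - \max_i v_t^{(p)}(i)$, exactly as in the two-player proof. The internal part, time-averaged, is controlled by the Per-Player External Regret Proposition for $n$-player OMWU, which contributes the OMWU-variation and MC-estimation-error terms. The population gap I would telescope through the oracle's chosen code $z_t^{*,(p)}$: the jump from a true best response (assumed representable by some latent) to $\pi_{G_{\theta_p}(z_t^{*,(p)})}$ contributes $\varepsilon_{\mathrm{BR},t}^{(p)}$; the jump between $z_t^{*,(p)}$ and the pool-best arm in $\Lambda_t^{(p)}$ is the instantaneous oracle regret, whose time-average is bounded by the Per-Player Oracle Regret Theorem; and the residual pool-versus-anchor difference is handled by the same set-inclusion argument as in the two-player case ($Z_t^{(p)}\subseteq\Lambda_t^{(p)}$). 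Summing the four resulting contributions over $p\in\{1,\dots,n\}$ yields the stated bound.

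The main obstacle is controlling the $L_\infty$ noise term $\mathbb{E}\!\bigl[\|\hat v_t^{(p)} - v_t^{(p)}\|_\infty\bigr]$ that feeds the Per-Player OMWU Proposition. Unlike the two-player setting, $\hat v_t^{(p)}$ here is an importance-weighted estimator whose per-coordinate variance scales like $1/\sigma_t^{(p)}(i)$, so a naive Bernstein bound degrades as soon as some mixture becomes peaked. I would address this either by enforcing a slowly decaying exploration floor on the OMWU iterates (mixing with a uniform distribution at rate $\gamma_t\downarrow 0$ and absorbing the resulting $O(\gamma_t)$ bias into the OMWU-regret contribution) or by scaling the per-iteration rollout budget $B_t$ with $\min_p\min_i \sigma_t^{(p)}(i)^{-1}$, so that the empirical-Bernstein concentration still delivers an $O((B_t m)^{-1/2})$ rate (up to logarithmic factors) uniformly over the $\sum_p k_p(t)$ coordinates after a union bound. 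With that in hand, the noise-translation argument of Appendix~C.1 lifts unchanged to each player, and the remaining bookkeeping---a union bound over players for the high-probability oracle event and the summation of the four error sources---is routine.
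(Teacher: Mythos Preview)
Your proposal is correct and follows essentially the same route as the paper: reduce the $\epsilon$-CCE condition to a sum of per-player average exploitabilities, split each exploitability into an internal-regret part (handled by the per-player OMWU proposition) and a population-gap part (telescoped into BR-approximation error plus oracle regret via the $Z_t^{(p)}\subseteq\Lambda_t^{(p)}$ inclusion), and sum over $p$. Your third paragraph on the importance-weight variance is actually more careful than the paper, which simply carries the term $\mathbb{E}\bigl[\|\hat v_t^{(p)}-v_t^{(p)}\|_\infty\bigr]$ forward without addressing its possible blow-up; the paper notes the issue in the lemma statement but does not resolve it in the proof.
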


\section{Proofs for N-Player General-Sum} \label{app:part_II_I}

\subsection{Proof of Lemma 3 (Unbiasedness and Concentration, n-Players)}
\begin{proof}
(This proof remains unchanged as it concerns the estimators, not the update rule.)
\end{proof}

\subsection{Proof of Proposition 3 (Per-Player External Regret with OMWU)}
\begin{proof}
The proof is structurally identical to the proof of Proposition 2 for the two-player case. We present it here explicitly for the n-player setting for completeness. The analysis is performed for an arbitrary player $p \in \{1, ..., n\}$.

\textbf{Part 1: Regret Against Estimated Losses.} Let the estimated loss for player $p$ be $\hat{l}_{t,i}^{(p)} = 1 - \hat{v}_{t,i}^{(p)}$. The OMWU algorithm uses an optimistic loss estimate $m_t^{(p)} = \hat{l}_t^{(p)} + (\hat{l}_t^{(p)} - \hat{l}_{t-1}^{(p)})$, with $\hat{l}_0^{(p)} = \mathbf{0}$. The weight update for player $p$ is $w_{t+1}^{(p)}(i) = w_t^{(p)}(i) \exp(-\eta^{(p)} m_{t,i}^{(p)})$.

The regret of OMWU with respect to this observed sequence of losses is bounded by its variation, following a standard potential function analysis:
\begin{equation}
    \sum_{t=1}^T \langle \sigma_t^{(p)}, \hat{l}_t^{(p)} \rangle - \min_i \sum_{t=1}^T \hat{l}_{t,i}^{(p)} \le O\left(\sqrt{\ln k_{p,T} \sum_{t=1}^T ||\hat{l}_t^{(p)} - \hat{l}_{t-1}^{(p)}||_{\infty}^2}\right)
\end{equation}
The "environment" from player $p$'s perspective is the sequence of payoff vectors it receives. The regret bound does not depend on how this sequence is generated by the other $n-1$ players, only on its properties (i.e., its variation).

\textbf{Part 2: Translating to True Regret.} We relate this bound to the regret against the true payoffs $v_t^{(p)}$. Let the true loss be $l_t^{(p)} = \mathbf{1} - v_t^{(p)}$ and the estimation error be $\Delta_t^{(p)} = \hat{l}_t^{(p)} - l_t^{(p)} = v_t^{(p)} - \hat{v}_t^{(p)}$. The cumulative true regret for player $p$, $R_T^{(p)}$, is:
\begin{equation}
\begin{aligned}
    R_T^{(p)} &= \sum_{t=1}^T \left( \max_i v_{t,i}^{(p)} - \langle \sigma_t^{(p)}, v_t^{(p)} \rangle \right) = \sum_{t=1}^T \left( \langle \sigma_t^{(p)}, l_t^{(p)} \rangle - \min_i l_{t,i}^{(p)} \right) \\
    &= \underbrace{\sum_{t=1}^T (\langle \sigma_t^{(p)}, \hat{l}_t^{(p)} \rangle - \min_i \hat{l}_{t,i}^{(p)})}_{\text{Regret on Estimates}} - \underbrace{\sum_{t=1}^T (\langle \sigma_t^{(p)}, \Delta_t^{(p)} \rangle - \Delta_{t,i^*}^{(p)})}_{\text{Cumulative Error Term}}
\end{aligned}
\end{equation}
The first term is bounded as derived in Part 1. The variation term in that bound is handled with the triangle inequality: $||\hat{v}_t^{(p)} - \hat{v}_{t-1}^{(p)}||_\infty \le ||\Delta_t^{(p)}||_\infty + ||v_t^{(p)} - v_{t-1}^{(p)}||_\infty + ||\Delta_{t-1}^{(p)}||_\infty$. Taking the expectation over the sampling noise and dividing by $T$ yields the final bound stated in the proposition.
\end{proof}

\subsection{Proof of Theorem 5 (Per-Player Oracle Regret)}
\begin{proof}
The proof is a standard bandit analysis, applied to the independent learning problem faced by each player $p$. Under the two-time-scale assumption, the joint mixture of all other players, $\Sigma_t^{(-p)} = \otimes_{q \in -p} \sigma_t^{(q)}$, is considered fixed during player $p$'s oracle selection phase at meta-iteration $t$.

For player $p$, the learning problem is a stochastic multi-armed bandit task where:
\begin{itemize}
    \item The set of arms is the candidate pool of latent codes, $\Lambda_t^{(p)}$.
    \item The reward for pulling an arm $z \in \Lambda_t^{(p)}$ is a random variable (the outcome of a game rollout) whose expectation is the true arm value, $f_t^{(p)}(z)$, as defined in Equation \ref{eq:n_player_arm_value}.
\end{itemize}
This is a standard bandit setting. The EB-UCB algorithm guarantees that the number of times a suboptimal arm $z$ is pulled, $N_z(T)$, is bounded logarithmically with respect to the total number of oracle steps $T$, i.e., $\E[N_z(T)] = O(\ln T / \Delta_z^{(p)})$, where $\Delta_z^{(p)}$ is the suboptimality gap. The total regret is the sum of the expected regrets from pulling each suboptimal arm. The Jacobian penalty adds a simple additive term to this regret, leading to the final bound stated in the main text.
\end{proof}

\subsection{Proof of Theorem 6 (\texorpdfstring{$\epsilon$}{}-CCE for n Players)}
\begin{proof}
The proof generalizes the argument from the two-player case. A time-averaged joint distribution $\bar{\mu}_T$ is an $\epsilon$-CCE if for every player $p \in \{1, ..., n\}$ and for every possible unilateral deviation to a pure strategy $\pi'_p$, the incentive to deviate is bounded:
\begin{equation}
\mathbb{E}_{i \sim \bar{\mu}_T}[r^{(p)}(\pi_{i_p}^{(p)}, \pi_{i_{-p}}^{(-p)})] \ge \mathbb{E}_{i_{-p} \sim \bar{\mu}_T^{(-p)}}[r^{(p)}(\pi'_p, \pi_{i_{-p}}^{(-p)})] - \epsilon_p
\end{equation}
where $\epsilon = \sum_{p=1}^n \epsilon_p$. The maximum gain for player $p$ from deviating is bounded by their average external regret over the $T$ iterations. Let's bound this gain, $\epsilon_p$:
\begin{equation}
\epsilon_p \le \frac{1}{T} \sum_{t=1}^T \left( \max_{\pi'} \mathbb{E}_{i_{-p} \sim \Sigma_t^{(-p)}}[r^{(p)}(\pi', \pi_{i_{-p}}^{(-p)})] - \mathbb{E}_{i \sim \Sigma_t}[r^{(p)}(\pi_i)] \right)
\end{equation}
This is precisely player $p$'s average exploitability over the sequence of play. We decompose this term for each player $p$ into four components, following the structure of the proof of Theorem 3.4. For each player $p$, their average exploitability is bounded by the sum of:
\begin{enumerate}
    \item \textbf{Average Internal Exploitability}: The regret of OMWU within the restricted game on $Z_t^{(p)}$, which is bounded by Proposition 3. This gives the \textbf{OMWU Regret} and the \textbf{MC Estimation Error} terms.
    \item \textbf{Average Population Gap}: The gap between a true best response and the best response within $Z_t^{(p)}$. This is further decomposed into the \textbf{Amortized BR Error} (by definition) and the \textbf{Oracle Regret} (from Theorem 5).
\end{enumerate}
Thus, for each player $p$, their maximum deviation incentive $\epsilon_p$ is bounded by the sum of their four error terms:
\begin{equation}
\epsilon_p \le O\left(\frac{1}{T}\sqrt{\ln k_{p,T} \sum_{t=1}^{T} ||v_t^{(p)} - v_{t-1}^{(p)}||_{\infty}^2}\right) + \frac{1}{T}\sum_{t=1}^T \mathbb{E}[||\hat{v}_t^{(p)} - v_t^{(p)}||_\infty] + \bar{\varepsilon}_{\mathrm{BR}}^{(p)} + \bar{\mathcal{R}}_T^{(p)}
\end{equation}
where $\bar{\varepsilon}_{\mathrm{BR}}^{(p)}$ and $\bar{\mathcal{R}}_T^{(p)}$ are the average BR error and oracle regret, respectively. The total error $\epsilon$ is the sum of these deviation incentives, $\epsilon = \sum_{p=1}^n \epsilon_p$. Summing the bounds for each player provides the composite bound for $\epsilon$ as stated in Theorem \ref{eq:n_player_cce_omwu}.
\end{proof}

\hrulefill

\newpage

\part{Ablation and Analysis of experiments} \label{app:part_III}

All empirical evaluations presented in this paper were conducted on computing systems equipped with Intel i9 processors, utilizing either NVIDIA RTX A2000 GPU cluster (12GB VRAM) or an NVIDIA RTX 6000 Ada Generation cluster (48GB VRAM).

\begin{table}[H]
    \centering
    \caption{\textbf{Comprehensive Summary of \gems Experiments.} This table aggregates all environments evaluated in the paper, categorized by scale and type. It confirms that \gems scales robustly to high-dimensional latent spaces ($d_z=16$) and large populations across competitive, cooperative, and game-theoretic ablation settings.}
    \label{tab:gems_master_suite}
    \vspace{0.2cm}
    \resizebox{\textwidth}{!}{%
    \begin{tabular}{llcccc}
        \toprule
        \textbf{Environment} & \textbf{Type} & \textbf{Action Space} & \textbf{Latent Dim ($d_z$)} & \textbf{Iterations ($T$)} & \textbf{Policies ($|Z_T|$)} \\
        \midrule
        \multicolumn{6}{l}{\textit{\textbf{Complex Scale}}} \\
        \textbf{Chess} (v6)           & Competitive    & $\approx 4,672$ & 8 & 1,000 & \textbf{2,002} \\
        \textbf{Go} (9x9)             & Competitive    & $\approx 82$    & 8 & 200   & 402 \\
        \textbf{Hanabi}               & Cooperative    & $\approx 20$    & 8 & 200   & 402 \\
        \textbf{Connect 4}            & Competitive    & 7               & 8 & 200   & 402 \\
        \midrule
        \multicolumn{6}{l}{\textit{\textbf{Continuous Control (Multi-particle)}}} \\
        \textbf{Simple Spread} (MPE)  & Cooperative    & Continuous (5)  & 8 & 100   & 202 \\
        \textbf{Simple Tag} (MPE)     & Competitive    & Continuous (5)  & 8 & 100   & 202 \\
        \midrule
        \multicolumn{6}{l}{\textit{\textbf{Game-Theoretic and Ablations}}} \\
        \textbf{Kuhn Poker}           & Imperfect Info & 2               & 2, 4, 8 & 40    & 82 \\
        \textbf{Public Goods Game}    & N-Player       & Continuous      & \textbf{8, 16} & 10    & 60 \\
        \textbf{Deceptive Mean}       & Zero-Sum       & Continuous      & \textbf{8, 16} & 6     & 14 \\
        \bottomrule
    \end{tabular}%
    }
    \vspace{0.1cm}
    \begin{flushleft}
    \footnotesize \textit{Note: "Policies ($|Z_T|$)" is calculated as $|Z_0| + N_{players} \times T$.}
    \end{flushleft}
\end{table}

\paragraph{Initialization of the anchor set.}
Algorithm~\ref{alg:gems} takes the initial anchor set $\mathcal{Z}_0=\{z_1,\dots,z_{k_0}\}$ as an input. We intentionally leave the choice of this initialization to the implementer: anchors can be sampled from a prior over the latent space, warm-started from pretrained policies, or constructed via domain-specific heuristics. In our experiments, we use a simple and standard choice and initialize anchors independently from a standard normal distribution, i.e., $z_i \sim \mathcal{N}(0,I)$, with the initial meta-strategy $\sigma_0$ set to be uniform over $\mathcal{Z}_0$.

\paragraph{Initialization of the meta-strategy.}
The initial meta-strategy $\sigma_0$ is defined as a probability distribution over the initial anchor set $\mathcal{Z}_0$. In our experiments, we initialize $\sigma_0$ to be uniform over $\mathcal{Z}_0$, assigning equal probability mass to each initial anchor. This choice reflects the absence of prior preference among anchors at initialization and provides a neutral starting point for subsequent meta-strategy updates via optimistic multiplicative weights. Alternative initializations, such as biased or warm-started distributions, can be incorporated without modifying the core algorithm.

\paragraph{Policy optimization in the amortized best-response.}
In Phase~4 of Algorithm~\ref{alg:gems}, agents are trained via policy-gradient methods using advantage-based objectives. Concretely, the generator parameters $\theta$ are updated by ascending a regularized policy objective that depends on estimated advantages, allowing any standard policy-based reinforcement learning algorithm to be used in this step. In our implementation, we instantiate this update using Proximal Policy Optimization (PPO) with generalized advantage estimation (GAE), and train agents in \texttt{PettingZoo} environments accordingly. We emphasize that this choice is not intrinsic to \gems, and alternative policy-gradient or trust-region methods could be substituted without altering the overall framework.

\paragraph{Default hyperparameters for Kuhn Poker.}
For reproducibility, Table~\ref{tab:gems_default_params} reports the exact default hyperparameters used in our \textbf{Kuhn Poker} runs of \gems (Algorithm~\ref{alg:gems}). Unless otherwise stated, all Kuhn experiments in this paper use these settings: an outer-loop budget of $T{=}40$ with $k_0{=}1$ initial anchor per role (implying $|Z_T|{=}2(k_0{+}T){=}82$ anchors for the two-player case), latent dimension $d_z{=}8$, and fixed Monte Carlo / oracle budgets for meta-game estimation and EB-UCB expansion. Meta-strategy updates use OMWU with $\eta{=}0.03$ and a constant schedule, while Phase~4 applies ABR-TR training for $K_{\text{ABR}}{=}30$ gradient steps per outer iteration with learning rate $2\times 10^{-4}$ and KL regularization weight $\beta_{\text{KL}}{=}0.05$ (Table~\ref{tab:gems_default_params}).


\begin{table}[H]
\centering
\scriptsize
\setlength{\tabcolsep}{4pt}
\renewcommand{\arraystretch}{1.05}
\caption{\textbf{Default hyperparameters for G{\small EMS} (Algorithm~\ref{alg:gems}) on Kuhn poker.}
Defaults used in the reference implementation, grouped by phases:
(1) MC meta-game estimation, (2) OMWU update, (3) EB-UCB expansion, (4) ABR-TR training.}
\label{tab:gems_default_params}
\begin{tabularx}{\linewidth}{@{}l l c c >{\raggedright\arraybackslash}X@{}}
\toprule
\textbf{Block} & \textbf{Parameter} & \textbf{Symbol} & \textbf{Default} & \textbf{Role} \\
\midrule

\multicolumn{5}{@{}l}{\textit{Budget / population}} \\
Iterations & Outer iterations & $T$ & 40 & Outer-loop horizon. \\
Initial anchors (per role) & Init. anchors & $k_0$ & 1 & Initial anchor set size. \\
Max anchors (per role) & Cap & $k_{\max}$ & 32 & Caps population size. \\
Total anchors (2-player) & Implied total & $|Z_T|$ & $2(k_0+T)=82$ & $|Z_T|=|Z_0|+2T$, $|Z_0|=2k_0$. \\

\multicolumn{5}{@{}l}{\textit{Generator / latent}} \\
Latent dimension & Latent size & $d_z$ & 8 & Latent code dimension. \\
Temperature & Temp. & $\tau$ & 1.0 & Sampling temperature. \\

\midrule
\multicolumn{5}{@{}l}{\textit{Phase 1: Meta-game estimation (MC)}} \\
Opponent batch size & Opp. batch & $N_{\text{MC}}$ & 8 & Opponents from $\sigma_{t-1}$. \\
Rollouts per matchup & Rollouts & $M_{\text{MC}}$ & 2 & Rollouts per matchup. \\
Bootstrap samples & Bootstraps & $B$ & 128 & Bootstrap resamples. \\
EMA smoothing & EMA coef. & $\alpha_{\text{EMA}}$ & 0.0 & 0 disables EMA. \\

\midrule
\multicolumn{5}{@{}l}{\textit{Phase 2: Meta-strategy update (OMWU)}} \\
OMWU step size & Step size & $\eta$ & 0.03 & OMWU learning rate. \\
Schedule & Schedule & --- & \texttt{const} & \texttt{const}/\texttt{sqrt}/\texttt{harmonic}. \\
Logit cap & Logit clip & --- & 50.0 & Caps logits. \\

\midrule
\multicolumn{5}{@{}l}{\textit{Phase 3: Expansion (EB-UCB)}} \\
Oracle opponent batch size & Opp. batch & $N_{\text{oracle}}$ & 8 & Opponents from $\sigma_t$. \\
Oracle rollouts per candidate & Rollouts & $M_{\text{oracle}}$ & 2 & Rollouts per candidate. \\
Mutated candidates & Mut. pool & $|\Lambda_{\text{mut}}|$ & 32 & Mutation candidates. \\
Random candidates & Rand. pool & $|\Lambda_{\text{rand}}|$ & 32 & Extra random candidates. \\
Mutation scale & Mut. stddev & $\sigma_{\text{mut}}$ & 0.2 & Latent mutation radius. \\
UCB confidence & Confidence & $\delta_0$ & 0.5 & UCB confidence. \\

\midrule
\multicolumn{5}{@{}l}{\textit{Phase 4: ABR-TR training}} \\
Training steps & Grad steps & $K_{\text{ABR}}$ & 30 & Updates per outer iter. \\
Anchors per batch & Batch anchors & --- & 16 & Anchors from $Z_t$. \\
Learning rate & Optimizer lr & --- & $2\times 10^{-4}$ & Step size for generator. \\
KL penalty weight & KL coef. & $\beta_{\text{KL}}$ & 0.05 & Trust-region weight. \\
New-opponent fraction & Mix frac. & $q_{\text{new}}$ & 0.25 & Mix new vs.\ old. \\
Gradient clipping & Norm clip & --- & 0.5 & Global grad-norm clip. \\

\bottomrule
\end{tabularx}
\end{table}

\dotfill
\newpage

\section{Run on Connect--4} \label{app:part_III_J}
\paragraph{Additional Analysis on Connect-4 (PettingZoo).}
Figures~\ref{fig:connect4_resources}--\ref{fig:connect4_latent_embeds} present supplementary diagnostics for \gems on the Connect-4 environment implemented using \texttt{PettingZoo} (\texttt{connect\_four\_v3}), evaluated across four seeds (0--3). Figure~\ref{fig:connect4_resources} compares memory consumption and cumulative wall-clock time over training iterations for \gems and \psro. While \psro exhibits steadily increasing memory usage and superlinear growth in cumulative runtime—a consequence of explicit population expansion, payoff table maintenance, and repeated best-response computation—\gems maintains near-constant memory throughout training and achieves substantially lower cumulative wall-clock time. This behavior is consistent with the amortized nature of \gems, where a single generator implicitly represents and updates the evolving policy population without requiring explicit storage or enumeration of all past strategies.

Figures~\ref{fig:connect4_action_embeds} and~\ref{fig:connect4_latent_embeds} visualize principal component projections of learned representations across seeds, providing insight into the relationship between behavioral convergence and representational structure. In Figure~\ref{fig:connect4_action_embeds}, the PCA embeddings of action-distribution features reveal a \textbf{continuous, non-linear manifold structure} rather than disjoint clusters. Since Connect-4 is a solved game (perfect information, zero-sum) where optimal play dictates specific minimax trajectories, the observed strong overlap between players is theoretically expected: it indicates that \gems has converged to the shared, symmetric geometry of optimal play. The smooth gradients visible in the embeddings suggest that the generator $\mathcal{G}_\theta$ does not merely memorize a single solution point, but captures the smooth topology of valid strategies leading to the Nash equilibrium.

In contrast, Figure~\ref{fig:connect4_latent_embeds} shows that the corresponding latent-space embeddings remain more dispersed across seeds and players. This increased dispersion indicates that, while the induced action distributions converge to the singular optimal behavior required by the solved game mechanics, the underlying latent representations preserve diversity. This prevents the generator from collapsing into a narrow region of the latent space, ensuring it retains the expressivity to generate counter-strategies if the opponent were to deviate from optimality.

Taken together, these results illustrate a key property of \gems: Behavioral convergence emerges at the level of action distributions (reflecting the solved nature of the game), while representational diversity is retained within the latent space. This decoupling enables \gems to scale efficiently without sacrificing expressivity, supporting stable learning dynamics and reducing the risk of premature mode collapse. Combined with the resource efficiency observed in Figure~\ref{fig:connect4_resources}, these findings reinforce the advantage of amortized, generator-based meta-solvers over classical \psro-style approaches in multi-agent zero-sum games.

\begin{figure}[H]
    \centering
    \includegraphics[width=0.85\linewidth]{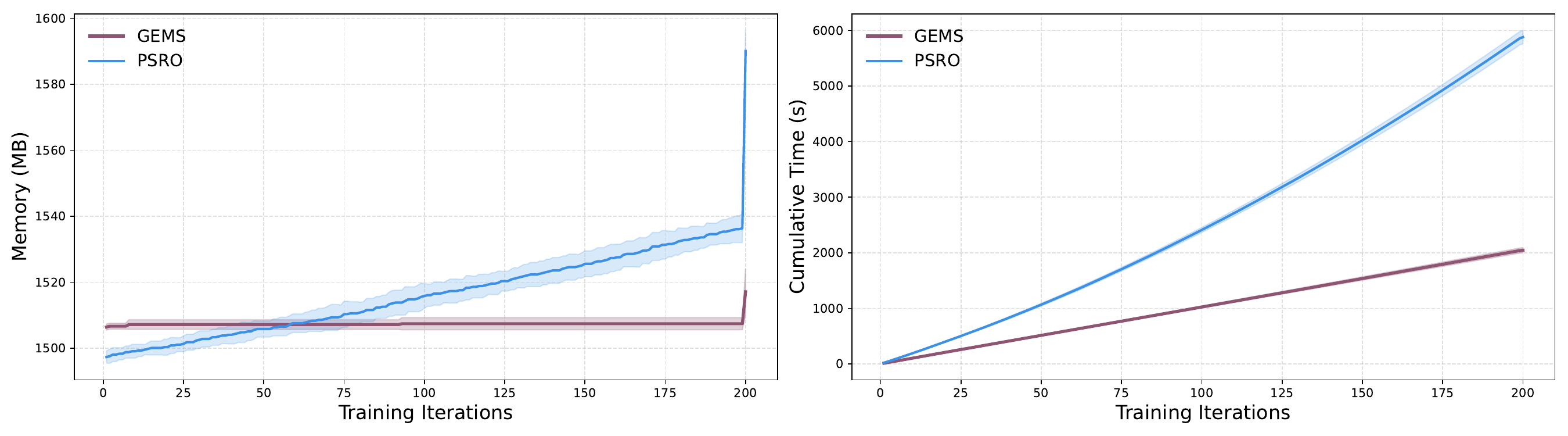}
    \caption{\textbf{\textit{Connect--4: Resource Usage.}} Memory consumption (left) and cumulative wall-clock time (right) over training iterations for \gems and \psro.}
    \label{fig:connect4_resources}
\end{figure}

\begin{figure}[H]
    \centering
    \textbf{\large Connect--4: Action Embeddings (Seeds 0--3)}\\[0.4cm]

    \includegraphics[width=0.47\linewidth]{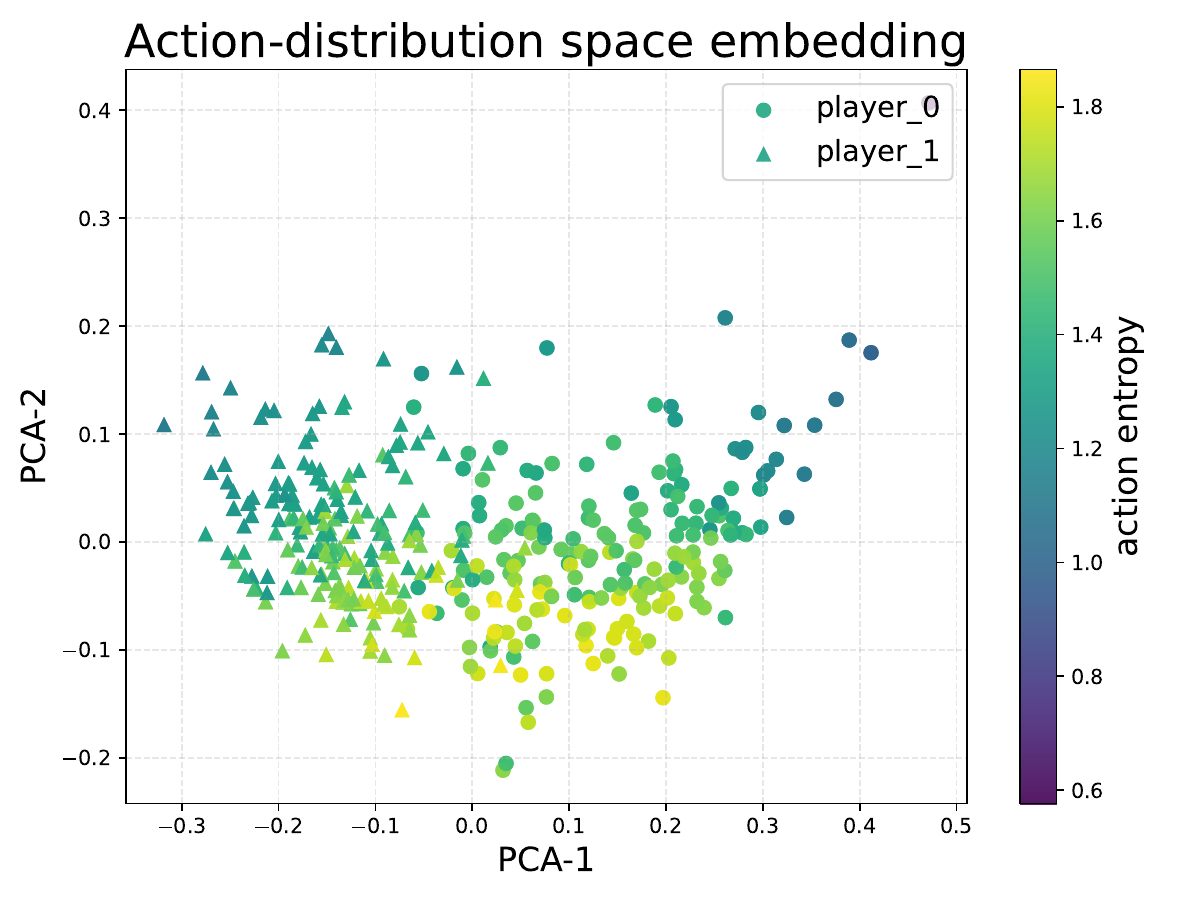}\hfill
    \includegraphics[width=0.47\linewidth]{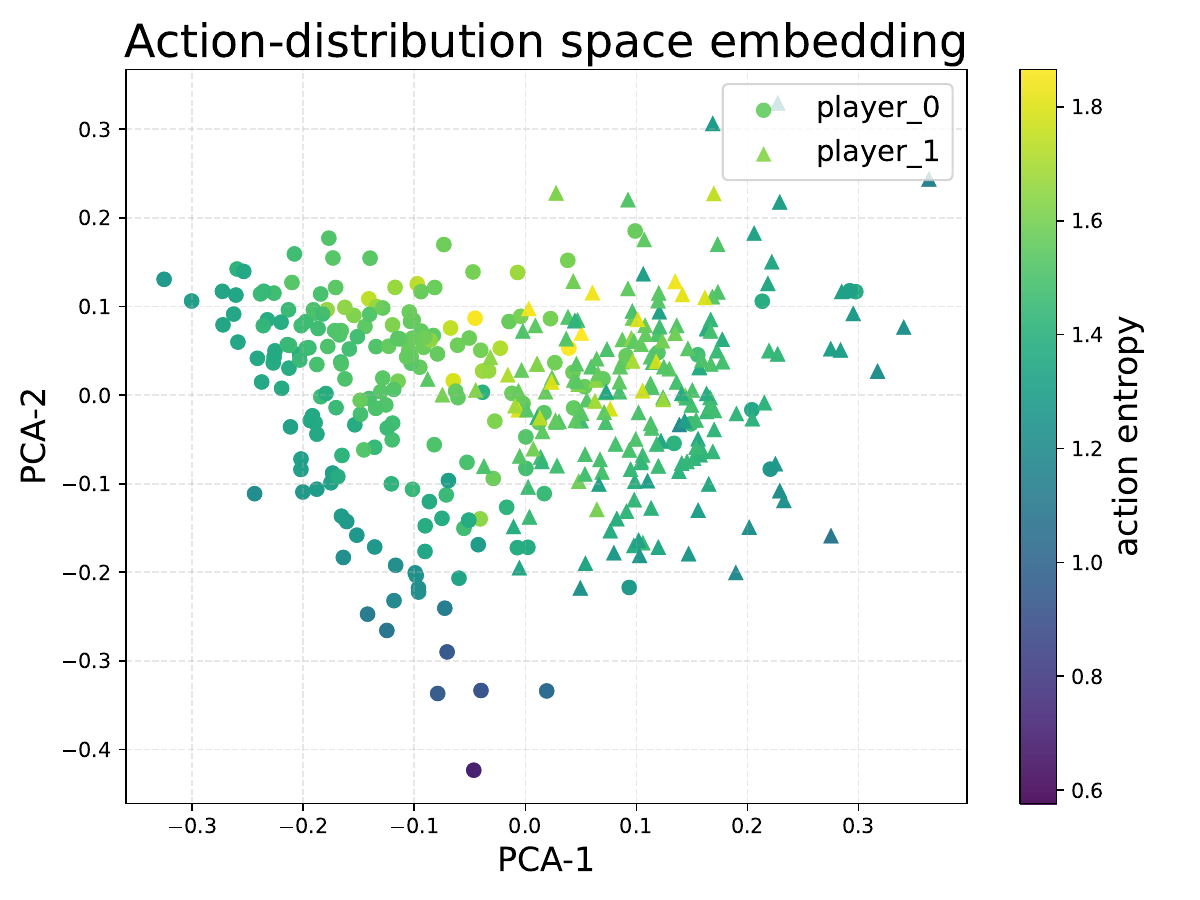}\\[0.35cm]

    \includegraphics[width=0.47\linewidth]{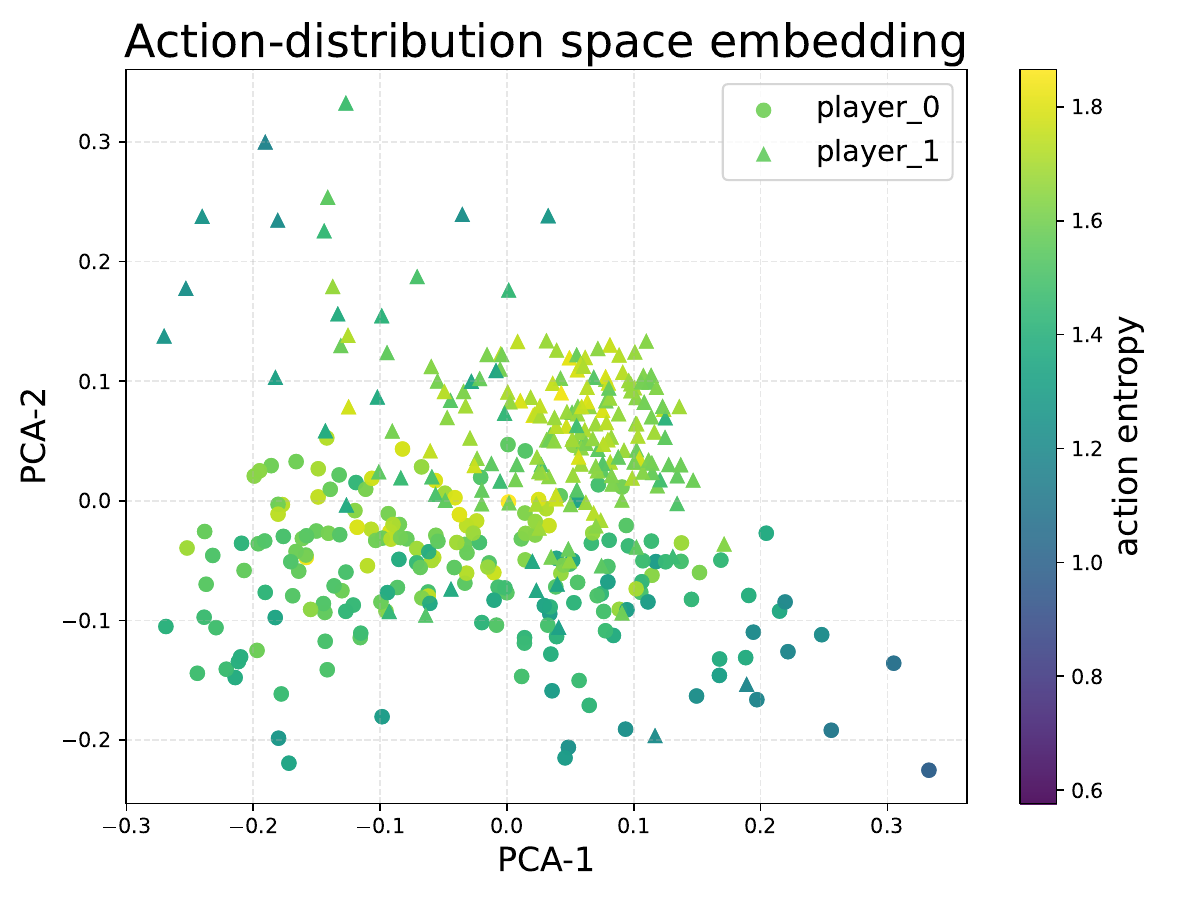}\hfill
    \includegraphics[width=0.47\linewidth]{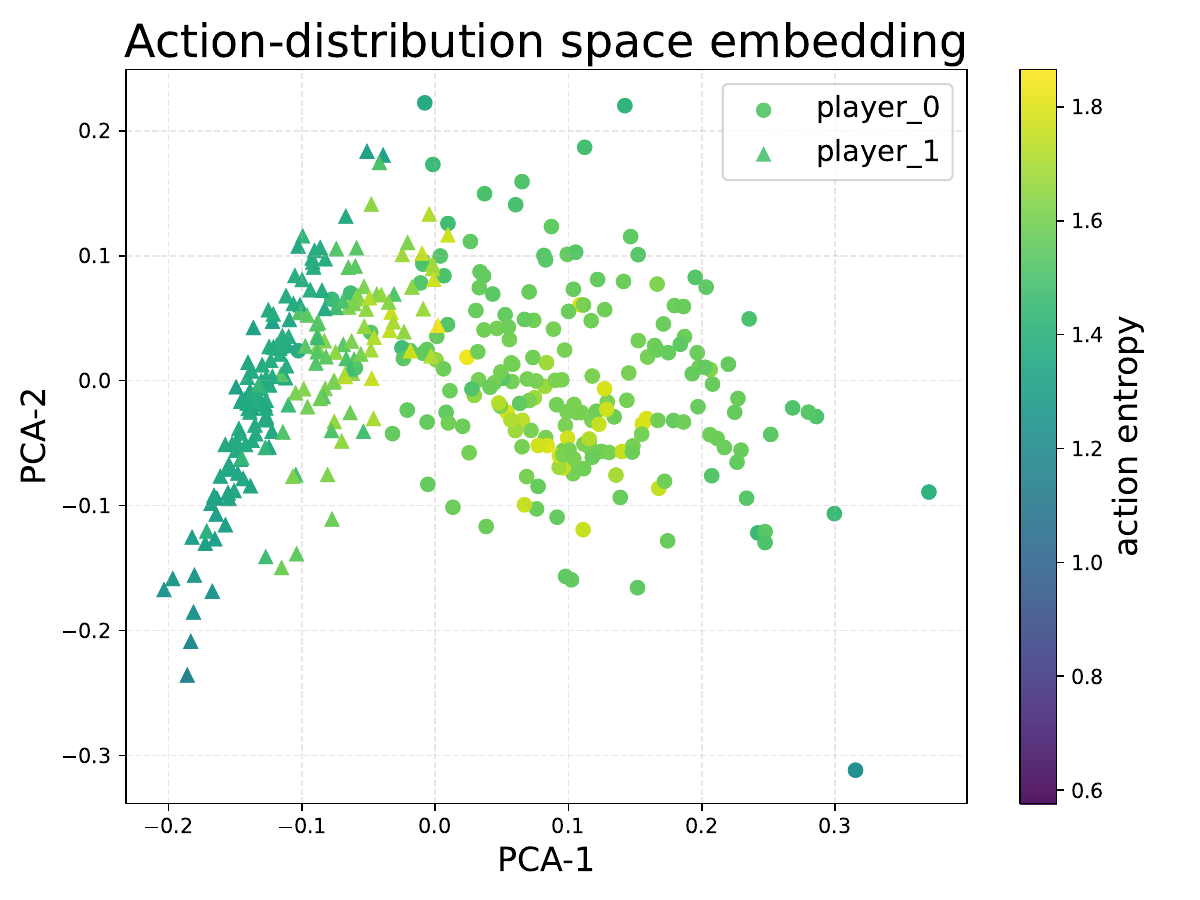}

    \caption{\textbf{\textit{Connect--4: Learned Strategy Manifold.}} 
PCA visualization of state-conditional action distributions across four random seeds (0--3). 
Unlike discrete population methods, the embeddings reveal a \textbf{continuous, non-linear manifold} structure, indicating that the generator $\mathcal{G}_\theta$ captures the complex geometry of the solution space. The smooth gradients in the projection correspond to coherent variations in strategic aggression and confidence, confirming that \textsc{Gems} maintains \textbf{structured diversity} and avoids mode collapse while converging to symmetric equilibrium behaviors.}
    \label{fig:connect4_action_embeds}
\end{figure}

\begin{figure}[H]
    \centering
    \textbf{\large Connect--4: Latent Embeddings (Seeds 0--3)}\\[0.4cm]

    \includegraphics[width=0.47\linewidth]{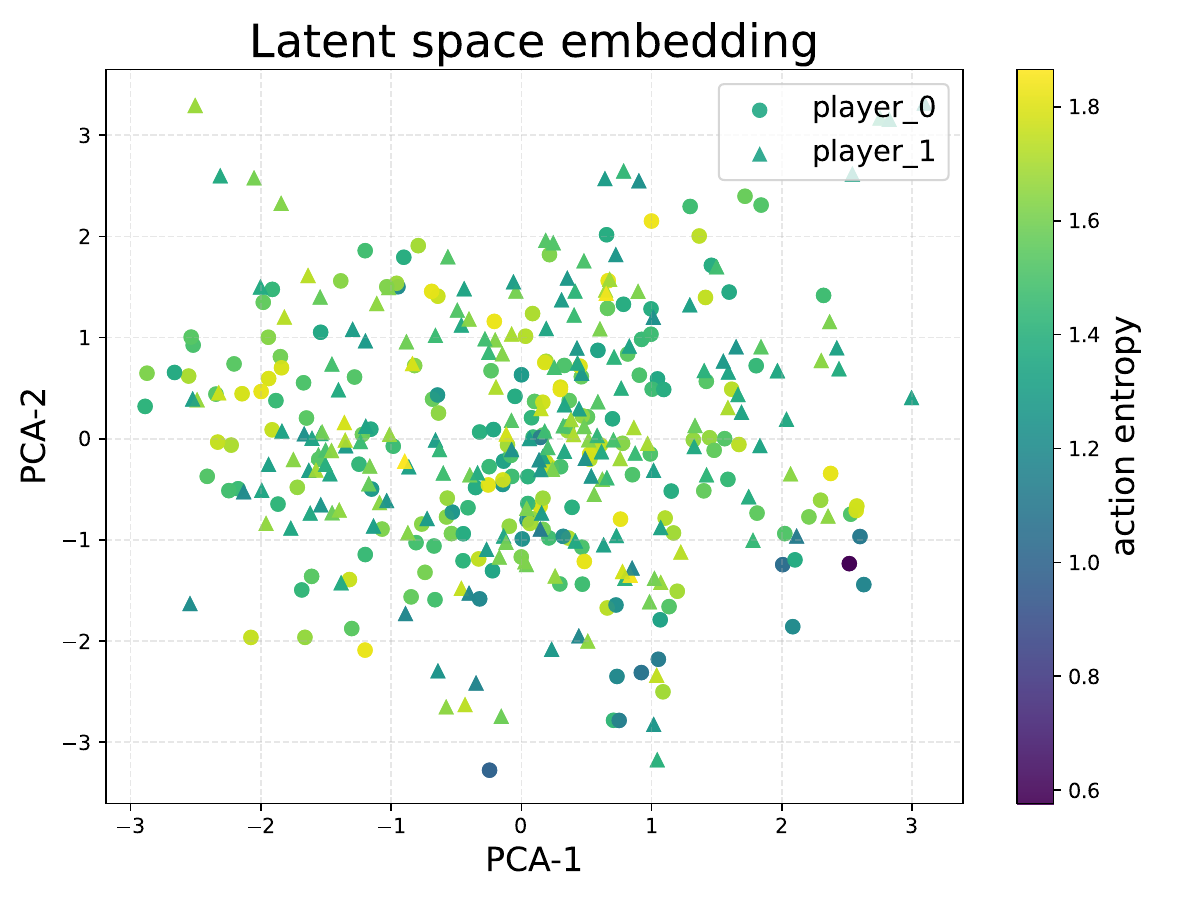}\hfill
    \includegraphics[width=0.47\linewidth]{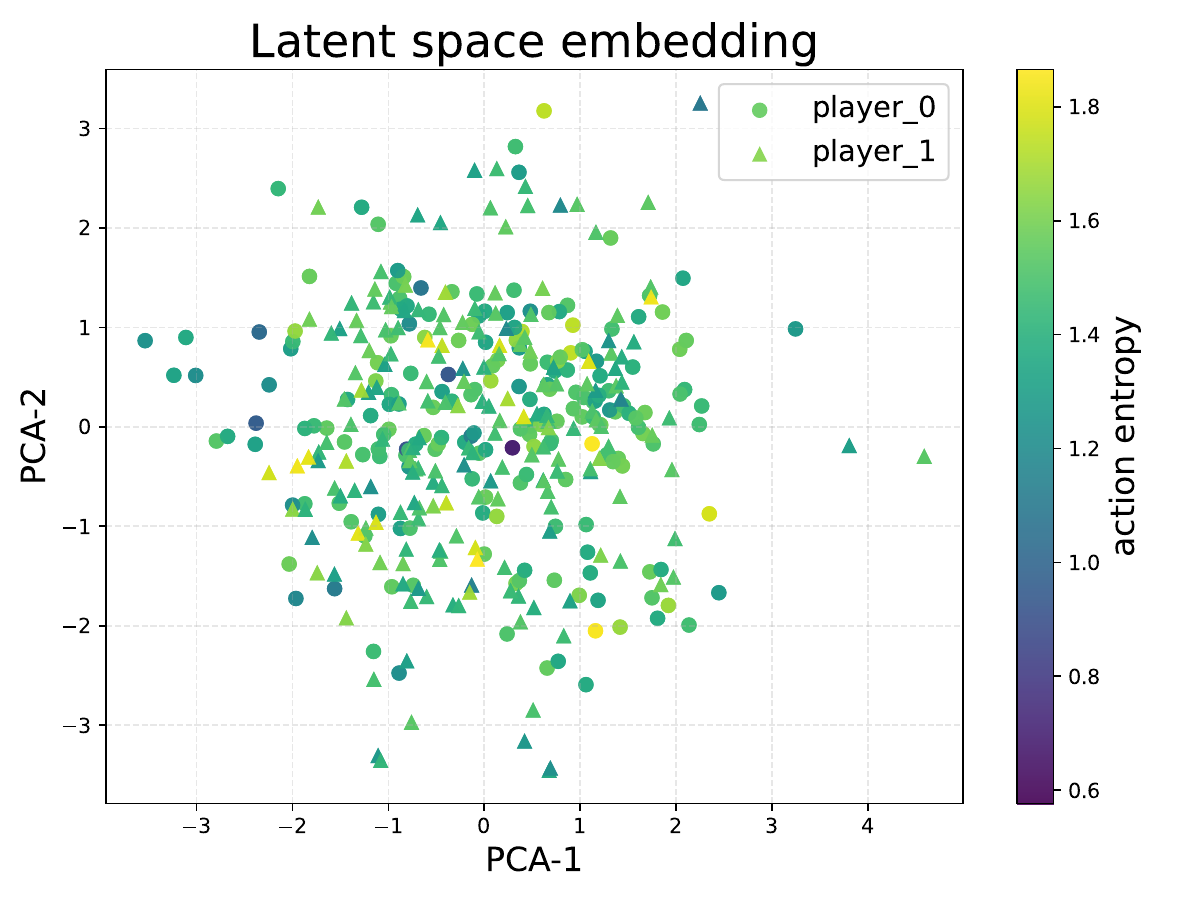}\\[0.35cm]

    \includegraphics[width=0.47\linewidth]{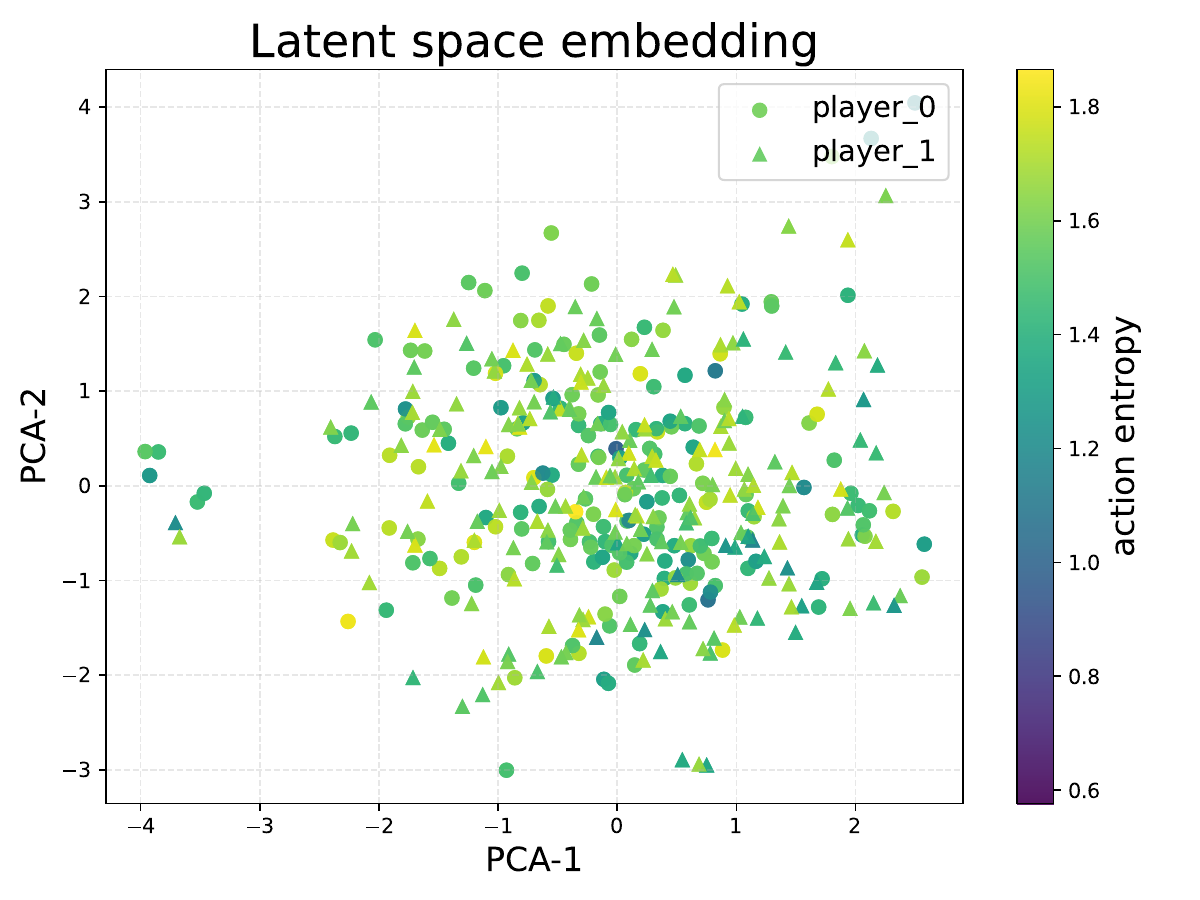}\hfill
    \includegraphics[width=0.47\linewidth]{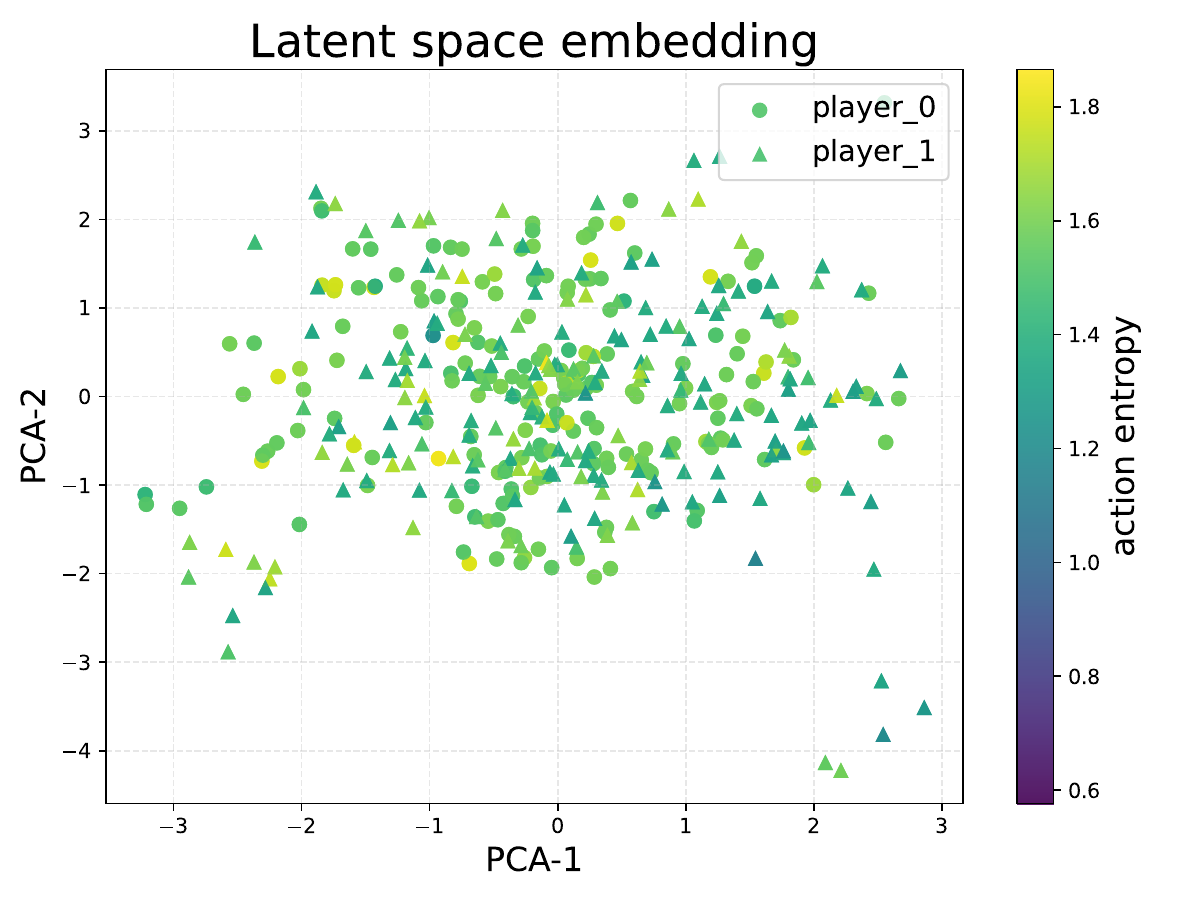}

    \caption{\textbf{\textit{Connect--4: Latent-Space Embeddings.}} PCA visualization of learned latent representations across four seeds (0--3). Compared to action space, the latent space remains more dispersed, suggesting preserved representational diversity despite similar cooperative behavior.}
    \label{fig:connect4_latent_embeds}
\end{figure}

\dotfill
\newpage

\section{Run on Hanabi} \label{app:part_III_K}
\paragraph{Additional Analysis on Hanabi (PettingZoo).}
Figures~\ref{fig:hanabi_resources}--\ref{fig:hanabi_latent_embeds} present supplementary diagnostics for \gems on the cooperative Hanabi environment implemented using \texttt{PettingZoo}, evaluated across four seeds (0--3). Figure~\ref{fig:hanabi_resources} reports memory consumption and cumulative wall-clock time over training iterations for \gems and \psro. As in the Connect-4 setting, \psro exhibits steadily increasing memory usage and superlinear growth in cumulative runtime, reflecting the cost of explicit population expansion, policy storage, and repeated equilibrium computation. In contrast, \gems maintains near-constant memory throughout training and achieves substantially lower cumulative wall-clock time, despite the increased complexity of Hanabi arising from partial observability, information asymmetry, and larger joint action spaces. These results highlight the scalability benefits of \gems in cooperative multi-agent environments where classical \psro-style methods incur significant computational overhead.

Figures~\ref{fig:hanabi_action_embeds} and~\ref{fig:hanabi_latent_embeds} further analyze the learned representations through PCA visualizations of action-distribution features and latent embeddings, respectively. In contrast to perfect-information games like Connect-4 where symmetric self-play leads to overlapping embeddings, Figure~\ref{fig:hanabi_action_embeds} reveals that the action-distribution embeddings for the two players occupy distinct, complementary regions of the manifold. This separation is geometrically consistent across seeds and is expected: Hanabi is a game of \textbf{asymmetric information}, where each player possesses private knowledge (their partner's cards) that strictly differentiates their optimal policy from that of their partner. 

Consequently, the plots demonstrate that \gems has successfully learned a \textbf{complementary coordination protocol}. The smooth, symmetric gradients visible in the plots suggest the generator has discovered a continuous ``convention'' ---a shared latent topology that maps distinct agent perspectives to compatible joint actions. This confirms that \gems avoids the mode collapse often seen in generative cooperative learning, instead maintaining the structured diversity required to solve partially observable settings where no single static policy is optimal.

Similarly, Figure~\ref{fig:hanabi_latent_embeds} shows that the corresponding latent-space embeddings exhibit a marked contrast: unlike the separated action spaces, the latent codes for both players are densely intermingled. This indicates that the generator utilizes a shared latent protocol (or codebook) to encode the game state, from which distinct, complementary roles emerge in the action space. The preservation of diversity within this shared latent region suggests that \gems maintains a rich internal representation capable of supporting asymmetric coordination without fracturing into disjoint policies.

Taken together, these results demonstrate that \gems achieves efficient and stable learning in Hanabi by discovering structured, complementary manifolds. The ability to learn these implicit conventions without explicit communication channels underscores the suitability of amortized, generator-based meta-solvers for complex cooperative multi-agent environments with partial observability.

\begin{figure}[H]
    \centering
    \includegraphics[width=0.85\linewidth]{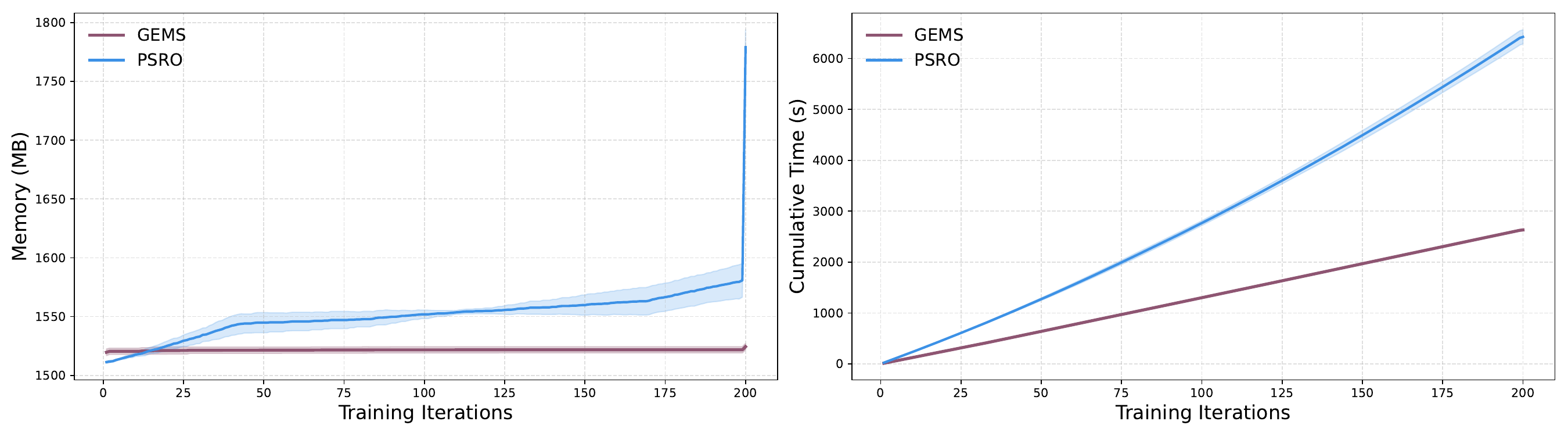}
    \caption{\textbf{\textit{Hanabi: Resource Usage.}} Memory consumption (left) and cumulative wall-clock time (right) over training iterations for \gems and \psro.}
    \label{fig:hanabi_resources}
\end{figure}

\begin{figure}[H]
    \centering
    \textbf{\large Hanabi: Action Embeddings (Seeds 0--3)}\\[0.4cm]

    \includegraphics[width=0.47\linewidth]{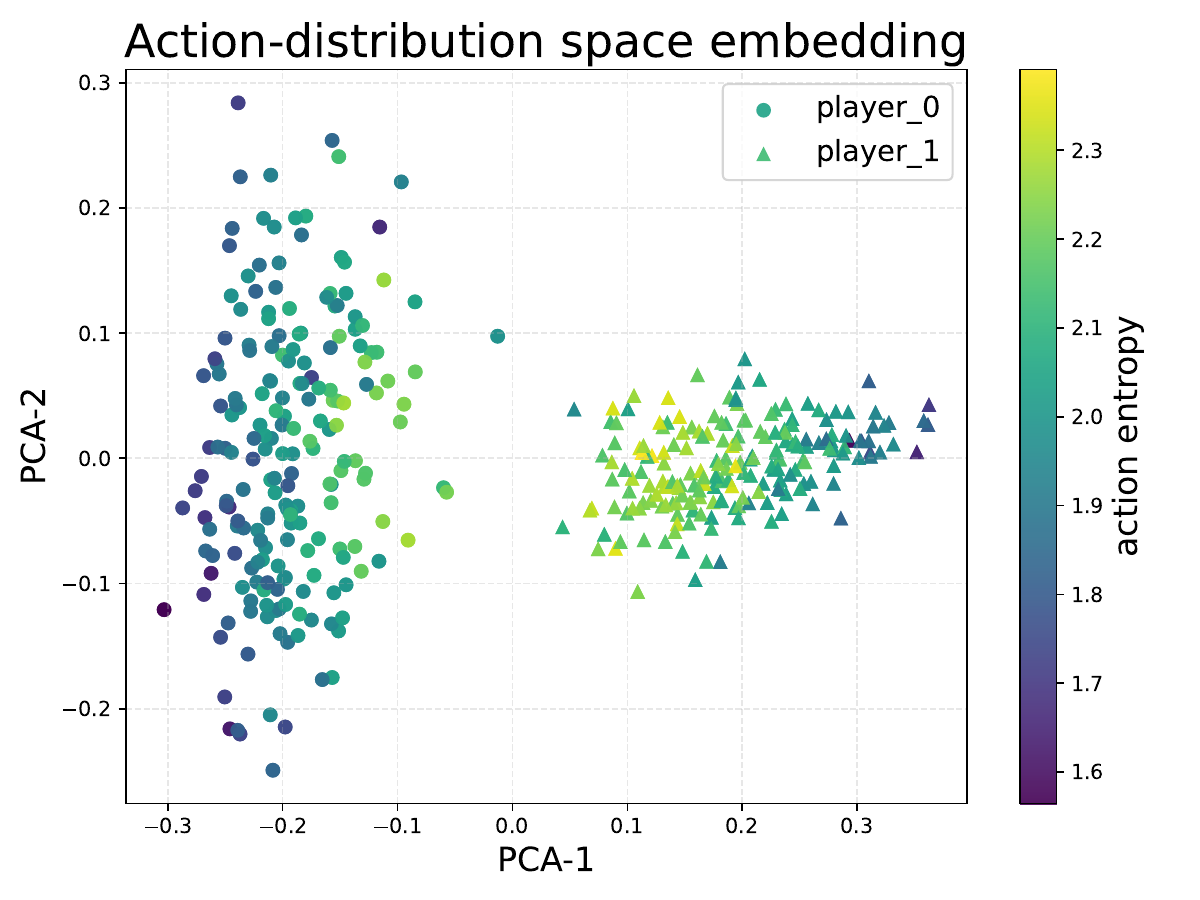}\hfill
    \includegraphics[width=0.47\linewidth]{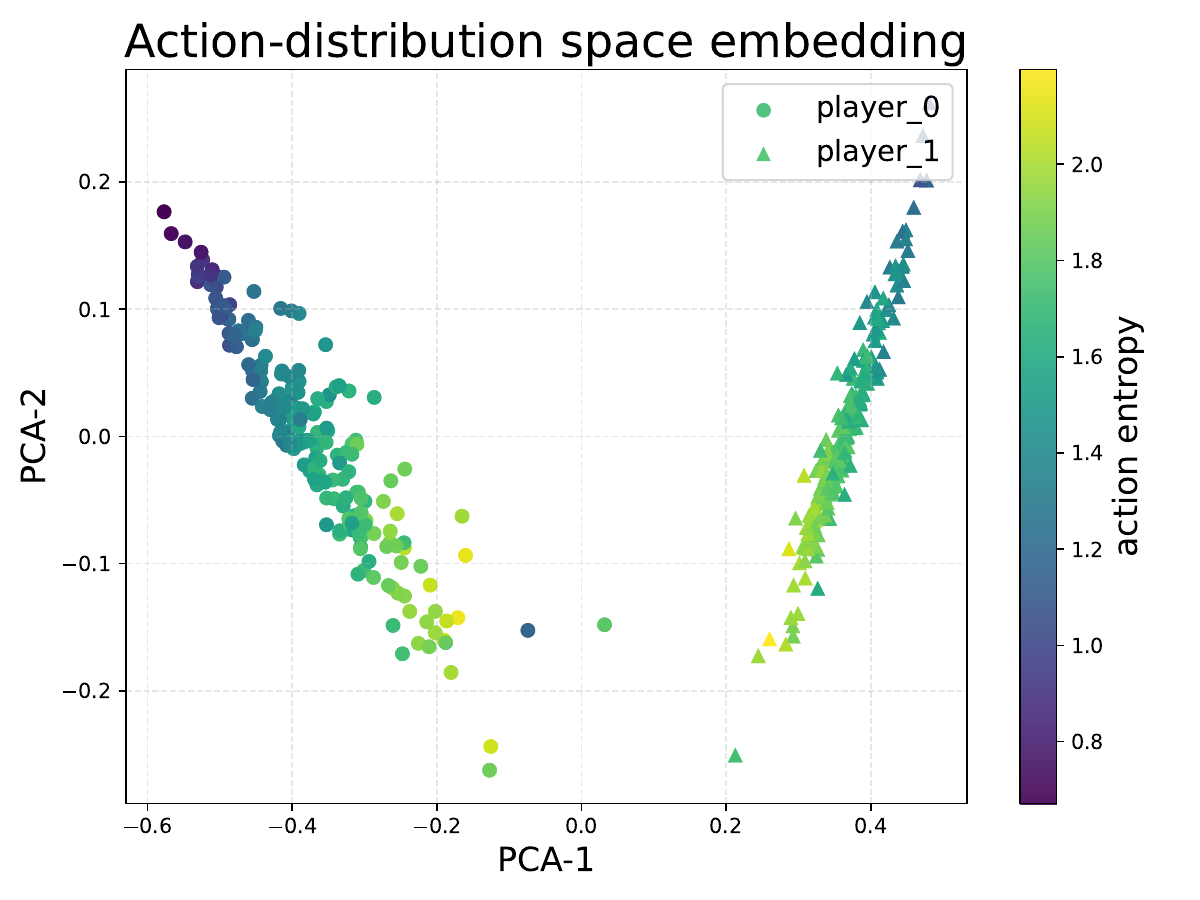}\\[0.35cm]

    \includegraphics[width=0.47\linewidth]{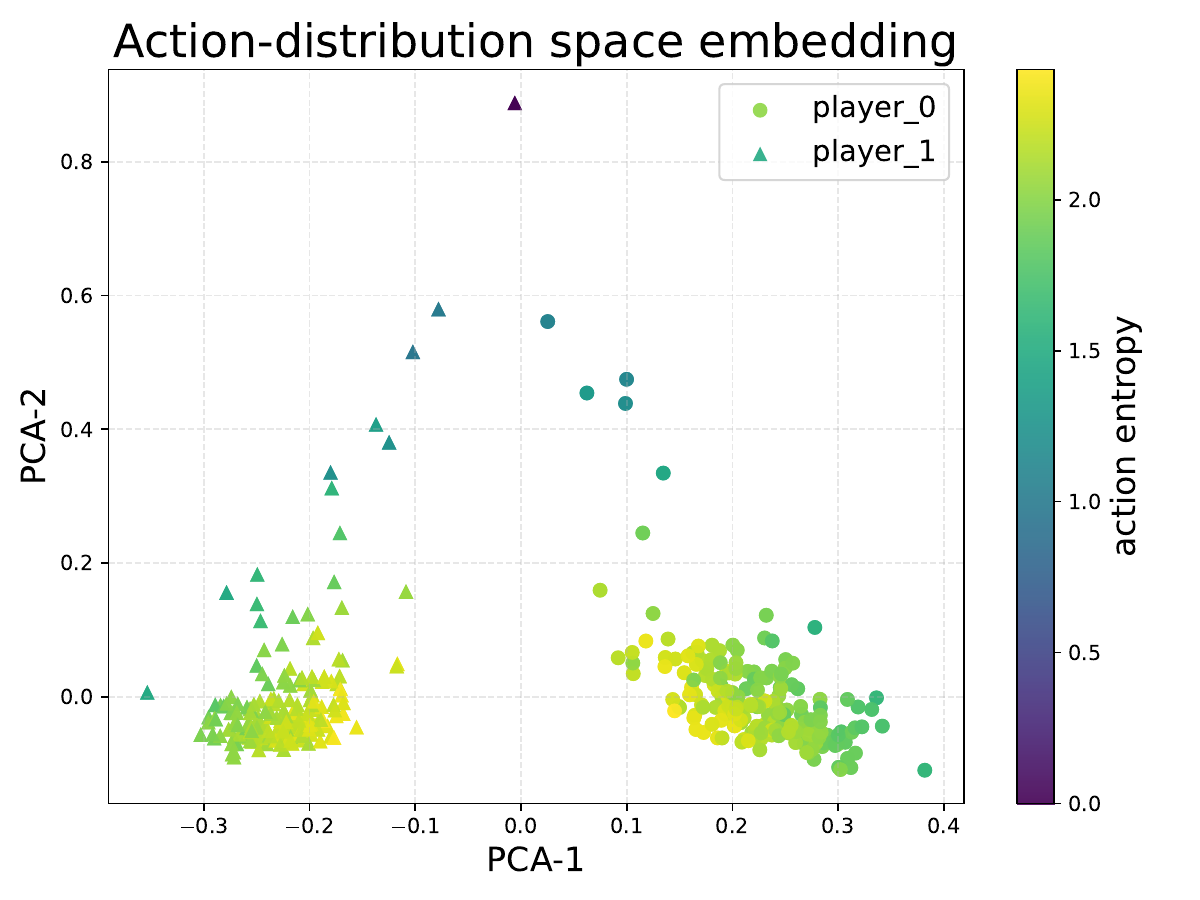}\hfill
    \includegraphics[width=0.47\linewidth]{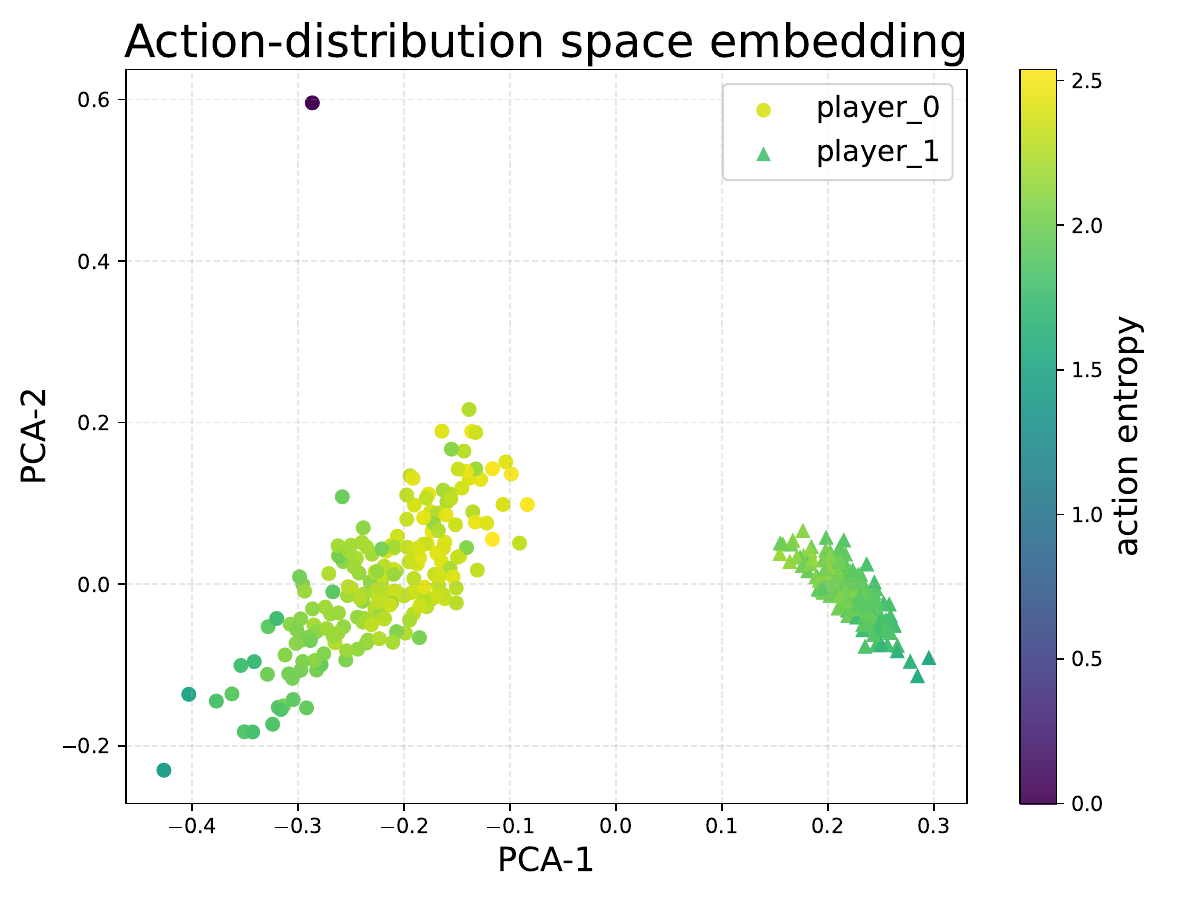}

    \caption{\textbf{\textit{Hanabi: Cooperative Coordination Manifold.}} 
PCA visualization of action-distribution features for both players across four random seeds. 
The embeddings reveal a \textbf{coherent joint manifold} with smooth entropy gradients. 
While the separation between players reflects the \textbf{asymmetric information} inherent to Hanabi, the symmetric and complementary geometry of the clusters demonstrates that the generator $\mathcal{G}_\theta$ captures a unified coordination protocol (or ``convention'') shared by both agents, enabling zero-shot coordination despite distinct individual perspectives.}

    \label{fig:hanabi_action_embeds}
\end{figure}

\begin{figure}[H]
    \centering
    \textbf{\large Hanabi: Latent Embeddings (Seeds 0--3)}\\[0.4cm]

    \includegraphics[width=0.47\linewidth]{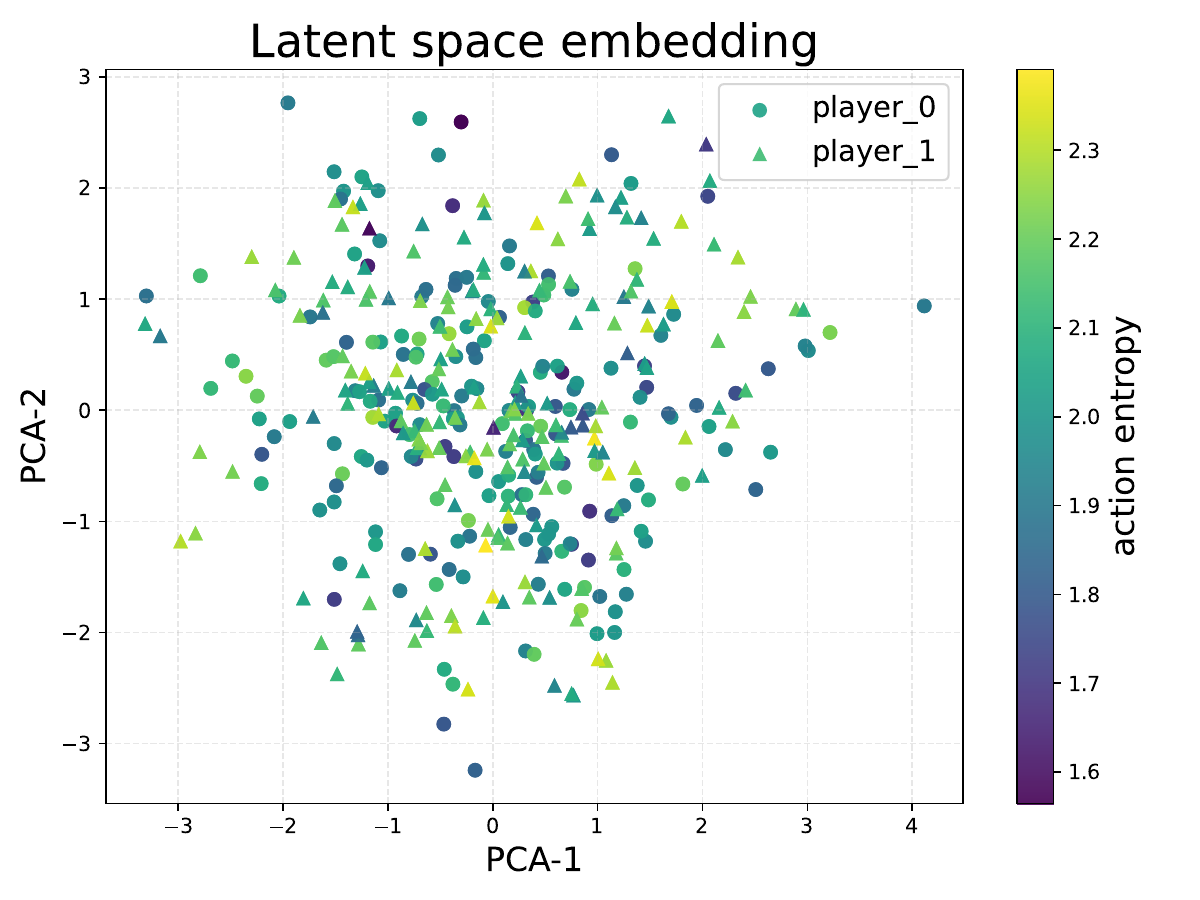}\hfill
    \includegraphics[width=0.47\linewidth]{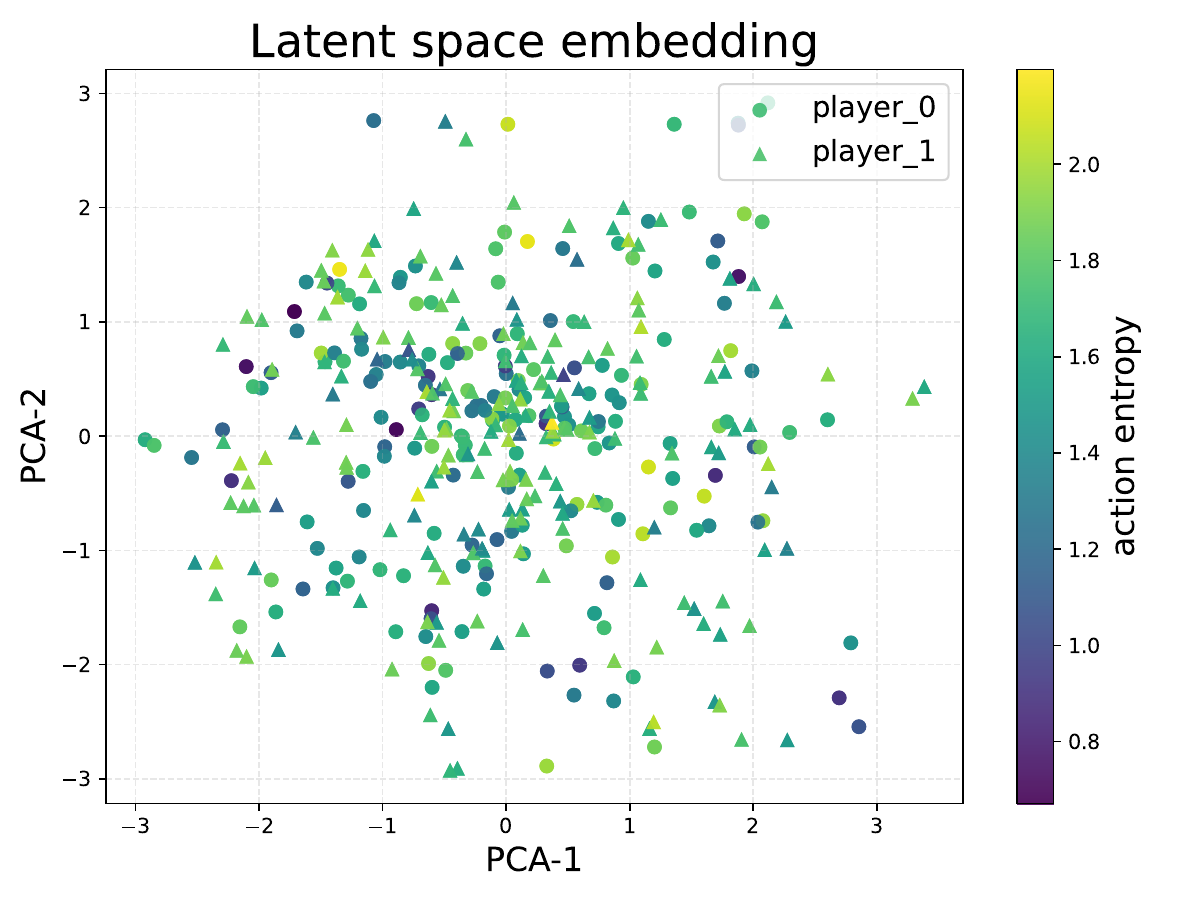}\\[0.35cm]

    \includegraphics[width=0.47\linewidth]{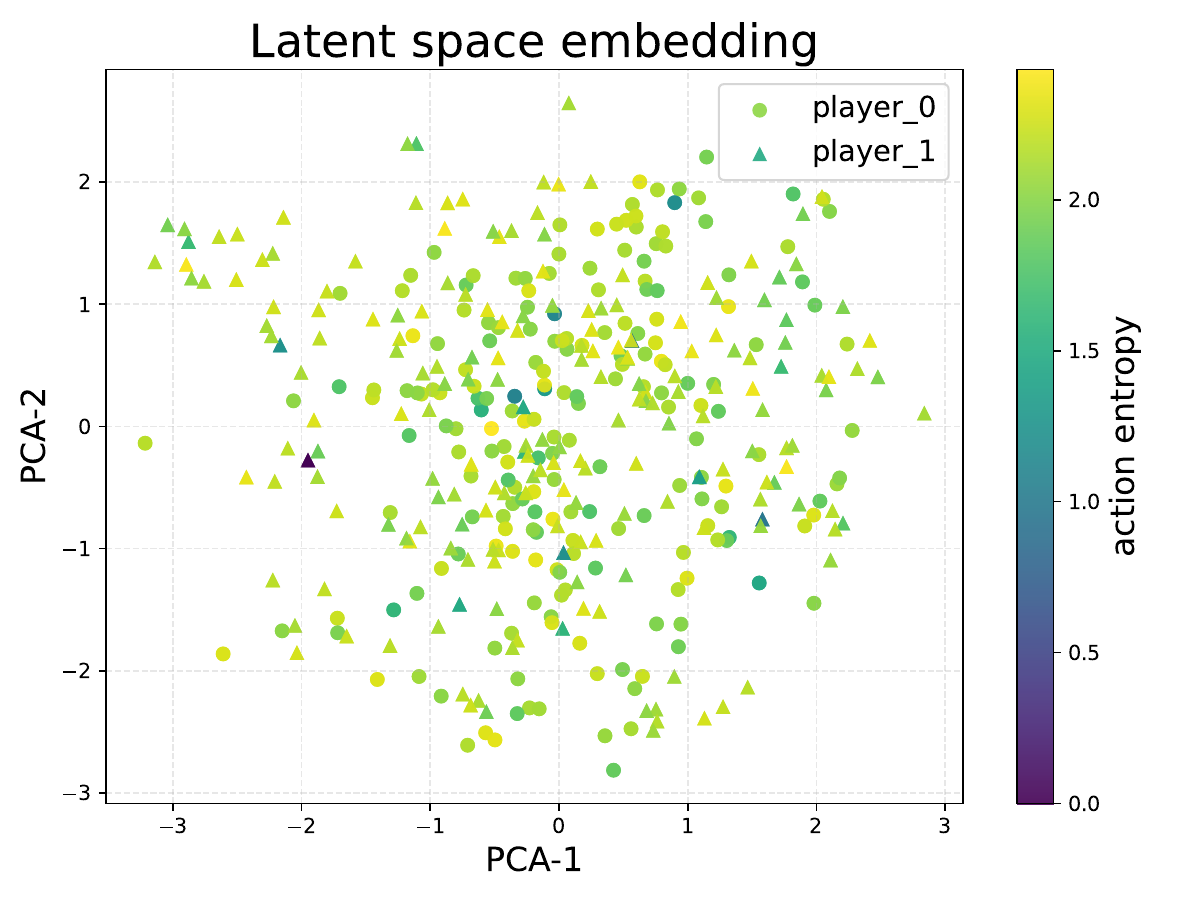}\hfill
    \includegraphics[width=0.47\linewidth]{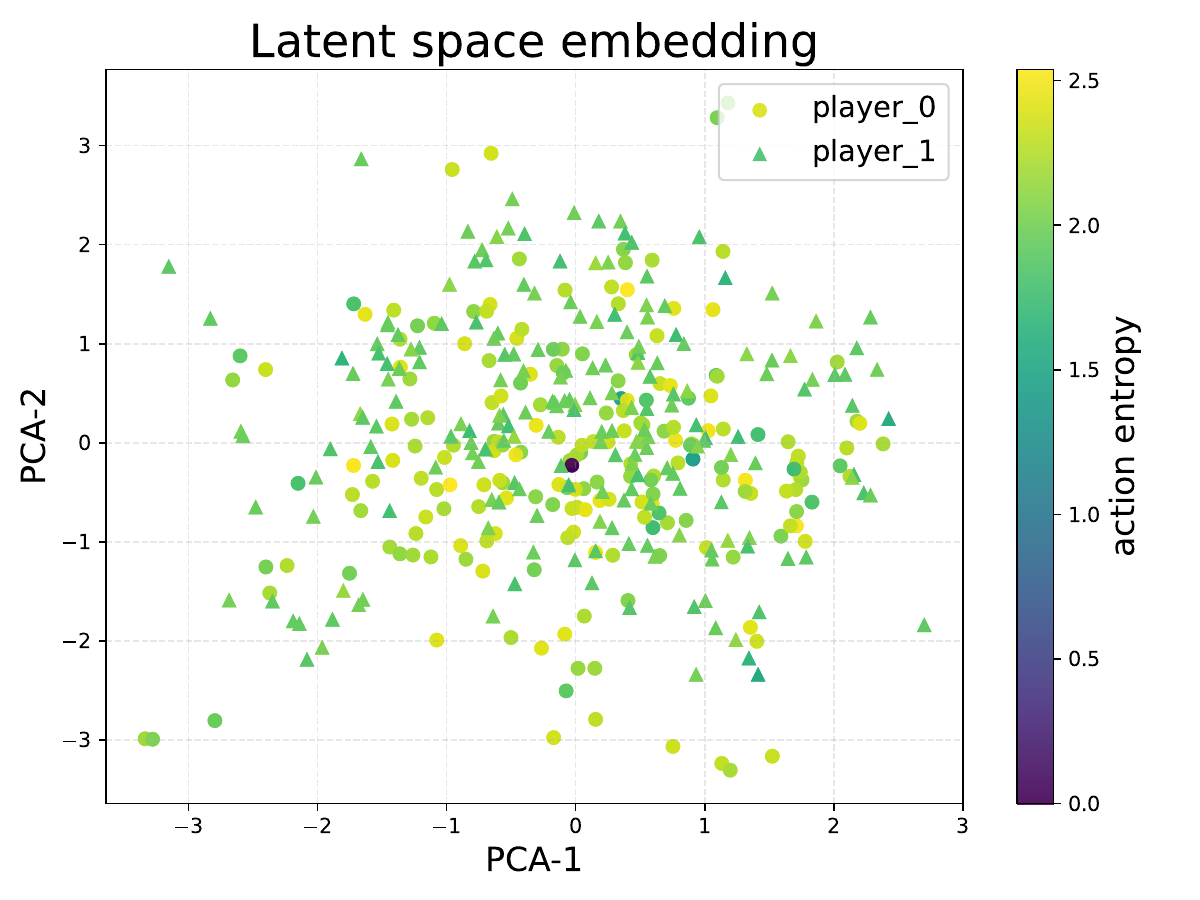}

    \caption{\textbf{\textit{Hanabi: Latent-Space Embeddings.}} PCA visualization of learned latent representations across four seeds (0--3). Compared to action space, the latent space remains more dispersed, suggesting preserved representational diversity despite similar cooperative behavior.}
    \label{fig:hanabi_latent_embeds}
\end{figure}

\dotfill
\newpage

\section{Coordination on Simple Spread} \label{app:part_III_L}

\paragraph{Setup.}
Our second experiment is conducted in the \texttt{simple\_spread} environment, a classic cooperative benchmark from \texttt{PettingZoo}. The task requires three agents to collaboratively ``cover'' three distinct target landmarks in a 2D space. Agents are rewarded for minimizing their distance to any landmark but are penalized for colliding with each other. A successful outcome requires the agents to learn a decentralized ``divide and conquer'' strategy, assigning themselves to unique targets and navigating to them efficiently. This environment is designed to test emergent cooperation and task division. We again compare \gems and \psro on qualitative behavior, mean return, memory, and computation time, averaged over 5 seeds.
\paragraph{Objective.}
This experiment aims to achieve two goals. First, to confirm the significant scalability benefits of \gems, as demonstrated in the previous experiment, but now in a purely cooperative setting. Second, to evaluate which algorithm is more effective at discovering the complex, coordinated strategies required for cooperative success, as measured by both quantitative rewards and qualitative analysis of agent behaviors.

\begin{figure}[H]
    \centering
    \includegraphics[width=0.95\linewidth]{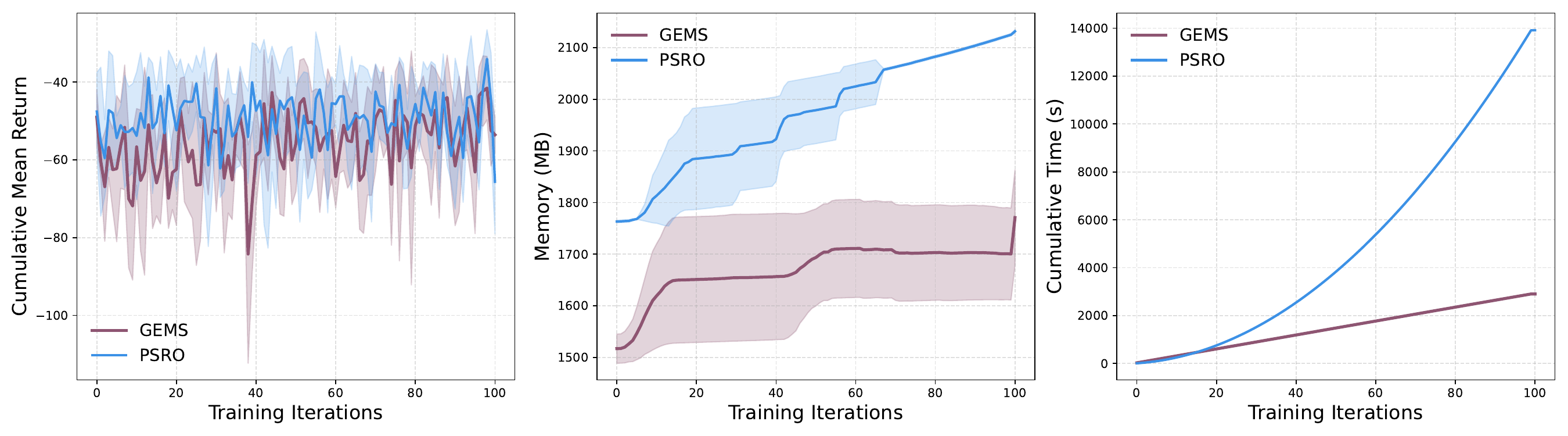}  
     \caption{\textbf{\textit{Simple Spread}}}

    \label{fig:spread_resource_usage_combined}
\end{figure}
\FloatBarrier
\begin{figure}[H]
    \centering
    \includegraphics[width=0.95\linewidth]{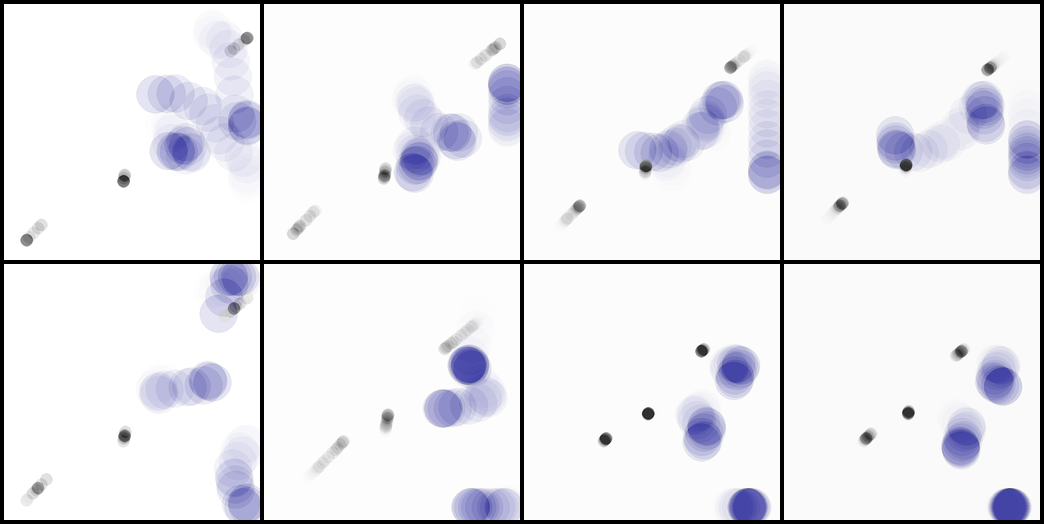}
    \caption{\textbf{Simple Spread run on \textit{Seed 0}}}
        \label{fig:spd_0}
\end{figure}
\FloatBarrier
\begin{figure}[H]
    \centering
    \includegraphics[width=0.95\linewidth]{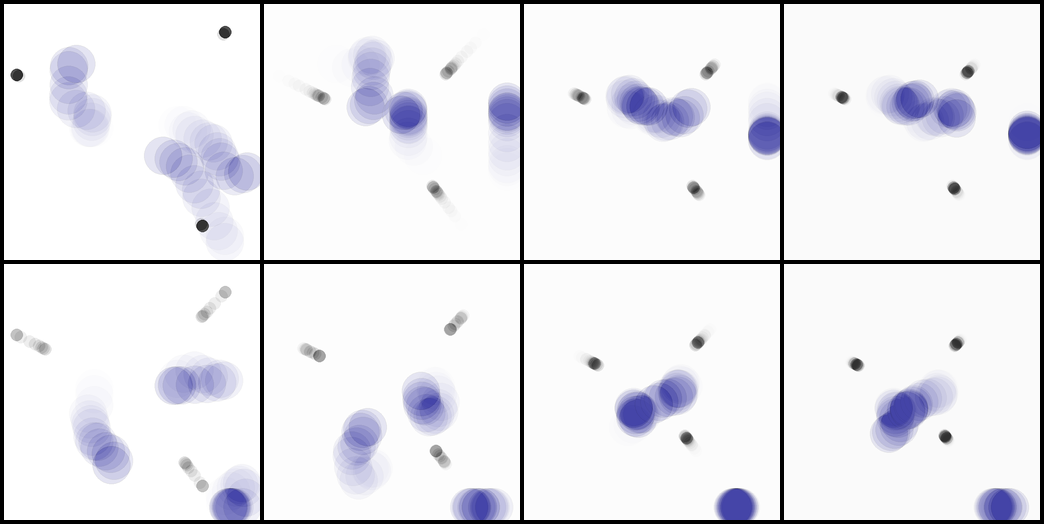}
    \caption{\textbf{Simple Spread run on \textit{Seed 1}}}
        \label{fig:spd_1}
\end{figure}
\FloatBarrier
\begin{figure}[H]
    \centering
    \includegraphics[width=0.95\linewidth]{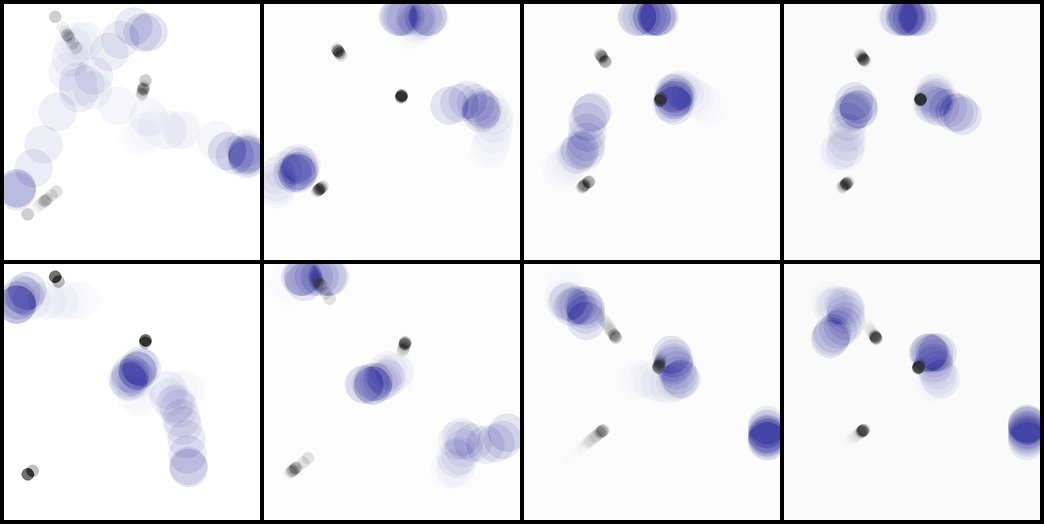}
    \caption{\textbf{Simple Spread run on \textit{Seed 2}}}
        \label{fig:spd_2}
\end{figure}
\FloatBarrier
\begin{figure}[H]
    \centering
    \includegraphics[width=0.95\linewidth]{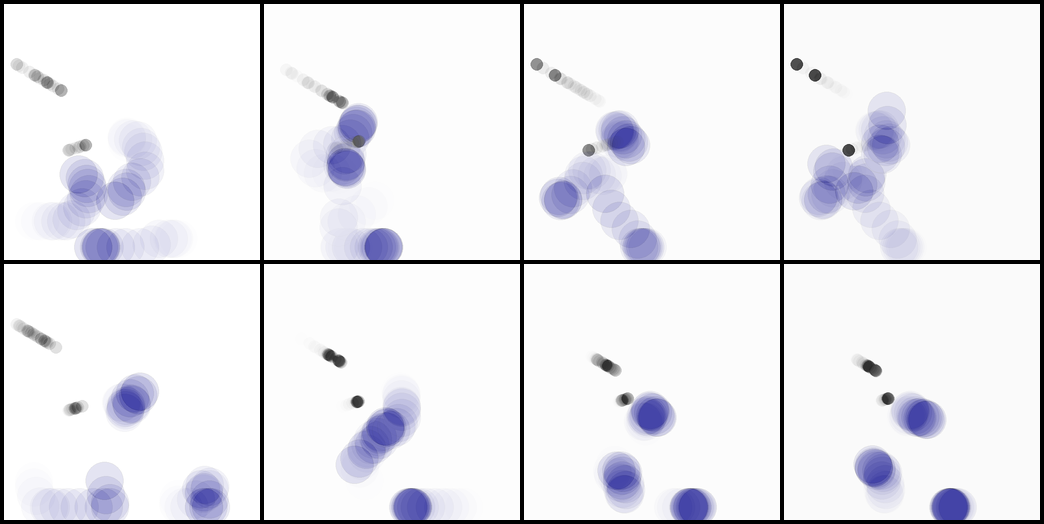}
    \caption{\textbf{Simple Spread run on \textit{Seed 3}}}
        \label{fig:spd_3}
\end{figure}
\FloatBarrier
\begin{figure}[H]
    \centering
    \includegraphics[width=0.95\linewidth]{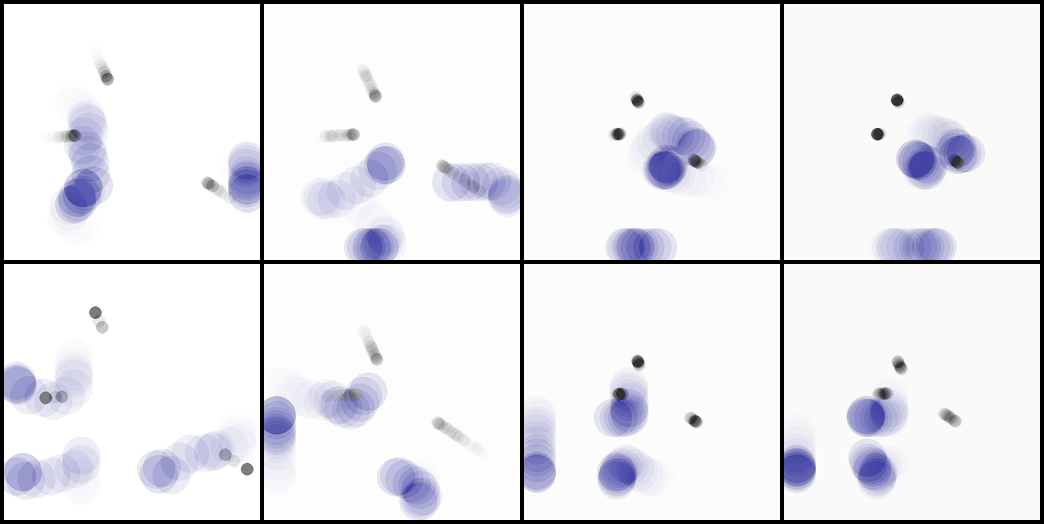}
    \caption{\textbf{Simple Spread run on \textit{Seed 4}}}
    \label{fig:spd_4}
\end{figure}
\FloatBarrier

\paragraph{Results.}
Across \texttt{simple\_spread}, \gems again outperforms \psro in both effectiveness and efficiency. In the ghost-montage comparisons for seeds 0–4 (Figures~\ref{fig:spd_0}–\ref{fig:spd_4}; \gems on top, \psro on bottom), \textbf{G{\small EMS} agents learn an efficient, coordinated strategy}: they quickly partition landmarks and move directly toward them, and when two agents collide they \emph{separate and re-plan} rather than dithering—while still staying close to the black-dot targets (agents rendered in blue). By contrast, \textbf{P{\small SRO} agents exhibit weaker coordination}: typically one–two agents hover near a target while another drifts away from objectives or circles indecisively, an undesirable failure mode visible in multiple seeds (e.g., Fig.~\ref{fig:spd_0}, \ref{fig:spd_1}). This qualitative advantage aligns with the aggregate plot (Figure~\ref{fig:spread_resource_usage_combined}): \gems achieves a higher (less negative) mean return with lower variance over time. In terms of scalability, results match prior sections: \gems is over 6$\times$ faster in cumulative time (\textasciitilde2,000s vs. \textasciitilde13,000s) and maintains a flat memory profile, while \psro’s resource usage scales poorly.

\paragraph{Analysis.}
These rollouts (Figures~\ref{fig:spd_0}–\ref{fig:spd_4}) reinforce our central claim: \gems is better suited for cooperative multi-agent tasks that require implicit role assignment. The \textbf{EB-UCB oracle’s exploration over a diverse latent space} helps discover complementary roles (distinct landmark assignments) and preserves them under contact—agents momentarily repel when they touch, then settle back near their respective black-dot objectives. Simpler procedures in \psro more often collapse to near-symmetric yet suboptimal behaviors (e.g., multiple agents chasing the same landmark while another disengages). Coupled with \gems’ \textbf{surrogate-free design}—which yields lower wall-clock time and stable memory—these qualitative and quantitative results position \gems as a robust, scalable framework for cooperative MARL.

\dotfill
\newpage

\section{Coordination on Simple Tag} \label{app:part_III_M}

\paragraph{Simple Tag (Appendix).}
Across seeds 0–4 (ghost montages: \gems on top, \psro on bottom), \textbf{G{\small EMS}} reliably produces \emph{coordinated pursuit}: the red taggers self-organize into a roughly triangular encirclement that narrows angles on the green runner, adjusting spacing to cut off escape lanes; the green agent, in turn, exhibits purposeful evasion (arcing and zig-zag trajectories), and \emph{captures do occur} when the enclosure closes. Under \textbf{P{\small SRO}}, by contrast, the taggers seldom sustain a stable formation: transient “triangle-ish’’ shapes appear but collapse before containment, leaving wide gaps through which the green agent \emph{easily escapes}. While the green agent under \psro shows less consistently trained avoidance than under \gems, the taggers’ lack of persistent coordination dominates the outcome, captures are rare and evasion is commonplace. Overall, these qualitative rollouts corroborate the main-paper claim: \gems induces higher-level cooperative tactics (persistent enclosure and angle closing) that the \psro baseline fails to discover or maintain.

\begin{figure}[H]
    \centering
    \includegraphics[width=0.95\linewidth]{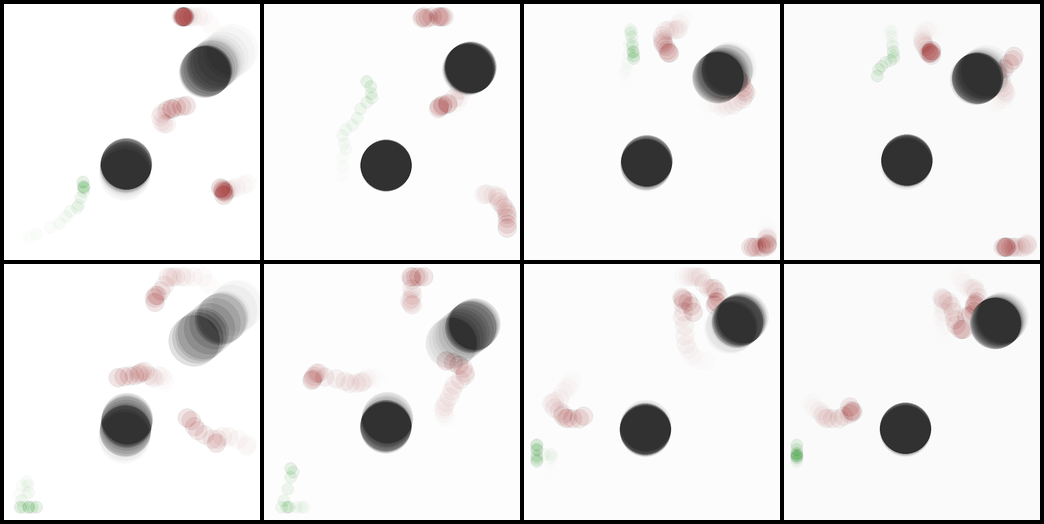}
    \caption{\textbf{Simple Tag run on \textit{Seed 0}}}
\end{figure}
\FloatBarrier
\begin{figure}[H]
    \centering
    \includegraphics[width=0.95\linewidth]{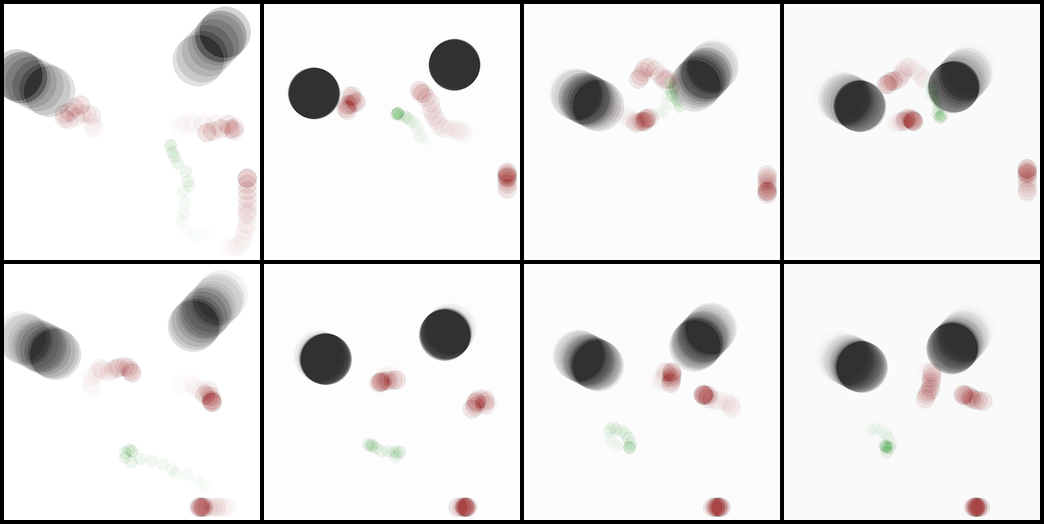}
    \caption{\textbf{Simple Tag run on \textit{Seed 1}}}
\end{figure}
\FloatBarrier
\begin{figure}[H]
    \centering
    \includegraphics[width=0.95\linewidth]{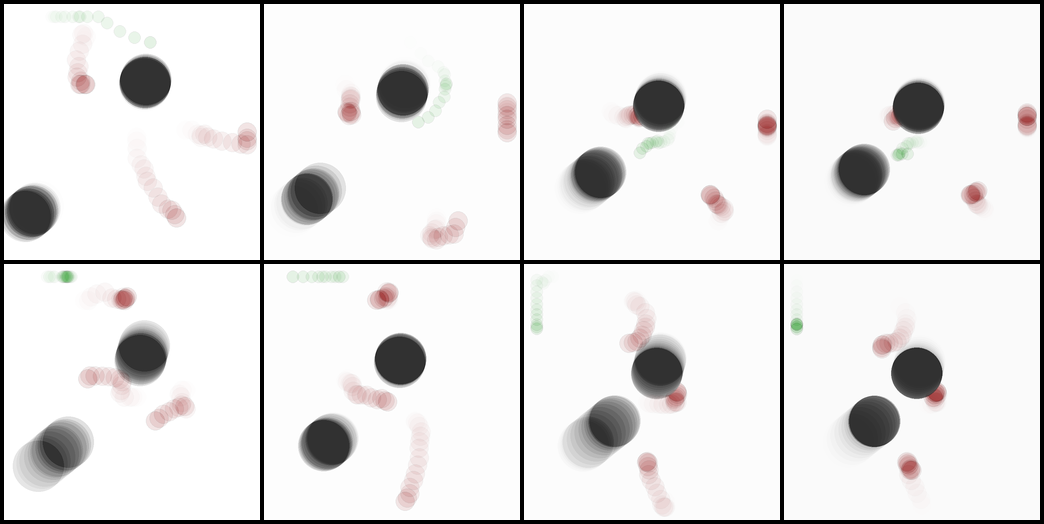}
    \caption{\textbf{Simple Tag run on \textit{Seed 2}}}
\end{figure}
\FloatBarrier
\begin{figure}[H]
    \centering
    \includegraphics[width=0.95\linewidth]{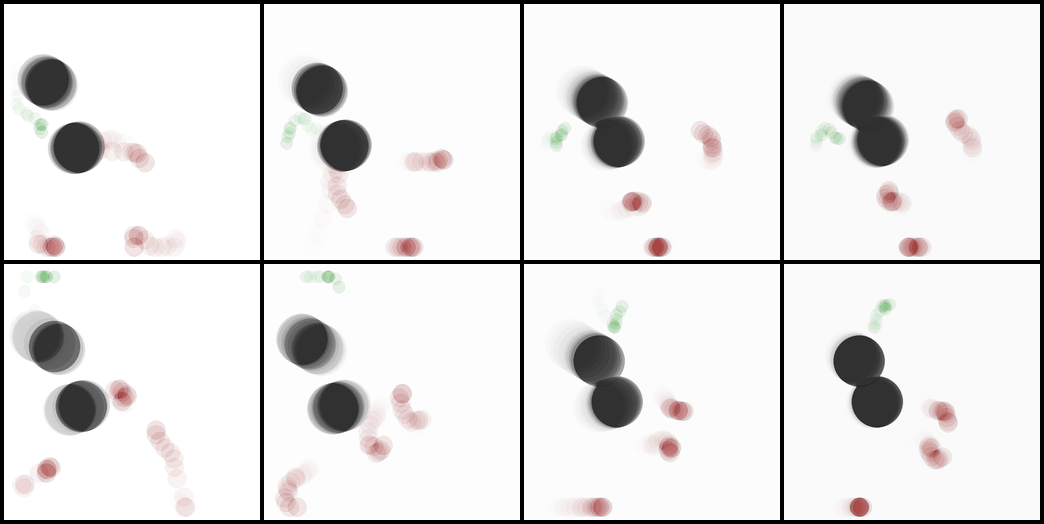}
    \caption{\textbf{Simple Tag run on \textit{Seed 3}}}
\end{figure}
\FloatBarrier
\begin{figure}[H]
    \centering
    \includegraphics[width=0.95\linewidth]{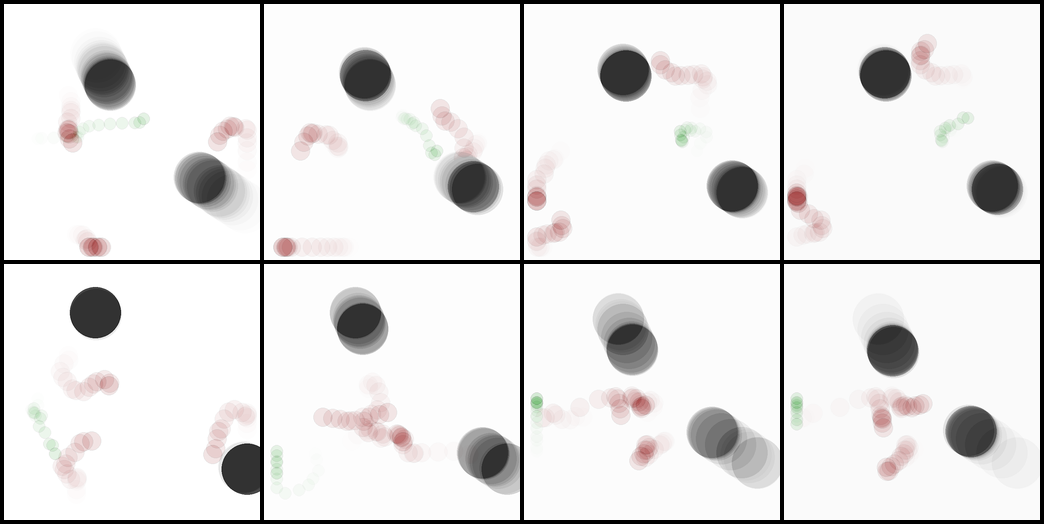}
    \caption{\textbf{Simple Tag run on \textit{Seed 4}}}
\end{figure}
\FloatBarrier

\dotfill
\newpage

\section{Run on Chess} \label{app:part_III_N}
We evaluate \gems on the \texttt{chess\_v6} environment from the \texttt{PettingZoo} library, which provides a full two-player Chess setup following the FIDE ruleset, including castling, en passant, pawn promotion, and draw conditions. Each agent alternately controls either the white or black pieces and receives observations corresponding to the complete board state, enabling direct competition through turn-based play. 

Across the early stages of training, most games between \gems agents resulted in draws. However, as iterations progressed, the system began exhibiting increasingly decisive outcomes—alternating between strong white and black victories. This trend can be observed in Table~\ref{tab:chess-iter}, where the balance of wins and losses evolves from neutrality to a dynamic dominance-shift pattern. The accompanying performance metrics in Figure~\ref{fig:chess-metrics} further illustrate this evolution: the per-iteration time steadily decreased from approximately $25$\,s to $9$\,s as the training stabilized, while memory usage remained nearly constant throughout $1000$ iterations. The cumulative time curve also followed an almost linear trend, confirming consistent computational scaling.

Figures~\ref{fig:chess-1-9}--\ref{fig:chess-9-9} visualize selected games during training. Initially, black exhibits early dominance, leveraging material advantages through aggressive captures. Yet, as learning progresses, white develops counter-strategies involving positional control and endgame foresight. A particularly striking sequence shows white using a rook-based checkmate against black, who had earlier promoted a pawn to a queen—highlighting \gems's robustness in resolving complex, materially imbalanced positions to decisive terminal states without succumbing to policy collapse.
\begin{figure}[ht]
    \centering
    \includegraphics[width=0.95\linewidth]{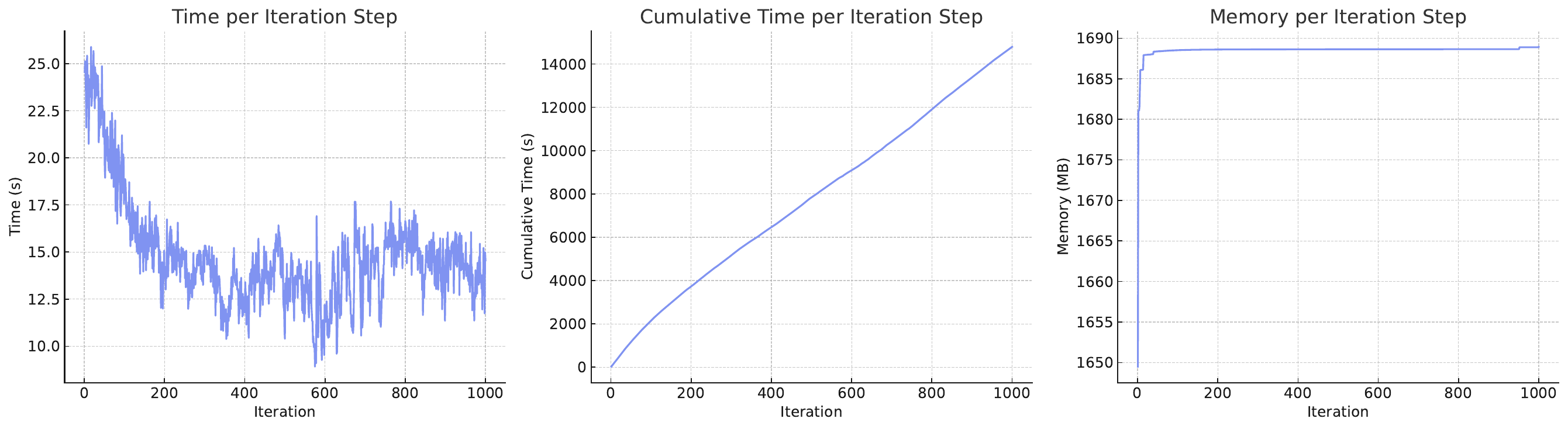}
    \caption{System metrics across training iterations. Memory usage remains stable throughout, per-step computation time gradually decreases as the model stabilizes, and cumulative time increases linearly with iteration count—indicating consistent computational efficiency.}
    \label{fig:chess-metrics}
\end{figure}

\begin{figure}[H]
    \centering
    \includegraphics[width=0.95\linewidth]{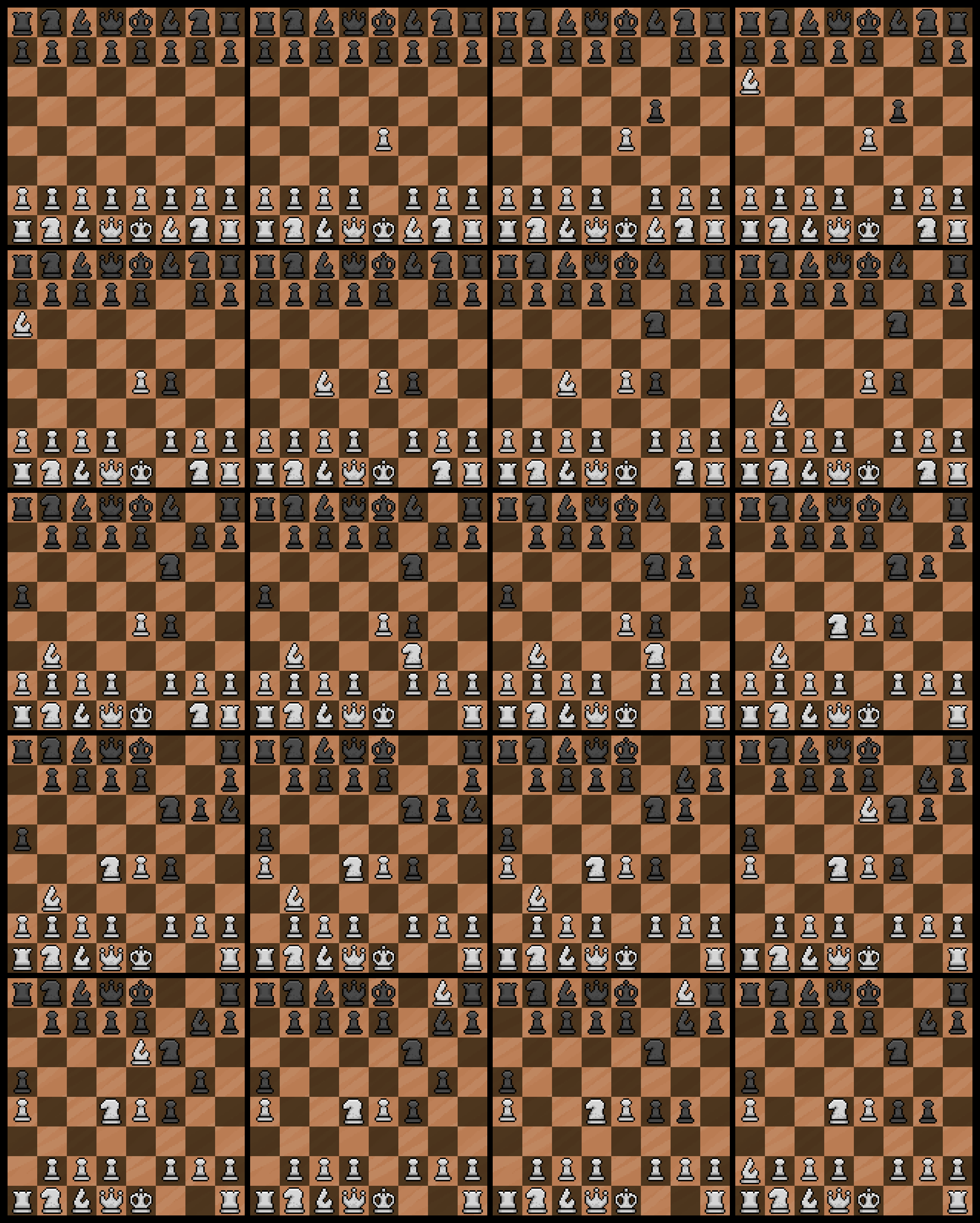}
    \caption{\textbf{Chess run for 1000 iterations. \textit{Part 1/9}} \\ 
    Early-game states at 1000 iterations. \gems agents demonstrate a preference for central occupation (controlling d4/e4 squares) and developing minor pieces to active squares. The divergence in pawn structures indicates that the agents are exploring asymmetric board configurations rather than collapsing into identical opening lines.}
    \label{fig:chess-1-9}  
\end{figure}
\FloatBarrier

\begin{figure}[H]
    \centering
    \includegraphics[width=0.95\linewidth]{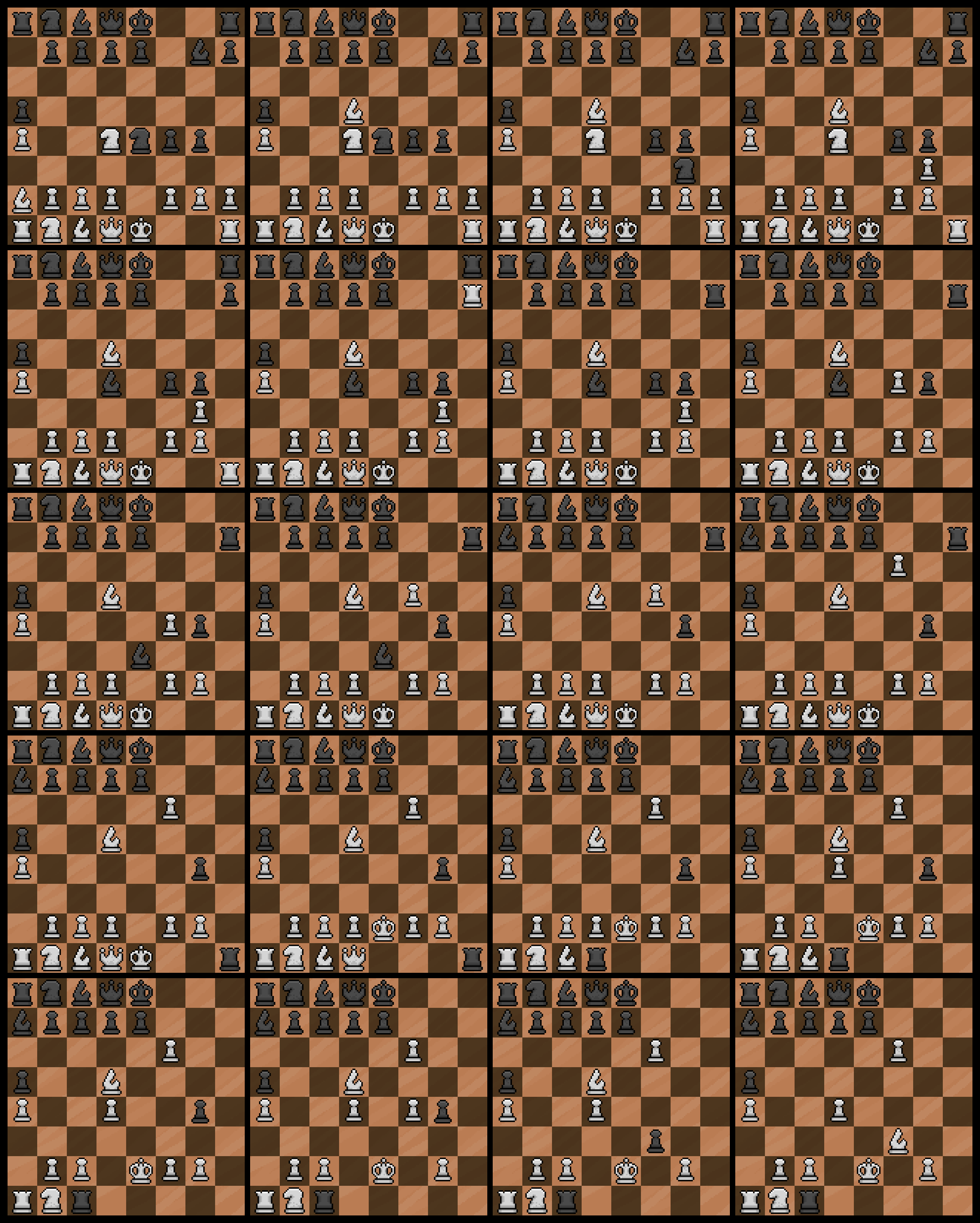}
    \caption{\textbf{Chess run for 1000 iterations. \textit{Part 2/9}} \\
    Transition to middlegame. The agents exhibit coordinated piece placement, with bishops and knights occupying supported diagonals and outposts. The board states reflect a learned policy of structural preservation, avoiding premature breakage of pawn chains while maintaining piece connectivity.}
    \label{fig:chess-2-9}  
\end{figure}
\FloatBarrier

\begin{figure}[H]
    \centering
    \includegraphics[width=0.95\linewidth]{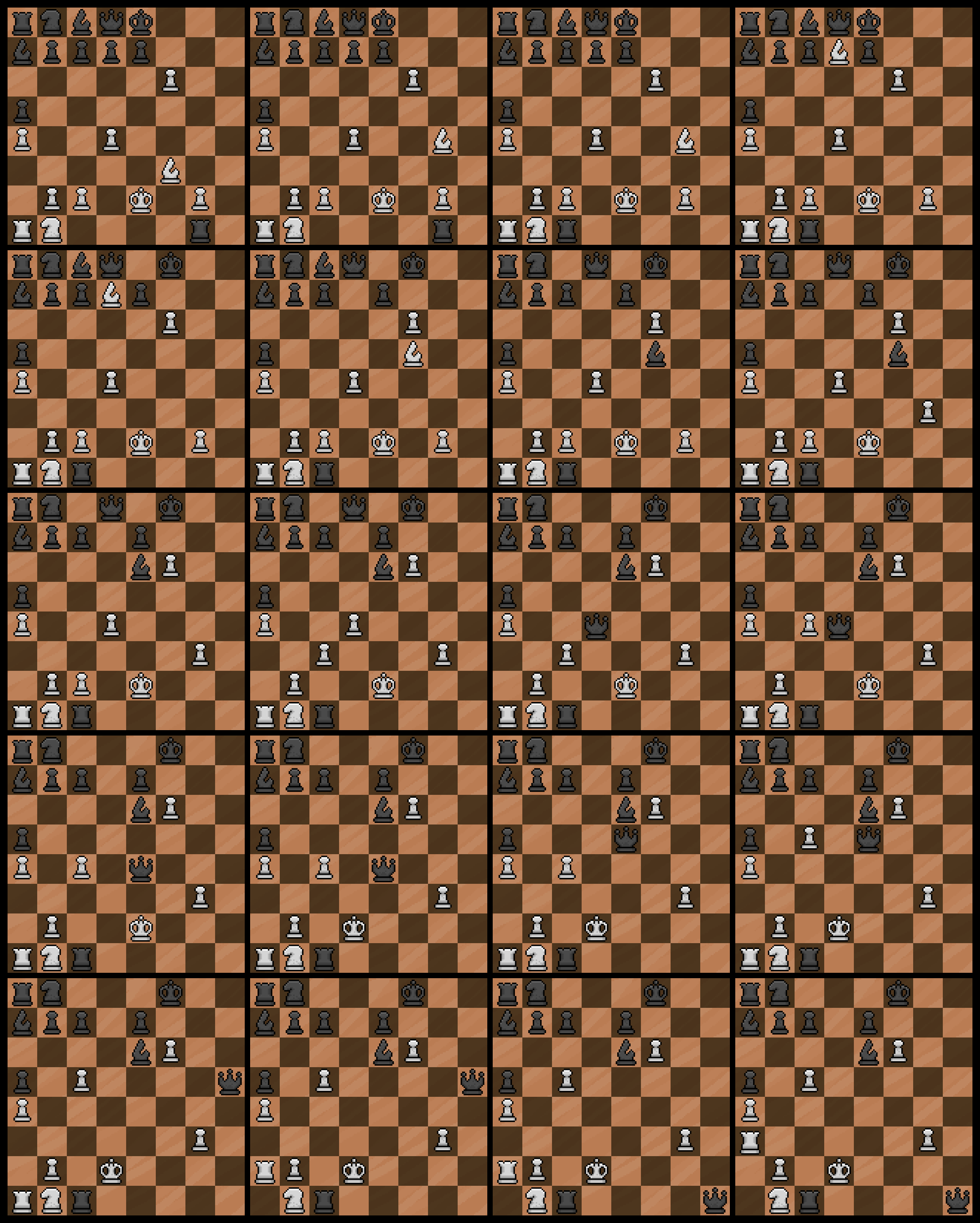}
    \caption{\textbf{Chess run for 1000 iterations. \textit{Part 3/9}} \\
    Complex middlegame configurations. Play involves tension around the center and semi-open files. The agents demonstrate an ability to navigate trade-offs between material retention and spatial activity, resulting in non-trivial board states that require calculation of tactical exchanges.}
    \label{fig:chess-3-9}  
\end{figure}
\FloatBarrier

\begin{figure}[H]
    \centering
    \includegraphics[width=0.95\linewidth]{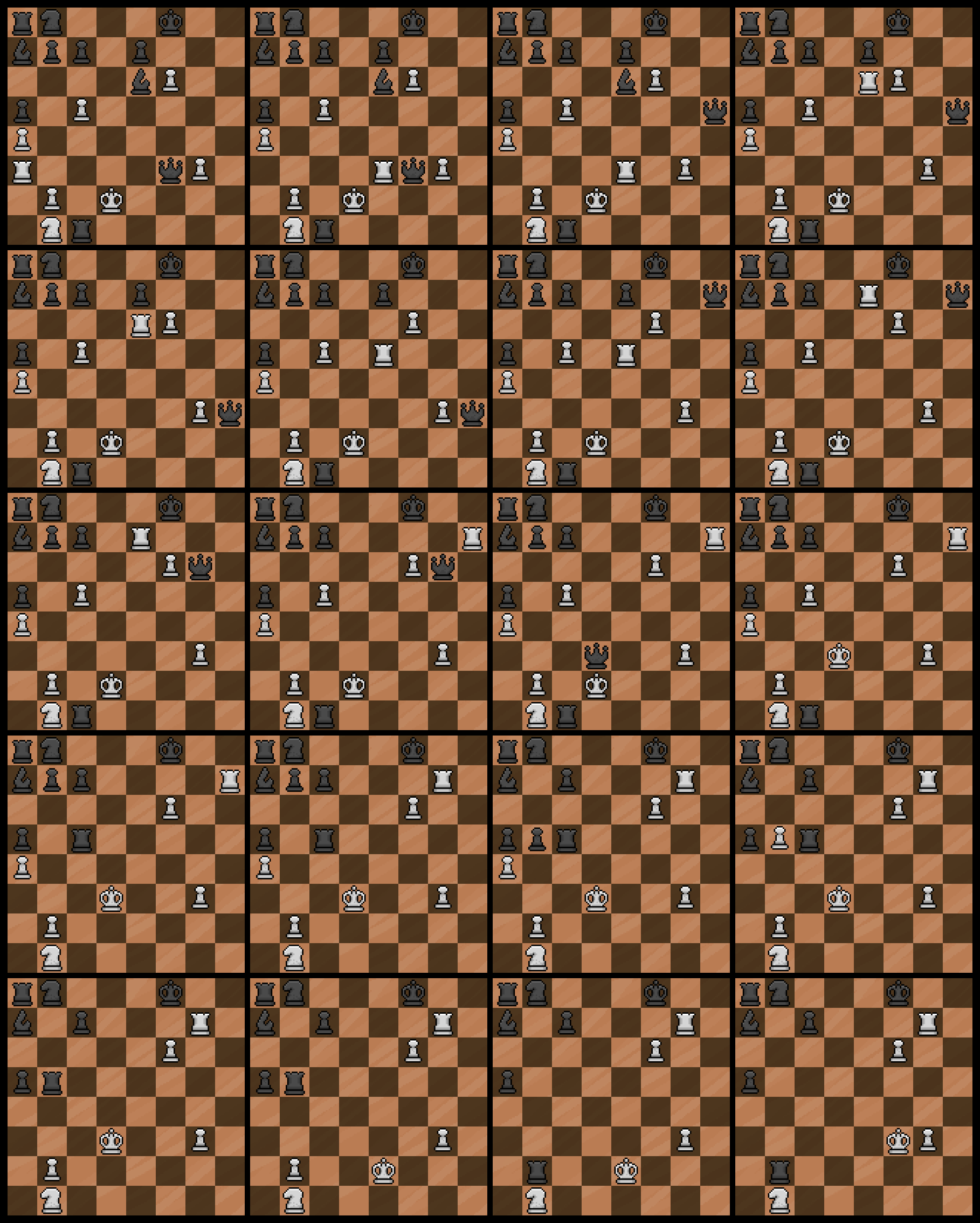}
    \caption{\textbf{Chess run for 1000 iterations. \textit{Part 4/9}} \\
    Late middlegame progression. Major pieces (Rooks/Queens) increasingly control key files. The policy appears to prioritize King safety while initiating simplification sequences, transitioning the game from complex tactical middlegames toward solvable endgame states.}
    \label{fig:chess-4-9}  
\end{figure}
\FloatBarrier

\begin{figure}[H]
    \centering
    \includegraphics[width=0.95\linewidth]{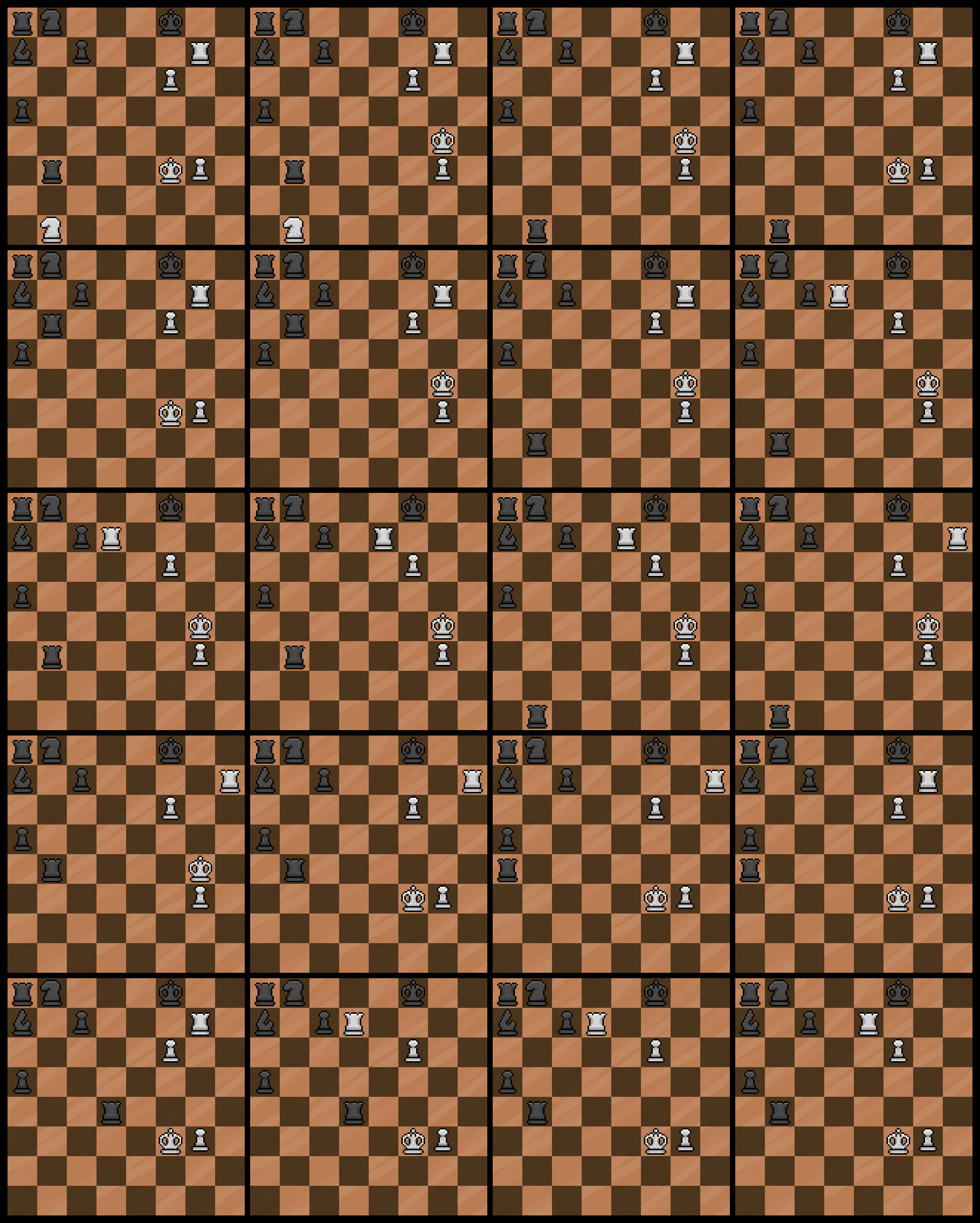}
    \caption{\textbf{Chess run for 1000 iterations. \textit{Part 5/9}} \\ 
    Endgame transition phases. The board states show reduced material with increased King activity. The agents demonstrate the capability to coordinate remaining pieces to restrict opponent mobility, suggesting the emergence of basic endgame principles without explicit tablebase supervision.}
    \label{fig:chess-5-9}  
\end{figure}
\FloatBarrier

\begin{figure}[H]
    \centering
    \includegraphics[width=0.95\linewidth]{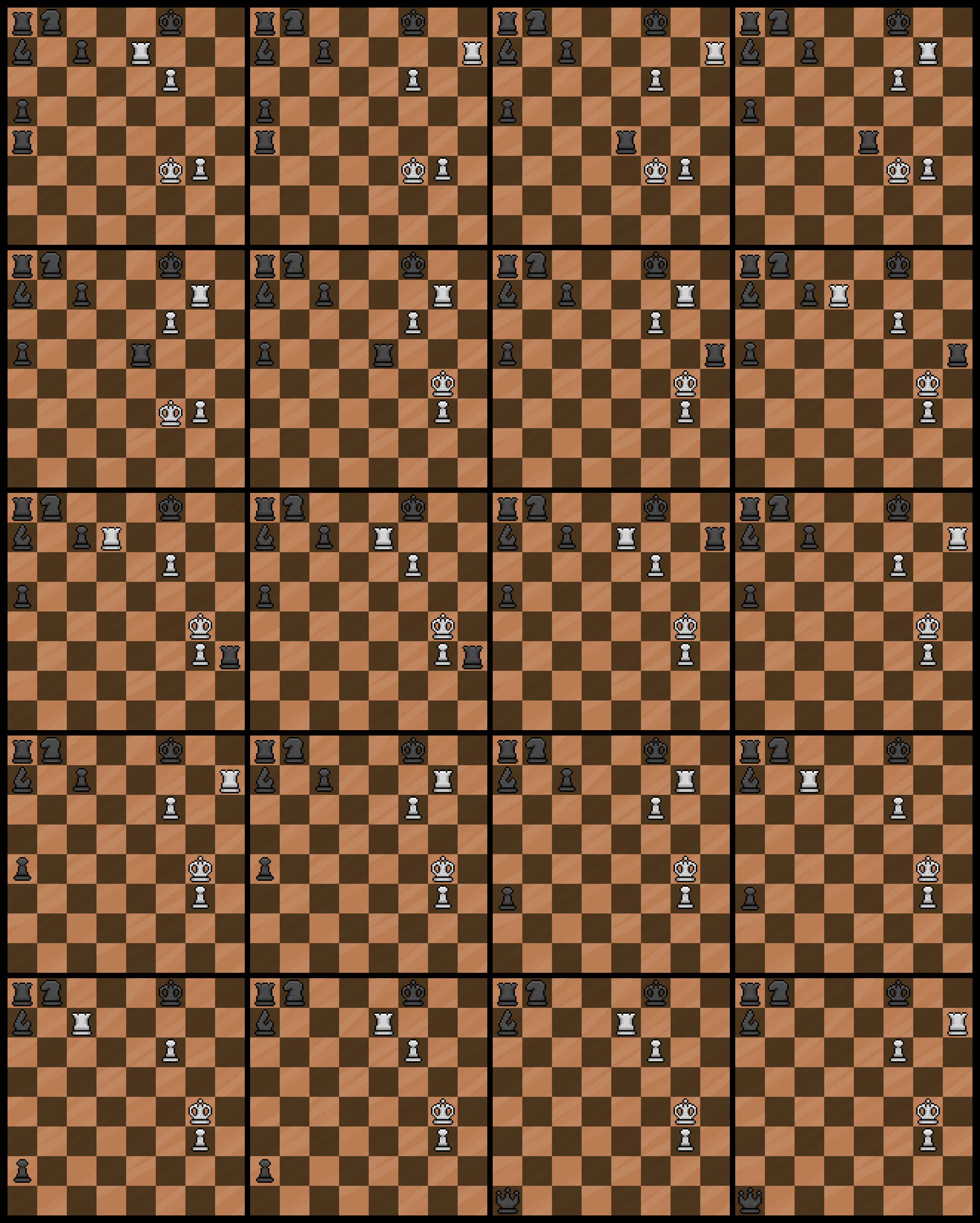}
    \caption{\textbf{Chess run for 1000 iterations. \textit{Part 6/9}} \\
    Sparse board configurations. \gems converges on simplified states characterized by Rook and King maneuvering. The agents avoid random moves that surrender material, maintaining equilibrium in drawn positions or pressing advantages in uneven splits.}
    \label{fig:chess-6-9}  
\end{figure}
\FloatBarrier

\begin{figure}[H]
    \centering
    \includegraphics[width=0.95\linewidth]{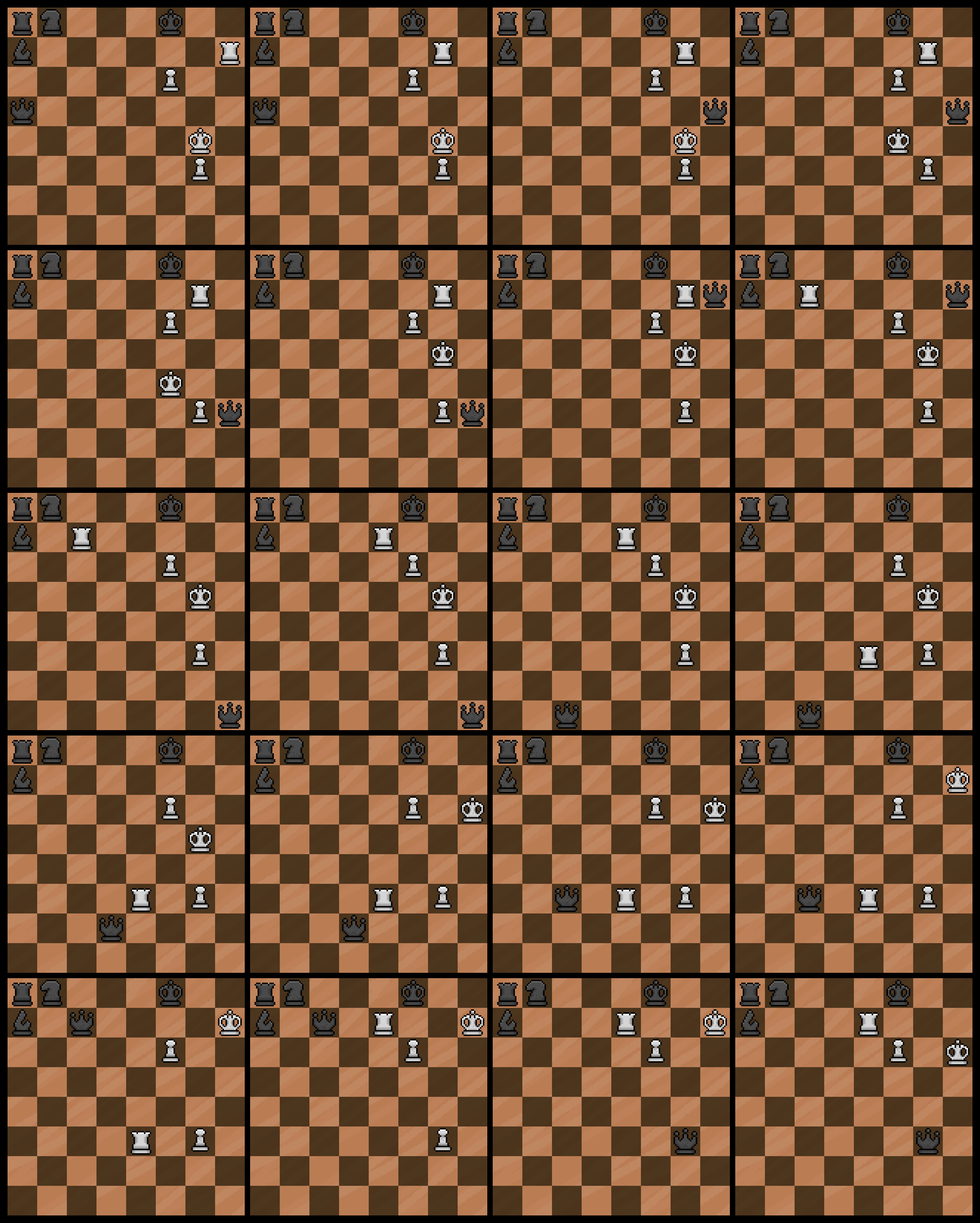}
    \caption{\textbf{Chess run for 1000 iterations. \textit{Part 7/9}} \\
    Formation of mating nets. The dominant agent utilizes coordinated checks to drive the opponent's King to the edge of the board. These sequences indicate that the policy successfully propagates value from terminal states back to these pre-terminal configurations.}
    \label{fig:chess-7-9}  
\end{figure}
\FloatBarrier

\begin{figure}[H]
    \centering
    \includegraphics[width=0.95\linewidth]{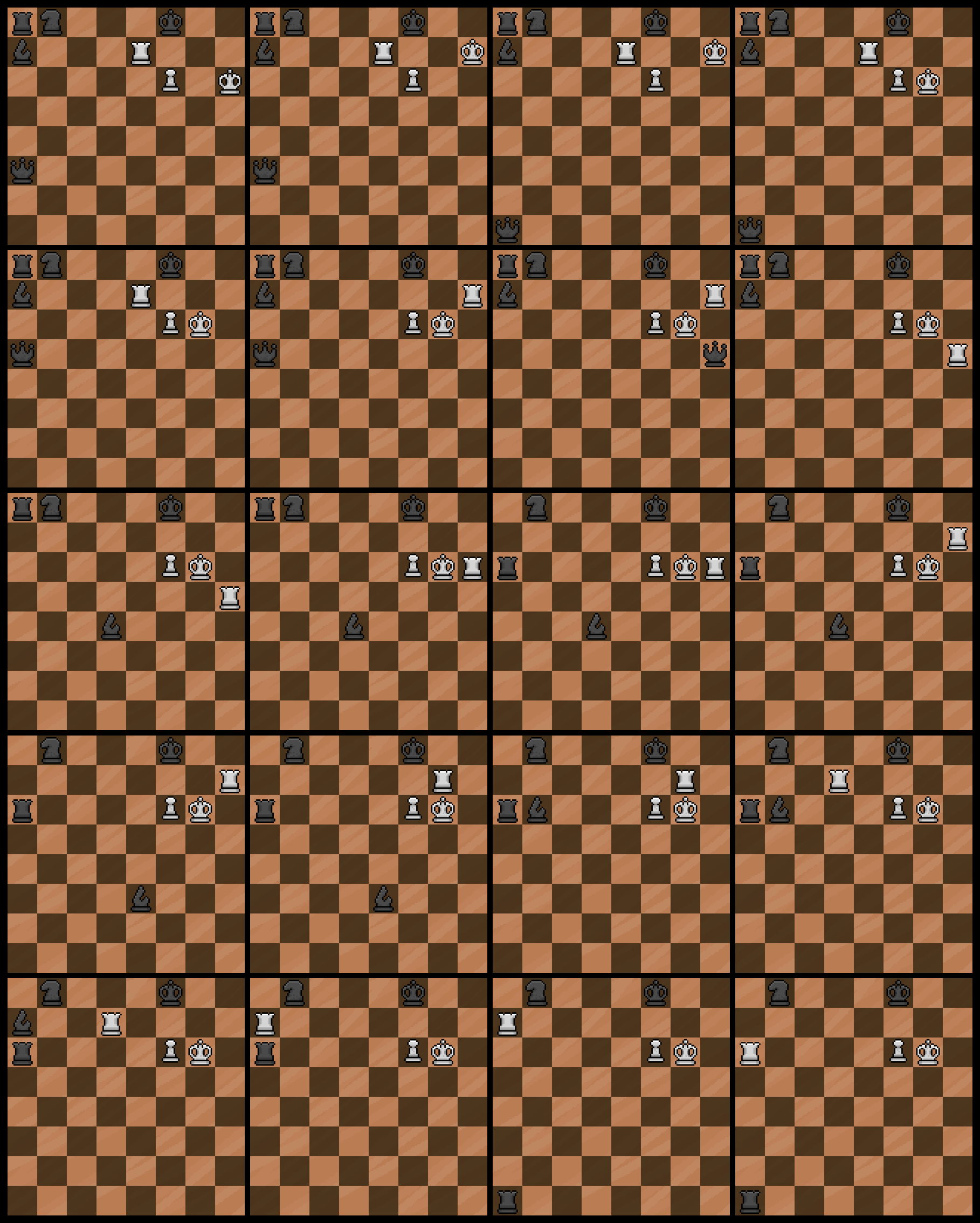}
    \caption{\textbf{Chess run for 1000 iterations. \textit{Part 8/9}} \\
    Pre-terminal tactical motifs. The agents execute forcing lines involving heavy pieces to convert spatial advantages into checkmate opportunities. The consistency of these motifs across different seeds suggests stable convergence toward valid winning strategies.}
    \label{fig:chess-8-9}  
\end{figure}
\FloatBarrier

\begin{figure}[H]
    \centering
    \includegraphics[width=0.95\linewidth]{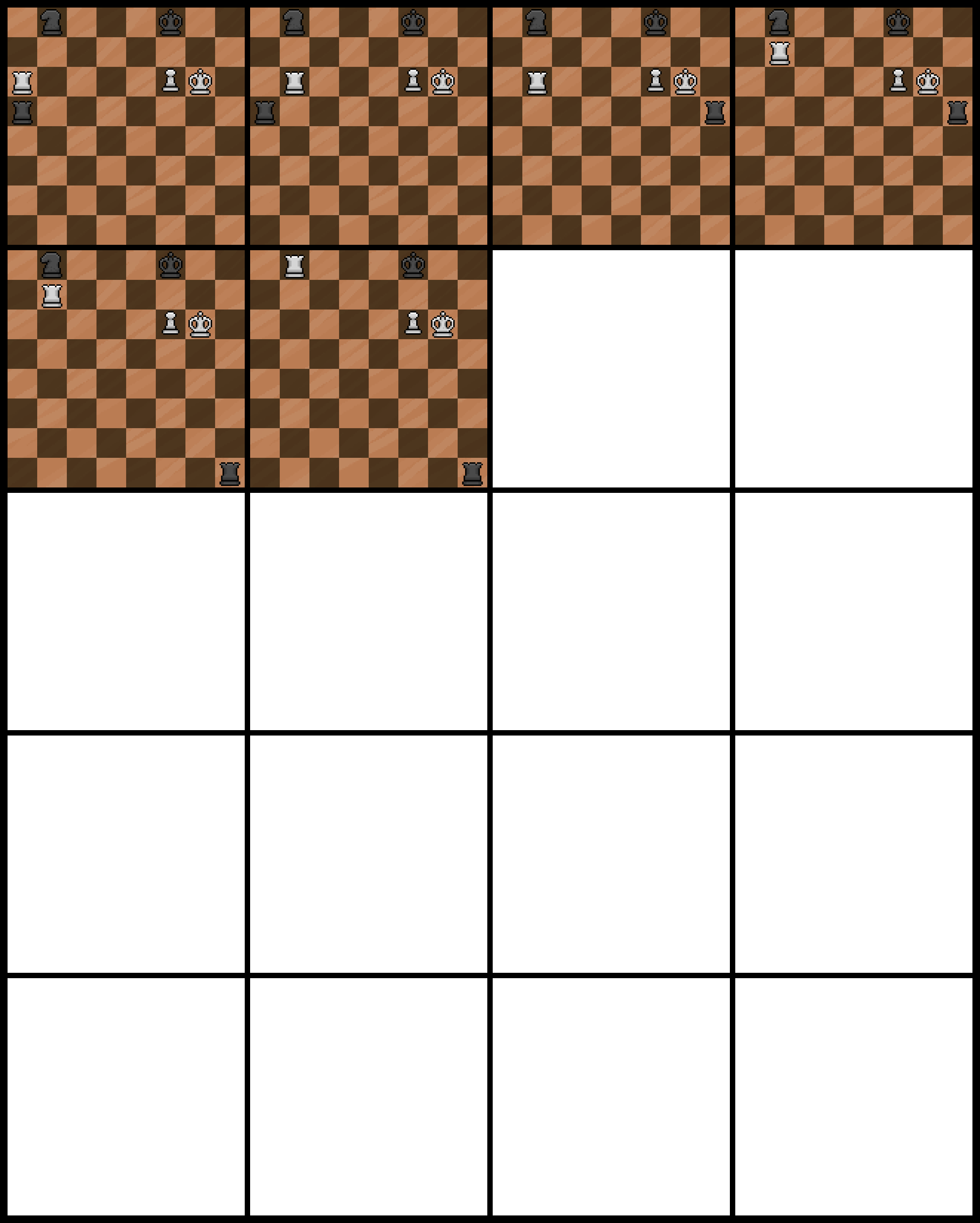}
    \caption{\textbf{Chess run for 1000 iterations. \textit{Part 9/9}} \\ 
    Terminal states. \gems reliably reaches definitive checkmate configurations (e.g., King + Rook mates). This confirms that the training framework avoids the failure mode of indefinite cycles or stalemates, successfully resolving the game loop.}
    \label{fig:chess-9-9}  
\end{figure}
\FloatBarrier

\begingroup
\scriptsize
\setlength{\tabcolsep}{2pt}
\begin{longtable}{|c|c|c||c|c|c||c|c|c||c|c|c||c|c|c|}
\caption{Training progression of \gems{} over 1000 iterations in Chess. Legend: ``–'' denotes a draw, ``\textbf{W}'' denotes a win, and ``L'' denotes a loss.}\label{tab:chess-iter}\\

\toprule
Iter & White & Black & Iter & White & Black & Iter & White & Black & Iter & White & Black & Iter & White & Black \\
\midrule
\endfirsthead

\toprule
Iter & White & Black & Iter & White & Black & Iter & White & Black & Iter & White & Black & Iter & White & Black \\
\midrule
\endhead

\multicolumn{15}{c}{\hrulefill}\\
\endfoot

\bottomrule
\endlastfoot

1 & - & - & 2 & - & - & 3 & - & - & 4 & - & - & 5 & - & - \\
6 & - & - & 7 & - & - & 8 & - & - & 9 & - & - & 10 & - & - \\
11 & - & - & 12 & - & - & 13 & - & - & 14 & - & - & 15 & - & - \\
16 & - & - & 17 & - & - & 18 & - & - & 19 & - & - & 20 & - & - \\
21 & - & - & 22 & - & - & 23 & - & - & 24 & - & - & 25 & \textbf{W} & L \\
26 & - & - & 27 & - & - & 28 & L & \textbf{W} & 29 & - & - & 30 & L & \textbf{W} \\
31 & - & - & 32 & - & - & 33 & - & - & 34 & - & - & 35 & - & - \\
36 & - & - & 37 & - & - & 38 & - & - & 39 & - & - & 40 & - & - \\
41 & - & - & 42 & - & - & 43 & - & - & 44 & - & - & 45 & - & - \\
46 & - & - & 47 & - & - & 48 & \textbf{W} & L & 49 & - & - & 50 & L & \textbf{W} \\
51 & - & - & 52 & - & - & 53 & - & - & 54 & - & - & 55 & - & - \\
56 & - & - & 57 & - & - & 58 & - & - & 59 & - & - & 60 & - & - \\
61 & - & - & 62 & \textbf{W} & L & 63 & - & - & 64 & - & - & 65 & - & - \\
66 & - & - & 67 & - & - & 68 & - & - & 69 & - & - & 70 & - & - \\
71 & - & - & 72 & - & - & 73 & - & - & 74 & - & - & 75 & - & - \\
76 & - & - & 77 & - & - & 78 & - & - & 79 & - & - & 80 & \textbf{W} & L \\
81 & - & - & 82 & - & - & 83 & - & - & 84 & - & - & 85 & - & - \\
86 & - & - & 87 & - & - & 88 & L & \textbf{W} & 89 & - & - & 90 & L & \textbf{W} \\
91 & - & - & 92 & - & - & 93 & - & - & 94 & - & - & 95 & L & \textbf{W} \\
96 & - & - & 97 & - & - & 98 & - & - & 99 & - & - & 100 & - & - \\
101 & - & - & 102 & - & - & 103 & - & - & 104 & - & - & 105 & \textbf{W} & L \\
106 & L & \textbf{W} & 107 & L & \textbf{W} & 108 & - & - & 109 & L & \textbf{W} & 110 & \textbf{W} & L \\
111 & - & - & 112 & - & - & 113 & L & \textbf{W} & 114 & - & - & 115 & L & \textbf{W} \\
116 & - & - & 117 & - & - & 118 & L & \textbf{W} & 119 & - & - & 120 & - & - \\
121 & - & - & 122 & - & - & 123 & - & - & 124 & L & \textbf{W} & 125 & - & - \\
126 & - & - & 127 & L & \textbf{W} & 128 & - & - & 129 & - & - & 130 & - & - \\
131 & L & \textbf{W} & 132 & - & - & 133 & - & - & 134 & - & - & 135 & \textbf{W} & L \\
136 & - & - & 137 & L & \textbf{W} & 138 & - & - & 139 & L & \textbf{W} & 140 & - & - \\
141 & - & - & 142 & \textbf{W} & L & 143 & - & - & 144 & - & - & 145 & \textbf{W} & L \\
146 & - & - & 147 & - & - & 148 & - & - & 149 & - & - & 150 & - & - \\
151 & \textbf{W} & L & 152 & \textbf{W} & L & 153 & - & - & 154 & - & - & 155 & - & - \\
156 & \textbf{W} & L & 157 & L & \textbf{W} & 158 & L & \textbf{W} & 159 & - & - & 160 & - & - \\
161 & - & - & 162 & \textbf{W} & L & 163 & L & \textbf{W} & 164 & - & - & 165 & - & - \\
166 & - & - & 167 & - & - & 168 & - & - & 169 & - & - & 170 & - & - \\
171 & \textbf{W} & L & 172 & - & - & 173 & - & - & 174 & - & - & 175 & - & - \\
176 & - & - & 177 & \textbf{W} & L & 178 & \textbf{W} & L & 179 & - & - & 180 & - & - \\
181 & - & - & 182 & \textbf{W} & L & 183 & - & - & 184 & - & - & 185 & - & - \\
186 & - & - & 187 & - & - & 188 & \textbf{W} & L & 189 & - & - & 190 & \textbf{W} & L \\
191 & \textbf{W} & L & 192 & \textbf{W} & L & 193 & - & - & 194 & - & - & 195 & \textbf{W} & L \\
196 & \textbf{W} & L & 197 & - & - & 198 & - & - & 199 & - & - & 200 & \textbf{W} & L \\
201 & - & - & 202 & \textbf{W} & L & 203 & - & - & 204 & \textbf{W} & L & 205 & \textbf{W} & L \\
206 & - & - & 207 & - & - & 208 & - & - & 209 & - & - & 210 & \textbf{W} & L \\
211 & - & - & 212 & \textbf{W} & L & 213 & - & - & 214 & - & - & 215 & - & - \\
216 & - & - & 217 & \textbf{W} & L & 218 & - & - & 219 & - & - & 220 & - & - \\
221 & - & - & 222 & \textbf{W} & L & 223 & - & - & 224 & - & - & 225 & - & - \\
226 & - & - & 227 & \textbf{W} & L & 228 & - & - & 229 & \textbf{W} & L & 230 & - & - \\
231 & - & - & 232 & - & - & 233 & - & - & 234 & \textbf{W} & L & 235 & L & \textbf{W} \\
236 & \textbf{W} & L & 237 & L & \textbf{W} & 238 & - & - & 239 & - & - & 240 & \textbf{W} & L \\
241 & - & - & 242 & - & - & 243 & - & - & 244 & - & - & 245 & L & \textbf{W} \\
246 & L & \textbf{W} & 247 & - & - & 248 & - & - & 249 & - & - & 250 & \textbf{W} & L \\
251 & L & \textbf{W} & 252 & - & - & 253 & - & - & 254 & - & - & 255 & \textbf{W} & L \\
256 & - & - & 257 & \textbf{W} & L & 258 & - & - & 259 & \textbf{W} & L & 260 & - & - \\
261 & - & - & 262 & - & - & 263 & \textbf{W} & L & 264 & - & - & 265 & \textbf{W} & L \\
266 & \textbf{W} & L & 267 & - & - & 268 & \textbf{W} & L & 269 & - & - & 270 & - & - \\
271 & - & - & 272 & \textbf{W} & L & 273 & - & - & 274 & \textbf{W} & L & 275 & - & - \\
276 & - & - & 277 & - & - & 278 & \textbf{W} & L & 279 & \textbf{W} & L & 280 & \textbf{W} & L \\
281 & - & - & 282 & - & - & 283 & - & - & 284 & - & - & 285 & - & - \\
286 & - & - & 287 & L & \textbf{W} & 288 & - & - & 289 & - & - & 290 & - & - \\
291 & - & - & 292 & - & - & 293 & - & - & 294 & - & - & 295 & - & - \\
296 & - & - & 297 & - & - & 298 & - & - & 299 & - & - & 300 & - & - \\
301 & - & - & 302 & - & - & 303 & \textbf{W} & L & 304 & - & - & 305 & - & - \\
306 & - & - & 307 & - & - & 308 & - & - & 309 & \textbf{W} & L & 310 & - & - \\
311 & - & - & 312 & \textbf{W} & L & 313 & \textbf{W} & L & 314 & - & - & 315 & - & - \\
316 & - & - & 317 & - & - & 318 & - & - & 319 & L & \textbf{W} & 320 & - & - \\
321 & - & - & 322 & - & - & 323 & - & - & 324 & \textbf{W} & L & 325 & - & - \\
326 & L & \textbf{W} & 327 & \textbf{W} & L & 328 & - & - & 329 & \textbf{W} & L & 330 & L & \textbf{W} \\
331 & \textbf{W} & L & 332 & L & \textbf{W} & 333 & - & - & 334 & \textbf{W} & L & 335 & - & - \\
336 & L & \textbf{W} & 337 & \textbf{W} & L & 338 & - & - & 339 & L & \textbf{W} & 340 & \textbf{W} & L \\
341 & \textbf{W} & L & 342 & - & - & 343 & \textbf{W} & L & 344 & \textbf{W} & L & 345 & \textbf{W} & L \\
346 & \textbf{W} & L & 347 & \textbf{W} & L & 348 & - & - & 349 & L & \textbf{W} & 350 & - & - \\
351 & L & \textbf{W} & 352 & \textbf{W} & L & 353 & - & - & 354 & \textbf{W} & L & 355 & \textbf{W} & L \\
356 & - & - & 357 & \textbf{W} & L & 358 & - & - & 359 & L & \textbf{W} & 360 & - & - \\
361 & - & - & 362 & - & - & 363 & - & - & 364 & \textbf{W} & L & 365 & L & \textbf{W} \\
366 & \textbf{W} & L & 367 & - & - & 368 & - & - & 369 & - & - & 370 & - & - \\
371 & L & \textbf{W} & 372 & - & - & 373 & \textbf{W} & L & 374 & - & - & 375 & \textbf{W} & L \\
376 & \textbf{W} & L & 377 & - & - & 378 & \textbf{W} & L & 379 & - & - & 380 & - & - \\
381 & - & - & 382 & - & - & 383 & L & \textbf{W} & 384 & L & \textbf{W} & 385 & L & \textbf{W} \\
386 & L & \textbf{W} & 387 & - & - & 388 & - & - & 389 & \textbf{W} & L & 390 & - & - \\
391 & \textbf{W} & L & 392 & \textbf{W} & L & 393 & L & \textbf{W} & 394 & - & - & 395 & \textbf{W} & L \\
396 & - & - & 397 & \textbf{W} & L & 398 & - & - & 399 & - & - & 400 & \textbf{W} & L \\
401 & \textbf{W} & L & 402 & L & \textbf{W} & 403 & \textbf{W} & L & 404 & - & - & 405 & \textbf{W} & L \\
406 & \textbf{W} & L & 407 & - & - & 408 & L & \textbf{W} & 409 & L & \textbf{W} & 410 & - & - \\
411 & L & \textbf{W} & 412 & \textbf{W} & L & 413 & - & - & 414 & \textbf{W} & L & 415 & - & - \\
416 & - & - & 417 & \textbf{W} & L & 418 & \textbf{W} & L & 419 & - & - & 420 & - & - \\
421 & - & - & 422 & - & - & 423 & - & - & 424 & L & \textbf{W} & 425 & \textbf{W} & L \\
426 & - & - & 427 & - & - & 428 & - & - & 429 & - & - & 430 & - & - \\
431 & \textbf{W} & L & 432 & - & - & 433 & - & - & 434 & - & - & 435 & - & - \\
436 & \textbf{W} & L & 437 & L & \textbf{W} & 438 & L & \textbf{W} & 439 & L & \textbf{W} & 440 & L & \textbf{W} \\
441 & L & \textbf{W} & 442 & \textbf{W} & L & 443 & \textbf{W} & L & 444 & \textbf{W} & L & 445 & \textbf{W} & L \\
446 & - & - & 447 & \textbf{W} & L & 448 & L & \textbf{W} & 449 & \textbf{W} & L & 450 & - & - \\
451 & - & - & 452 & - & - & 453 & - & - & 454 & - & - & 455 & \textbf{W} & L \\
456 & \textbf{W} & L & 457 & \textbf{W} & L & 458 & \textbf{W} & L & 459 & \textbf{W} & L & 460 & - & - \\
461 & - & - & 462 & - & - & 463 & - & - & 464 & \textbf{W} & L & 465 & - & - \\
466 & - & - & 467 & L & \textbf{W} & 468 & - & - & 469 & - & - & 470 & L & \textbf{W} \\
471 & - & - & 472 & - & - & 473 & L & \textbf{W} & 474 & L & \textbf{W} & 475 & L & \textbf{W} \\
476 & - & - & 477 & - & - & 478 & L & \textbf{W} & 479 & \textbf{W} & L & 480 & - & - \\
481 & L & \textbf{W} & 482 & - & - & 483 & - & - & 484 & \textbf{W} & L & 485 & L & \textbf{W} \\
486 & - & - & 487 & - & - & 488 & - & - & 489 & - & - & 490 & - & - \\
491 & - & - & 492 & - & - & 493 & L & \textbf{W} & 494 & L & \textbf{W} & 495 & L & \textbf{W} \\
496 & L & \textbf{W} & 497 & \textbf{W} & L & 498 & L & \textbf{W} & 499 & L & \textbf{W} & 500 & L & \textbf{W} \\
501 & L & \textbf{W} & 502 & L & \textbf{W} & 503 & - & - & 504 & - & - & 505 & L & \textbf{W} \\
506 & L & \textbf{W} & 507 & L & \textbf{W} & 508 & L & \textbf{W} & 509 & L & \textbf{W} & 510 & - & - \\
511 & - & - & 512 & \textbf{W} & L & 513 & L & \textbf{W} & 514 & - & - & 515 & \textbf{W} & L \\
516 & L & \textbf{W} & 517 & L & \textbf{W} & 518 & - & - & 519 & L & \textbf{W} & 520 & - & - \\
521 & - & - & 522 & - & - & 523 & L & \textbf{W} & 524 & \textbf{W} & L & 525 & L & \textbf{W} \\
526 & L & \textbf{W} & 527 & L & \textbf{W} & 528 & L & \textbf{W} & 529 & - & - & 530 & - & - \\
531 & - & - & 532 & - & - & 533 & - & - & 534 & - & - & 535 & \textbf{W} & L \\
536 & - & - & 537 & - & - & 538 & - & - & 539 & - & - & 540 & - & - \\
541 & - & - & 542 & - & - & 543 & L & \textbf{W} & 544 & - & - & 545 & \textbf{W} & L \\
546 & L & \textbf{W} & 547 & L & \textbf{W} & 548 & \textbf{W} & L & 549 & L & \textbf{W} & 550 & L & \textbf{W} \\
551 & L & \textbf{W} & 552 & L & \textbf{W} & 553 & L & \textbf{W} & 554 & - & - & 555 & L & \textbf{W} \\
556 & L & \textbf{W} & 557 & L & \textbf{W} & 558 & L & \textbf{W} & 559 & - & - & 560 & \textbf{W} & L \\
561 & L & \textbf{W} & 562 & L & \textbf{W} & 563 & \textbf{W} & L & 564 & L & \textbf{W} & 565 & - & - \\
566 & L & \textbf{W} & 567 & \textbf{W} & L & 568 & L & \textbf{W} & 569 & L & \textbf{W} & 570 & - & - \\
571 & L & \textbf{W} & 572 & L & \textbf{W} & 573 & L & \textbf{W} & 574 & L & \textbf{W} & 575 & L & \textbf{W} \\
576 & L & \textbf{W} & 577 & L & \textbf{W} & 578 & - & - & 579 & - & - & 580 & - & - \\
581 & L & \textbf{W} & 582 & L & \textbf{W} & 583 & \textbf{W} & L & 584 & L & \textbf{W} & 585 & L & \textbf{W} \\
586 & - & - & 587 & L & \textbf{W} & 588 & L & \textbf{W} & 589 & L & \textbf{W} & 590 & - & - \\
591 & L & \textbf{W} & 592 & \textbf{W} & L & 593 & L & \textbf{W} & 594 & - & - & 595 & L & \textbf{W} \\
596 & L & \textbf{W} & 597 & L & \textbf{W} & 598 & L & \textbf{W} & 599 & L & \textbf{W} & 600 & L & \textbf{W} \\
601 & L & \textbf{W} & 602 & L & \textbf{W} & 603 & - & - & 604 & L & \textbf{W} & 605 & - & - \\
606 & L & \textbf{W} & 607 & L & \textbf{W} & 608 & \textbf{W} & L & 609 & L & \textbf{W} & 610 & L & \textbf{W} \\
611 & L & \textbf{W} & 612 & L & \textbf{W} & 613 & - & - & 614 & - & - & 615 & L & \textbf{W} \\
616 & L & \textbf{W} & 617 & - & - & 618 & L & \textbf{W} & 619 & L & \textbf{W} & 620 & - & - \\
621 & - & - & 622 & L & \textbf{W} & 623 & L & \textbf{W} & 624 & - & - & 625 & L & \textbf{W} \\
626 & L & \textbf{W} & 627 & L & \textbf{W} & 628 & - & - & 629 & L & \textbf{W} & 630 & - & - \\
631 & L & \textbf{W} & 632 & - & - & 633 & - & - & 634 & L & \textbf{W} & 635 & L & \textbf{W} \\
636 & L & \textbf{W} & 637 & L & \textbf{W} & 638 & L & \textbf{W} & 639 & L & \textbf{W} & 640 & L & \textbf{W} \\
641 & L & \textbf{W} & 642 & - & - & 643 & - & - & 644 & - & - & 645 & - & - \\
646 & L & \textbf{W} & 647 & L & \textbf{W} & 648 & - & - & 649 & - & - & 650 & - & - \\
651 & \textbf{W} & L & 652 & L & \textbf{W} & 653 & - & - & 654 & L & \textbf{W} & 655 & - & - \\
656 & - & - & 657 & - & - & 658 & L & \textbf{W} & 659 & L & \textbf{W} & 660 & - & - \\
661 & L & \textbf{W} & 662 & L & \textbf{W} & 663 & L & \textbf{W} & 664 & L & \textbf{W} & 665 & L & \textbf{W} \\
666 & L & \textbf{W} & 667 & L & \textbf{W} & 668 & L & \textbf{W} & 669 & L & \textbf{W} & 670 & L & \textbf{W} \\
671 & L & \textbf{W} & 672 & - & - & 673 & - & - & 674 & L & \textbf{W} & 675 & \textbf{W} & L \\
676 & - & - & 677 & - & - & 678 & \textbf{W} & L & 679 & \textbf{W} & L & 680 & \textbf{W} & L \\
681 & \textbf{W} & L & 682 & L & \textbf{W} & 683 & - & - & 684 & - & - & 685 & - & - \\
686 & - & - & 687 & L & \textbf{W} & 688 & \textbf{W} & L & 689 & L & \textbf{W} & 690 & L & \textbf{W} \\
691 & - & - & 692 & - & - & 693 & L & \textbf{W} & 694 & L & \textbf{W} & 695 & \textbf{W} & L \\
696 & - & - & 697 & - & - & 698 & L & \textbf{W} & 699 & \textbf{W} & L & 700 & \textbf{W} & L \\
701 & L & \textbf{W} & 702 & - & - & 703 & - & - & 704 & - & - & 705 & - & - \\
706 & L & \textbf{W} & 707 & - & - & 708 & L & \textbf{W} & 709 & L & \textbf{W} & 710 & L & \textbf{W} \\
711 & L & \textbf{W} & 712 & - & - & 713 & \textbf{W} & L & 714 & - & - & 715 & L & \textbf{W} \\
716 & - & - & 717 & - & - & 718 & - & - & 719 & - & - & 720 & - & - \\
721 & L & \textbf{W} & 722 & - & - & 723 & - & - & 724 & - & - & 725 & L & \textbf{W} \\
726 & - & - & 727 & L & \textbf{W} & 728 & - & - & 729 & - & - & 730 & - & - \\
731 & - & - & 732 & - & - & 733 & - & - & 734 & L & \textbf{W} & 735 & L & \textbf{W} \\
736 & L & \textbf{W} & 737 & L & \textbf{W} & 738 & L & \textbf{W} & 739 & L & \textbf{W} & 740 & - & - \\
741 & \textbf{W} & L & 742 & L & \textbf{W} & 743 & - & - & 744 & \textbf{W} & L & 745 & - & - \\
746 & \textbf{W} & L & 747 & L & \textbf{W} & 748 & - & - & 749 & L & \textbf{W} & 750 & \textbf{W} & L \\
751 & - & - & 752 & - & - & 753 & - & - & 754 & - & - & 755 & L & \textbf{W} \\
756 & - & - & 757 & L & \textbf{W} & 758 & - & - & 759 & - & - & 760 & - & - \\
761 & \textbf{W} & L & 762 & - & - & 763 & \textbf{W} & L & 764 & L & \textbf{W} & 765 & - & - \\
766 & L & \textbf{W} & 767 & - & - & 768 & \textbf{W} & L & 769 & - & - & 770 & - & - \\
771 & \textbf{W} & L & 772 & - & - & 773 & \textbf{W} & L & 774 & - & - & 775 & - & - \\
776 & - & - & 777 & - & - & 778 & L & \textbf{W} & 779 & L & \textbf{W} & 780 & \textbf{W} & L \\
781 & L & \textbf{W} & 782 & - & - & 783 & - & - & 784 & L & \textbf{W} & 785 & - & - \\
786 & L & \textbf{W} & 787 & - & - & 788 & - & - & 789 & \textbf{W} & L & 790 & L & \textbf{W} \\
791 & \textbf{W} & L & 792 & L & \textbf{W} & 793 & - & - & 794 & L & \textbf{W} & 795 & \textbf{W} & L \\
796 & - & - & 797 & L & \textbf{W} & 798 & L & \textbf{W} & 799 & - & - & 800 & \textbf{W} & L \\
801 & L & \textbf{W} & 802 & - & - & 803 & - & - & 804 & L & \textbf{W} & 805 & L & \textbf{W} \\
806 & L & \textbf{W} & 807 & L & \textbf{W} & 808 & - & - & 809 & \textbf{W} & L & 810 & - & - \\
811 & L & \textbf{W} & 812 & L & \textbf{W} & 813 & - & - & 814 & - & - & 815 & - & - \\
816 & \textbf{W} & L & 817 & - & - & 818 & L & \textbf{W} & 819 & L & \textbf{W} & 820 & L & \textbf{W} \\
821 & - & - & 822 & - & - & 823 & \textbf{W} & L & 824 & L & \textbf{W} & 825 & - & - \\
826 & - & - & 827 & \textbf{W} & L & 828 & - & - & 829 & - & - & 830 & - & - \\
831 & - & - & 832 & \textbf{W} & L & 833 & - & - & 834 & L & \textbf{W} & 835 & L & \textbf{W} \\
836 & L & \textbf{W} & 837 & L & \textbf{W} & 838 & L & \textbf{W} & 839 & \textbf{W} & L & 840 & - & - \\
841 & - & - & 842 & L & \textbf{W} & 843 & \textbf{W} & L & 844 & \textbf{W} & L & 845 & L & \textbf{W} \\
846 & L & \textbf{W} & 847 & \textbf{W} & L & 848 & - & - & 849 & L & \textbf{W} & 850 & L & \textbf{W} \\
851 & \textbf{W} & L & 852 & L & \textbf{W} & 853 & L & \textbf{W} & 854 & \textbf{W} & L & 855 & - & - \\
856 & L & \textbf{W} & 857 & \textbf{W} & L & 858 & L & \textbf{W} & 859 & L & \textbf{W} & 860 & - & - \\
861 & - & - & 862 & L & \textbf{W} & 863 & - & - & 864 & L & \textbf{W} & 865 & L & \textbf{W} \\
866 & \textbf{W} & L & 867 & - & - & 868 & L & \textbf{W} & 869 & L & \textbf{W} & 870 & \textbf{W} & L \\
871 & \textbf{W} & L & 872 & \textbf{W} & L & 873 & L & \textbf{W} & 874 & \textbf{W} & L & 875 & - & - \\
876 & - & - & 877 & \textbf{W} & L & 878 & - & - & 879 & L & \textbf{W} & 880 & L & \textbf{W} \\
881 & L & \textbf{W} & 882 & - & - & 883 & L & \textbf{W} & 884 & - & - & 885 & - & - \\
886 & \textbf{W} & L & 887 & - & - & 888 & \textbf{W} & L & 889 & \textbf{W} & L & 890 & \textbf{W} & L \\
891 & - & - & 892 & L & \textbf{W} & 893 & - & - & 894 & L & \textbf{W} & 895 & - & - \\
896 & \textbf{W} & L & 897 & L & \textbf{W} & 898 & L & \textbf{W} & 899 & L & \textbf{W} & 900 & \textbf{W} & L \\
901 & - & - & 902 & \textbf{W} & L & 903 & - & - & 904 & - & - & 905 & - & - \\
906 & - & - & 907 & L & \textbf{W} & 908 & \textbf{W} & L & 909 & \textbf{W} & L & 910 & L & \textbf{W} \\
911 & - & - & 912 & L & \textbf{W} & 913 & L & \textbf{W} & 914 & - & - & 915 & L & \textbf{W} \\
916 & L & \textbf{W} & 917 & L & \textbf{W} & 918 & L & \textbf{W} & 919 & - & - & 920 & \textbf{W} & L \\
921 & - & - & 922 & - & - & 923 & L & \textbf{W} & 924 & \textbf{W} & L & 925 & \textbf{W} & L \\
926 & L & \textbf{W} & 927 & \textbf{W} & L & 928 & L & \textbf{W} & 929 & L & \textbf{W} & 930 & \textbf{W} & L \\
931 & - & - & 932 & - & - & 933 & - & - & 934 & L & \textbf{W} & 935 & L & \textbf{W} \\
936 & L & \textbf{W} & 937 & \textbf{W} & L & 938 & \textbf{W} & L & 939 & - & - & 940 & \textbf{W} & L \\
941 & - & - & 942 & - & - & 943 & L & \textbf{W} & 944 & L & \textbf{W} & 945 & \textbf{W} & L \\
946 & L & \textbf{W} & 947 & L & \textbf{W} & 948 & L & \textbf{W} & 949 & \textbf{W} & L & 950 & L & \textbf{W} \\
951 & L & \textbf{W} & 952 & \textbf{W} & L & 953 & L & \textbf{W} & 954 & L & \textbf{W} & 955 & L & \textbf{W} \\
956 & - & - & 957 & - & - & 958 & \textbf{W} & L & 959 & L & \textbf{W} & 960 & \textbf{W} & L \\
961 & \textbf{W} & L & 962 & \textbf{W} & L & 963 & \textbf{W} & L & 964 & - & - & 965 & L & \textbf{W} \\
966 & L & \textbf{W} & 967 & \textbf{W} & L & 968 & \textbf{W} & L & 969 & - & - & 970 & \textbf{W} & L \\
971 & \textbf{W} & L & 972 & \textbf{W} & L & 973 & - & - & 974 & L & \textbf{W} & 975 & L & \textbf{W} \\
976 & \textbf{W} & L & 977 & - & - & 978 & L & \textbf{W} & 979 & L & \textbf{W} & 980 & L & \textbf{W} \\
981 & L & \textbf{W} & 982 & L & \textbf{W} & 983 & L & \textbf{W} & 984 & - & - & 985 & - & - \\
986 & L & \textbf{W} & 987 & \textbf{W} & L & 988 & - & - & 989 & \textbf{W} & L & 990 & - & - \\
991 & L & \textbf{W} & 992 & L & \textbf{W} & 993 & L & \textbf{W} & 994 & \textbf{W} & L & 995 & L & \textbf{W} \\
996 & L & \textbf{W} & 997 & - & - & 998 & L & \textbf{W} & 999 & - & - & 1000 & - & - \\

\end{longtable}
\endgroup

\dotfill
\newpage

\section{Run on Go} \label{app:part_III_O}

Figures~\ref{fig:go-metrics}--\ref{fig:go_latent_and_action} present supplementary analysis of \gems on the Go environment implemented using \texttt{PettingZoo}. We evaluate training dynamics over 200 iterations (matching the run used to generate the figures below), using the same logging and evaluation protocol as in our Chess analysis.

Figure~\ref{fig:go-metrics} reports system-level metrics over training iterations: per-iteration runtime, cumulative wall-clock time, and memory usage. The per-iteration runtime exhibits bounded variability without a persistent upward drift, yielding an approximately linear increase in cumulative wall-clock time. Memory usage remains essentially stable with only minor incremental changes, indicating that \gems maintains consistent computational and memory overhead over training. This stability is particularly notable in Go, where large board states and long-horizon rollouts typically amplify the overhead of population-based game-solving pipelines.

Figure~\ref{fig:go_runs} illustrates emergent spatial structure in Go rollouts after training. Shown are board states sampled from a single rollout of trained agents after 200 training iterations, with successive frames corresponding to progressively later timesteps within the rollout. The evolution from sparse, locally uncoordinated placements to coherent territorial regions highlights the emergence of non-trivial spatial coordination induced by the learned policies. Importantly, this structure arises without hand-crafted Go priors, explicit territory heuristics, or domain-specific shaping, suggesting that \gems enables agents to discover meaningful spatial organization purely through self-play and amortized policy generation.

Figure~\ref{fig:go_latent_and_action} analyzes the learned representations via PCA projections of two embedding spaces for both agents. In \emph{action-distribution} space (left), the embeddings clearly separate into \textbf{two disjoint clusters} with essentially no overlap, indicating that the learned policies concentrate into two qualitatively distinct \emph{behavioral modes} rather than collapsing to a single averaged style. Within each cluster, the action entropy varies smoothly (color gradient), suggesting structured but still stochastic action selection inside each mode.

In contrast, the \emph{latent-code} embeddings (right) remain broadly dispersed without collapsing to a small region, reflecting substantial representational diversity. This separation---\textbf{multi-modal structure in behavior} together with \textbf{non-collapsed latent representations}---is particularly desirable in Go, where there are many strategically distinct yet viable ways to play. Overall, the plot is consistent with \gems maintaining multiple distinct strategic hypotheses (no representational mode collapse) while preserving coherent, structured behavior.

Taken together, these results suggest that \gems scales robustly to high-dimensional spatial games such as Go, maintaining stable system-level performance while preserving structured, diverse latent representations over training.

\begin{figure}[H]
    \centering
    \includegraphics[width=0.95\linewidth]{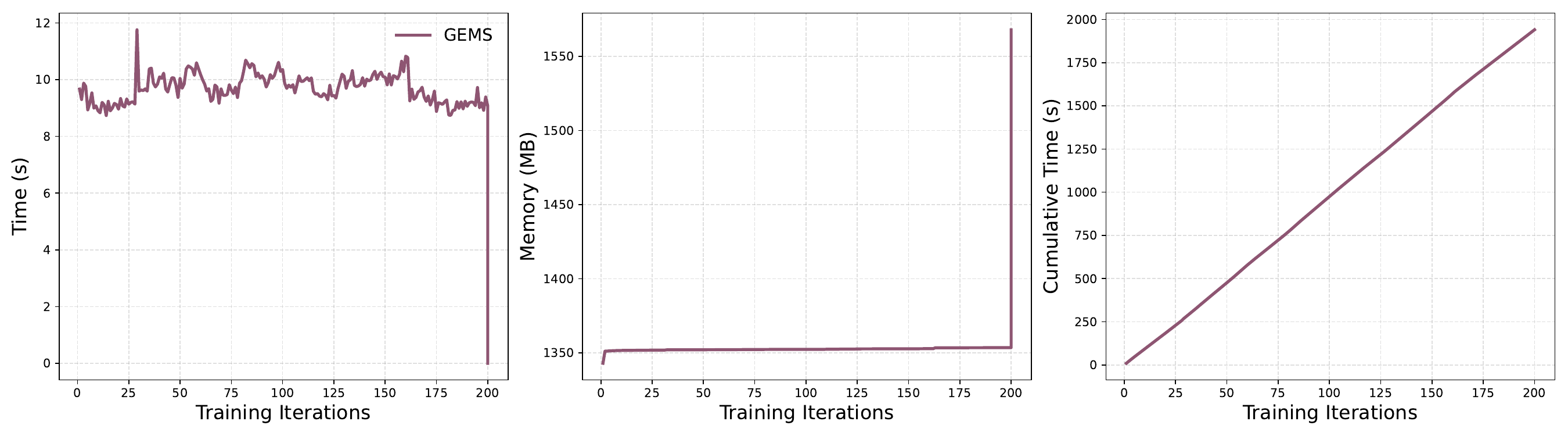}
    \caption{
    System-level metrics over training iterations.
    Per-iteration runtime exhibits bounded variability without long-term drift, leading to a near-linear growth in cumulative wall-clock time.
    Memory usage remains stable with only minor incremental increases, indicating that \gems maintains consistent computational and memory overhead throughout training.
    }
    
    \label{fig:go-metrics}
\end{figure}

\begin{figure}[H]
\centering

\setlength{\fboxrule}{0.4pt} 
\setlength{\fboxsep}{1.5pt}  
\setlength{\tabcolsep}{2pt}  
\renewcommand{\arraystretch}{1} 

\newcommand{\frameimg}[2]{\fbox{\includegraphics[width=#1]{#2}}}

\begin{tabular}{ccccc}
\frameimg{0.17\textwidth}{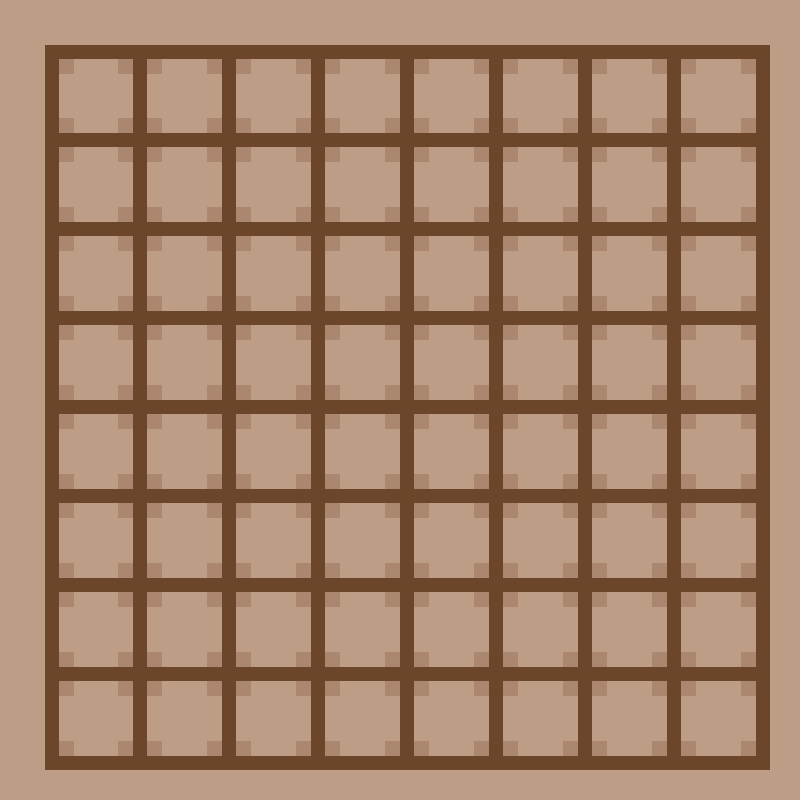} &
\frameimg{0.17\textwidth}{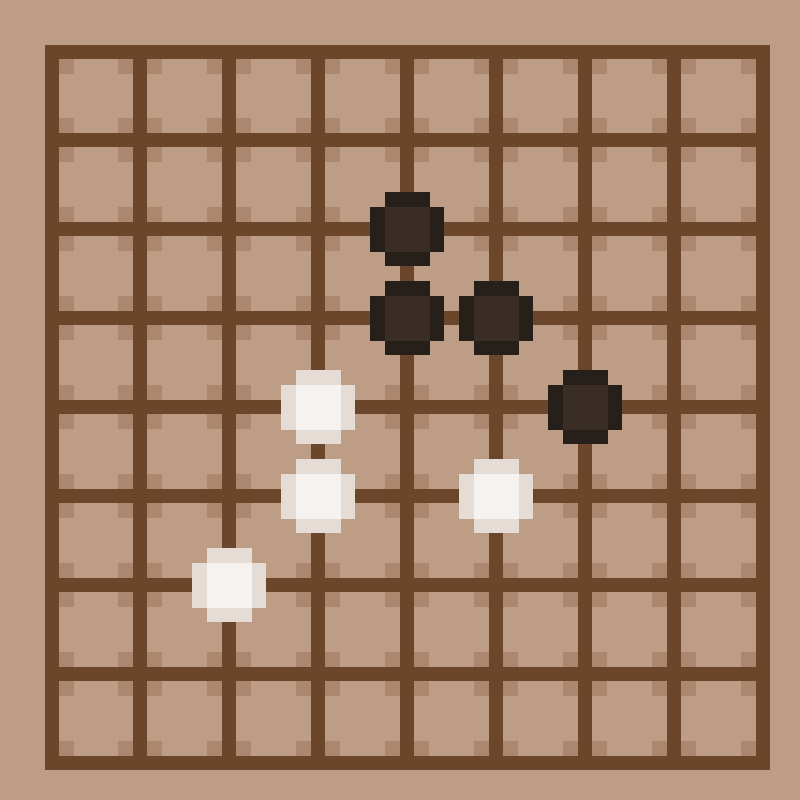} &
\frameimg{0.17\textwidth}{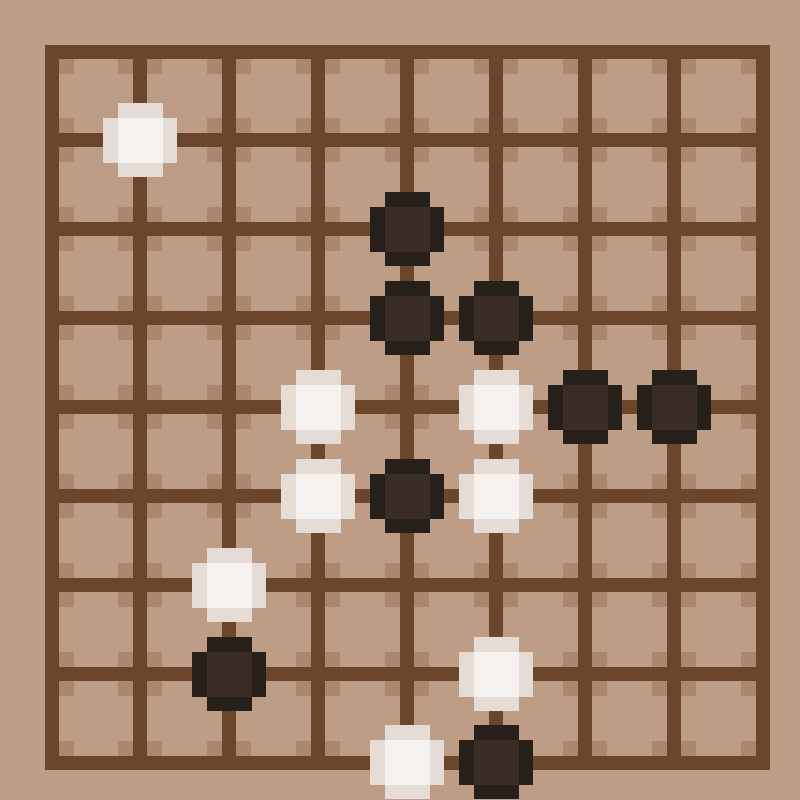} &
\frameimg{0.17\textwidth}{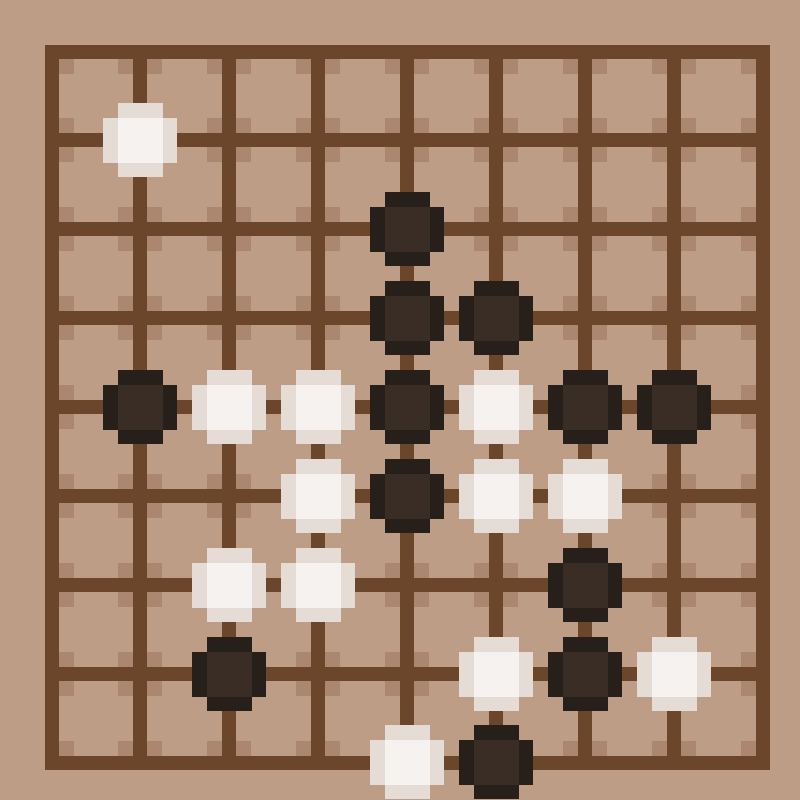} &
\frameimg{0.17\textwidth}{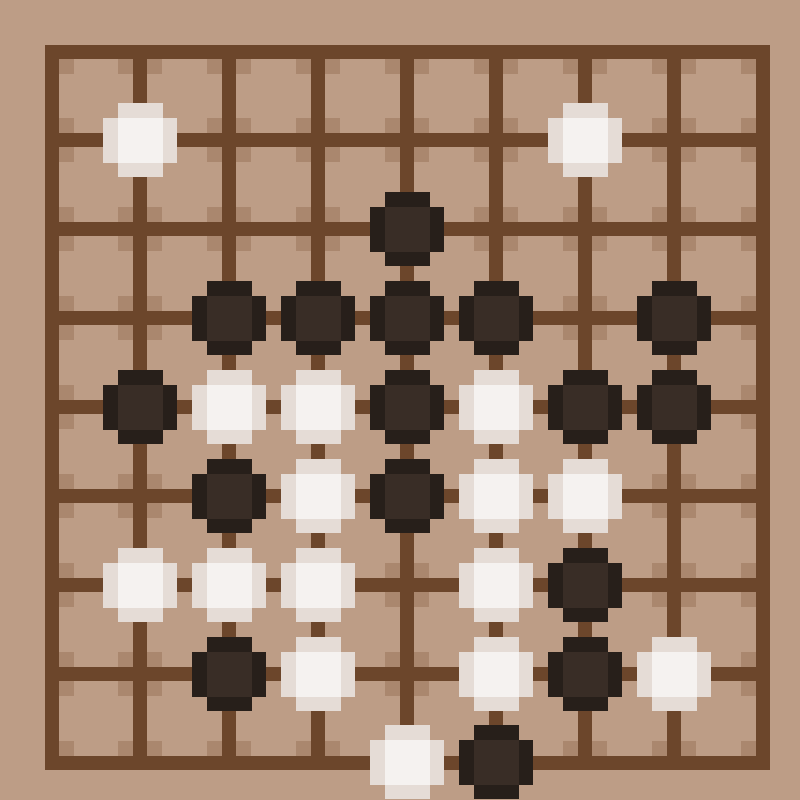} \\

\frameimg{0.17\textwidth}{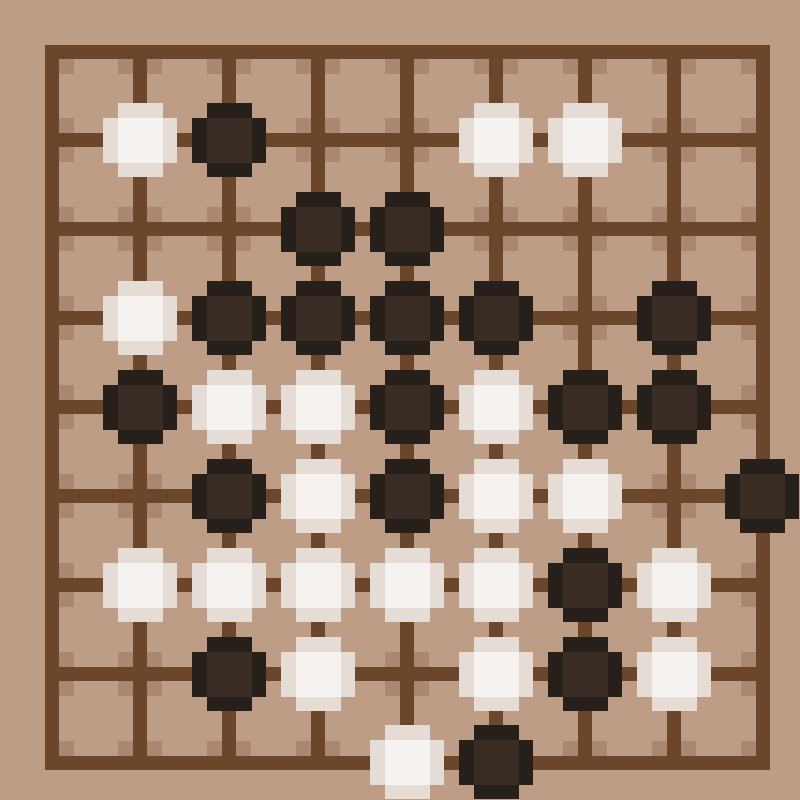} &
\frameimg{0.17\textwidth}{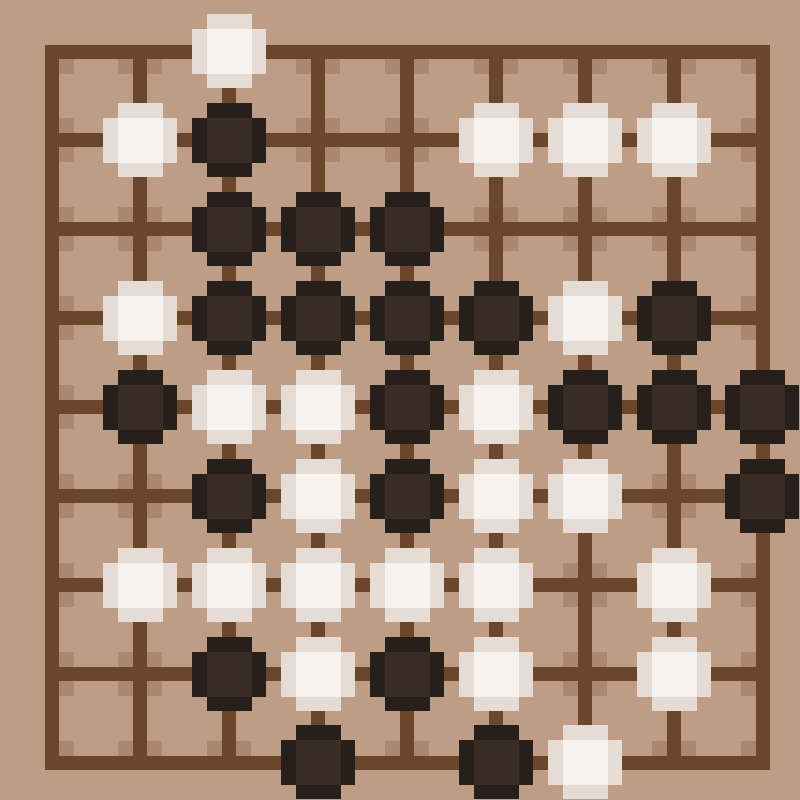} &
\frameimg{0.17\textwidth}{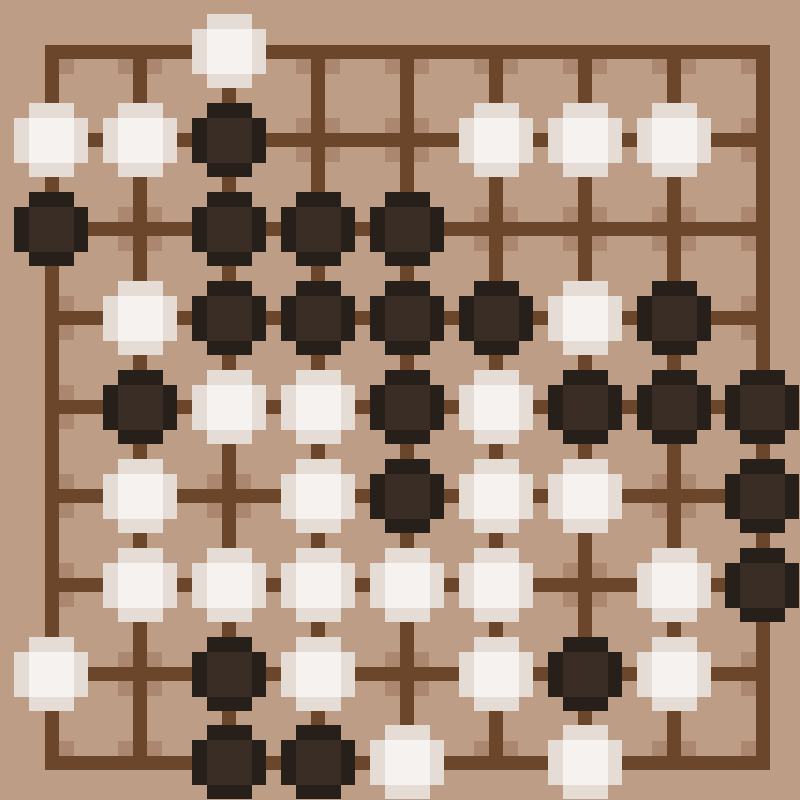} &
\frameimg{0.17\textwidth}{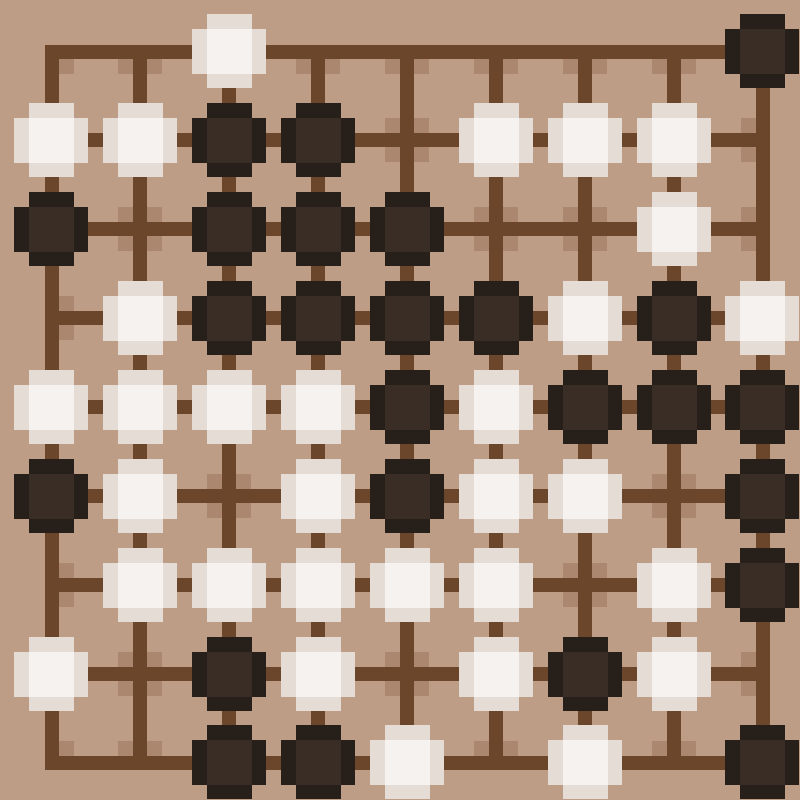} &
\frameimg{0.17\textwidth}{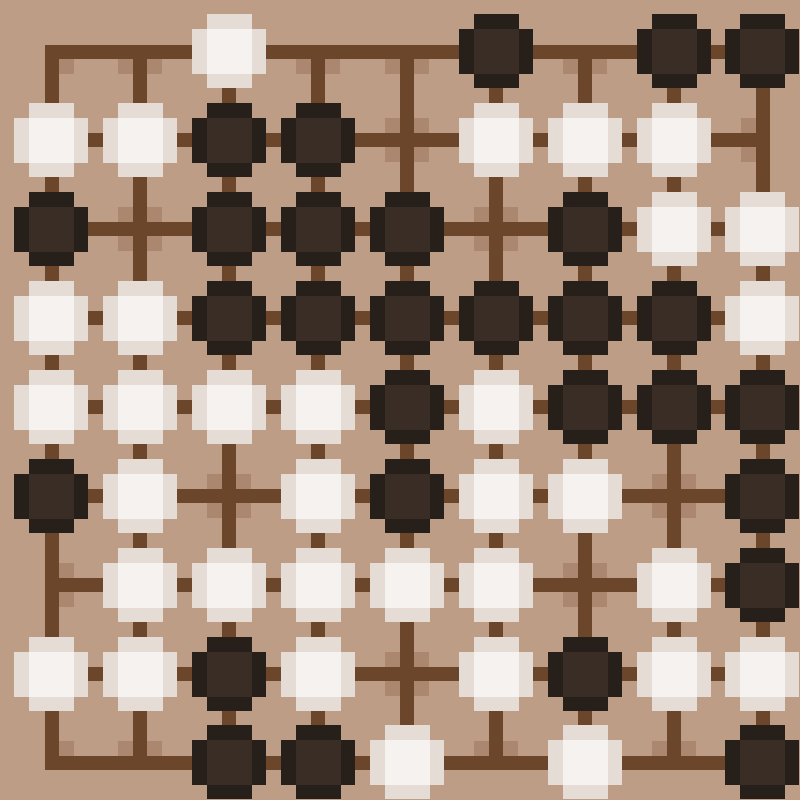} \\

\frameimg{0.17\textwidth}{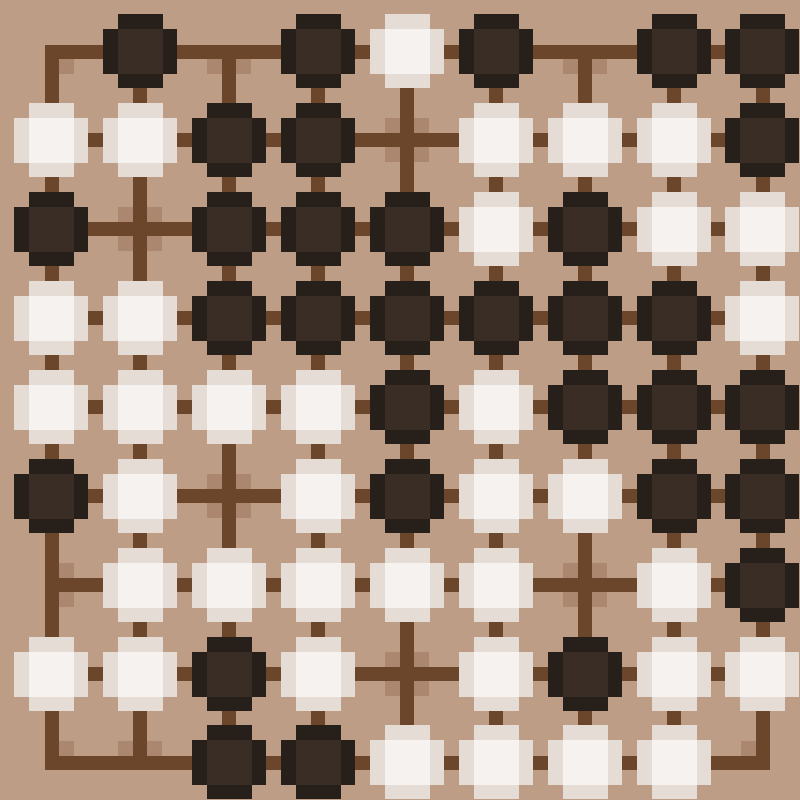} &
\frameimg{0.17\textwidth}{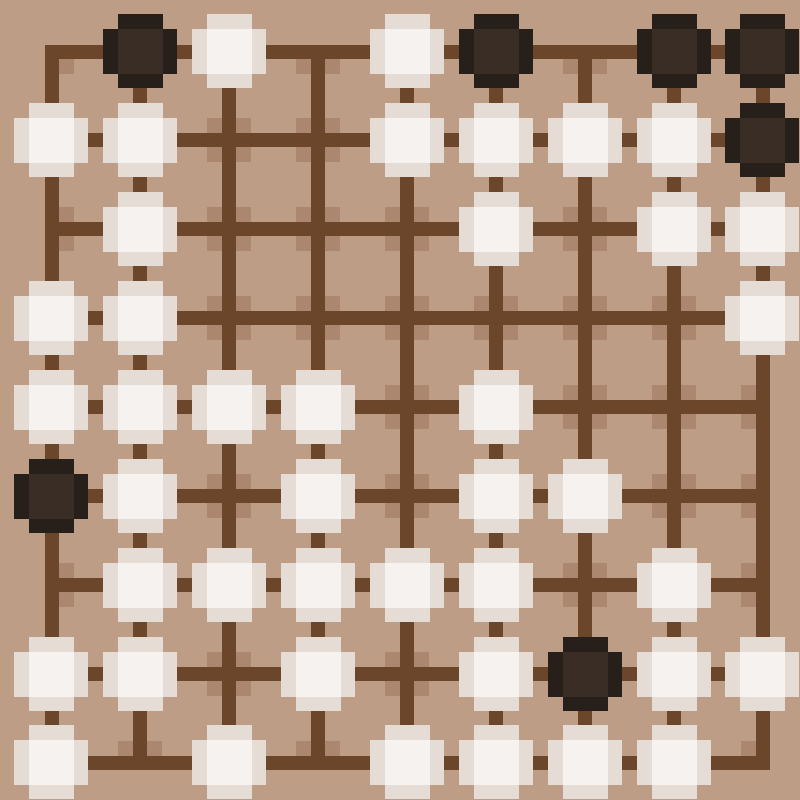} &
\frameimg{0.17\textwidth}{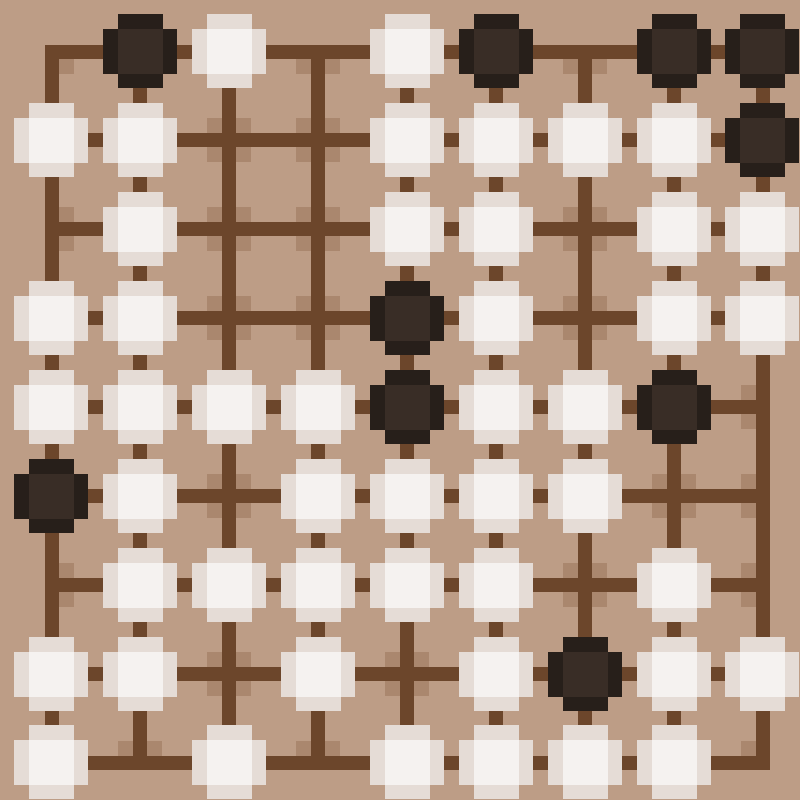} &
\frameimg{0.17\textwidth}{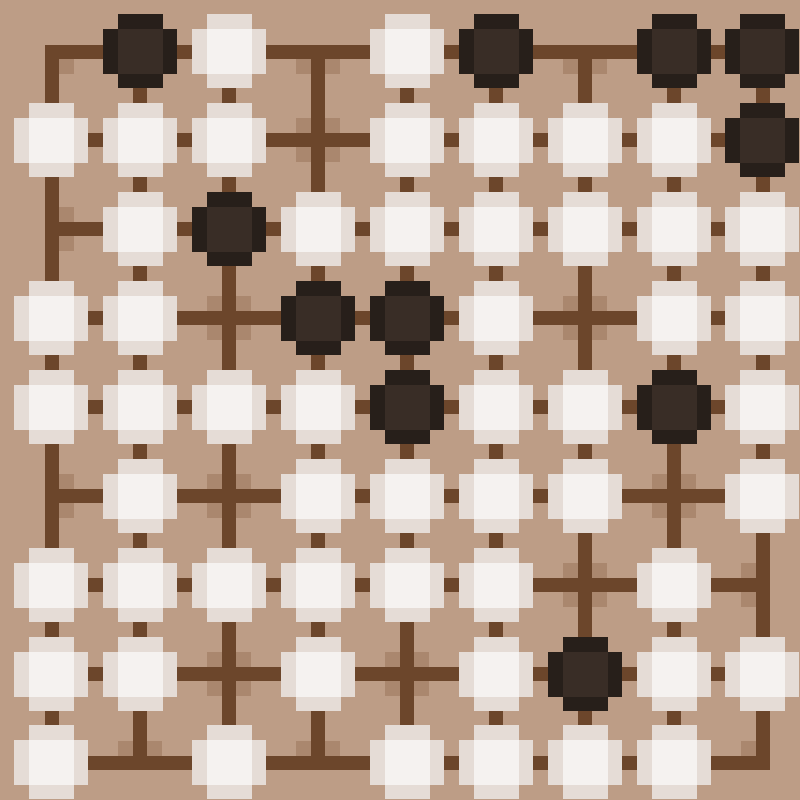} &
\frameimg{0.17\textwidth}{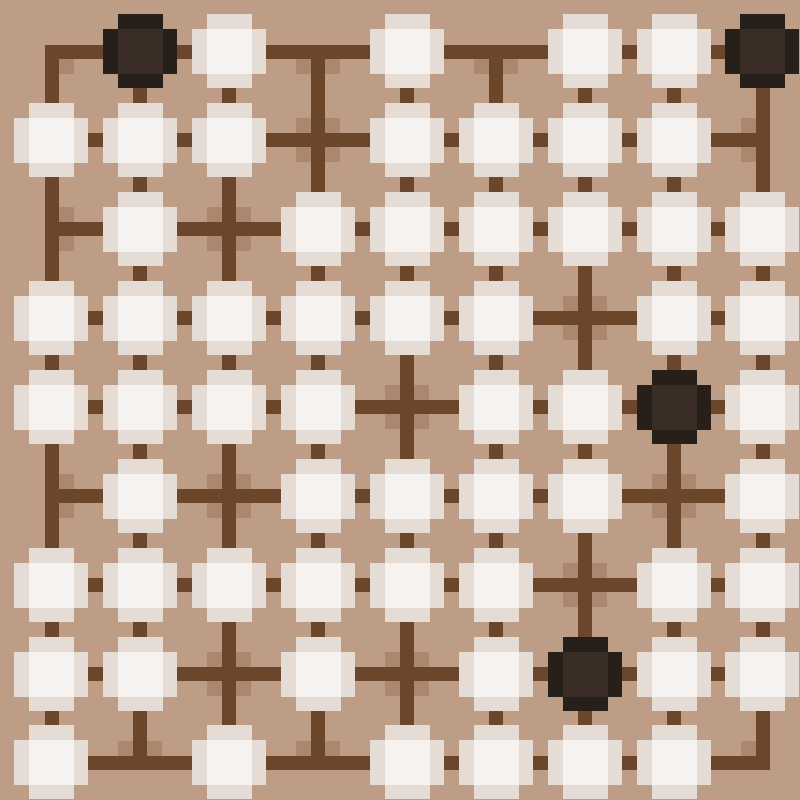}
\end{tabular}

\caption{
\textbf{Emergent spatial structure in Go rollouts using G{\small EMS}.}
Shown are board states sampled from a single rollout of agents trained via the \gems MARL framework after 1000 training iterations.
Frames are uniformly sampled across the trajectory (frames 1, 9, 17, 25, 33, 41, 49, 57, 65, 73, 81, 89, 97, 105, and 113), arranged left-to-right and top-to-bottom in temporal order.
The evolution from sparse placements to coherent territorial regions highlights the emergence of non-trivial spatial coordination induced by the \gems policies, despite the absence of hand-crafted Go priors.
}
\label{fig:go_runs}
\end{figure}

\begin{figure}[H]
\centering

\includegraphics[width=0.9\textwidth]{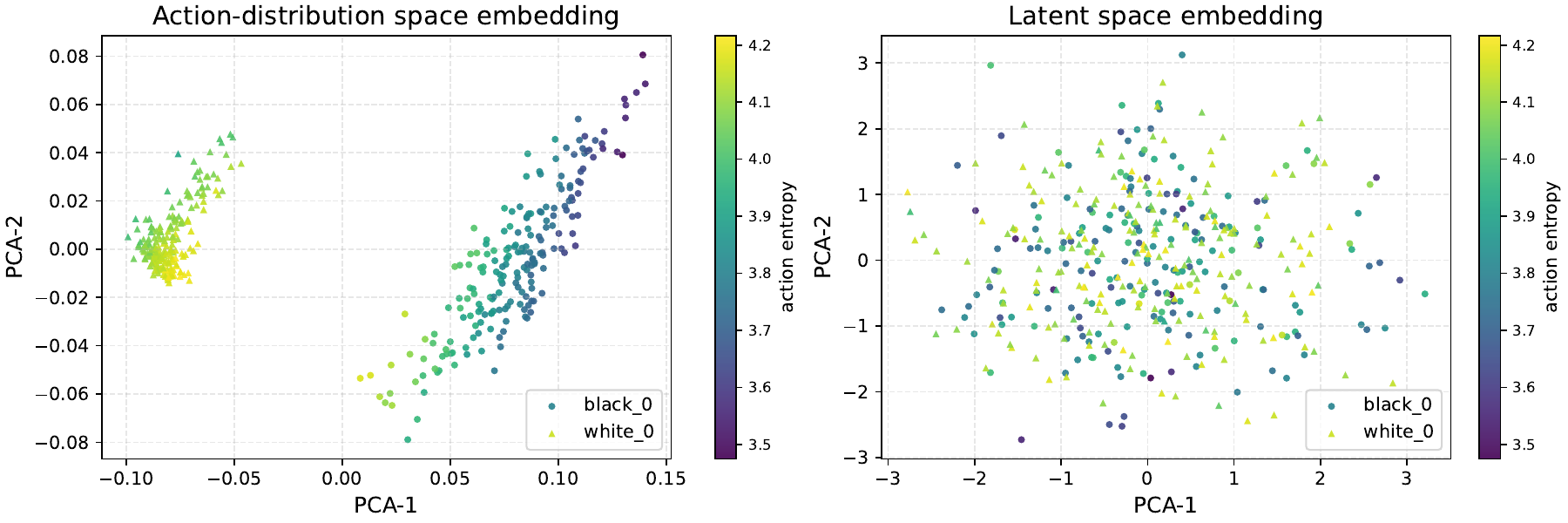}

\caption{
\textbf{Go embeddings after training: behavioral modes without latent collapse.}
Each point is an embedding from one of the two agents (black/white), projected to 2D via PCA; color indicates action entropy.
\emph{Left (action-distribution space):} embeddings form two clearly separated, non-overlapping clusters with smoothly varying entropy, indicating two distinct behavioral modes.
\emph{Right (latent space):} embeddings remain broadly dispersed rather than collapsing, reflecting substantial representational diversity consistent with maintaining multiple viable strategic hypotheses.
}

\label{fig:go_latent_and_action}
\end{figure}

\dotfill
\newpage

\section{Ablation on Kuhn's Poker} \label{app:part_III_P}
Tables~\ref{tab:khun-1}--\ref{tab:khun-2} enumerate the sweeps used for
Figs.~\ref{fig:poker-app-1}--\ref{fig:poker-app-2}. 
\textbf{Boldface rows indicate the configurations that achieved the lowest
exploitability within each panel.} 
Across the grid, the strongest settings typically combined a small/zero
exploration scale (\(\eta\in\{0,\,0.06\}\)) with simple schedulers
(const/sqrt/harmonic), minimal mutation/random pools, and either
\texttt{least\_mass} or \texttt{worst\_ev} replacement (panel dependent).

\paragraph{EMA toggle (ablation only).}
Our \emph{main} Kuhn Poker runs disable Estimated Moving Average (EMA)
(\texttt{ema=0}). For this ablation, we enabled EMA to probe stability:
given meta-estimates \((V^p, r)\) and coefficient \(\beta\in(0,1)\),
we update
\begin{equation}
\hat V^{p}\leftarrow(1-\beta)\hat V^{p}+\beta V^{p},\qquad
\hat r\leftarrow(1-\beta)\hat r+\beta r.
\end{equation}
Implementationally, this follows \texttt{meta\_estimate\_exact} in our code.
With EMA on (\(\beta\in\{0.2,0.5,0.8\}\), see the \texttt{EMA} column),
we observed the lowest exploitability among all tested settings (boldface).
A systematic treatment of EMA choice (schedule, bias-correction, and scope)
is deferred to future work.
\begin{table}[H]
\tiny
\centering
\setlength{\tabcolsep}{5pt}
\renewcommand{\arraystretch}{1.2}
\caption{Configuration details corresponding to Fig.~\ref{fig:poker-app-1}.}
\label{tab:khun-1}
\begin{tabular}{c c c c c c c c c c}
\toprule
\textbf{Sub-figure} & \textbf{$\eta$} & \textbf{$\eta$ scheduler} & \textbf{EMA} & \textbf{Steps} & \textbf{LR} & \textbf{$\beta_{KL}$} & \textbf{Mutation Pool} & \textbf{Random Pool} & \textbf{Replacement} \\
\midrule
(a) & 0.08 & harmonic & 0.2 & 0 & 1e-4 & 0.0 & 0 & 0 & least\_mass \\
(b) & 0.12 & const & 0.5 & 0 & 1e-4 & 0.0 & 1 & 0 & worst\_ev \\
(c) & 0.06 & harmonic & 0.5 & 0 & 1e-4 & 0.0 & 2 & 1 & least\_mass \\
(d) & 0.04 & harmonic & 0.0 & 20 & 5e-4 & 1e-2 & 1 & 0 & worst\_ev \\
(e) & 0.06 & sqrt & 0.8 & 0 & 1e-4 & 0.0 & 1 & 0 & worst\_ev \\
(f) & 0.06 & sqrt & 0.2 & 0 & 1e-4 & 0.0 & 1 & 0 & worst\_ev \\
(g) & 0.0 & const & 0.0 & 0 & 1e-4 & 0.0 & 1 & 0 & least\_mass \\
(h) & 0.12 & harmonic & 0.5 & 10 & 3e-4 & 5e-3 & 1 & 0 & worst\_ev \\
(i) & \textbf{0.0} & \textbf{const} & \textbf{0.0} & \textbf{0} & \textbf{1e-4} & \textbf{0.0} & \textbf{0} & \textbf{0} & \textbf{least\_mass} \\
(j) & 0.08 & harmonic & 0.5 & 0 & 1e-4 & 0.0 & 0 & 0 & least\_mass \\
(k) & \textbf{0.06} & \textbf{sqrt} & \textbf{0.2} & \textbf{0} & \textbf{1e-4} & \textbf{0.0} & \textbf{0} & \textbf{0} & \textbf{worst\_ev} \\
(l) & 0.10 & harmonic & 0.8 & 0 & 1e-4 & 0.0 & 1 & 0 & least\_mass \\
(m) & 0.12 & const & 0.0 & 0 & 1e-4 & 0.0 & 2 & 1 & worst\_ev \\
(n) & 0.08 & const & 0.8 & 20 & 5e-4 & 1e-2 & 2 & 1 & least\_mass \\
(o) & 0.06 & harmonic & 0.0 & 20 & 5e-4 & 1e-2 & 1 & 0 & least\_mass \\
(p) & 0.10 & harmonic & 0.0 & 20 & 5e-4 & 1e-2 & 1 & 0 & least\_mass \\
(q) & 0.12 & sqrt & 0.5 & 0 & 1e-4 & 0.0 & 0 & 0 & least\_mass \\
(r) & 0.10 & harmonic & 0.8 & 20 & 5e-4 & 1e-2 & 2 & 1 & worst\_ev \\
(s) & 0.06 & sqrt & 0.8 & 20 & 5e-4 & 1e-2 & 1 & 0 & worst\_ev \\
(t) & 0.12 & sqrt & 0.8 & 10 & 3e-4 & 5e-3 & 2 & 1 & least\_mass \\
\bottomrule
\end{tabular}
\end{table}

\begin{table}[H]
\tiny
\centering
\setlength{\tabcolsep}{5pt}
\renewcommand{\arraystretch}{1.2}
\caption{Configuration details corresponding to Fig.~\ref{fig:poker-app-2}.}
\label{tab:khun-2}
\begin{tabular}{c c c c c c c c c c}
\toprule
\textbf{Sub-figure} & \textbf{$\eta$} & \textbf{$\eta$ scheduler} & \textbf{EMA} & \textbf{Steps} & \textbf{LR} & \textbf{$\beta_{KL}$} & \textbf{Mutation Pool} & \textbf{Random Pool} & \textbf{Replacement} \\
\midrule
(a) & \textbf{0.0} & \textbf{const} & \textbf{0.0} & \textbf{0} & \textbf{1e-4} & \textbf{0.0} & \textbf{0} & \textbf{0} & \textbf{worst\_ev} \\
(b) & 0.12 & harmonic & 0.0 & 20 & 5e-4 & 1e-2 & 1 & 0 & least\_mass \\
(c) & 0.10 & const & 0.2 & 0 & 1e-4 & 0.0 & 2 & 1 & worst\_ev \\
(d) & \textbf{0.06} & \textbf{harmonic} & \textbf{0.0} & \textbf{20} & \textbf{5e-4} & \textbf{1e-2} & \textbf{0} & \textbf{0} & \textbf{worst\_ev}\\
(e) & \textbf{0.06} & \textbf{harmonic} & \textbf{0.5} & \textbf{0} & \textbf{1e-4} & \textbf{0.0} & \textbf{0} & \textbf{0} & \textbf{worst\_ev} \\
(f) & 0.08 & const & 0.8 & 0 & 1e-4 & 0.0 & 0 & 0 & worst\_ev \\
(g) & 0.06 & sqrt & 0.2 & 0 & 1e-4 & 0.0 & 1 & 0 & least\_mass \\
(h) & 0.06 & sqrt & 0.2 & 0 & 1e-4 & 0.0 & 2 & 1 & least\_mass \\
(i) & 0.0 & const & 0.0 & 0 & 1e-4 & 0.0 & 1 & 0 & least\_mass \\
(j) & 0.04 & sqrt & 0.0 & 10 & 3e-4 & 5e-3 & 0 & 0 & least\_mass \\
\bottomrule
\end{tabular}
\end{table}

\begin{figure}[H]
    \centering
    \includegraphics[width=0.95\linewidth]{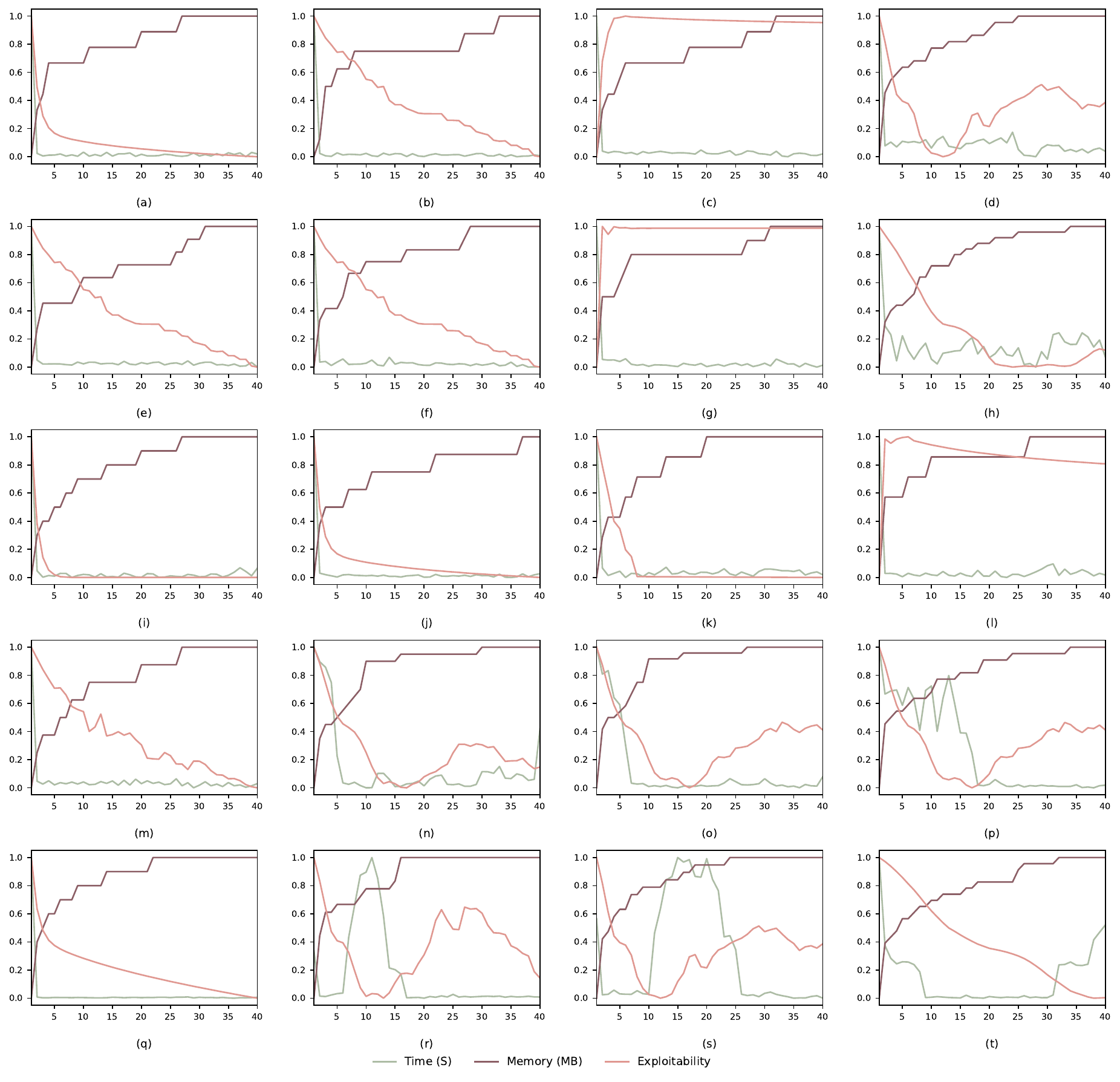}
    \caption{Ablation on Kuhn's Poker \textbf{part 1/2}, mean of 5 seeds. Legend: \ding{182} Memory, at Y-axis 1 defines 1256.005 MB, 0 defines 0 MB,
    \ding{183} Time, at Y-axis 1 defines 0.01 S, 0 defines 0 S, \ding{184} Exploitability, Y-axis 1 defines 1, 0 defines 0, and \ding{185} Iterations are defined on X-axis going from 0 to 40.}
    \label{fig:poker-app-1}
\end{figure}
\FloatBarrier
\begin{figure}[H]
    \centering
    \includegraphics[width=0.95\linewidth]{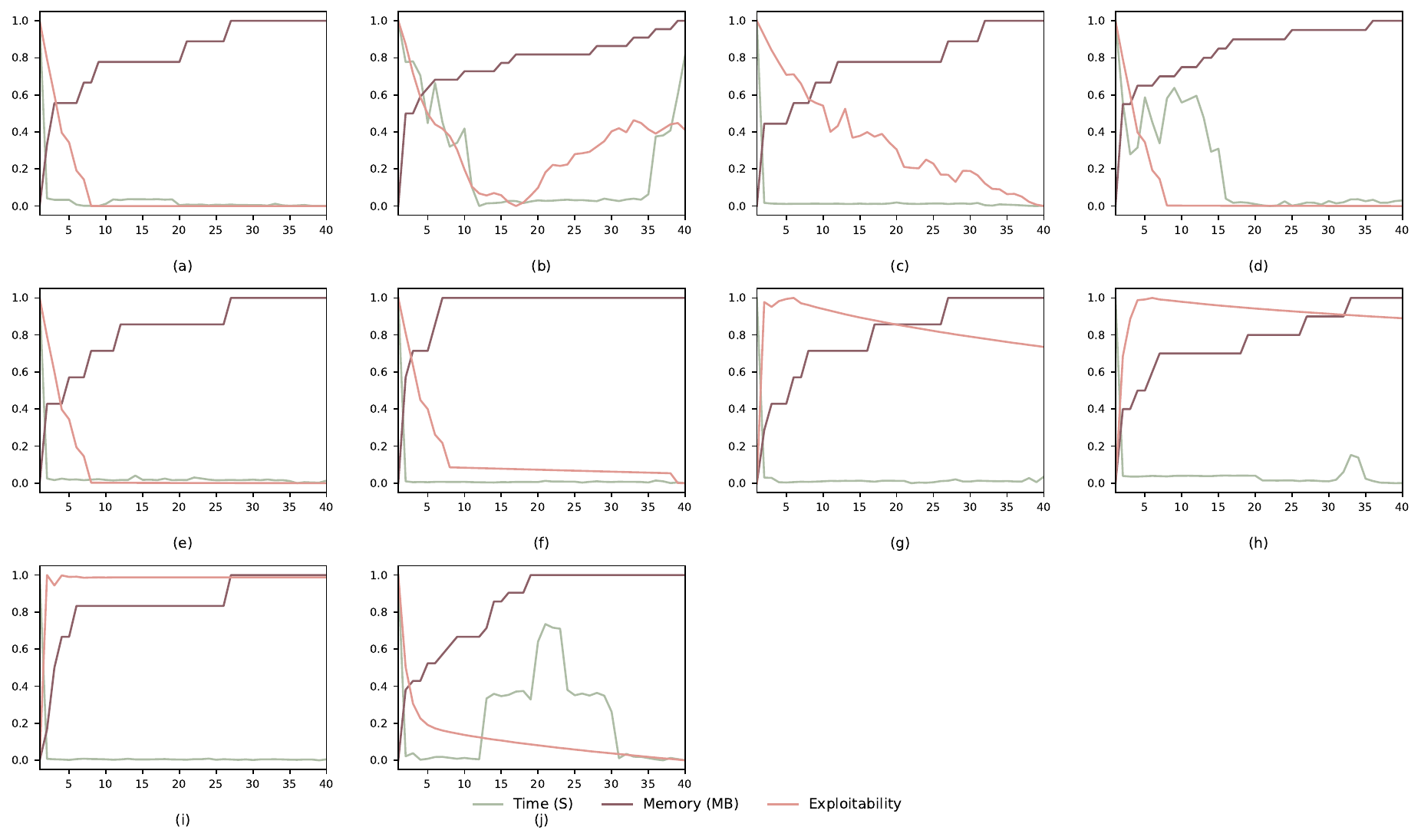}
    \caption{Ablation on Kuhn's Poker \textbf{part 2/2}, mean of 5 seeds. Legend: \ding{182} Memory, at Y-axis 1 defines 1256.005 MB, 0 defines 0 MB,
    \ding{183} Time, at Y-axis 1 defines 0.01 S, 0 defines 0 S, \ding{184} Exploitability, Y-axis 1 defines 1, 0 defines 0, and \ding{185} Iterations are defined on X-axis going from 0 to 40.}
    \label{fig:poker-app-2}
\end{figure}
\FloatBarrier

\dotfill
\newpage

\section{Ablation on Public Goods Game} \label{app:part_III_Q}
\paragraph{Public Goods Game Ablation: Purpose.}
This ablation studies \emph{which parts of FAST \gems--OMWU matter most} for outcome quality versus compute, so that our defaults are principled and robust rather than hand-tuned. Concretely, we vary algorithmic and stability knobs—$\eta$, \verb|eta_sched| (const/sqrt/harmonic), \verb|ema|, \verb|mwu_grad_cap|, temperature $\tau$; population-growth and exploration knobs—\verb|oracle_nz|, \verb|oracle_period|, \verb|pool_mut|, \verb|pool_rand|, latent size \verb|zdim|; and ABR–TR tightness—\verb|abr_lr|, \verb|abr_steps|, and $\beta_{\mathrm{KL}}$. We summarize effects with terminal and speed-sensitive metrics: \verb|welfare_last|, \verb|coop_last| (final social welfare and cooperation) and their learning-curve integrals \verb|welfare_auc|, \verb|coop_auc|, alongside efficiency measures \verb|time_total_sec_avg| and \verb|ram_peak_mb_avg|. A full factorial sweep is possible, but would require substantially more compute than we currently have available, so, we run structured partial variations (one-factor and a few targeted pairs) across multiple seeds and report mean$\pm$std from the multi-seed runner; the intent is to \textbf{characterize sensitivities and trade-offs}, not to claim a single global optimum. In practice, the ablation clarifies (i) step-size scheduling and mild smoothing (harmonic $\eta_t$ with modest EMA) stabilize meta-updates without slowing progress unduly, (ii) moderate $\beta_{\mathrm{KL}}$ in ABR–TR improves welfare without large time/RAM penalties, and (iii) small increases in \verb|oracle_period| and balanced \verb|pool_mut|/\verb|pool_rand| control population growth while preserving exploration. All runs log the exact configuration to CSV for reproducibility and allow further probing of interactions as needed.

\paragraph{Public Goods Game: overview \& metrics.}
The \emph{Public Goods Game (PGG)} is an $n$-player social-dilemma benchmark. Each player chooses to \emph{contribute} (cooperate) or \emph{withhold} (defect). Let $a_i\in{0,1}$ denote player $i$’s contribute decision, $S=\sum_j a_j$ the total contributions, multiplier $r>0$, and per-contribution cost $c_i>0$ (homogeneous case uses $c_i\equiv c$). The one-round payoff to player $i$ is

\begin{equation}
\;u_i\;=\;\frac{r}{n}\,S\;- 
\;c_i\,a_i,\qquad\text{and the social welfare}\quad W\;=\;\sum_{i=1}^n u_i\;=\;r\,S\;-\;\sum_{i=1}^n c_i a_i.
\end{equation}

When $\tfrac{r}{n} < c < r$ (homogeneous costs), each individual gains by defecting (free-riding) while the group gains by coordinating on cooperation—this is the core tension PGG exposes.

In our runs, the generator outputs the probability of cooperation; we track both \emph{outcome quality} and \emph{learning speed/stability} via four summary metrics:
\begin{itemize}
\item \textbf{welfare\_last}: the mean social welfare $W$ at the \emph{final training iteration} (averaged over profile samples and seeds). Higher is better; if parameters make $W$ negative, ``less negative'' still indicates improvement.
\item \textbf{coop\_last}: the final average cooperation rate $\frac{1}{n}\sum_i \Pr[a_i{=}1]$.
\item \textbf{welfare\_auc}: the (trapezoidal) area under the welfare-versus-iteration curve. This rewards methods that reach good welfare earlier and keep it stable—a joint measure of speed and robustness.
\item \textbf{coop\_auc}: the analogous AUC for cooperation rate.
\end{itemize}
All four are reported as mean$\pm$std across seeds by the multi-seed runner, making it straightforward to compare algorithms and hyper-parameter settings on both efficiency and cooperative outcomes.
iters\_in\_child = 10;
seeds\_in\_child = 0,1,2,3,4;
iters = 10;
n\_players = 5.
\begingroup
\tiny
\setlength{\tabcolsep}{0.8pt}

\endgroup

\dotfill
\newpage

\section{Ablation on Deceptive Mean} \label{app:part_III_R}
We conducted an ablation study for the Deceptive Message game to analyze the sensitivity of \gems to its core hyperparameters. Our investigation focused on key parameters governing the Amortized Best-Response (ABR) training, including the learning rate (\texttt{abr\_lr}), update steps (\texttt{abr\_steps}), and KL-divergence coefficient (\texttt{beta\_kl}), as well as the generator's latent dimension (\texttt{zdim}) and the Jacobian regularization penalty (\texttt{lambda\_jac}).
\begingroup
\scriptsize
\setlength{\tabcolsep}{2pt}

\endgroup
\dotfill
\newpage

\section{Ablation on Oracle Selection} \label{app:part_III_S}
\label{sec:oracle-ablation}

A key design choice in \gems is the use of an Empirical-Bernstein UCB (EB-UCB) oracle (Section 3.4, Eq. 7) to select new policies. A natural question is why this specific, variance-aware bandit algorithm was chosen over simpler or alternative methods, such as standard UCB1 or Thompson Sampling.

\paragraph{The ``Shifting Meta'' Experiment}
While the oracle in \gems solves a stationary sub-problem within each fixed iteration (satisfying the assumptions of Theorem \ref{th:instance}), the global problem across training is non-stationary. As the meta-strategy $\sigma_t$ evolves and the generator $G_{\theta}$ is fine-tuned, the expected value of any given latent policy $z \in \Lambda_t$ changes. The oracle must therefore be highly adaptive, capable of quickly detecting when a previously suboptimal policy has become optimal and, conversely, abandoning a previously optimal policy that is no longer effective.

To simulate this dynamic, we designed a simple ``Shifting Meta'' bandit game. The game consists of 3 arms.
\begin{itemize}
    \item \textbf{Phase 1 (Time Steps $t < 1000$):} Arm 1 is the unique optimal policy with the highest expected reward.
    \item \textbf{Meta Shift (Time Step $t = 1000$):} The underlying meta-game abruptly changes.
    \item \textbf{Phase 2 (Time Steps $t \ge 1000$):} Arm 0 becomes the new unique optimal policy.
\end{itemize}
Critically, we designed Arm 0 to also have a higher reward variance than the other arms. This setup tests an algorithm's ability to abandon a ``safe,'' well-explored, but now-suboptimal arm in favor of a ``riskier,'' high-variance arm that has become optimal.

\paragraph{Analysis of Results}
The results are shown in Figure \ref{fig:oracle-ablation}.
\begin{itemize}
    \item \textbf{Cumulative Regret (Top):} In Phase 1, all algorithms (except Greedy) successfully identify the optimal arm and achieve low regret. After the meta-shift at $t=1000$, the limitations of non-adaptive oracles become clear. Greedy, being purely exploitative, never adapts and accumulates massive regret. Standard UCB1, which is not variance-aware, also accumulates a large amount of regret as it is too ``confident'' in the old optimal arm. In contrast, the variance-aware methods (EB-UCB, UCB-V) and Thompson Sampling quickly adapt, and their cumulative regret begins to decrease as they exploit the new, higher-value optimal arm.

    \item \textbf{Adaptability (Bottom):} This plot provides the clearest justification. After the shift, standard UCB1 takes approximately 500 time steps (from $t=1000$ to $t=1500$) to reliably switch to the new optimal arm. Thompson Sampling is also slow, taking over 250 steps. In stark contrast, both \textbf{EB-UCB} and \textbf{UCB-V} adapt almost \textbf{instantaneously}. Their sensitivity to variance means they never became over-confident in the old arm and were able to quickly recognize the value of the new, high-variance optimal arm.
\end{itemize}

\begin{figure}[H]
    \centering
    \includegraphics[width=0.9\textwidth]{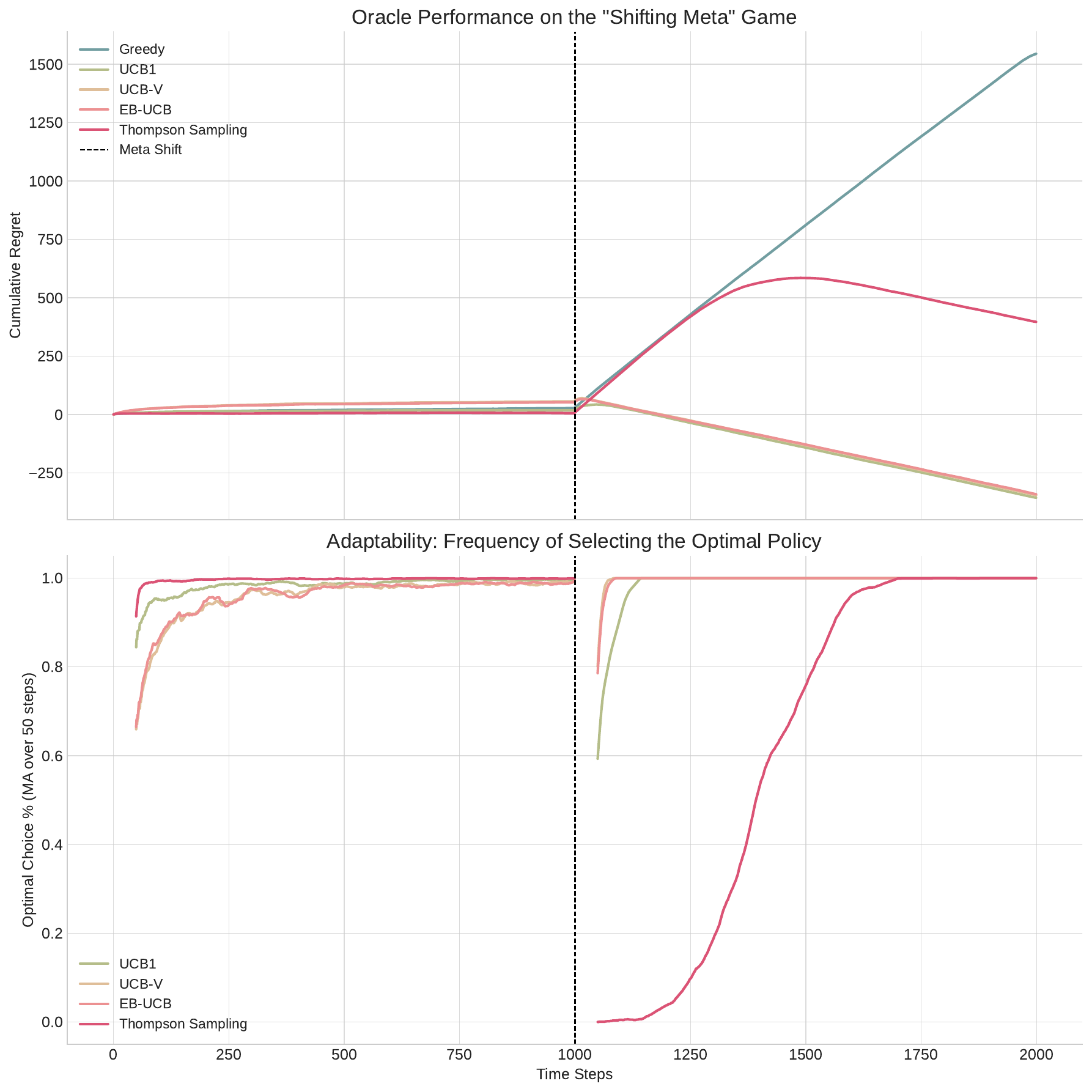}
    \caption{
        Performance of different oracle algorithms on the ``Shifting Meta'' game, averaged over 50 runs. The meta-game shifts at $t=1000$.
        \textbf{(Top)} Cumulative Regret. Lower is better. Note the poor performance of Greedy and Thompson Sampling after the shift.
        \textbf{(Bottom)} Adaptability, measured as the 50-step moving average of picking the correct optimal arm. The variance-aware oracles (EB-UCB and UCB-V) adapt almost instantaneously, while UCB1 and Thompson Sampling are significantly slower.
    }
    \label{fig:oracle-ablation}
\end{figure}

\paragraph{Conclusion}
This experiment demonstrates that a simple UCB1 oracle is unsuitable for the non-stationary problem faced in \gems, as its over-confidence would cause it to lag significantly behind the evolving meta-game. The superior adaptivity of variance-aware oracles is critical. We chose EB-UCB as it not only demonstrates this state-of-the-art adaptability but also aligns directly with the theoretical concentration bounds (based on empirical Bernstein) used in our analysis (Section 3.4).

\hrulefill
\newpage
\newpage
\part{Frequently Asked Questions (FAQs)} \label{app:part_IV}
\hrulefill

\begin{enumerate}
    \item \textbf{Q: Why did you not benchmark G{\small EMS} on the StarCraft Multi-Agent Challenge (SMAC)?}
    
    \textbf{A:} Our work introduces \gems as a direct, scalable framework to overcome the fundamental inefficiencies of the Policy-Space Response Oracles (\psro) paradigm, namely its quadratic computation and linear memory costs. Therefore, our primary objective was to benchmark \gems directly against classical \psro and its most relevant variants (e.g., Double Oracle, Alpha-\psro, A-\psro) in domains that clearly expose these bottlenecks and test game-theoretic solution quality, such as Kuhn Poker, Deceptive Messages Game, and Multi-Agent Tag.

    \item \textbf{Q: Why did you choose Optimistic Multiplicative Weights Update (OMWU) over standard Multiplicative Weights Update (MWU)?}
    
    \textbf{A:} We chose OMWU because it provides stronger theoretical guarantees and faster convergence in our setting. Unlike standard MWU, OMWU incorporates a predictive "hint" about the next payoff vector. This optimistic step results in an average external regret bound that scales with the cumulative \textit{variation} of the payoff vectors ($O(\sum ||v_t - v_{t-1}||_{\infty}^2)$) rather than the time horizon $T$, as shown in Proposition 3.2. Since \gems is designed to induce a slowly changing meta-game, this property leads to faster convergence.

    \item \textbf{Q: How can G{\small EMS} be considered "surrogate-free"?}
    
    \textbf{A:} We use the term ``surrogate-free'' to signify that \gems does not maintain the two key surrogates of classical \psro: \ding{182} an explicit, discrete population of $k$ policies, which requires $O(k)$ memory, and \ding{183} the full $k \times k$ payoff matrix, which requires $O(k^2)$ computation. Instead, \gems replaces this entire structure with a single amortized generator ($G_{\theta}$) and a compact set of latent ``anchor'' codes ($Z_t$).

    \item \textbf{Q: Does the single amortized generator ($G_{\theta}$) suffer from catastrophic forgetting?}
    
    \textbf{A:} We explicitly mitigate this risk using our Amortized Best-Response with a Trust Region (ABR-TR) objective (Section 3.5). This objective is designed to fine-tune the generator to produce new, high-performing policies while retaining its ability to generate previously effective ones. It achieves this by incorporating a KL-divergence penalty against a frozen, older version of the generator ($\theta^-$), which serves as a trust region and prevents catastrophic forgetting.

    \item \textbf{Q: What is the main computational bottleneck of G{\small EMS}?}
    
    \textbf{A:} \gems successfully replaces the $O(k^2)$ computational overhead of \psro. The new computational cost is dominated by the number of Monte Carlo rollouts required per iteration. This cost scales with the number of sampled matches used to estimate the meta-game (controlled by $n_i$, $m$, and $B$ in Section 3.2) and the size of the candidate pool ($|\Lambda_t|$) evaluated by the bandit oracle (Section 3.4). This simulation-based cost is fundamentally more scalable than constructing the full payoff matrix.

    \item \textbf{Q: How do the theoretical guarantees of G{\small EMS} compare to classical P{\small SRO}, given its use of approximations?}
    
    \textbf{A:} \gems retains the core game-theoretic convergence guarantees of \psro. Our overall exploitability bound (Theorem 3.4, Section 3.6) cleanly decomposes the average exploitability into four interpretable terms: \ding{182} the external regret of the OMWU meta-solver, \ding{183} the noise from Monte-Carlo estimation, \ding{184} the sub-optimality of the bandit oracle, and (4) the approximation error from the amortized generator. As the simulation budget ($nm$) grows and the generator training improves ($\epsilon_{BR} \to 0$), the latter three terms vanish, and the overall exploitability is driven by the no-regret property of the meta-solver, ensuring convergence.

    \item \textbf{Q: Why use an EB-UCB bandit oracle instead of just computing a single, direct best response (BR)?}
    
    \textbf{A:} The goal of population expansion is to efficiently find new, challenging policies to add to the game (\S~\ref{sec:bandit-oracle}). Simply training a single BR can be computationally expensive. Instead, we cast this as a multi-armed bandit problem over a pool of candidate latent codes $\Lambda_t$. We use the Empirical-Bernstein UCB (EB-UCB) oracle because it efficiently balances the exploration-exploitation trade-off by using sample variance to achieve tighter confidence bounds. This allows \gems to select promising new policies from the candidate pool more efficiently and effectively explore the latent strategy space.

    \item \textbf{Q: What is the purpose of the Jacobian penalty ($\lambda_J$)?}
    
    \textbf{A:} The Jacobian penalty ($\lambda_J ||JG_{\theta}(z)||_F^2$) is a regularizer applied during both the oracle selection (Eq.~\ref{eq:ucb-jac}) and the generator training steps (Eq.~\ref{eq:abr-tr_revised}). Its purpose is to encourage smoothness in the generator's latent space by penalizing large gradients of the generator's output with respect to its latent input $z$. This smoothness aids in stabilizing the optimization process.

    \item \textbf{Q: Why did you benchmark against PSRO variants and not other modern MARL algorithms like MAPPO or QMIX?}
    
    \textbf{A:} GEMS is specifically proposed to solve the fundamental scalability bottlenecks inherent in \textit{population-based, game-theoretic} approaches, for which PSRO is the seminal framework (Section 2). Algorithms like MAPPO or QMIX, while effective, address a different problem (e.g., decentralized execution with centralized training for a fixed number of agents) and do not typically maintain or solve a meta-game over an explicit population of policies. Therefore, to scientifically validate our claims, the most relevant and direct baselines are classical PSRO and its state-of-the-art variants.

    \item \textbf{Q: How does GEMS extend from two-player zero-sum (2P-ZS) games to the n-player general-sum (NP-GS) case?}
    
    \textbf{A:} The core components of \gems extend naturally to the NP-GS setting, as detailed in Section 3.7 and Appendix Part II. The main generalizations are: (1) Each of the $n$ players maintains their own independent meta-strategy $\sigma_t^{(p)}$. (2) We use a single batch of shared game rollouts and an importance-weighted estimator (Eq. 14) to efficiently compute each player's per-policy value vector $\hat{v}_{t}^{(p)}$ against the joint strategy of all other players. (3) Each player then independently runs their own OMWU update and EB-UCB oracle. (4) This decentralized process drives the time-averaged joint strategy toward an $\epsilon$-Coarse-Correlated Equilibrium ($\epsilon$-CCE), the standard solution concept for general-sum games.

    \item \textbf{Q: The ablation tables for the Public Goods Game (Table 4) and Deceptive Messages Game (Table 5) present many results across different hyperparameter settings, but the paper doesn't explicitly state which configuration is definitively "best." What is the main takeaway from these ablations?}

    \textbf{A:} The primary purpose of the extensive ablation tables (Table 4 for PGG, Table 5 for Deceptive Messages) is to ensure \textbf{transparency} and aid \textbf{reproducibility} by documenting \gems's sensitivity to its core hyperparameters. Identifying a single, universally optimal configuration across all games and metrics is challenging, as a full factorial sweep is computationally infeasible. These tables showcase the results of structured variations, illustrating the inherent \textbf{trade-offs} involved (e.g., between convergence speed, solution quality, and computational resources). We provide this detailed data to allow readers to observe these sensitivities directly and understand the impact of different choices. Determining the absolute optimal settings for every possible scenario remains an open area, and these tables serve as a valuable resource for guiding future work or tuning \gems for specific applications. Our main experiments utilize configurations found to be effective for demonstrating the core advantages of \gems.

    \item \textbf{Q: What are the limitations introduced by G{\small EMS}'s use of Monte Carlo estimation and an amortized generator?}

    \textbf{A:} \gems introduces two main sources of approximation to enable $O(1)$ memory scaling:
    
    First, \textbf{Estimation Noise}: Payoffs are estimated via Monte Carlo sampling. In sparse-reward domains, this can increase the variance of the meta-gradient. However, our results show that even noisy estimates guide the population toward effective strategies significantly faster than exact methods, as the sampling cost scales linearly ($O(k)$) rather than quadratically ($O(k^2)$).
    
    Second, \textbf{Amortization Gap}: The generator approximates the best-response manifold. A potential limitation is that an under-parameterized generator could fail to capture niche counter-strategies. We address this by explicitly conditioning the generator on a growing set of ``anchors'' ($Z_t$), which forces the network to maintain diverse modes. Empirically, our 1,000-iteration Chess experiments demonstrate that the generator successfully maintains over 2,000 distinct policies without collapsing, validating its expressive capacity.

    \item \textbf{Q: How does the theoretical analysis account for the non-stationarity of rewards as the meta-strategy evolves?}

    \textbf{A:} We address the global non-stationarity of the learning problem by decomposing \gems into two distinct timescales, ensuring that the assumptions for our regret bounds are locally satisfied:

    \begin{itemize}
    \item \textbf{Inner Loop (Stationary Sub-problem):} Within any single meta-iteration $t$, the opponent meta-strategy $\sigma_t$ is held fixed. Consequently, when the oracle selects an anchor $z \in \Lambda_t$, it faces a mathematically stationary reward distribution defined by this frozen opponent. This stationarity ensures that the instance-dependent regret bounds (Theorem 3.3) are valid for the anchor selection step.
    
    \item \textbf{Outer Loop (Dynamic Meta-Game):} The non-stationarity arises only across iterations as $\sigma_t$ evolves. This global dynamic is explicitly handled by the OMWU meta-solver, which is designed to minimize dynamic regret in time-varying games (Proposition 3.4).
\end{itemize}

    Thus, while the overall landscape shifts (motivating our use of adaptive bandits like EB-UCB in the ``Shifting Meta'' experiment), the specific sub-problem addressed by the oracle at each step remains stationary and theoretically amenable to our bounds.

    \item \textbf{Q: When do Monte-Carlo rollouts become a computational bottleneck, and how does G{\small EMS} handle this trade-off?}

\textbf{A:} Monte-Carlo (MC) sampling effectively trades a ``hard'' computational wall for a ``soft'' sampling cost. We analyze this trade-off as follows:

\begin{itemize}
    \item \textbf{The Quadratic Bottleneck (PSRO):} Standard population-based methods require computing an exact payoff matrix. This operation scales quadratically, $\mathcal{O}(N^2)$, with the number of iterations $N$ (as the population grows indefinitely). For complex games, this quickly becomes computationally intractable.
    
    \item \textbf{The Constant-Time Scalability (G{\small EMS}):} In contrast, \gems maintains a compact, fixed-size anchor set ($|\Lambda| \ll N$). Consequently, the computational complexity per iteration is $\mathcal{O}(1)$ with respect to the total history of strategies.
    
    \item \textbf{The Trade-off (Sampling Variance):} While the complexity class is constant, the wall-clock time depends on the sampling difficulty. The cost is proportional to $\mathcal{O}(k \cdot C_{eval})$, where $k$ is the number of samples and $C_{eval}$ is the cost per episode. This \textit{can} become a bottleneck in environments with \textit{extreme stochasticity} (requiring high $k$) or \textit{very long episodes} (high $C_{eval}$).
    
    \item \textbf{The Solution (EB-UCB):} To mitigate this, \gems employs the EB-UCB oracle. It adaptively allocates the sampling budget $k$, spending compute \textit{only} when the estimator is uncertain (high variance) or the strategy is promising. This ensures that even when $C_{eval}$ is high, \gems avoids wasting resources on sub-optimal strategies, preserving practical scalability where matrix-based methods have limitations.
\end{itemize}

\end{enumerate}

\end{document}